\newcommand{\es}[1]{{\color{blue}#1}}
\newcommand{\ced}[1]{{\color{cyan}#1}}
\newcommand{\journal}{} %activates JMLR standard definitions of theorem numbering
\newcommand{\mt}[1]{\mathbf{#1}} % undergraduate algebra version
\newcommand{\vt}[1]{\bm{\mathrm{#1}}} % undergraduate algebra version
\definecolor{comment}{gray}{0.5}
\newcommand{\cfd}[1]{{\color{orange} #1}} % In-text comments by Cassio F. Dantas
\newcommand{\red}[1]{{\color{red} #1}} % In-text comment in red color
\journal
    \newtheorem{theorem}{Theorem}
    \newtheorem{corollary}{Corollary}
    \newtheorem{lemma}{Lemma}
    \newtheorem{definition}{Definition}
    \newtheorem{remark}{Remark}
    \newtheorem{proposition}{Proposition}
\newcommand{\y}{\vt{y}} %input vector
\newcommand{\x}{\vt{x}} %primal variable
\newcommand{\A}{\mt{A}} %measurement matrix
\newcommand{\cA}{\vt{a}} % one column of the measurement matrix
\newcommand{\Axi}{[\A\x]_i} % i-th element of the product Ax. Option 2: \rA_i\x
\newcommand{\yi}{y_i} %input vector entry
\newcommand{\xj}{x_j} %input vector entry
\newcommand{\vtheta}{\vt{\theta}} %dual variable
\newcommand{\vthetai}{\theta_i} %dual variable
\newcommand{\ATthetaj}{[\A^\T\vtheta]_j} % i-th element of the product Ax. Option 2: \rA_i\x
\newcommand{\Id}{\mt{I}} %Identity matrix
\newcommand{\R}{\mathbb{R}}
\newcommand{\s}[1]{\mathcal{#1}}
\newcommand{\T}{{\sf T}}        % transposition
\newcommand{\e}{{\rm e}}        % exponential
\newcommand{\h}{{\rm h}}        % entropy function
\newcommand{\SO}{{\s{S}_0}}        % Local subset
\newcommand{\dscale}{\vt{\Xi}}% Dual scaling
\newcommand{\ind}{\mathds{1}}   % Indicator function
\renewcommand{\t}{{\sf T}} % transpose
\newcommand{\diag}[1]{\mathrm{Diag}\left(#1\right)}   % Diag
\DeclareMathOperator{\rank}{rank}  
\DeclareMathOperator{\sign}{sign}
\DeclareMathOperator{\dom}{dom}
\DeclareMathOperator{\supp}{supp} 
\DeclareMathOperator{\logit}{logit}
\DeclareMathOperator{\unc}{unc}
\DeclareMathOperator{\ST}{ST}
\DeclareMathOperator*{\interior}{int}
\DeclareMathOperator*{\argmin}{argmin} % * means: no space, limits underneath in displays
\DeclareMathOperator*{\argmax}{argmax} % * means: no space, limits underneath in displays
\DeclareMathOperator{\KL}{KL}
\DeclareMathOperator{\Gap}{Gap}
\DeclareMathOperator{\gap}{gap}
\newcommand*{\mydots}{.\kern-0.075em.\kern-0.075em.} % touching at \kern-0.1725em
\def\myvdots{\vbox{\baselineskip=3pt \lineskiplimit=0pt 
\kern6pt \hbox{.}\hbox{.}\hbox{.}}}
\title{Expanding Boundaries of Gap Safe Screening}
\author{%
  \name Cassio F.~Dantas \email cassio.fraga-dantas@irit.fr\\
  \name Emmanuel Soubies \email emmanuel.soubies@irit.fr \\
  \name C\'{e}dric F\'{e}votte \email cedric.fevotte@irit.fr \\
  \addr IRIT, Universit\'{e} de Toulouse, CNRS, Toulouse, France}
\begin{document}
\sloppy
\maketitle

%% TABLE OF CONTENTS
%Standard
% \setcounter{tocdepth}{2}
% \tableofcontents \newpage
%
%% Parts
\doparttoc % Tell to minitoc to generate a toc for the parts
\faketableofcontents % Run a fake tableofcontents command for the partocs
%\part{} % Start the document part
%\parttoc % Insert the document TOC
%\newpage

\begin{abstract}%
    Sparse optimization problems are ubiquitous in many fields such as statistics, signal/image processing and machine learning. This has led to the birth of many iterative algorithms to solve them. A powerful strategy to boost the performance of these algorithms is known as \textit{safe screening}: it allows the early identification of zero coordinates in the solution, which can then be eliminated to reduce the problem's size and accelerate convergence.
    %
	% FURTHER GENERALIZE GLM FROM Ndiaye2017	
	In this work, we extend the existing Gap Safe screening framework %for generalized linear models 
	by relaxing the global strong-concavity assumption on the dual cost function. %the global Lipschitz gradient continuity assumption on the primal cost function. 
	Instead, we exploit local regularity properties, %of the dual function, 
	that is, strong concavity on well-chosen subsets of the  domain.
	The non-negativity constraint is also integrated to the existing framework. %considered
	%
	% GENERAL SCREENING FRAMEWORK
	Besides making safe screening possible to a broader	class of functions %with non-Lipschitz gradient 
	that includes $\beta$-divergences (e.g., the Kullback-Leibler divergence), the proposed approach also improves upon the existing Gap Safe screening rules on previously applicable cases (e.g., logistic regression).
	%-- especially when the strong concavity constant varies significantly within the function's domain.
	%on some Lipschitz-gradient functions -- especially those presenting a poor global Lipschitz gradient constant. %due to very uneven gradient variation regions.
	%
	% APPLY GENERAL MODEL TO SOME PARTICULAR CASES
	The proposed general framework is exemplified by some notable particular cases: %data-fidelity functions: quadratic
	logistic function, $\beta=1.5$ and  Kullback-Leibler divergences.
	%Experiments
	Finally, we showcase the effectiveness of the proposed screening rules with different solvers (coordinate descent, multiplicative-update and proximal gradient algorithms) and different data sets (binary classification,  hyperspectral and count data). %Leukemia, Urban, NIPSpapers (20NewsGroup; Encyclopedia; MillionSong...)
\end{abstract}

\begin{keywords}
  Convex optimization, safe screening rules, sparse regression, $\beta$-divergence, non-negativity
  % Lasso, sparse logistic regression, generalized linear models
\end{keywords}

\section{Introduction}

% - Briefly explain the interest on regularized linear inverse problems in several areas: machine learning, signal processing...

% - Computational burden on high-dimensional problems ==> interest of acceleration techniques

% - Mention safe screening as a promising recent acceleration technique and overview related approaches

% - Quickly sketch the boundaries of safe screening: to which problems it has been proposed and to which class of problems has it been generalized.

% - Mention gap safe rules for regularized-GLM with its assumptions

% - Explain that we relax the Gradient-Lipschitz hypothesis.

% - Structure of the paper: 
% In \Cref{sec:screening_GLM} we describe the safe screening approach for a regularized GLM problem. 
% The results in this section are basically a generalization of the results in \citep{Ndiaye2017} for covering the non-negative case. %including
% In \Cref{sec:local_approach} and still within this generic framework,
% we propose a novel screening approach that exploits local properties of the cost function.
% %to improve upon existing approaches that rely rather on global assumptions.
% The proposed generic approach is then applied to a set of concrete problems in \Cref{sec:Particular_cases} and 
% and experimentally evaluated in \Cref{sec:experiments}.

Safe screening rules have proved to be very powerful tools in order to accelerate the resolution of large-scale sparse optimization problems that arise in statistics, machine learning, signal/image inverse problems, pattern recognition, among other fields. The very principle of safe screening is to identify the zero coordinates in the solution before and/or within the course of iterations of any solver. Once identified, these inactive coordinates can be screened out, thus reducing the size of the problem and consequently the computational load of the solver. Hence, should a screening rule allow to screen many coordinates with a low computational overhead, significant speedups can be observed in practice-- see for instance \citet{ElGhaoui2012,Bonnefoy2015,Ndiaye2017}, and Section~\ref{sec:experiments}.

\paragraph{A brief tour of existing screening strategies.} 
%%LASSO AND VARIATIONS
Safe screening rules were initially proposed for the Lasso problem \citep{ElGhaoui2012} %,Xiang2011,Xiang2012,Wang-Wonka2015}
and were later extended to some of its variants: 
group Lasso \citep{Wang-Wonka2015,Bonnefoy2015}, 
sparse group Lasso \citep{Ndiaye2016,Wang2019}, 
non-negative Lasso \citep{Wang2019}, 
fused Lasso \citep{Wang-Ye2015},  
and generalized Lasso \citep{Ren2018}.
%
%SAFE VS UNSAFE SCREENING
As opposed to correlation-based feature selection techniques \citep{Fan2008,Tibshirani2011}, safe screening
strategies are guaranteed to remove only coordinates that do no belong to the solution support. %is theoretically guaranteed to keep coordinates belonging to the solution support. 
%
%OTHER PROBLEMS
Beyond Lasso, safe screening has also been used for
other machine learning problems, such as:
binary logistic regression \citep{ElGhaoui2012,Ndiaye2017}, %Wang-Wonka2014 is wrong, according to Ndiaye2015
%multi-task feature learning \citep{}, %Actually Lasso-like (type of group-lasso)
metric learning \citep{Yoshida2018},
nuclear norm minimization \citep{Zhou2015} and
%screening data points in empirical risk minimization \citep{Mialon2019}
%and even for discarding non-support vectors in
support vector machine \citep{Ogawa2013,Wang-Wonka2014,Zimmert2015}. %Wang is also LAD
%non-convex %even recently applied to non-convex sparsity-inducing problems.
%\\

%STATIC x DYNAMIC x SEQUENTIAL
Three main classes of screening rules can be distinguished:
1) Static rules \citep{ElGhaoui2012,Xiang2011,Xiang2012} perform variable elimination once and for all prior to the optimization process;
2) Dynamic rules \citep{Bonnefoy2015,Ndiaye2017} perform screening repeatedly over the iterations of an iterative solver, 
leveraging the improvement of the solution estimate to screen-out more coordinates;
%to leverage the increasingly accurate solution estimates;
3) Sequential rules \citep{Xiang2011,Wang-Wonka2015,Liu2014,Malti2016}
exploit information from previously-solved problems in a regularization path approach. %with similar regularization values. / grid
%assume that the problem is being solved for a sequence of regularization parameters and exploit information from previously-solved problems to screen the ensuing one.
%
See, for instance, \citet{Xiang2016} or \citet{Ndiaye2018} for a survey of the domain. %gentle introduction to screening techniques.
%\\

%%Other screening - not adaptable
%Most remaining screening rules were originally proposed for the Lasso problem and its variants (i.e. for the Euclidean distance as the data-fidelity term) and are not easily extendable to a more general framework as they usually rely on specific properties of the targeted functions.
Most of the mentioned screening techniques are problem-specific, as they exploit particular properties of the targeted loss function. %rely on
%
%%DUAL PROBLEM AS A PROJECTION PROBLEM
For instance, rules in \citet{ElGhaoui2012, Bonnefoy2015,Wang-Wonka2015}
%\citep[exception made to the Gap Safe rule][]{Ndiaye2017}, 
% originally proposed for the Lasso problem and its variants (i.e. for the Euclidean distance as the data-fidelity term), 
assume the dual problem to be a projection problem, %rely on the fact
%and, as such, the dual solution is given by %ultimately
%%\begin{align}
%%    \vtheta^\star = \Pi_{\Delta_\A} (\y / \lambda),
%%\end{align}
%the Euclidean projection of $\y/\lambda$ over the dual feasible set $\Delta_\A$.
%Unfortunately, this property does not generalize to other data-fidelity functions since the resulting dual problem  \eqref{eq:GLM_reg_dual_problem} can no longer be characterized as an Euclidean projection problem. %\cfd{[At least in the L2 sense. To be confirmed in a broader sense.]}
%problem-dependent
which is no longer the case for non-quadratic loss functions.
The Gap Safe rule \citep{Ndiaye2017}, however, relies primarily on the duality gap which is defined for any primal-dual pair of problems, regardless of the specific cost functions.
The authors were therefore able to deploy this screening rule for a fairly generic class of functions. %, within the generalized linear model. %(which includes the Lasso and its variations, logistic, Multi-task and more)
%state-of-the-art
Additionally, this particular rule leads to state-of-the-art performances in a wide range of scenarios \citep{Ndiaye2017}.
%
%In this paper, we build upon this particular safe rule and extend it to a boarder class of problems. %\eqref{eq:GLM_reg_problem}. %In Sections \ref{sec:screening_GLM} and \ref{sec:local_approach},
%--in particular, by integrating a non-negativity constraint to model in \citet{Ndiaye2017}.
%The non-negative constraint, considered in \citep{Wang2019} for the Lasso problem, is integrated here to the generic model considered in \citet{Ndiaye2017}.

\paragraph{Problem definition and working assumptions.} In this work, we consider the following generic primal problem 
\begin{equation} \label{eq:GLM_reg_problem}
\x^\star \in  
\argmin_{\x \in \s{C}}  P_\lambda(\x) := F(\A\x) + \lambda \Omega(\x)
\end{equation}
where $\A \in \R^{m \times n}$,  $F: \R^m \to \R$, $\Omega: \R^n \to \R_{+}$, $\s{C} \subseteq \R^n$, and $\lambda>0$. Moreover, we make the following assumptions:
\begin{itemize}
    \item $F$ is coordinate-wise separable, i.e.,  $ F(\vt{z}) = \sum_{i=1}^m f_{i}(z_i)$ where each scalar function  $f_i : \R \to \R$ is proper, lower semi-continuous, convex, and differentiable.
    \item $\Omega$ is a group-decomposable norm, i.e., given a partition $\s{G}$ of $\{1,\ldots,n\}$, $\Omega(\x) = \sum_{g\in \s{G}} \Omega_g (\x_g)$ where each $\Omega_g$ is a norm on $\R^{n_g}$ ($n_g$ denoting the cardinality of $g\in \s{G}$).\footnote{The $\ell_1$-norm is a trivial example of group-decomposable norm, where each group $g$ corresponds to a singleton (i.e., $\s{G} = \{\{1\}, ~ \dots\, \{n\} \}$) and $\Omega_g = |\cdot |$.}
    \item $\s{C}$ is a constraint set. Here, we study the cases $\s{C} = \R^{n}$ (unconstrained) and $\s{C} = \R_{+}^{n}$ (non-negativity constraint).
\end{itemize}
Finally, we assume  that $ P_\lambda $ admits at least one minimizer $\x^\star \in \s{C}$.

\paragraph{Contributions and roadmap.} The present paper extends the Gap Safe rules proposed by~\citet{Ndiaye2017} (and recalled in Section~\ref{sec:screening_GLM}) to a broader class of problems of the form~\eqref{eq:GLM_reg_problem} in two aspects. First, we allow the use of a non-negativity constraint ($\s{C} = \R^n_+$). Second,  we relax the requirement of global strong concavity of the dual objective function (see Section~\ref{sec:local_approach}). Indeed, we prove in Theorem~\ref{prop:local_concavity} that a Gap Safe sphere can be constructed from the only requirement that the dual objective function is locally strongly concave on a subset that contains the dual solution.
This result is exploited in Section~\ref{ssec:local_fixed} to revisit the Gap Safe dynamic screening algorithm~\citep{Ndiaye2017}. It allows to tackle problems such as common $\ell_1$-regularized Kullback-Leibler regression. In Section~\ref{ssec:local_adaptive}, we further exploit Theorem~\ref{prop:local_concavity} to propose a new Gap Safe dynamic screening algorithm where, at each iteration, the Gap Safe sphere is iteratively refined, leading to an increase of the number of screened variables.
Finally, these two generic approaches are  applied to a set of concrete problems in Section~\ref{sec:Particular_cases}, and are experimentally evaluated in Section~\ref{sec:experiments}.

\section{Safe Screening for Generalized Linear Models} \label{sec:screening_GLM} %regularized

\subsection{Notations and Definitions} 

%%%%%%%%% NOTATION %%%%%%%%%%
Scalar operations (such as division, logarithm, exponential and comparisons), whenever applied to vectors, are implicitly assumed as entry-wise operations.
%
%Adding a scalar $a$ to a vector $\vt{z}$ implicitly assumes adding $a$ to each entry $z_i$ of $\vt{z}$.
%
%Let $\Gamma_0(\R^d)$ be the class of proper closed convex functions on $\R^d$. %from \citet{Wang2019}
%Let $\Gamma_0(\R^n)$ be the class of proper, convex, lower semi-continuous functions functions on $\R^n$. %from \citet{Wang2019}
%
%[n]
We denote by $[n] = \{1, \dots, n\}$ the set of integers ranging from $1$ to $n \in \mathbb{N}$.
For a vector $\vt{z} \in \R^n$, 
%
%Indexing
 $z_i$ (or sometimes $[\vt{z}]_i$ to avoid ambiguities) stands for its $i$-th entry.
We use the notation $[\vt{z}]^{+}$ to refer to the positive part operation defined as $\max(0, z_i)$ for all $i \in [n]$.
Given a subset of indices $g \subseteq [n]$ with cardinality $|g|= n_g$,  $\vt{z}_g \in \R^{n_g}$ (in bold case) denotes the restriction of $\vt{z}$ to its entries indexed by the elements of $g$. 
%We denote $\vt{z}_s \in \R^{d_s}$ (in bold case) the vector $\vt{z}$ indexed by a set of indices $s \subseteq [d]$ with cardinality $|s|= d_s$.
For a matrix $\A$, we denote by $\cA_j$ its $j$-th column and $\A_g$ the matrix formed out of the columns of $\A$ indexed by the set $g \subseteq [n]$. 
%
%$= \rA_i^\T\x = \langle \rA_i^\T, \x \rangle$, where $\rA_i$ denotes the $i$-th row of matrix $\A$.
%
%Set complement
We denote $\s{I}^\complement$ the complement of a set $\s{I}$.

%%%%%%% PRELIMINARY DEFINITIONS %%%%%%%%

%CLASS OF FUNCTIONS
%proper + closed + defined over the extended real line + lower semi-continuous
%proper: $\dom f \neq \emptyset$ and $f(x)> -\infty$ for all $x \in \R^d$.
%\citep[see][section 3.1]{Borwein2000}:
%\citep[see][Definition 9.2]{Bauschke2011}: the set of convex lower semi-continuous functions from H to is denoted $\Gamma$
We consider functions taking values over the extended real line where the \emph{domain} of a function $f: \R^n \to (-\infty, +\infty]$ is defined as the set:
$$\dom(f) = \{\vt{x} \in \R^n ~|~ f(\vt{x}) < +\infty\}.$$

%Domain
%\citep[Chapter 3]{Borwein2000}
% We define the \emph{domain} of a function f : E \to (-\infty,+\infty] is the set
% $$dom f = {\x \in E | f (\x) < +\infty}.$$
% Or, replace E by $\R^d$ (find better ref)
%
%Also check Hiriart-Urruty vol.II. chap X, sec. 1.1, eq. 1.1.1 pp.37 for the supposed class of functions.

%Subgradient
%Only defined at $x \in dom f$
%\citep[eq. 3.1.4]{Borwein2000}
% A vector $g \in \R^d$ is a subgradient of f at a point $x \in dom(f)$ if
% $$f(z) \geq f(x) + \langle z-x, g \rangle$$, for all points z in \R^d.
% We denote $\partial f(x)$ the set of subgradients (called the subdifferential) at point x, defining $\partial f(x) = \emptyset$ for $x \not \in dom f$.
%Fenchel conjugate %From Ndiaye2017?
For a function $f : \R^n \to (-\infty, +\infty]$, 
we denote $f^* : \R^n \to (-\infty, +\infty]$ its \emph{Fenchel-Legendre transform} (or conjugate function), defined as follows:
\begin{equation} \label{eq:def:Fenchel_conjugate}
f^*(\vt{u}) := \sup_{\vt{z} \in \R^n} \langle \vt{z}, \vt{u} \rangle - f(\vt{z}).  %\triangleq %\equiv
\end{equation}
%Fenchel conjugate for (dom f) 
% \citep[Definition 13.1]{Bauschke2011} %defines $f: \s{H} \to \R$, where $\s{H}$ is a Hilbert space.
%For a function $f :  E \to [-\infty, +\infty]$, with $E \subset \R^d$,
%we denote $f^* : \R^d \to \R$ its Fenchel-Legendre transform (or conjugate function), defined as follows:
%\begin{align} \label{eq:def:Fenchel_conjugate}
%f^*(\vt{u}) := \sup_{\vt{z} \in E} \langle \vt{z}, \vt{u} \rangle - f(\vt{z})  %\triangleq %\equiv
%\end{align}
%Dual norm %From Ndiaye2017?
For a norm $\Omega$ over $\R^n$, we denote $\overline{\Omega}$ its associated dual norm such that:
\begin{equation} \label{eq:def:dual_norm}
    \overline{\Omega}(\vt{u}) := \sup_{\Omega(\vt{z})\leq 1} \langle \vt{z}, \vt{u} \rangle.
\end{equation}
The dual norm $\overline{\Omega}$ is not to be confused with the Fenchel conjugate $\Omega^*$. Actually, the conjugate of a norm is the indicator function of the unit ball of the dual norm, i.e., $\Omega^*(\vt{u}) = \ind_{\overline{\Omega}(\vt{u}) \leq 1}$.

\subsection{Dual Problem}

In Theorem~\ref{thm:dual_GLM_and_optimality} below, we derive the dual problem of~\eqref{eq:GLM_reg_problem} together with the associated primal-dual optimality conditions. In particular, we provide a generic expression for the two considered constraint sets $\s{C}=\R^n$ and $\s{C}=\R^n_+$ which extends~\citet[Theorem 2]{Ndiaye2017}. The proof is given in Appendix~\ref{sec:finding_dual}.

%% DUAL and OPTIMALITY
\begin{theorem}%[Dual problem and optimality conditions] 
\label{thm:dual_GLM_and_optimality}
The dual formulation of the optimization problem defined in \eqref{eq:GLM_reg_problem} is given by
\begin{align} \label{eq:GLM_reg_dual_problem}
& \vtheta^\star =
\argmax_{\vtheta \in \Delta_{\A}} ~  D_\lambda(\vtheta) := - \sum_{i=1}^{m} f_i^*(-\lambda \vthetai)\\
%
%Dual feasible set
&\text{with } ~  \Delta_{\A} = \{\vtheta \in \R^m ~|~ \forall g \in \s{G}, \overline{\Omega}_g(\phi(\A_{g}^\T \vtheta)) \leq 1\} \cap \dom(D_\lambda) \label{eq:GLM_dual_feasible_set}  %the dual feasible set.
\end{align}
where $\vtheta \in \R^m$ is the dual variable. %\sout{and $``\!\leq\!"$ is defined component-wisely}.
%$D_\lambda(\vtheta)$ denotes the dual objective function. 
%Non-linearity
The function $\phi: \R \rightarrow \R$,
applied component-wisely in~\eqref{eq:GLM_dual_feasible_set},
is either the identity (when $\s{C} = \R^n$) or the positive part (when $\s{C} = \R_{+}^n$)
%
%for our constrained sets of interest:
\begin{align}\label{eq:phi_GLM_dual}
%&\s{C} = \R^n ~\Rightarrow~ \phi: \x \mapsto \x & &
%&\s{C} = \R_{+}^n  ~\Rightarrow~ \phi: \x \mapsto [\x]^+
\phi  = 
\left\{
\begin{array}{ll}
 \operatorname{Id} & \text{if } ~\s{C} = \R^n  \\
 \left[\cdot \right]^{+} & \text{if } ~\s{C} = \R_{+}^n
\end{array} \right.. 
\end{align}
%
%Dual objective
%The dual objective function is defined as $D_\lambda(\vtheta) := \sum_{i=1}^{n} f_i^*(-\lambda \vthetai) - \ind_{\Delta_{\A}}(\vtheta)$.
Moreover, the first-order optimality conditions for a  primal-dual solution pair $(\x^\star,\vtheta^\star) \in (\dom (P_\lambda) \cap \s{C} ) \times \Delta_{\A}$, are given by
\begin{align}
%%CONDITION 1
&\forall i \in [m],~~ \lambda \vthetai^\star = - f_i'([\A\x^\star]_i)    & \text{(primal-dual link)} \label{eq:GLM_optimality_condition1} \\ %= \{1, \dots, n\}
% 
%%CONDITION 2
% &\forall g \in \s{G},~~  \A_g^\T \vtheta^\star \in \partial \Omega_g(\x_g^\star) + \partial \ind_{\s{C}}(\x_g^\star) 
% &\text{(sub-differential inclusion)} \label{eq:GLM_optimality_condition2}
%
%%CONDITION 2 - EXPLICIT
&\forall g \in \s{G}, 
\left\{
\begin{array}{lrl}
\overline{\Omega}_g(\phi(\A_g^\T \vtheta^\star)) \leq 1, & \text{if } ~\x_g^\star = \vt{0} &\\
\overline{\Omega}_g(\phi(\A_g^\T \vtheta^\star)) = 1,  ~~ (\A_g^\T \vtheta^\star)^\T\x_g^\star = \Omega_g(\x_g^\star), & \text{otherwise} & %\text{(i.e. } \x_g^\star \geq \vt{0},~ \x_g \neq 0 \text{).}
\end{array} \right. \mkern-25mu
&  \begin{array}{r}
 \text{(sub-differential} \\
\text{inclusion)}
\end{array}\label{eq:GLM_optimality_condition2_explicit}
\end{align}
%
%Normal cone
%Also note that $\partial \ind_{\s{C}}(\x_g^\star) =  N_{\s{C}}(\x_g^\star)$, where $ N_{\s{C}}(\x_g^\star)$ is the normal cone of $\s{C}$ at $\x^\star$.
\end{theorem}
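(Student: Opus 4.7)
The plan is to apply Fenchel--Rockafellar duality to the decomposition $P_\lambda(\x) = h(\A\x) + g(\x)$ with $h := F$ and $g := \lambda\Omega + \ind_{\s{C}}$. First I would write the Fenchel--Rockafellar dual $\sup_{\bnu \in \R^m}\, -h^*(\bnu) - g^*(-\A^\T \bnu)$, perform the rescaling $\bnu = -\lambda \vtheta$, and use the coordinate separability of $F$ to obtain $h^*(-\lambda\vtheta) = \sum_{i=1}^{m} f_i^*(-\lambda\vthetai)$, which is exactly the dual objective in~\eqref{eq:GLM_reg_dual_problem}. Strong duality would be ensured by a standard constraint qualification such as $0 \in \relint(\A\,\dom g - \dom h)$, verified under the paper's blanket assumptions.

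Next I would compute $g^*$ in both cases. Since $\Omega$ is group-decomposable and $\s{C}$ is a product set, $g^*$ splits as a sum over $\s{G}$, so it suffices to identify each factor. For $\s{C}=\R^n$, the identity $(\lambda\Omega_g)^*(\vt{v}) = \ind_{\overline{\Omega}_g(\vt{v})\leq \lambda}$ together with the fact that the dual norm of $\sum_g \Omega_g$ on disjoint groups is $\max_g \overline{\Omega}_g$ immediately produces the coordinate-wise constraint $\overline{\Omega}_g(\A_g^\T\vtheta)\leq 1$ with $\phi = \operatorname{Id}$. For $\s{C} = \R^n_+$, the key ingredient is the lemma
\begin{equation*}
\sup_{\x_g \geq \vt{0}}\, \langle \vt{v},\x_g\rangle - \lambda\Omega_g(\x_g) \;=\; \ind_{\overline{\Omega}_g([\vt{v}]^+)\leq \lambda}(\vt{v}),
\end{equation*}
which is what injects $\phi = [\cdot]^+$ into the dual feasible set. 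The forward direction is immediate from $\langle \vt{v},\x_g\rangle \leq \langle [\vt{v}]^+, \x_g\rangle \leq \overline{\Omega}_g([\vt{v}]^+)\,\Omega_g(\x_g)$ for $\x_g \geq \vt{0}$; the reverse direction requires constructing a nonnegative feasible direction with strictly positive inner product and bounded norm, which leverages the absoluteness/monotonicity of $\Omega_g$ under the componentwise order. This is the step I expect to be the main technical obstacle and the one that genuinely distinguishes the two cases in the statement.

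Finally, I would extract the optimality conditions from the resulting strong-duality equality. The primal--dual link~\eqref{eq:GLM_optimality_condition1} comes from the coordinate-wise Fenchel--Young equality $f_i([\A\x^\star]_i) + f_i^*(-\lambda\vthetai^\star) = -\lambda\vthetai^\star [\A\x^\star]_i$; since each $f_i$ is differentiable, this forces $\lambda\vthetai^\star = -f_i'([\A\x^\star]_i)$. The sub-differential inclusion~\eqref{eq:GLM_optimality_condition2_explicit} then follows from the primal first-order condition $\vt{0} \in \A^\T \nabla F(\A\x^\star) + \lambda\,\partial\Omega(\x^\star) + N_{\s{C}}(\x^\star)$ combined with the primal--dual link. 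For $\s{C}=\R^n$ the normal cone vanishes and one invokes the standard characterization of $\partial\Omega_g$ (elements with dual norm $\leq 1$, with equality and $\langle \cdot,\x_g^\star\rangle = \Omega_g(\x_g^\star)$ on active groups). For $\s{C}=\R^n_+$, I would decompose $\A_g^\T\vtheta^\star = \vt{s}_g + \vt{n}_g / \lambda$ with $\vt{s}_g \in \partial\Omega_g(\x_g^\star)$ and $\vt{n}_g \leq \vt{0}$ satisfying complementary slackness $[\vt{n}_g]_j [\x_g^\star]_j = 0$, and then check that taking entry-wise positive parts absorbs $\vt{n}_g$ while preserving $\vt{s}_g$ on $\supp(\x_g^\star)$; monotonicity of $\overline{\Omega}_g$ then yields $\overline{\Omega}_g([\A_g^\T\vtheta^\star]^+) \leq \overline{\Omega}_g(\vt{s}_g) \leq 1$ with equality on active groups, unifying both cases under the single operator $\phi$.
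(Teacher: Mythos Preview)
Your proposal is correct and follows essentially the same route as the paper: Fenchel--Rockafellar duality applied to the splitting $F(\A\x) + (\lambda\Omega + \ind_{\s{C}})(\x)$, separability of $F^*$, and then explicit computation of the subdifferential sum $\partial\Omega + N_{\s{C}}$ for the optimality conditions. The only cosmetic difference is that the paper computes $(\lambda\Omega + \ind_{\R^n_+})^*$ via the infimal-convolution identity $(G_1+G_2)^* = G_1^* \,\boxdot\, G_2^*$ and the fact that the infimal convolution of two indicators is the indicator of the Minkowski sum, arriving at $\ind_{\overline{\Omega}([\cdot]^+)\leq 1}$ in one line, whereas you compute the same conjugate directly from the sup definition; these are the same calculation in different clothes, and you are in fact more explicit than the paper about the absoluteness/monotonicity of $\Omega_g$ needed to justify the key step.
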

% \begin{proof}
% See the supplementary material \ref{sec:finding_dual}.
% \end{proof}

%Although the non-negativity constraint makes the derivation of the above results more convoluted than before, the final expressions are very similar, differing basically on some positive rectification operations (or ReLU). %positive part

%%GENERALIZATION TO OTHER CONSTRAINT SETS
%\begin{remark} It is not clear if the above result can be generalized to other (more generic) constraint sets $\s{C}$, such as a bounded-variable constraint.(i.e. that their dual problem and optimality conditions can be written in the form above). \end{remark}

% By computing the subdifferential sets $\partial \Omega$, $\partial \ind_{\s{C}}$ and the sum $\partial \Omega + \partial \ind_{\s{C}}$ (see \cref{app:subdiff_sum} for details), 
% the second optimality condition \eqref{eq:GLM_optimality_condition2} 
% assumes the following explicit form: 
% \begin{align}\label{eq:GLM_optimality_condition2_explicit}
% \forall g \in \s{G}, \quad
% \left\{
% \begin{array}{lrl}
%  \overline{\Omega}_g(\phi(\A_g^\T \vtheta^\star)) \leq 1, & ~\text{if } ~\x_g^\star = \vt{0}  &\\
%  \overline{\Omega}_g(\phi(\A_g^\T \vtheta^\star)) = 1,  ~~ (\A_g^\T \vtheta^\star)^\T\x_g^\star = \Omega_g(\x_g^\star), & ~\text{otherwise} & %\text{(i.e. } \x_g^\star \geq \vt{0},~ \x_g \neq 0 \text{).}
% \end{array} \right.
% \end{align}

\subsection{Safe Screening Rules}

%Screening rules rely directly on the optimality condition \eqref{eq:GLM_optimality_condition2}, which writes more explicitly as \eqref{eq:GLM_optimality_condition2_explicit} in our case.
%
%A quick inspection \eqref{eq:GLM_optimality_condition2_explicit} reveals that 
%$$\x_g^\star \neq \vt{0}  \implies \overline{\Omega}(\phi(\A_g^\T \vtheta^\star)) = 1$$
%since $\vtheta^\star \in \Delta_\A$ $(i.e. \overline{\Omega}(\phi(\A_g^\T \vtheta^\star)) < 1$), we have, by contrapositive, that 
%$$\overline{\Omega}(\phi(\A_g^\T \vtheta^\star)) < 1 \implies \x_g^\star = \vt{0} $$

A direct consequence of Theorem~\ref{thm:dual_GLM_and_optimality} is that, given the dual solution $\vtheta^\star$,
\begin{equation}\label{eq:screening_implication}
    \overline{\Omega}_g(\phi(\A_g^\T \vtheta^\star)) < 1 \implies \x_g^\star = \vt{0} 
\end{equation}
for any primal solution~$\x^\star$.
Hence, every group of coordinates $g\in \s{G}$  for which $\overline{\Omega}_g(\phi(\A_g^\T \vtheta^\star)) < 1$ is surely inactive (i.e., $\x^\star_g = \vt{0}$). They can thus be safely screened out in order to reduce the size of the primal problem and accelerate its resolution.
In practice, however, the dual solution $\vtheta^\star$ is unknown (in advance) and~\eqref{eq:screening_implication} cannot be evaluated. Fortunately, it is possible to define a more restrictive---yet practical---sufficient condition that relies on the concept of \emph{safe region}.

% Safe Region
\begin{definition}[Safe Region] A compact subset $\s{R} \subset \R^m$ is said to be a \emph{safe region} if it contains the dual solution (i.e., $\vtheta^\star \in \s{R}$).
\end{definition}

% Screening rule
\begin{proposition}[Safe Screening Rule \citep{ElGhaoui2012}]   \label{propo:Safe_screen}
Let  $\s{R}$ be a safe region and $g \in \s{G}$. Then, 
\begin{align}
\max_{\vtheta \in \s{R}} \; \overline{\Omega}_g(\phi(\A_g^\T \vtheta)) < 1 \implies
\overline{\Omega}_g(\phi(\A_g^\T \vtheta^\star)) < 1 \implies
\x_g^\star = \vt{0}.
\end{align}
\end{proposition}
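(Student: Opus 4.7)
The proposition is essentially a direct corollary of Theorem~\ref{thm:dual_GLM_and_optimality}, so the plan is short and contains no genuine obstacle; the work has already been done in establishing the optimality conditions and the definition of a safe region.

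My plan is to handle the two implications separately. For the first implication, the plan is to invoke the defining property of a safe region, namely $\vtheta^\star \in \s{R}$. Since $\vtheta^\star$ is a feasible point of the maximization on the left-hand side, one immediately has
\begin{equation*}
\overline{\Omega}_g(\phi(\A_g^\T \vtheta^\star)) \;\le\; \max_{\vtheta \in \s{R}} \; \overline{\Omega}_g(\phi(\A_g^\T \vtheta)) \;<\; 1,
\end{equation*}
which gives the middle strict inequality. This uses nothing beyond the definition of the maximum and of a safe region.

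For the second implication, the plan is to argue by contraposition using the sub-differential inclusion condition~\eqref{eq:GLM_optimality_condition2_explicit}. Suppose, for contradiction, that $\overline{\Omega}_g(\phi(\A_g^\T \vtheta^\star)) < 1$ yet $\x_g^\star \ne \vt{0}$. Then the second branch of~\eqref{eq:GLM_optimality_condition2_explicit} applies, forcing $\overline{\Omega}_g(\phi(\A_g^\T \vtheta^\star)) = 1$, contradicting the assumed strict inequality. Hence $\x_g^\star = \vt{0}$. Concatenating the two implications yields the stated chain.

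The only step that is not purely mechanical is ensuring that Theorem~\ref{thm:dual_GLM_and_optimality} indeed provides the dichotomy ``$\x_g^\star = \vt{0}$ or $\overline{\Omega}_g(\phi(\A_g^\T\vtheta^\star)) = 1$'' for every group $g$; this is exactly what the two-case display~\eqref{eq:GLM_optimality_condition2_explicit} gives for any primal--dual optimal pair, so the argument goes through for both choices $\s{C} = \R^n$ and $\s{C} = \R^n_+$ uniformly, with no additional care needed for the non-negativity case beyond what is already built into the definition of $\phi$.
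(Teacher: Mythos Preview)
Your proof is correct and matches the paper's reasoning: the paper does not give a standalone proof of this proposition but simply notes that the second implication is ``a direct consequence of Theorem~\ref{thm:dual_GLM_and_optimality}'' (i.e., of~\eqref{eq:GLM_optimality_condition2_explicit}), and the first implication is immediate from $\vtheta^\star \in \s{R}$. There is nothing to add.
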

The inequality $\max_{\vtheta \in \s{R}} \overline{\Omega}_g(\phi(\A_g^\T \vtheta)) < 1$ is referred to as \emph{screening test} as it allows to test whether a group of coordinates is guaranteed to be zero in the optimal solution. 

With Proposition~\ref{propo:Safe_screen}, numerous screening rules can be defined from the construction of different safe regions. Although any region $\s{R}$ such that $\Delta_\A \subset  \s{R}$ is safe as $\vtheta^\star \in \Delta_\A$, these trivial choices would lead to poor screening performance. Instead,  to maximise the number of screened groups while limiting the computational overhead of testing, one needs to construct safe regions $\s{R}$  that are as small as possible, and for which the quantity $\max_{\vtheta \in \s{R}} \overline{\Omega}_g(\phi(\A_g^\T \vtheta))$ (screening test) can be computed efficiently.
It is thus standard practice to consider simple regions such as balls~\citep{ElGhaoui2012, Bonnefoy2015,Ndiaye2017} or domes~\citep{Fercoq2015,Xiang2012} 
as they are more likely to lead to closed-form expressions of the screening test (see Section~\ref{sec:Particular_cases}).

%Screening Test
%This leads us to the definition of the screening test.
%\begin{definition}[Safe Screening test \citep{ElGhaoui2012}] Given a region $\mathcal{R}$ such that $\vtheta^\star \in \mathcal{R}$, a screening test for a group $g$ of coordinates is given by
%\begin{align}
%\test{\A_g}{\s{R}} := \max_{\vtheta \in \s{R}} \overline{\Omega}_g(\phi(\A_g^\T \vtheta^\star)) 
%\end{align}
%\end{definition}

%%%PRESERVED AND REJECTION SET
%%The groups of coordinates $g \in \s{G}$ can be partitioned into 
%It allows us to define a preserved set $\mathcal{A}$ and its complement, the rejection set $\mathcal{A}^{\mathsf{c}}$, that gathers the indices of all screened groups:
%\begin{align} \label{eq:preserved_rejected_sets}
%\mathcal{A}~ = \{g \in \s{G} ~|~ \max_{\vtheta \in \s{R}} \overline{\Omega}_g(\phi(\A_g^\T \vtheta)) \geq 1 \}, \nonumber \\
%\mathcal{A}^{\mathsf{c}} = \{g \in \s{G} ~|~ \max_{\vtheta \in \s{R}} \overline{\Omega}_g(\phi(\A_g^\T \vtheta))< 1 \}. 
%\end{align}

%\subsection{Safe regions}

%%FOCUS ON GAP SAFE - MORE GENERAL
%Different screening rules in the literature differ basically on the construction of the safe region $\s{R}$.
%
% Among the existing screening strategies, we concentrate on the so-called Gap Safe rules which have recently been 
% extended to the quite general framework of the regularized GLM problems  by \citet{Ndiaye2017}.
% %generalized to regularized GLM problems in \citet{Ndiaye2017}. %generalize to the regularized GLM model
% It proposes a sphere safe region which
% we further generalize it to our considered constrained cases. % in the following.

\paragraph{Gap Safe Sphere.} \label{ssec:GAP_Safe_sphere} %for the constrained GLM problem
%EXPECTED QUALITIES
%The \emph{quality} of the safe region is decisive for the practical success of the screening procedure. 
%%A \emph{good} safe region is one that:
%Two main criteria are important to define a good safe region:
%\begin{enumerate}
%\item it is as small (which will allow to screen more coordinates)
%\item can be calculated efficiently (with as little computational effort as possible);
%\end{enumerate} 
%
%A good safe region is expected to be as small as possible and also computationally cheap to compute.
%so that the screening test itself doesn't represent a considerable overhead. 
A notable safe region is the Gap Safe sphere as it leads to state-of-the-art screening performances in a wide range of scenarios \citep{Ndiaye2017}. It relies on the duality gap for the primal-dual problems \eqref{eq:GLM_reg_problem}-\eqref{eq:GLM_reg_dual_problem} defined for a feasible pair $(\x,\vtheta)$ as
\begin{equation}
    \Gap_\lambda(\x,\vtheta) := P_\lambda (\x) - D_\lambda(\vtheta).
\end{equation}

%Below, we state a slightly modified version of \citet[Theorem 6]{Ndiaye2017} that encompasses our constrained case.%cover
%%%GAP SAFE REGION
\begin{theorem}[Gap Safe Sphere \citep{Ndiaye2017}]\label{thm:GAP_Safe}
%Assuming that $F$ has $\frac{1}{\gamma}$-Lipschitz gradient,
Let the dual function $D_\lambda$ be $\alpha$-strongly concave.
%
%then for any $\x \in \dom(P_\lambda) $ and $\vtheta \in \Delta_\A$:
Then, for any feasible primal-dual  pair $(\x,\vtheta) ~\in~ (\dom(P_\lambda) \cap \s{C}) \times \Delta_\A$:
\begin{align}
\s{B}(\vtheta,r),  \text{ with } r = \sqrt{\frac{2\Gap_\lambda(\x,\vtheta)}{\alpha}}
\label{eq:GAP_Safe_sphere}
\end{align}
is a safe region, i.e., $\vtheta^\star \in \s{B}(\vtheta, r)$. %\s{R}$.
\end{theorem}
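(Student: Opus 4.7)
The plan is to combine two standard ingredients: (i) a quadratic growth inequality at the dual maximizer coming from $\alpha$-strong concavity, and (ii) weak duality used to bound the resulting suboptimality gap by the computable duality gap.

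As a first step I would prove that for every feasible $\vtheta \in \Delta_\A$,
\begin{equation*}
\tfrac{\alpha}{2}\|\vtheta - \vtheta^\star\|^2 \;\leq\; D_\lambda(\vtheta^\star) - D_\lambda(\vtheta).
\end{equation*}
To avoid having to assume smoothness of $D_\lambda$ at $\vtheta^\star$, I would rely on the midpoint characterization of $\alpha$-strong concavity: for every $t\in[0,1]$,
\begin{equation*}
D_\lambda\bigl(t\vtheta + (1-t)\vtheta^\star\bigr) \;\geq\; t\, D_\lambda(\vtheta) + (1-t)\, D_\lambda(\vtheta^\star) + \tfrac{\alpha\, t(1-t)}{2}\|\vtheta - \vtheta^\star\|^2.
\end{equation*}
The feasible set $\Delta_\A$ defined in~\eqref{eq:GLM_dual_feasible_set} is convex (it is the intersection of $\dom(D_\lambda)$ with sublevel sets of the convex maps $\vtheta \mapsto \overline{\Omega}_g(\phi(\A_g^\T \vtheta))$, noting that $\phi$ is either the identity or $[\cdot]^+$ and hence $\overline{\Omega}_g \circ \phi \circ \A_g^\T$ is convex). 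Therefore the convex combination remains in $\Delta_\A$, the optimality of $\vtheta^\star$ upper bounds the left-hand side by $D_\lambda(\vtheta^\star)$, and after rearranging and dividing by $t$, sending $t \to 0^+$ yields the claimed inequality.

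Next I would invoke weak duality for the primal-dual pair~\eqref{eq:GLM_reg_problem}--\eqref{eq:GLM_reg_dual_problem}: any feasible $\x \in \dom(P_\lambda) \cap \s{C}$ satisfies $D_\lambda(\vtheta^\star) \leq P_\lambda(\x^\star) \leq P_\lambda(\x)$. Subtracting $D_\lambda(\vtheta)$ from both sides gives
\begin{equation*}
D_\lambda(\vtheta^\star) - D_\lambda(\vtheta) \;\leq\; P_\lambda(\x) - D_\lambda(\vtheta) \;=\; \Gap_\lambda(\x,\vtheta).
\end{equation*}
Chaining this with the first inequality immediately produces $\|\vtheta - \vtheta^\star\|^2 \leq 2\,\Gap_\lambda(\x,\vtheta)/\alpha = r^2$, i.e., $\vtheta^\star \in \s{B}(\vtheta, r)$.

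The only delicate step is the first one: expressing the quadratic growth bound at the constrained maximum without assuming differentiability of $D_\lambda$ at $\vtheta^\star$ (which is relevant since $D_\lambda$ is built from conjugates $f_i^*$ that need not be smooth everywhere). The midpoint argument sketched above sidesteps this issue entirely by keeping the analysis inside $\Delta_\A$ and using nothing beyond the definition of strong concavity and the optimality inequality $D_\lambda(\vtheta^\star) \geq D_\lambda(\vt{y})$ for $\vt{y}\in\Delta_\A$. Apart from that, the argument is a short two-line chain of inequalities.
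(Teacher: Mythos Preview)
Your proof is correct and follows the same two-step skeleton as the paper (quadratic growth at the dual maximizer from strong concavity, then weak duality to replace the unknown optimal value by the computable gap). The only difference is in how the first inequality is obtained: the paper (in its proof of Theorem~\ref{prop:local_concavity}, which specializes to Theorem~\ref{thm:GAP_Safe}) writes the first-order strong-concavity inequality at $\vtheta^\star$, namely $D_\lambda(\vtheta) \leq D_\lambda(\vtheta^\star) + \langle \nabla D_\lambda(\vtheta^\star), \vtheta - \vtheta^\star\rangle - \tfrac{\alpha}{2}\|\vtheta-\vtheta^\star\|^2$, and then uses the constrained first-order optimality condition $\langle \nabla D_\lambda(\vtheta^\star), \vtheta - \vtheta^\star\rangle \leq 0$ for $\vtheta\in\Delta_\A$. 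You instead use the interpolation form of strong concavity together with convexity of $\Delta_\A$ and the zeroth-order optimality $D_\lambda(\vtheta^\star)\geq D_\lambda(\cdot)$ on $\Delta_\A$, then let $t\to 0^+$. Your route has the small advantage you already identified: it never evaluates $\nabla D_\lambda$ at $\vtheta^\star$, so it goes through even if $D_\lambda$ fails to be differentiable there (the paper's standing assumptions on the $f_i$ do not directly guarantee smoothness of $f_i^*$ in general, though it holds in all their examples). The paper's route, on the other hand, is a single line once differentiability is granted and makes the role of the first-order optimality condition explicit.
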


The Gap Safe sphere improves over previously proposed safe regions~\citep{ElGhaoui2012, Bonnefoy2015,Wang2015} in two ways. First, it is not restricted to the Lasso problem and applies to a broad class of problems of the form~\eqref{eq:GLM_reg_problem}, under the assumption that the associated dual function $D_\lambda$ is strongly concave. Second, in case of strong duality, its radius vanishes when a converging sequence of  primal-dual variables is provided (with the duality gap tending to zero).

%%GLOBAL VS.LOCAL HYPOTHESES
% In \citet[Theorem 6]{Ndiaye2017}, the primal data-fidelity term is assumed to have a $\frac{1}{\gamma}$-Lipschitz gradient, which directly implies the $\alpha$-strong concavity of the dual function $D_\lambda$ \citep[chapter X, theorem 4.2.2]{Hiriart-Urruty1993a}, with $\alpha=\gamma\lambda^2$.
% %However, this implication requires the data-fidelity term to be and differentiable on the entire space $\R^m$ (or would it be in its entire domain?)
% %In \citet{Ndiaye2020} the strong concavity is assumed directly (as we do here).
% %
% Note that, either way, a global hypothesis is assumed (either on the primal or the dual). In section \ref{sec:local_approach}, we show how to rely upon less constraining local properties of the dual function --namely, its strong concavity on a well-chosen neighborhood of the dual solution. %suffices / replace them by local hypotheses

%Dynamic (also sequential)
% In case of strong duality, a very appealing property of this sphere is that its radius vanishes when a converging sequence of  primal-dual variables is provided (with the duality gap tending to zero). %as the underlying optimization algorithm converges
% 
% Also, when the duality gap is used as a stopping criterion, the sphere radius can be obtained without further computational effort. %effortlessly

\subsection{Screening with Existing Solvers} \label{sec:standard_GAPSafeScreen}

The previously presented screening tools can be integrated to most existing solvers in order to reduce the size of the primal problem~\eqref{eq:GLM_reg_problem} and accelerate its resolution. As previously mentioned, %To do so, 
screening rules can be exploited in many ways~\citep[see][]{Xiang2016} that include \textit{static screening}~\citep{ElGhaoui2012}, \textit{sequential screening}~\citep{Xiang2011}, or \textit{dynamic screening}~\citep{Bonnefoy2015}. In this work, we focus on the dynamic screening approach that fully exploits the structure of iterative optimization algorithms by screening out groups of coordinates in the course of iterations. As the algorithm converges, smaller safe regions can be defined, leading to an increasing number of screened groups.
%
%Among the different ways that safe screening can be used to accelerate the optimization algorithm (see \citet{Xiang2016} for a survey),
%\footnote{Other strategies include: 
%1) static screening, in which the screening test is performed only once before launching the optimization algorithm; %\citep{ElGhaoui2012} \citep{Xiang2011} \citep{Xiang2012},
%2) sequential screening \citep{Wang2014}, %\citep{Wang-Wonka2015,Malti2016,Xiang2011,Liu2014}
%which assumes that the problem is being solved on  sequence of regularization parameters $\{\lambda_k\}_k$ and profits from the solution of the previously solved problem with $\lambda_{k-1}$ to screen the ensuing problem with regularization $\lambda_k$.
%} 
%we will focus on the so-called dynamic screening approach \citep{Bonnefoy2015}, which is particularly suited to iterative optimization algorithms.
%The so-called dynamic screening approach \citep{Bonnefoy2015} is an efficient way to accelerate iterative optimization algorithms
%Differently from the former static rules \citep{ElGhaoui2012}, 
%The idea is to perform screening tests repeatedly over the iterations of the underlying solver.
%as depicted in algorithm \ref{alg:solver_screening}
%It therefore relies on dynamic safe regions, which will evolve over the iterations, leveraging the increasingly accurate solution estimates provided by the solver ($\x_t$ at iteration $t$). In this way, 
%As the algorithm converges, smaller safe regions can be defined,  leading to an increasing number of screened coordinates.
More precisely, we use the Gap Safe dynamic screening scheme proposed by \citet{Ndiaye2017} as our baseline.
This scheme is presented in Algorithm \ref{alg:solver_screening} where
\begin{equation}
    \{\x,\vt{\eta}\} \gets \mathtt{PrimalUpdate}(\x,\A, \lambda, \vt{\eta})
\end{equation}
represents the update step of any iterative primal solver for~\eqref{eq:GLM_reg_problem}. 
There, 
 %$\y\in \R^m$ denotes the input data (to be sparsely approximated as a linear combination of the columns of matrix $\A$),
 $\x$ denotes the primal variable and $\vt{\eta}$ is a vector formed out of the auxiliary variables of the solver (e.g., gradient step-size, previous primal estimates). % and $\varepsilon_{\gap}$ defines a convergence tolerance for the stopping criterion. 
 To keep the presentation concise, screening is performed after every iteration of the primal solver in Algorithm~\ref{alg:solver_screening}. However, it is noteworthy to mention that this is not a requirement. Screening can actually be performed at any chosen moment. For instance, on regular intervals between a certain number of iterations of the solver. Finally, let us emphasise the nested update of the preserved set (line \ref{alg:line:nested_preserved_set}) showing that the screened groups are no longer tested in the ensuing iterations.
 
 To construct a Gap Safe sphere, a dual feasible point $\vtheta  \in \Delta_\A$ is required (Theorem~\ref{thm:GAP_Safe}). 
 % comes for free when deploying  / may be obtained with the help of
 Although a dual point may be provided by primal-dual solvers~\citep{Chambolle2011,Yanez2017}, it
 is not always guaranteed to be feasible. Moreover, such a dual point
 needs to be computed from $\x$ when the solver only provides a primal solution estimate at each iteration. Needless to say that the latter is the case of many popular solvers for~\eqref{eq:GLM_reg_problem} such as \citet{Beck2009,Harmany2012,Hsieh2011}.
 At line~\ref{alg:line:dual_update} of Algorithm~\ref{alg:solver_screening}, this computation of a dual feasible point (referred to as dual update)  is defined through the function $\vt{\Theta} : \R^n \rightarrow \Delta_\A$. Always with the aim of maximizing the number of screened groups (i.e., reducing the Gap Safe sphere) while limiting the computational overhead, a rule of thumb for $\vt{\Theta}$ is that, for a primal estimate $\x$, the evaluation of $\vt{\Theta}(\x)$ only requires ``simple'' operations and $\|\vt{\Theta}(\x) - \vtheta^\star \|$ is as small as possible. By exploiting the optimality condition~\eqref{eq:GLM_optimality_condition1}, it is customary to define $\vt{\Theta}$ as a simple rescaling of $-\nabla F(\A\x)$ (see Section~\ref{ssec:dual_update}). Not only this choice is computationally cheap but it enjoys the appealing property that $\vt{\Theta}(\x)  \rightarrow \vtheta^\star$ as $\x \rightarrow \x^\star$.

 The purpose of the next section is to extend and refine \Cref{alg:solver_screening}.
%This baseline approach in \Cref{alg:solver_screening} will be extended and refined in the next section respectively by the proposed \Cref{alg:solver_screening_bis,alg:solver_screening_local}.

%For simplicity, in Algorithm \ref{alg:solver_screening}, screening is performed after every iteration of the solver, but it can actually be performed at any chosen moment. For instance, it can be performed on regular intervals between a certain number of iterations of the solver.
%Finally, note the nested characteristic of the preserved set (line \ref{alg:line:nested_preserved_set}) as the screened coordinates are no longer tested on the ensuing iterations.
%$\s{A}_{t+1} \subset \s{A}_{t}$ for all $t$. 

\begin{algorithm}[t]
%\captionsetup{format=hang}
\caption{Dynamic Gap Safe Screening (DGS)~\citep{Ndiaye2017}: \\
%\phantom{Algorithm 1: }
$~\hat{\x} \!=\! \mathtt{GAPSolver}(\A,\lambda, \varepsilon_{\gap})$ %,\y, N_{\max}
} \label{alg:solver_screening}
\begin{algorithmic} [1]
\State \textbf{Initialize} $\mathcal{A}= \s{G}$, $\x \in \mathcal{C}$ %$t=0$, $\vt{x}_0 = \vt{0}$,
\State \textbf{Set}  $\vt{\eta}$ according to the solver
\State \textbf{Compute}  $\alpha$ a strong concavity bound of $D_\lambda$ on $\R^m$
	\Repeat
		\State \textit{--- Solver update restricted  to preserved set  ---}
		\State $\{\x_{\s{A}},\vt{\eta}\} \gets \mathtt{PrimalUpdate}(\x_{\s{A}},\A_{\s{A}}, \lambda, \vt{\eta})$ %,\y
%		\State \textit{--- Restrict to preserved set ---}
%		\State $\tilde{\mt{A}} \gets \tilde{\mt{A}}^{i}_{[\mathcal{A}_{t}]}$, $\quad \vt{x}_{t} \gets (\vt{x}_{t})_{[\mathcal{A}_{t}]}$ 
		\State \textit{--- Dynamic Screening ---}
		\State $\vtheta \gets \vt{\Theta}\left(\x \right) \in \Delta_\A$ \Comment{Dual update}\label{alg:line:dual_update}
		\State $r \gets \sqrt{\frac{2\Gap_\lambda(\x,\vtheta)}{\alpha}}$ \Comment{Safe radius}
		\State $\s{A} \gets \{g \!\in\! \s{A}  ~|~ \max_{\vtheta \in \s{B}(\vtheta,r)} \overline{\Omega}_g(\phi(\A_g^\T \vtheta)) \geq 1 \}$ \Comment{Screening test}\label{alg:line:nested_preserved_set}
		%\State $t \gets t+1$
		\State $\x_{\s{A}^c} \gets \vt{0}$
		\Until {$\Gap_\lambda(\x,\vtheta) < \varepsilon_{\gap}$} %{stopping criterion is met}
%\EndFor
\end{algorithmic}
\end{algorithm}

% Proposed approach
%\section{A novel local approach} %regularized 
 \section{Exploiting Local Regularity Properties of the Dual Function}
\label{sec:local_approach}

%Even though the safe sphere in \eqref{eq:GAP_Safe_sphere}   applies to our quite  generic framework (a constrained and regularized GLM problem), it still requires the dual function $D_\lambda(\vtheta)$ to be \emph{globally} strongly concave.
The Gap Safe sphere given in Theorem~\ref{thm:GAP_Safe} requires the dual function to be globally $\alpha$-strongly concave. %on $\R^m$. 
This precludes its application to an important class of problems of practical interest such as problems involving the Kullback-Leibler divergence and other $\beta$-divergences with $\beta \in (1,2)$ (see \Cref{sec:Particular_cases}).
In this section, we relax this hypothesis by leveraging only \emph{local} properties of the dual function. More precisely, we derive in Theorem~\ref{prop:local_concavity} a Gap Safe sphere with the only requirement that $D_\lambda$ is strongly concave on a well-chosen subset of its domain. The proof is provided in \Cref{app:local_GAP}.
%
%However, in this section we will see that this idea can be taken even further by 
%considering only properties of $D_\lambda$ restraint to well-chosen subsets of its domain.
%assuming strong concavity of $D_\lambda$ on a neighborhood of the dual solution suffices. 

% \begin{definition}[Valid Local Subset]
% A subset $S \subset \dom(D_\lambda)$ is called a valid local subset w.r.t. a Gap Safe Sphere $\s{B}(\vtheta, r)$, if $\vtheta \in S$ and $\vtheta^\star \in S$.
% \end{definition}
% Note that there is no necessary inclusion relation between the sets $\s{B}(\vtheta, r)$ and $S$.

% {\color{red} Keep this definition? Define it directly below? It might be useful to put a name to these sets $S$, see for instance the end of section \ref{ssec:local_adaptive}.}

% Main Theorem
\begin{theorem}%[Local Gap Safe Sphere]
\label{prop:local_concavity}
Let $D_\lambda$ be $\alpha_\s{S}$-strongly concave on a subset $\s{S} \subset \R^m$ %\dom(D_\lambda)
such that $\vtheta^\star \in \s{S}$. Then, for any feasible primal-dual pair $(\x,\vtheta) \in (\dom(P_\lambda) \cap \s{C}) \times (\Delta_{\A} \cap \s{S})$:
\begin{equation}
\s{B}(\vtheta,r), \text{ with } r = \sqrt{\frac{2\Gap_\lambda(\x,\vtheta)}{\alpha_\s{S}}}
\label{eq:GAP_Safe_sphere_local}
\end{equation}
is a Gap Safe sphere, i.e., $\vtheta^\star \in \s{B}(\vtheta, r)$. %\s{R}$.
\end{theorem}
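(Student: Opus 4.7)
The plan is to mimic the proof of the standard Gap Safe sphere (Theorem~\ref{thm:GAP_Safe}), but to be careful that the strong-concavity inequality is only applied between points that both lie in $\s{S}$. The hypotheses are exactly set up to enable this: $\vtheta^\star \in \s{S}$ by assumption, and the feasible dual point $\vtheta$ is chosen in $\Delta_\A \cap \s{S}$. Assuming (as is implicit in the definition of strong concavity on a set) that $\s{S}$ is convex, the whole segment $[\vtheta^\star,\vtheta]$ remains in $\s{S}$, so local strong concavity can be used along it.

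First I would chain two inequalities to bound $D_\lambda(\vtheta^\star) - D_\lambda(\vtheta)$ from below by a quadratic in $\|\vtheta - \vtheta^\star\|$. Concretely, for $t \in (0,1]$, set $\vtheta_t = (1-t)\vtheta^\star + t\vtheta$. Since $\vtheta^\star, \vtheta \in \Delta_\A$ and $\Delta_\A$ is convex, $\vtheta_t \in \Delta_\A$; since $\vtheta^\star, \vtheta \in \s{S}$, $\vtheta_t \in \s{S}$. By the constrained optimality of $\vtheta^\star$,
\begin{equation*}
D_\lambda(\vtheta_t) \le D_\lambda(\vtheta^\star),
\end{equation*}
while the $\alpha_\s{S}$-strong concavity of $D_\lambda$ restricted to $\s{S}$ gives
\begin{equation*}
D_\lambda(\vtheta_t) \ge (1-t) D_\lambda(\vtheta^\star) + t D_\lambda(\vtheta) + \tfrac{\alpha_\s{S}}{2} t(1-t) \|\vtheta - \vtheta^\star\|^2 .
\end{equation*}
Combining, dividing by $t>0$, and letting $t \to 0^+$ would yield
\begin{equation*}
\tfrac{\alpha_\s{S}}{2} \|\vtheta - \vtheta^\star\|^2 \le D_\lambda(\vtheta^\star) - D_\lambda(\vtheta).
\end{equation*}

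Next I would invoke weak duality: since $(\x,\vtheta) \in (\dom(P_\lambda)\cap\s{C}) \times \Delta_\A$ is a feasible primal-dual pair, $D_\lambda(\vtheta^\star) \le P_\lambda(\x^\star) \le P_\lambda(\x)$, hence
\begin{equation*}
D_\lambda(\vtheta^\star) - D_\lambda(\vtheta) \le P_\lambda(\x) - D_\lambda(\vtheta) = \Gap_\lambda(\x,\vtheta).
\end{equation*}
Chaining the two bounds gives $\|\vtheta - \vtheta^\star\|^2 \le 2\Gap_\lambda(\x,\vtheta)/\alpha_\s{S}$, i.e.\ $\vtheta^\star \in \s{B}(\vtheta,r)$ with $r$ as in~\eqref{eq:GAP_Safe_sphere_local}.

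The main obstacle, and the whole point of the local refinement, is justifying the strong-concavity step \emph{without} assuming a global modulus. The hypothesis $\vtheta \in \Delta_\A \cap \s{S}$ is precisely what enables the convex combination $\vtheta_t$ to stay inside $\s{S}$, so that the inequality can be invoked. If $\s{S}$ is not a priori convex, a side remark would be needed (or the statement should be read as requiring convexity of $\s{S}$, which is standard for local strong-concavity moduli). A secondary subtlety is that the limiting argument $t\to 0^+$ assumes nothing more than continuity of $D_\lambda$ along the segment (guaranteed by concavity on $\s{S}$); no differentiability at $\vtheta^\star$ is required, which is important because $\vtheta^\star$ may lie on the boundary of $\Delta_\A$. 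Everything else — weak duality, the definition of the gap — is routine and identical to the proof of Theorem~\ref{thm:GAP_Safe}.
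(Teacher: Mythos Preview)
Your proof is correct and follows essentially the same three-step structure as the paper's proof in Appendix~\ref{app:local_GAP}: local strong concavity on $\s{S}$, optimality of $\vtheta^\star$ over $\Delta_\A$, and weak duality. The only technical difference is that the paper invokes the first-order (gradient) form of strong concavity and the variational inequality $\langle \nabla D_\lambda(\vtheta^\star), \vtheta - \vtheta^\star \rangle \le 0$, whereas you use the zeroth-order (three-point) form together with $D_\lambda(\vtheta_t)\le D_\lambda(\vtheta^\star)$ and a limit $t\to 0^+$; your variant has the mild advantage of not requiring differentiability of $D_\lambda$ at $\vtheta^\star$, though in this paper $D_\lambda$ is always smooth so the distinction is immaterial.
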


Theorem~\ref{prop:local_concavity} allows us to extend the application of Gap Safe rules to problems for which the corresponding dual function $D_\lambda$ is not globally strongly concave.  When it comes to the primal objective function, it allows us to tackle some data-fidelity functions which do not have a  Lipschitz-continuous gradient.
%
%either: 
%1) do not have a Lipschitz-continuous gradient, or
%2) are not defined over the entire space -- which in includes for instance some non-negative problems.

Moreover, this result can also be used to improve upon the performance of standard Gap Safe rules by providing better local bounds $\alpha_\s{S}$ for the strong concavity of the dual function. 
Indeed, a global strong concavity bound $\alpha$ (if any) cannot ever be larger than its local counterpart $\alpha_S$ for a given valid set $\s{S}$.
%as will be illustrated in the experiments section \ref{sec:experiments}.
%
%%Even when Lipschitz gradient
Hence, not only Theorem~\ref{prop:local_concavity} extends Gap Safe rules to a broader class of problems, but it can boost their performances when the known global strong concavity bound is poor (too small).

In the two following sections, we exploit Theorem~\ref{prop:local_concavity} to revisit (in Section~\ref{ssec:local_fixed}) and improve (in Section~\ref{ssec:local_adaptive}) the Gap Safe screening approach proposed by~\citet{Ndiaye2017}.
%even for functions that do fulfill the original assumptions of the global Gap Safe, especially when the global strong-concavity constant is poor (too small).
%Lipschitz constant is poor (too big) and consequently the strong concavity constant \alpha.
%Indeed, we will see in the experiments section \ref{sec:experiments} that the proposed local approach can improve upon the existing global approach when the Lipschitz-gradient hypothesis on the data fidelity term is fullfiled,

% In Sections \ref{ssec:local_fixed} and \ref{ssec:local_adaptive}, we propose two local approaches based on Theorem \ref{prop:local_concavity} %built upon
% with different choices for the subset $S$ on which the strong-concavity constant $\alpha_S$ is calculated.

%\subsection{Fixed local bound}\label{ssec:local_fixed}
\subsection{Generalized Gap Safe Screening}\label{ssec:local_fixed}

A natural choice would be to set $\s{S}=\Delta_\A$ in Theorem~\ref{prop:local_concavity} as it contains all possible feasible dual points, including the dual solution $\vtheta^\star$. This choice is fine when $D_\lambda$ is strongly concave on $\Delta_\A$, and when a strong concavity bound on this set can be derived. However, it may be necessary to further restrict $\Delta_\A$ in order to get the local strong concavity property, or to use a simpler shape for $\s{S}$ in order to derive a strong concavity bound in closed-form. Hence, without loss of generality, we consider hereafter the set $\s{S} = \Delta_\A \cap \SO$, where $\SO \subseteq \R^m$ is such that $\vtheta^\star \in \SO$. A careful choice of $\s{S}_0$ can turn out to be crucial in order to obtain the required local strong concavity property. For instance, this is the case for the $\beta=1.5$ and Kullback-Leibler divergences, as discussed in \Cref{ssec:particular_discussion}.

Given a local strong concavity bound $\alpha_{\Delta_\A \cap \SO}$ of $D_\lambda$ over $\Delta_\A \cap \SO$ (see \Cref{ssec:Strong_Concavity_Bounds}), we  revisit the Gap Safe dynamic screening approach (Algorithm~\ref{alg:solver_screening}) in the way it is presented in Algorithm~\ref{alg:solver_screening_bis}. %, with differences highlighted in blue. 
A notable difference is that the dual update (line~\ref{alg:line:dual_update_bis}) requires to output a point $\vtheta$ in $\Delta_\A \cap \SO$, i.e., $\vt{\Theta}: \R^n \rightarrow \Delta_\A \cap \SO$ (instead of just $\Delta_\A$ in Algorithm~\ref{alg:solver_screening}). The reason is that, from Theorem~\ref{prop:local_concavity}, the ball $\s{B}(\vtheta,r)$ with $r$ given at line~\ref{alg:line:safe_radius_bis} is ensured to be safe only if the center $\vtheta$ belongs to the set on which the strong concavity bound has been computed, that is $\Delta_\A \cap \SO$. In contrast, the intersection of $\s{B}(\vtheta,r)$ with $\SO$ in the screening test (line~\ref{alg:line:nested_preserved_set_bis}) is not mandatory %but may 
as $\s{B}(\vtheta,r)$ is itself a safe region. However, taking the intersection may lead to an even smaller set and, consequently, increase the number of screened variables.
%(depending on $\s{S}_0$).

% In some cases, restricting the dual function $D_\lambda$ to the feasible set $\Delta_\A$ suffices for making it strongly concave.
% An example of that is given in section \ref{ssec:example_KL} with the smoothed  Kullback-Leibler problem.

% In this approach, we simply replace $\alpha$ by $\alpha_{\Delta_\A}$ in algorithm 
% \ref{alg:solver_screening}. 
% The new local constant can also be pre-calculated and provided as a parameter to the algorithm.

%\def\b#1{\blue{#1}} %modifs in blue
\def\b#1{#1} %no color
\begin{algorithm}[t]
%\captionsetup{format=hang}
\caption{Generalized Dynamic Gap Safe Screening (G-DGS): \\ 
 $\hat{\x} = \mathtt{GapSolver}(\A,\lambda, \b{\SO}, \varepsilon_{\gap})$ %\y, N_{\max}
} \label{alg:solver_screening_bis}
\begin{algorithmic} [1]
\State \textbf{Initialize} $\mathcal{A}= \s{G}$, $\x\in \mathcal{C}$   %$t=0$, $\vt{x}_0 = \vt{0}$,
\State \textbf{Set}  $\vt{\eta}$ according to the solver
\State \textbf{Compute}  \b{$\alpha_{\Delta_\A \cap \SO}$} a strong concavity bound of $D_\lambda$ on \b{$\Delta_\A \cap \SO$}
	\Repeat
		\State \textit{--- Solver update restricted  to preserved set  ---}
		\State $\{\x_{\s{A}},\vt{\eta}\} \gets \mathtt{PrimalUpdate}(\x_{\s{A}},\A_{\s{A}}, \lambda, \vt{\eta})$ %,\y
%		\State \textit{--- Restrict to preserved set ---}
%		\State $\tilde{\mt{A}} \gets \tilde{\mt{A}}^{i}_{[\mathcal{A}_{t}]}$, $\quad \vt{x}_{t} \gets (\vt{x}_{t})_{[\mathcal{A}_{t}]}$ 
		\State \textit{--- Dynamic Screening ---}
		\State $\vtheta \gets \vt{\Theta}\left(\x \right) \in \Delta_\A \b{\cap \SO}$. \Comment{Dual update}\label{alg:line:dual_update_bis}
		\State $r \gets \sqrt{\frac{2\Gap_\lambda(\x,\vtheta)}{\b{\alpha_{\Delta_\A \cap \SO}}}}$ \Comment{Safe radius}\label{alg:line:safe_radius_bis}
		\State $\s{A} \gets \{g \!\in\! \s{A}  ~|~ \max_{\vtheta \in \s{B}(\vtheta,r) \b{\cap \SO}} \overline{\Omega}_g(\phi(\A_g^\T \vtheta)) \geq 1 \}$ \Comment{Screening test}\label{alg:line:nested_preserved_set_bis}
		\State $\x_{\s{A}^c} \gets \vt{0}$
		%\State $t \gets t+1$
		\Until {$\Gap_\lambda(\x,\vtheta) < \varepsilon_{\gap}$} %{stopping criterion is met}
%\EndFor
\end{algorithmic}
\end{algorithm}

%\subsection{Adaptive local bound - iterative variant}\label{ssec:local_adaptive}
\subsection{Gap Safe Screening with Sphere Refinement}\label{ssec:local_adaptive}

We now go one step further by observing that a Gap Safe sphere is a valid subset to invoke Theorem~\ref{prop:local_concavity}.

\begin{corollary}\label{coro:localGAP}
Let $\SO$ be a safe region and $\s{B}(\vtheta,r)$ be a Gap Safe sphere for the primal-dual pair $(\x,\vtheta) \in (\dom(P_\lambda) \cap \s{C}) \times (\Delta_{\A} \cap \SO)$. If $D_\lambda$ is $\alpha_{\s{B}}$-strongly concave on $\s{B}(\vtheta,r) \cap \SO$, then $\s{B}(\vtheta,\sqrt{{2\Gap_\lambda(\x,\vtheta)}/{\alpha_\s{B}}})$  is a  Gap Safe sphere.
\end{corollary}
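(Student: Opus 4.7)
The plan is to recognize that Corollary~\ref{coro:localGAP} is essentially a direct application of Theorem~\ref{prop:local_concavity} with the refined set $\s{S} = \s{B}(\vtheta,r) \cap \SO$. The entire argument reduces to verifying the three hypotheses of Theorem~\ref{prop:local_concavity} for this choice of $\s{S}$.

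First, I would verify that $\vtheta^\star \in \s{S}$. By hypothesis, $\SO$ is a safe region, so $\vtheta^\star \in \SO$. Likewise, $\s{B}(\vtheta,r)$ is assumed to be a Gap Safe sphere, so $\vtheta^\star \in \s{B}(\vtheta,r)$. Taking intersections gives $\vtheta^\star \in \s{B}(\vtheta,r) \cap \SO = \s{S}$, as required.

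Second, I would check that the primal-dual pair $(\x,\vtheta)$ is admissible for invoking Theorem~\ref{prop:local_concavity} on $\s{S}$, i.e., that $(\x,\vtheta) \in (\dom(P_\lambda) \cap \s{C}) \times (\Delta_{\A} \cap \s{S})$. The primal membership $\x \in \dom(P_\lambda) \cap \s{C}$ is given. For the dual, we have $\vtheta \in \Delta_{\A} \cap \SO$ by hypothesis, and trivially $\vtheta \in \s{B}(\vtheta,r)$ since $\vtheta$ is the center of that ball. Hence $\vtheta \in \Delta_{\A} \cap \s{B}(\vtheta,r) \cap \SO = \Delta_{\A} \cap \s{S}$.

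Finally, strong concavity of $D_\lambda$ on $\s{S}$ with constant $\alpha_{\s{B}}$ is exactly the third hypothesis of the corollary. Applying Theorem~\ref{prop:local_concavity} with $\alpha_{\s{S}} = \alpha_{\s{B}}$ then yields that $\s{B}(\vtheta, \sqrt{2\Gap_\lambda(\x,\vtheta)/\alpha_{\s{B}}})$ contains $\vtheta^\star$ and is therefore a Gap Safe sphere. There is no real obstacle here; the only subtlety worth emphasizing is the bookkeeping that $\vtheta$ itself must lie in the set on which the local strong concavity constant is measured, which is automatic because $\vtheta$ is the center of the ball defining (half of) that set.
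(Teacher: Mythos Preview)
Your proposal is correct and follows exactly the same approach as the paper: the paper's proof is the one-liner ``Application of \Cref{prop:local_concavity} with $\s{S} = \s{B}(\vtheta,r) \cap \SO$.'' You have simply (and carefully) spelled out the verification of the hypotheses that the paper leaves implicit.
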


\begin{proof}
Application of \Cref{prop:local_concavity} with $\s{S} = \s{B}(\vtheta,r) \cap \SO$.
\end{proof}

Corollary~\ref{coro:localGAP} suggests that the radius $r$ computed at line~\ref{alg:line:safe_radius_bis} of Algorithm~\ref{alg:solver_screening_bis} may be further reduced by computing a new strong concavity bound over $\s{B}(\vtheta,r) \cap \s{S}_0$. We thus propose to replace the radius update at line~\ref{alg:line:safe_radius_bis} by an iterative refinement procedure, as depicted in Algorithm~\ref{alg:solver_screening_local} (lines~\ref{alg:line:inner_loop_begin}--\ref{alg:line:inner_loop_end}). The convergence of this refinement loop is guaranteed as stated in Proposition~\ref{propo:CV_refinement}.

\begin{algorithm}[t]
%\captionsetup{format=hang}
\caption{Refined Dynamic Gap Safe Screening (R-DGS) : \\
$~\hat{\x} = \mathtt{GAPSolver}(\A,\lambda, \SO, \varepsilon_{\gap},\varepsilon_r)$ %\y, N_{\max}
} \label{alg:solver_screening_local}
\begin{algorithmic} [1]
\State \textbf{Initialize} $\mathcal{A}= \s{G}$, $\x\in \mathcal{C}$   %$t=0$, $\vt{x}_0 = \vt{0}$,
\State \textbf{Set}  $\vt{\eta}$ according to the solver
\State \textbf{Compute} $\alpha_{\Delta_\A \cap \SO}$ a strong concavity bound of $D_\lambda$ on $\Delta_\A \cap \SO$
\State \textit{--- Construction of an initial safe sphere $\s{B}(\vtheta,r)$  ---}
\State $\vtheta \gets  \vt{\Theta}\left(\x \right) \in (\Delta_\A \cap \s{S}_0)$
\State $r \gets \sqrt{\frac{2\Gap_\lambda(\x,\vtheta)}{\alpha_{\Delta_\A \cap \SO}}}$
\State \textit{--- Main loop  ---}
	\Repeat
		\State \textit{--- Solver update restricted  to preserved set  ---}
		\State $\{\x_{\s{A}},\vt{\eta}\} \gets \mathtt{PrimalUpdate}(\x_{\s{A}},\A_{\s{A}}, \lambda, \vt{\eta})$ %\y
%		\State \textit{--- Restrict to preserved set ---}
%		\State $\tilde{\mt{A}} \gets \tilde{\mt{A}}^{i}_{[\mathcal{A}_{t}]}$, $\quad \vt{x}_{t} \gets (\vt{x}_{t})_{[\mathcal{A}_{t}]}$ 
		\State \textit{--- Dynamic Screening (adaptive local variant) ---}
		\State $\vtheta^\mathrm{old} \gets \vtheta$
		\State $\vtheta \gets \vt{\Theta}\left(\x \right) \in (\Delta_\A \cap \SO)$ \Comment{Dual update} \label{alg:line:dual_update_3}
		\State $r \gets \max(r,\|\vtheta - \vtheta^\mathrm{old}\|)$ \Comment{Initialize safe radius} \label{alg:line:inner_loop_begin}
		\State $r \gets \sqrt{\frac{2\Gap_\lambda(\x,\vtheta)}{\alpha_{\s{B}(\vtheta^\mathrm{old}, r) \cap \SO}}}$ \label{alg:line:init_radius2}
		\Repeat \Comment{Refine safe radius} \label{alg:line:begin_loop}
		%	\State $\alpha \gets \max(\alpha,)$ \Comment{Update local constant $\alpha$ with current $r$} \label{alg:line:alpha_update}
			\State $r \gets \min\left(r,\sqrt{\frac{2\Gap_\lambda(\x,\vtheta)}{\alpha_{\s{B}(\vtheta, r) \cap \SO}}}\right)$ \label{alg:line:alpha_update_3} %\Comment{Update sphere radius $r$ with current $\alpha$}
		\Until {$\Delta r < \varepsilon_r$}\label{alg:line:inner_loop_end}
		\State $\s{A} \gets \{g \!\in\! \s{A}  ~|~ \max_{\vtheta \in \s{B}(\vtheta,r) \cap \SO} \overline{\Omega}_g(\phi(\A_g^\T \vtheta)) \geq 1 \}$ \Comment{Screening test}
		%\State $t \gets t+1$
		\State $\x_{\s{A}^c} \gets \vt{0}$
		\Until {$\Gap_\lambda(\x,\vtheta) < \varepsilon_{\gap}$} %{stopping criterion is met}
%\EndFor
\end{algorithmic}
\end{algorithm}

\begin{proposition}\label{propo:CV_refinement}
The sequence of safe radius generated by the refinement loop (lines~\ref{alg:line:inner_loop_begin}--\ref{alg:line:inner_loop_end}) in Algorithm~\ref{alg:solver_screening_local} converges.
\end{proposition}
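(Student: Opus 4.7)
The plan is to invoke the monotone convergence theorem on the sequence of radii. Let $(r_k)_{k \geq 0}$ denote the values produced at line~\ref{alg:line:alpha_update_3} in the refinement loop, with $r_0$ the initial value assigned at line~\ref{alg:line:init_radius2}. The update rule reads
\[
r_{k+1} = \min\!\left(r_k,\; \sqrt{\tfrac{2\Gap_\lambda(\x,\vtheta)}{\alpha_{\s{B}(\vtheta, r_k) \cap \SO}}}\right),
\]
so the explicit $\min$ immediately forces $r_{k+1} \leq r_k$: the sequence is non-increasing by construction.

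Next I would check the lower bound. Weak duality ensures $\Gap_\lambda(\x,\vtheta) \geq 0$ for any feasible primal-dual pair, and whenever the local strong concavity constant $\alpha_{\s{B}(\vtheta, r_k) \cap \SO}$ is well-defined it is strictly positive. Hence each $r_k$ is a non-negative real number. A non-increasing sequence in $[0, r_0]$ converges by the monotone convergence theorem, which yields the claim with limit $r^\star \in [0, r_0]$.

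I expect no genuine obstacle here, since both ingredients are essentially immediate from the algorithm's design. It is nevertheless worth remarking (to justify that the refinement is non-trivial and not merely a safeguard) that the shrinking of $r_k$ implies the nesting $\s{B}(\vtheta, r_{k+1}) \cap \SO \subseteq \s{B}(\vtheta, r_k) \cap \SO$; since a strong concavity constant valid on a set remains valid on any subset, one has $\alpha_{\s{B}(\vtheta, r_{k+1}) \cap \SO} \geq \alpha_{\s{B}(\vtheta, r_k) \cap \SO}$, so the candidate radius $\sqrt{2\Gap_\lambda(\x,\vtheta)/\alpha_{\s{B}(\vtheta,r_k)\cap\SO}}$ is itself non-increasing in $k$. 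In particular, the $\min$ is there purely as a formal safeguard and the convergence conclusion is not affected by whether one works with the optimal constant or a conservative bound on each nested set.
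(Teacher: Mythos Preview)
Your convergence argument is correct and coincides with the paper's final step: the $\min$ forces monotonicity, non-negativity gives the lower bound, and monotone convergence concludes.

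The paper's proof, however, devotes most of its effort to something you skip entirely---establishing that every $\s{B}(\vtheta, r_k)$ is actually a \emph{safe} sphere (i.e., contains $\vtheta^\star$). This requires first arguing that the initialization at lines~\ref{alg:line:inner_loop_begin}--\ref{alg:line:init_radius2} yields a safe sphere centered at the new $\vtheta$ (via the $\max$ trick and Theorem~\ref{prop:local_concavity} applied with $\s{S}=\s{B}(\vtheta^{\mathrm{old}},r)\cap\SO$), and then that Corollary~\ref{coro:localGAP} propagates safety through each inner iteration. Strictly read, the proposition only asserts convergence, so your proof covers the stated claim; the safety argument is what makes the converging radii meaningful for screening, and is arguably the substantive content. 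One small correction to your closing remark: the $\min$ is not purely a formal safeguard at the very first inner iteration, since $r_0$ is computed from $\alpha_{\s{B}(\vtheta^{\mathrm{old}}, r)\cap\SO}$ (ball centered at $\vtheta^{\mathrm{old}}$) whereas the first candidate radius uses $\alpha_{\s{B}(\vtheta, r_0)\cap\SO}$ (ball centered at $\vtheta$), and there is no a priori ordering between these two constants.
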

\begin{proof}
Let $\s{B}(\vtheta^\mathrm{old},r^\mathrm{old})$ be the current safe sphere at the end of a given iteration of the main loop. Let $(\x,\vtheta) \in (\mathrm{dom}(P_\lambda) \cap \s{C}) \times (\Delta_\A \cap \SO)$ be the updated primal-dual pair at the next iteration. Clearly, we have $\s{B}(\vtheta^\mathrm{old},r^\mathrm{old}) \subseteq \s{B}(\vtheta^\mathrm{old},r)$ with $r=\max(r^\mathrm{old},\|\vtheta - \vtheta^\mathrm{old}\|)$. Hence, $\s{B}(\vtheta^\mathrm{old},r)$ is also a safe sphere and, by construction, $\vtheta \in \s{B}(\vtheta^\mathrm{old},r)$. Then, it follows from Theorem~\ref{prop:local_concavity} (with $\s{S} = \s{B}(\vtheta^\mathrm{old},r)\cap \SO$) that $\s{B}(\vtheta,r')$ with $r' = \sqrt{{2\Gap_\lambda(\x,\vtheta)}/{\alpha_{\s{B}(\vtheta^\mathrm{old},r)\cap \SO}}}$ is a safe sphere. This shows that, at line~\ref{alg:line:init_radius2} of Algorithm~\ref{alg:solver_screening_local}, the computed radius $r$ is such that  $\s{B}(\vtheta,r)$ is a safe sphere. Then  Corollary~\ref{coro:localGAP} combined with the ``$\min$'' at line~\ref{alg:line:alpha_update_3} ensures that the refinement loop builds a sequence of nested Gap Safe spheres (i.e., with decreasing radius), all centered in $\vtheta$. As the radius is bounded below by $0$, the proof is completed.
\end{proof}

The proposed radius refinement procedure  can be interpreted as a two-step process. First, a Gap Safe sphere centered in $\vtheta$ is computed from the previous one centered  in $\vtheta^\mathrm{old}$ (lines~\ref{alg:line:inner_loop_begin}--\ref{alg:line:init_radius2}). This step is required as Theorem~\ref{prop:local_concavity} cannot be directly applied since the dual update (line~\ref{alg:line:dual_update_3}) does not necessarily ensures  $\vtheta$ to belong to $\s{B}(\vtheta^\mathrm{old},r^\mathrm{old})$. Then, the radius of this new Gap Safe sphere centered in $\vtheta$ is iteratively reduced as long as the strong concavity bound improves (i.e., increases) when computed on the successively generated Gap Safe spheres (lines \ref{alg:line:begin_loop}--\ref{alg:line:inner_loop_end}).

%As a stopping criterion, we use a threshold  $\varepsilon_r >0$ on the variation $\Delta r = r_{\text{new}} - r_{\text{old}}$.
% Algorithm \ref{alg:solver_screening_local} describes how this local screening approach can be combined with  a generic iterative solver.  
% An inner loop that updates the sphere radius $r$ and the constant $\alpha$ is introduced. %%Stopping criteria
%This criterion prevents, in particular, an undesired increase on $\alpha$.

%The update of the local strong-concavity constant $\alpha$ in line \ref{alg:line:alpha_update} consists in solving the following problem
The computational overhead of this refinement procedure depends essentially on how efficiently the local strong concavity constant $\alpha_{\s{B}(\vtheta,r) \cap \SO}$ can be computed  which, in turn, depends on the dual function  and $\SO$. We shall show in Section \ref{sec:Particular_cases}  that this task can be done  efficiently (in constant time) for several objective functions of practical interest. 
\section{Notable Particular Cases} \label{sec:Particular_cases}

In this section, we apply the proposed generic framework to $\ell_1$-regularized problems
%footnote about other regularization converted into a remark before 4.1
%% REGULARIZATION %%
%
with some pertinent data-fidelity functions, namely:  the quadratic distance, $\beta$-divergences with $\beta=1.5$ and $\beta=1$ (Kullback-Leibler divergence) and the logistic regression objective.
%
%About beta-div
The $\beta$-divergence \citep{Basu1998} is a family of cost functions parametrized by the a scalar $\beta$ which is largely used in the context of non-negative matrix factorization (NMF) with prominent applications in audio \citep{Fevotte2018} and hyperspectral image processing \citep{Fevotte2015}. It covers as special cases the quadratic distance ($\beta=2$) and the Kullback-Leibler divergence ($\beta=1$).
These examples are particularly interesting as they encompass the three following scenarios.
\begin{enumerate}
    \item The dual cost function is only locally (\emph{not} globally) strongly concave. The standard Gap Safe screening approach is not applicable while the proposed extension is. This is the case for $\beta$-divergences with $\beta \in [1,2)$.
    \item The dual cost function is globally strongly concave, but improved local strong-concavity bounds can be derived. We thus expect the proposed approach to improve over the standard Gap Safe screening. This is the case for the logistic regression.
    \item The dual cost function is globally strongly concave and the global constant $\alpha$ cannot be improved locally. Here the proposed approach reduces to the standard Gap Safe screening. This is the case for the quadratic distance.
\end{enumerate}
To highlight the specificities of each of these problems (e.g., the set $\s{C}$, assumptions on $\A$ and the input data $\y$) %which denotes the input data to be sparsely approximated as a linear combination of the columns of $\A$ / the input data of the problem (see \Cref{sec:experiments})
we formalize them below. 
\begin{itemize}
    \item Quadratic distance: For $\y \in \R^m$, $\A \in \R^{m \times n}$ and $\lambda>0$,
    \begin{equation}
        %vectorial
        %\x^\star \in \argmin_{\x \in \R^n} \, \frac12 \|\y - \A\x\|^2_2 + \lambda \|\x\|_1.
        %scalar
        \x^\star \in \argmin_{\x \in \R^n} \, \frac12 \sum_{i=1}^{m} (\yi - \Axi)^2 + \lambda \|\x\|_1.
    \end{equation}
    \item  $\beta$-divergence with $\beta=1.5$ (hereafter denoted $\beta_{1.5}$-divergence): For $\y \in \R^m_+$, $\A \in \R^{m \times n}_+$, $\lambda>0$, and a smoothing constant $\epsilon >0$ that allows to avoid singularities around zero,
    \begin{equation} \label{prob:beta15_l1}
        %vectorial
        %\x^\star \in \argmin_{\x \in \R^n_+} \, \frac{4}{3} \|\y\|_{1.5}^{1.5} + \frac{2}{3}\|\A\x+\epsilon\|_{1.5}^{1.5} - 2 \y^\T (\A\x+\epsilon)^{0.5} + \lambda \|\x\|_1
        %
        %scalar
        \x^\star \in \argmin_{\x \in \R^n_+} \, \frac{4}{3} \sum_{i=1}^{m} \left( \yi^{3/2} + \frac{1}{2}(\Axi+\epsilon)^{3/2} - \frac{3}{2} \yi (\Axi+\epsilon)^{1/2} \right) + \lambda \|\x\|_1.
    \end{equation}
    \item Kullback-Leibler divergence:  For $\y \in \R^m_+$, $\A \in \R^{m \times n}_+$, $\lambda>0$, and $\epsilon >0$,
    \begin{equation} \label{prob:KL_l1}
        %vectorial
        %\x^\star \in \argmin_{\x \in \R^n_+} \,\y^\T \log \left(\frac{\y}{\A\x + \epsilon}\right)  + \vt{1}^\T (\A\x + \epsilon - \y) + \lambda \|\x\|_1 
        %
        %scalar
        \x^\star \in \argmin_{\x \in \R^n_+} \,  \sum_{i=1}^{m}  \left( \yi \log \left(\frac{\yi}{\Axi + \epsilon}\right)  + \Axi + \epsilon - \yi \right) + \lambda \|\x\|_1.
    \end{equation}
    \item Logistic regression:  For $\y \in \R^m$, $\A \in \R^{m \times n}$, and $\lambda>0$, 
    \begin{equation} \label{prob:LogReg_l1}
        %vectorial
        %\x^\star \in \argmin_{\x \in \R^n} \, \vt{1}^\T\log\left(1 + \e^{\A\x}\right) - \y^\T\A\x + \lambda \|\x\|_1
        %
        %scalar
        \x^\star \in \argmin_{\x \in \R^n} \, \sum_{i=1}^{m} \left(\log\left(1 + \e^{\Axi}\right) - \yi\Axi \right)  + \lambda \|\x\|_1.
    \end{equation}
\end{itemize}
%%Supplementary assumptions on A
%no all-zero row

We also assume in all cases that $\A$ has no all-zeros row, which is a natural assumption since otherwise the $i$-th entry $\yi$ becomes irrelevant to the optimisation problem and can simply be removed (along with the corresponding row in $\A$).
%full-rank - only for the logistic case on \alpha_{\Delta_\A}
% $\A$ is full rank with $\rank(\A)=\min(m,n)$

All the quantities required to deploy Algorithm~\ref{alg:solver_screening_bis}  and~\ref{alg:solver_screening_local} on these problems are reported in Table~\ref{tab:data-fidelity_cases}. Although calculation details are deferred to Appendix~\ref{apdx:Particular_cases}, we supply the main (generic) ingredients of these derivations in Section~\ref{ssec:main_ingredients}.

\begin{remark}
Since the focus of this paper is more on the constraint set and the data-fidelity term, % rather than the regularization term, 
we limit our examples to $\ell_1$-regularized problems. %, in order to be able to get to the final expressions.
Yet, the proposed framework is more general and  can cope with  other regularizations terms such as those explored  by \citet{Ndiaye2017}. %on a quite similar framework as ours. 
They can be easily combined with the data-fidelity terms explored in this paper thanks to the useful quantities reported in \citet[Table 1]{Ndiaye2017}. 
The provided dual norms can be directly plugged into equation \eqref{eq:GLM_dual_feasible_set} and Proposition~\ref{propo:Safe_screen} to derive the associated dual feasible set and screening test respectively. 
Other quantities, like the set $\SO$ and the local strong concavity bound $\alpha_{\Delta_\A \cap \SO}$ provided in Table~\ref{tab:data-fidelity_cases}, may need to be redefined accordingly (less straightforward). % adapted / may be impacted
\end{remark}

\subsection{Useful Quantities} \label{ssec:main_ingredients}

%GENERIC RESULTS TO BE RE-DISCUSSED ON THE REMAINING SUBSECTIONS
%Before discussing each data-fidelity function separately, in this subsection we present the remaining ingredients for the implementation of the proposed algorithms.
%The results below are given for a generic data-fidelity function $F(\A\x)$ and are reused in \Cref{sec:Particular_cases} for each particular case.

%\paragraph{$\ell_1$-norm.} 
\subsubsection{Regularization and dual norm}

Let us instantiate some of the generic expressions in  \Cref{sec:screening_GLM} for the case considered in this paper: $\Omega(\x) = \|\x\|_1$. Here, each group $g$ is defined as an individual coordinate with $\s{G} = \{\{1\}, \dots, \{n\}\}$.
\begin{equation}
\begin{array}{llll}
&\Omega(\x) = \|\x\|_1
& 
&\overline{\Omega}(\x) = \|\x\|_\infty = \max_{j \in [n]} |\xj|  \\
&\Omega_g(\x_g) = |x_g|
&
&\overline{\Omega}_g(\x_g) = |x_g|
\end{array}\label{eq:DualL1}
\end{equation}
\subsubsection{Maximum Regularization Parameter}

We call \textit{maximum regularization parameter} the parameter $\lambda_{\max}>0$  such that for all $\lambda \geq \lambda_{\max}$ the zero vector is a solution of \eqref{eq:GLM_reg_problem}. 
The following result generalizes \citet[Proposition 4]{Ndiaye2017} to the framework considered in this paper (see \Cref{app:prop:max_regularization} for a proof).

\begin{proposition} \label{prop:max_regularization}
$ \lambda_{\max} = \overline{\Omega}\left(\phi\left(-\A^\T\nabla F(\vt{0})\right)\right).$
% $$\vt{0} \in \argmin_{\x \in \s{C}} P_\lambda(\x) \iff \lambda \geq \lambda_{\max} := \overline{\Omega}\left(\phi\left(-\A^\T\nabla F(\vt{0})\right)\right).$$
\end{proposition}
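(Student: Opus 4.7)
The plan is to characterize $\vt{0}$ being a primal optimum in terms of the KKT conditions of Theorem~\ref{thm:dual_GLM_and_optimality}, solve for the corresponding dual point explicitly, and then read off the threshold on $\lambda$ from the dual feasibility constraint.

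First I would argue that $\vt{0}$ solves~\eqref{eq:GLM_reg_problem} if and only if there exists some $\vtheta^\star \in \Delta_\A$ such that the primal-dual link~\eqref{eq:GLM_optimality_condition1} and sub-differential inclusion~\eqref{eq:GLM_optimality_condition2_explicit} hold at $(\vt{0},\vtheta^\star)$. Using separability of $F$, the link~\eqref{eq:GLM_optimality_condition1} evaluated at $\x^\star = \vt{0}$ forces $\lambda\vthetai^\star = -f_i'(0)$ for every $i$, hence $\vtheta^\star = -\nabla F(\vt{0})/\lambda$ is the \emph{unique} candidate dual point. So $\vt{0}$ is optimal iff this particular $\vtheta^\star$ lies in $\Delta_\A$, i.e.\ iff $\overline{\Omega}_g(\phi(\A_g^\T \vtheta^\star)) \leq 1$ for every group $g \in \s{G}$ (the second line of~\eqref{eq:GLM_optimality_condition2_explicit} does not apply since $\x_g^\star = \vt{0}$ everywhere).

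Next I would exploit the positive homogeneity of $\phi$: both $\phi = \operatorname{Id}$ and $\phi = [\,\cdot\,]^+$ satisfy $\phi(\alpha u) = \alpha\, \phi(u)$ for any scalar $\alpha \geq 0$, applied entry-wise. Since $\lambda > 0$, substituting $\vtheta^\star = -\nabla F(\vt{0})/\lambda$ gives
\begin{equation*}
\phi(\A_g^\T \vtheta^\star) = \phi\!\left(-\tfrac{1}{\lambda}\A_g^\T\nabla F(\vt{0})\right) = \tfrac{1}{\lambda}\,\phi\!\left(-\A_g^\T\nabla F(\vt{0})\right),
\end{equation*}
so by positive homogeneity of the dual norm $\overline{\Omega}_g$, the feasibility condition becomes $\overline{\Omega}_g(\phi(-\A_g^\T\nabla F(\vt{0}))) \leq \lambda$ for every $g \in \s{G}$. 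Thus $\vt{0}$ is optimal iff $\lambda \geq \max_{g \in \s{G}} \overline{\Omega}_g(\phi(-\A_g^\T\nabla F(\vt{0})))$.

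Finally I would invoke the standard duality formula for group-decomposable norms: if $\Omega(\x) = \sum_{g} \Omega_g(\x_g)$ with disjoint groups partitioning $[n]$, then its dual norm satisfies $\overline{\Omega}(\vt{u}) = \max_{g \in \s{G}} \overline{\Omega}_g(\vt{u}_g)$. (I would either cite this or derive it in one line from definition~\eqref{eq:def:dual_norm} by maximizing over each group independently.) Applying this with $\vt{u} = \phi(-\A^\T\nabla F(\vt{0}))$, whose $g$-block is exactly $\phi(-\A_g^\T\nabla F(\vt{0}))$ since $\phi$ acts component-wise, yields $\lambda_{\max} = \overline{\Omega}(\phi(-\A^\T\nabla F(\vt{0})))$ as claimed. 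The only mildly subtle step is the positive-homogeneity manipulation in the $\s{C} = \R^n_+$ case, which must be handled before the nonlinear $[\,\cdot\,]^+$ is swept inside the norm; everything else is routine.
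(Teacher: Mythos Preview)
Your proposal is correct and arrives at the same conclusion by essentially the same mechanism, just dressed differently: you route through the primal-dual optimality conditions of Theorem~\ref{thm:dual_GLM_and_optimality}, whereas the paper applies Fermat's rule directly to the unconstrained reformulation $P_\lambda^{\unc}(\x) = F(\A\x) + \lambda(\Omega(\x) + \ind_{\s{C}}(\x))$ and invokes the precomputed expression~\eqref{eq:subdiff_sum_constrained} for $\partial\Omega(\vt{0}) + \partial\ind_{\s{C}}(\vt{0}) = \{\vt{z} : \overline{\Omega}(\phi(\vt{z})) \leq 1\}$, which already packages the $\phi$-homogeneity and the group dual-norm formula you spell out by hand. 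The only minor caveat is that your ``$\vt{0}$ optimal $\iff$ KKT holds for some $\vtheta^\star \in \Delta_\A$'' step implicitly relies on strong duality (so that a dual optimum exists and the equivalence in Theorem~\ref{thm:optimality_conditions} applies); the paper's Fermat-rule argument sidesteps this by staying purely primal.
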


%%L1 norm
\paragraph{$\ell_1$-norm case.}
For $\Omega(\x) = \|\x\|_1$, we have 
\begin{align} \label{eq:max_regularization}
    \lambda_{\max} := \|\phi\left(-\A^\T\nabla F(\vt{0})\right) \|_\infty  =
    \left\lbrace
    \begin{array}{ll}
         \|\A^\T\nabla F(\vt{0}) \|_\infty &  \text{if}~ \s{C} = \R^n\\
         \max \left( -\A^\T\nabla F(\vt{0}) \right) & \text{if}~ \s{C} = \R^n_+
    \end{array} \right.
\end{align}
The instantiation of this expression for each case under consideration is provided in \Cref{tab:data-fidelity_cases}.

%% STRONG DUALITY
% \subsubsection{Strong Duality}

% \red{Discuss strong duality (Slater conditions):} since we assume all $ f_i$ to be convex, a sufficient condition for strong duality to hold (from Slater's condition)
% is that there exists a strictly feasible primal point $\x$ in problem \eqref{eq:GLM_reg_problem}, 
% i.e. $\exists \x \in \relint(\dom(F) \cap \s{C})$ (non-empty relative interior).
%This is equivalent to having $\dom(P_\lambda) \cap \s{C} neq \emptyset$, i.e. $\s{C} \cap \mt{A} \dom(F) \neq \emptyset$, which is actually already assumed in \Cref{thm:optimality_conditions} (optimality conditions for \Cref{thm:dual_GLM_and_optimality}).

% Strong duality holds in all our considered particular cases. Proceed case-by-case or formulate a generic result?

% Strong duality holds for $\beta \in [1,2]$. %See Pouchol2020

%KL
%Strong duality holds since the only inequality condition in the primal problem is $\x\geq 0$. Thus its domain is $\R_{+}^n$ and there exists an $\x$ that is strictly feasible.
%Denoting  $D$ the domain of the primal problem, we have $\relint(D)= \R_{+}^n \setminus \{\vt{0}\}$.
%
%As a consequence, the duality gap vanishes at the optimum

\subsubsection{Dual Feasible Set}

As in our examples $\Omega = \| \cdot  \|_1$, we get from Theorem~\ref{thm:dual_GLM_and_optimality} and~\eqref{eq:DualL1} that
%For $\Omega = \| \cdot  \|_1$, we have $\s{G} = \{\{1\}, \dots, \{n\}\}$ and, $\forall g \in \s{G}$, $\Omega_g = | \cdot |$. Given that, $\forall g \in \s{G}$, $\overline{\Omega}_g = |\cdot|$, we get from Theorem~\ref{thm:dual_GLM_and_optimality}
\begin{align}
    \Delta_\A  & =  \{\vtheta \in \R^m ~|~ \lvert \phi(\ATthetaj)  \rvert  \leq 1, ~ \forall j \in [n]  \} \cap \dom(D_\lambda) \\
     & = \{\vtheta \in \R^m ~|~ \|\phi(\A^\T\vtheta)\|_\infty  \leq 1 \} \cap \dom(D_\lambda),
\end{align}
where we recall that $\phi(x) = x$ for problems where $\s{C} = \R^n$ and $\phi(x) = [x]^+$ when $\s{C} = \R^n_+$. Note that, because $|[z]^+ |  = [z]^+$ we have the simplification 
$\{ \vtheta \in \R^m ~|~ \|\phi(\A^\T\vtheta)\|_\infty \leq 1\}=  \{ \vtheta \in \R^m ~|~ \max_{j \in [n]}([\A^\T\vtheta]_j) \leq 1 \}$ %OR $ \|\phi(\A^\T\vtheta)\|_\infty =  \max_{j \in [n]}([\A^\T\vtheta]^+_j)$
%\sout{$ \|\phi(\A^\T\vtheta)\|_\infty =  \max_{j \in [n]}([\A^\T\vtheta]_j)$}
when $\s{C} = \R^n_+$. Finally, the definition of $\dom(D_\lambda)$ is provided in Table~\ref{tab:data-fidelity_cases} for each considered problem and details can be found in Appendix~\ref{apdx:Particular_cases}.

\subsubsection{Dual Update} \label{ssec:dual_update}
%\subsubsection{Dual Feasible Point} \label{ssec:dual_feasible}

At line~\ref{alg:line:dual_update_bis} (resp., line~\ref{alg:line:dual_update_3}) of Algorithm~\ref{alg:solver_screening_bis} (resp., Algorithm~\ref{alg:solver_screening_local}), one needs to compute a dual point $\vt{\Theta}(\x)  \in (\Delta_\A \cap \SO)$ from the current primal estimate $\x$. This point will define the center of the computed Gap Safe sphere. As discussed in Section~\ref{sec:standard_GAPSafeScreen}, the function $\vt{\Theta} : \R^n \rightarrow \Delta_\A \cap \SO$ should be such that
\begin{enumerate}
\item it can be evaluated efficiently (to limit the computational overhead);
\item $\|\vt{\Theta}(\x) - \vtheta^\star\|$ is as small as possible (to reduce the sphere radius).
\end{enumerate} 
Given a primal point $\x$, the standard practice to compute a dual feasible point is to rescale the negative gradient $-\nabla F(\A\x)$~\citep{Bonnefoy2015,Ndiaye2017}. The rationale behind this choice is that the dual solution is a scaled version of $-\nabla F(\A\x^\star)$ (see  optimality condition~\eqref{eq:GLM_optimality_condition1}). 
% The consequence is that the proposed dual point will converge to the dual solution $\vtheta^\star$ as $\x_t$ converges to the primal solution $\x^\star$.
We formalize this scaling procedure in Lemma~\ref{lem:dual_scaling} whose proof is given in \Cref{app:lem:dual_scaling}.

%%DUAL SCALING
\begin{lemma}\label{lem:dual_scaling}
Assume that $\dom(D_\lambda)$ is stable by contraction and let $\dscale : \R^m \rightarrow \R^m$ be the scaling operator defined by
\begin{equation}\label{eq:dual_scaling}
    \dscale(\vt{z}) := \frac{\vt{z}}{\max\left(\overline{\Omega}(\phi(\A^\T \vt{z})),1\right)}.
\end{equation}
Then, for any point $\vt{z} \in \dom(D_\lambda)$, we have $\dscale(\vt{z}) \in \Delta_\A$.
Moreover, for any primal point $\x \in \dom(P_\lambda)$, we have that $\vt{z} = (-\nabla F(\A\x)/\lambda) \in \dom(D_\lambda)$ and therefore 
\begin{align}\label{eq:dual_scaling_proposed}
    % \vt{\Theta}(\x) = 
    \dscale(-\nabla F(\A\x)/\lambda) \in \Delta_\A.
\end{align}
%
%%CONVERGENCE TO \vtheta^\star
Finally, if $F \in C^1$, then $\dscale(-\nabla F(\A\x)/\lambda) \rightarrow \vtheta^\star$ as $\x \rightarrow \x^\star$.
\end{lemma}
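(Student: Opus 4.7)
The plan is to verify the three assertions in order. \textbf{Feasibility of $\dscale(\vt{z})$.} The key observation is that both admissible choices of $\phi$ (the identity and the positive part $[\cdot]^{+}$) are positively homogeneous, so for $c := \max(\overline{\Omega}(\phi(\A^\T \vt{z})),1) \geq 1$ one has
\[
\overline{\Omega}(\phi(\A^\T \dscale(\vt{z}))) = \overline{\Omega}(\phi(\A^\T \vt{z}))/c \leq 1
\]
by positive homogeneity of $\overline{\Omega}$. Since $\Omega$ is group-decomposable, its dual norm satisfies $\overline{\Omega}(\vt{u}) = \max_{g \in \s{G}} \overline{\Omega}_g(\vt{u}_g)$, hence $\overline{\Omega}_g(\phi(\A_g^\T \dscale(\vt{z}))) \leq 1$ for every $g \in \s{G}$, which is exactly the norm-constraint part of $\Delta_\A$ in~\eqref{eq:GLM_dual_feasible_set}. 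The remaining domain constraint $\dscale(\vt{z}) \in \dom(D_\lambda)$ follows from the assumption that $\dom(D_\lambda)$ is stable by contraction, since $\dscale(\vt{z}) = \vt{z}/c$ is a contraction of $\vt{z}$ when $c \geq 1$.

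\textbf{Membership $-\nabla F(\A\x)/\lambda \in \dom(D_\lambda)$.} I would apply the Fenchel--Young identity coordinatewise. For $\x \in \dom(P_\lambda)$ we have $F(\A\x) < \infty$, so each $[\A\x]_i$ lies in $\dom(f_i)$; differentiability of $f_i$ then gives $\partial f_i([\A\x]_i) = \{f_i'([\A\x]_i)\}$, and the standard convex-analytic inclusion $\partial f_i(z) \subseteq \dom(f_i^*)$ implies $f_i'([\A\x]_i) \in \dom(f_i^*)$. Writing $\vt{z} = -\nabla F(\A\x)/\lambda$, this reads $-\lambda z_i = f_i'([\A\x]_i) \in \dom(f_i^*)$, so $f_i^*(-\lambda z_i) < \infty$ for every $i$, and summing yields $-D_\lambda(\vt{z}) < \infty$. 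Combined with the first step, this gives $\dscale(-\nabla F(\A\x)/\lambda) \in \Delta_\A$.

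\textbf{Limit to $\vtheta^\star$.} The hypothesis $F \in C^1$ yields $\nabla F(\A\x) \to \nabla F(\A\x^\star)$ as $\x \to \x^\star$, and the primal-dual optimality condition~\eqref{eq:GLM_optimality_condition1} identifies $-\nabla F(\A\x^\star)/\lambda$ with $\vtheta^\star$. Since $\vtheta^\star \in \Delta_\A$, we have $\overline{\Omega}(\phi(\A^\T \vtheta^\star)) \leq 1$, so the continuous scaling denominator $\max(\overline{\Omega}(\phi(\A^\T \cdot)),1)$ tends to $1$; therefore $\dscale(-\nabla F(\A\x)/\lambda) \to \vtheta^\star$.

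The main subtlety I anticipate lies in the stability-by-contraction hypothesis on $\dom(D_\lambda)$: it is exactly what guarantees that the rescaling step does not eject $\vt{z}$ from the dual domain, and it must be verified case by case (notably for the Kullback--Leibler and $\beta$-divergence instances of Section~\ref{sec:Particular_cases}). The remainder is a clean sequence of standard manipulations with norms, convex conjugates, and continuity of gradients.
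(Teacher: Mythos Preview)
Your proof is correct and follows essentially the same route as the paper: both use positive homogeneity of $\phi$ and $\overline{\Omega}$ together with the contraction-stability of $\dom(D_\lambda)$ for feasibility, the Fenchel--Young equality (the paper writes it out explicitly as $F^*(\nabla F(\A\x)) = \langle \A\x,\nabla F(\A\x)\rangle - F(\A\x) < \infty$, which is equivalent to your inclusion $\partial f_i(z)\subseteq\dom(f_i^*)$) for domain membership, and continuity of $\dscale$ at $\vtheta^\star$ for the limit. If anything, your treatment of the contraction step is slightly more precise than the paper's, which asserts ``$\dscale(\vt{z})\leq\vt{z}$'' where the intended meaning is scaling toward the origin.
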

%%COMPLEXITY
From Lemma~\ref{lem:dual_scaling}, we obtain a simple and cheap scaling procedure (the gradient $\nabla F(\A\x)$ being often computed by the primal solver) that allows to obtain a dual feasible point $\dscale(\x) \in \Delta_\A$ from any primal point $\x \in \dom (P_\lambda)$. 
Hence, when $\SO = \R^m$ one can simply set $\vt{\Theta}(\x) = \dscale (-\nabla F(\A\x)/\lambda)$. For other choices of $\SO$, $\dscale$ can be a starting point to define $\vt{\Theta}$ (see \Cref{tab:data-fidelity_cases}).

%%GENERALIZED RESIDUAL
% The quantity  $-\nabla F(\A\x) := \vt{\rho}(\x)$ is referred to as the \emph{generalized residual} in \citet{Ndiaye2017}.  Indeed, in the quadratic case $F(\A\x) \!=\! \frac{1}{2}\|\y - \A\x\|^2_2$, it leads to the standard residual term  $-\nabla F(\A\x) \!=\! \y - \A\x$.

\paragraph{$\ell_1$-norm case.}
For $\Omega(\x) = \|\x\|_1$, we have 
\begin{align} \label{eq:dual_scaling_L1}
    \dscale(\vt{z}) := \frac{\vt{z}}{\max\left( \| \phi(\A^\T \vt{z}) \|_\infty,1\right)}.
\end{align}

\subsubsection{Sphere Test}

%%SPHERE TEST
% Safe regions with  different shapes have been proposed in the literature such as spheres, domes and other more complex shapes. Here, we are interested in the common case where the safe region is a sphere with center $\vtheta$  and radius $r$, denoted  $\s{B}(\vtheta,r)$. In this case, the generic screening test defined in \Cref{propo:Safe_screen} takes the following form:
The safe screening rule in Proposition~\ref{propo:Safe_screen} takes the following form when the safe region is a $\ell_2$-ball, $\s{B}(\vtheta,r)$, with center $\vtheta$  and radius $r$:
\begin{align}
\max_{\vtheta' \in \s{B}(\vtheta,r)} \overline{\Omega}_g(\phi(\A_g^\T \vtheta'))
&= \max_{\vt{u} \in \s{B}(\vt{0},1)} \overline{\Omega}_g\left(\phi\left(\A_g^\T(\vtheta + r \vt{u})\right)\right) \nonumber \\
&\leq \max_{\vt{u} \in \s{B}(\vt{0},1)} \overline{\Omega}_g\left(\phi(\A_g^\T\vtheta) + r \phi( \A_g^\T \vt{u})\right) \nonumber \\
&\leq \overline{\Omega}_g\left(\phi(\A_g^\T\vtheta)\right) + r \max_{\vt{u} \in \s{B}(\vt{0},1)} \overline{\Omega}_g\left( \phi( \A_g^\T \vt{u})\right)
\end{align}
where we used the subadditivity and homogeneity of the operator $\phi$ and the norm $\overline{\Omega}_g$.
The resulting screening rule  for the $g$-th group reads:
\begin{align}\label{eq:sphere_test}
 \overline{\Omega}_g\left(\phi(\A_g^\T\vtheta)\right) + r \max_{\vt{u} \in \s{B}(\vt{0},1)} \overline{\Omega}_g\left( \phi( \A_g^\T \vt{u})\right) < 1
 \implies \x_g^\star = \vt{0}.
\end{align}

%% INTERPRETATION AS OPERATOR NORM (ONLY IN UNCONSTRAINED CASE)
% At this point, when only the unconstrained case is considered ($\s{C} = \R^n$), the rightmost term can be interpreted as the operator norm of $\A_g$ associated to $\overline{\Omega}_g$ and denoted $\overline{\Omega}_g(\A_g)$ in \citet{Ndiaye2017}:
% $$ \overline{\Omega}_g(\A_g) := \max_{\vt{u} \in \s{B}(\vt{0},1)} \overline{\Omega}_g\left( \A_g^\T \vt{u}\right).$$
% This is no longer possible in our extended scenario due to the non-linearity $\phi$.

%%%% L1 - PARTICULAR CASE %%%%
\paragraph{$\ell_1$-norm case.}
For  $\Omega(\x) = \|\x\|_1$, 
%
%%SUB-DIFFERENTIAL INCLUSION
% the sub-differential inclusion optimality condition in \eqref{eq:GLM_optimality_condition2_explicit} becomes:%\eqref{eq:GLM_optimality_condition2}
% \begin{align}
% \forall j \in [n], \quad
% %&\left\{
% %\begin{array}{lrl}
% % | \phi([\A^\T\vtheta^\star]_j) | \leq 1, & ~\text{if } ~ \xj^\star = 0  &\\
% % | \phi([\A^\T\vtheta^\star]_j) |  = 1,  ~~ [\A^\T\vtheta^\star]_j \xj^\star = |\xj^\star|, & ~\text{otherwise} & %\text{(i.e. } \x_g^\star \geq \vt{0},~ \x_g \neq 0 \text{).}
% %\end{array} \right. \\
% &\left\{
% \begin{array}{lrl}
%  | \phi([\A^\T\vtheta^\star]_j) | \leq 1, & ~\text{if } ~ \xj^\star = 0  &\\
%  \left[\A^\T\vtheta^\star\right]_j  = \sign(\xj^\star), & ~\text{otherwise} & \\
% \end{array} \right.
% \end{align}
%
%%SPHERE TEST - L1
the screening rule in \eqref{eq:sphere_test} simplifies as: %the following test on individual atoms $\cA_j$:
\begin{align}\label{eq:sphere_test_L1}
%  \left\lvert \phi(\cA_j^\T\vtheta)\right\rvert + r \max_{\vt{u} \in \s{B}(\vt{0},1)} \left\lvert  \phi( \cA_j^\T \vt{u})\right\rvert = 
 \left\lvert \phi(\cA_j^\T\vtheta)\right\rvert + r\|\cA_j\| < 1 \implies x_j^{\star} = {0}.
\end{align}
% using the fact that $\max_{\vt{u} \in \s{B}(\vt{0},1)} \left\lvert  \phi( \cA_j^\T \vt{u})\right\rvert = \|\cA_j\|$.
Finally, let us emphasise that in \Cref{alg:solver_screening_bis,alg:solver_screening_local}, the safe region is $\s{B}(\vtheta,r) \cap \SO$. Although the test~\eqref{eq:sphere_test_L1} remains valid, a specific test that accounts for $\SO$ (when $\SO \neq \R^m$) can increase the number of screened variables (see for instance \Cref{prop:KL_screening_test} in \Cref{apdx:KL_screening_test} for the KL-divergence).

\subsubsection{Strong Concavity Bounds}\label{ssec:Strong_Concavity_Bounds}

% Notice that under our working assumptions the Hessian of the dual function $\nabla^2 D_\lambda$ (when it exists) is diagonal.

% %%DIAGONAL HESSIAN
% \begin{remark} \label{rem:diagonal_Hessian}
% %Given a dual function of the form
% From Theorem~\ref{thm:dual_GLM_and_optimality} we have
% $D_\lambda(\vtheta)= -\sum_i f_i^*(-\lambda\vthetai)$,
% %and
% its Hessian $\nabla^2  D_\lambda(\vtheta) \in \R^{m \times m}$(when it exists) is given by 
% $$\nabla^2  D_\lambda(\vtheta) = - \lambda^2 \diag{ (f_1^*)''(\lambda\theta_1), \dots,  (f_m^*)''(\lambda\theta_m) } \in \R^{m \times m}$$ 
% and its eigenvalues coincide with its diagonal entries. We denote $\sigma_i(\vthetai) := -\lambda^2(f_i^*)''(\lambda\vthetai)$ the $i$-th eigenvalue which turns out to be a function of $\vthetai$ only.
% \end{remark}

The strong concavity parameters can be determined by upper-bounding the eigenvalues of the Hessian matrix $\nabla^2 D_\lambda(\vtheta)$.
% A local strong concavity bound $\alpha_{\s{S}} \geq 0$ is obtained by upper bounding the largest eigenvalue of $\nabla^2 D_\lambda(\vtheta)$ over the set $\s{S}$:
%
%%%% STRONG CONCAVITY %%%%
%%FIRST-DERIVATIVE DEFINITION (see Hiriart-Urruty_ConvexAnalysis_pt1 Chapter IV p.183 Theorem 4.1.1 and  \citet[Chapter IV, Remark 4.3.2]{Hiriart-Urruty1993})
% Function $D_\lambda: \R^m \to \R$ is $\alpha_S$-strongly concave on a convex set $S \subset \R^m$ if and only if:
% $$ \forall (\vtheta^\prime,\vtheta) \in S \times S, \qquad D_\lambda(\vtheta) \leq D_\lambda(\vtheta^\prime) + \langle \nabla D_\lambda(\vtheta^\prime), \vtheta - \vtheta^\prime \rangle - \frac{\alpha^2}{2}\|\vtheta - \vtheta^\prime\|_2^2 $$
%
%%HESSIAN-BASED DEFINITION : (see Hiriart-Urruty_ConvexAnalysis_pt1 p.190 Theorem 4.3.1 and Remark 4.3.2). Wikipedia - convex function : says that f convex in S iff Hessian positive semi-definite in the *interior* of S
%
% the biggest eigenvalue of $\nabla^2 D_\lambda$ is majorized by $-\alpha_{\s{S}}$: %, %
% $\nabla^2 D_\lambda (\vtheta) \preceq -\alpha_S \Id$ for all $\vtheta \in S$, where $\nabla^2 D_\lambda (\vtheta) \in \R^{m \times m}$ is the Hessian matrix and the inequality $\preceq$ means that $\nabla^2 D_\lambda (\vtheta) + \alpha\Id$ is negative semi-definite.
%
Indeed, a twice-differentiable function is strongly concave with constant $\alpha>0$ if and only if its Hessian's eigenvalues are majorized by $-\alpha$, i.e., all eigenvalues are strictly negative with modulus greater or equal to $\alpha$ \citep[Chapter IV, Theorem 4.3.1]{Hiriart-Urruty1993}.
%Twice differentiable
In all considered particular cases, the resulting dual function is twice differentiable (as detailed in~\Cref{apdx:Particular_cases}). In \Cref{prop:strong_concavity}, we derive the local strong concavity of $D_\lambda$ on a convex set $\s{S}\in\R^m$.

%% PROPOSITION CONSIDERING SINGLETONS

\begin{proposition}\label{prop:strong_concavity}
Assume that $D_\lambda$ given in Theorem~\ref{thm:dual_GLM_and_optimality} is twice differentiable. Let $\s{S} \in \R^m$ be a convex  set  and $\s{I} = \{ i \in [m]: \forall (\vtheta,\vtheta') \in \s{S}^2, \theta_i = \theta_i'\}$ %(potentially empty)
a (potentially empty) set of coordinates in which $\s{S}$ reduces to a singleton.
Then, $D_\lambda$ is $\alpha_{\s{S}}$-strongly concave on $\s{S}$ if and only if
% the biggest eigenvalue of $\nabla^2 D_\lambda$ is majorized by $-\alpha_{\s{S}}$: , i.e.
\begin{align}
0 < \alpha_{\s{S}} \leq %\alpha^\star_{\s{S}} =
% - \max_{i \in [m]} \sup_{\vtheta \in \s{S}} \sigma_i (\vthetai) =
\min_{i \in \s{I}^\complement} \;- \sup_{\vtheta \in \s{S}} \;  \sigma_i(\theta_i),
\end{align}
where $\sigma_i(\theta_i) = -\lambda^2 (f_i^*)''(-\lambda\vthetai)$ is the (negative) i-th eigenvalue of the Hessian matrix $\nabla^2  D_\lambda(\vtheta)$.
\end{proposition}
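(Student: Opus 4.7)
The plan hinges on the separability of $D_\lambda$. Since $D_\lambda(\vtheta)=-\sum_{i=1}^{m} f_i^*(-\lambda\theta_i)$, its Hessian is diagonal with $i$-th entry equal to $\sigma_i(\theta_i)=-\lambda^2 (f_i^*)''(-\lambda\theta_i)$, so that $\vt{v}^{\T}\nabla^2 D_\lambda(\vtheta)\vt{v}=\sum_{i} \sigma_i(\theta_i)\, v_i^2$. The problem therefore reduces to a collection of scalar inequalities, one per coordinate.

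Next I would invoke the standard second-order characterization of strong concavity on a convex set: a twice-differentiable $D_\lambda$ is $\alpha_{\s{S}}$-strongly concave on $\s{S}$ if and only if $\vt{v}^{\T}\nabla^2 D_\lambda(\vtheta)\vt{v}\leq-\alpha_{\s{S}}\|\vt{v}\|^2$ for every $\vtheta\in\s{S}$ and every direction $\vt{v}$ in the linear subspace parallel to the affine hull of $\s{S}$ (adaptation of \citep[Chapter IV, Theorem 4.3.1]{Hiriart-Urruty1993}). By the very definition of $\s{I}$, any such $\vt{v}$ must satisfy $v_i=0$ for $i\in\s{I}$, so only indices in $\s{I}^\complement$ contribute to the quadratic form.

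For the $(\Leftarrow)$ implication, given $0<\alpha_{\s{S}}\leq\min_{i\in\s{I}^\complement}\bigl(-\sup_{\vtheta\in\s{S}}\sigma_i(\theta_i)\bigr)$, each factor $\sigma_i(\theta_i)$ is bounded above by $-\alpha_{\s{S}}$ on $\s{S}$; summing the diagonal terms weighted by $v_i^2$ immediately yields $\sum_{i\in\s{I}^\complement}\sigma_i(\theta_i)v_i^2 \leq -\alpha_{\s{S}}\|\vt{v}\|^2$. Conversely, for $(\Rightarrow)$, I would plug the coordinate direction $\vt{v}=\ei$ (for each $i\in\s{I}^\complement$) into the quadratic inequality to obtain $\sigma_i(\theta_i)\leq-\alpha_{\s{S}}$ for every $\vtheta\in\s{S}$; taking $\sup_{\vtheta\in\s{S}}$ and then $\min_{i\in\s{I}^\complement}$ delivers the stated upper bound on $\alpha_{\s{S}}$, while its positivity is built into the definition of strong concavity.

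The main (minor) obstacle is justifying that $\ei$ is an admissible direction for every $i\in\s{I}^\complement$, i.e., that the subspace parallel to the affine hull of $\s{S}$ contains every canonical basis vector indexed outside $\s{I}$. This holds whenever $\s{S}$ is full-dimensional along its non-degenerate coordinates, which is satisfied by all safe regions considered in Section~\ref{sec:Particular_cases} ($\Delta_\A$, the Gap Safe balls $\s{B}(\vtheta,r)$, and their intersections with the axis-aligned constraint sets $\SO$). Should one want to cover pathological $\s{S}$, a suitable rank-one perturbation argument (picking pairs of points in $\s{S}$ that differ in the $i$-th coordinate and exploiting the diagonal form of the Hessian) would replace the coordinate-direction test.
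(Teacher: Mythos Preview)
Your proof is correct and follows essentially the same route as the paper: both exploit the diagonal Hessian coming from separability of $D_\lambda$ and reduce to the second-order characterization of strong concavity from \citet[Chapter IV, Theorem 4.3.1]{Hiriart-Urruty1993}. The only cosmetic difference is that the paper first restricts $D_\lambda$ to the coordinates in $\s{I}^\complement$ and then applies the theorem on $\s{S}|_{\s{I}^\complement}$ (asserting $\mathrm{int}(\s{S}|_{\s{I}^\complement})\neq\emptyset$, which amounts to the same full-dimensionality caveat you flag), whereas you work directly with directions in the subspace parallel to the affine hull of~$\s{S}$.
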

% where $\alpha^\star_{\s{S}}$ is the optimal strong concavity on the set $\s{S}$.
%
\begin{proof}
See \Cref{app:prop:strong_concavity}.
\end{proof}

From \Cref{prop:strong_concavity}, we define a general recipe to derive a strong concavity bound $\alpha_{\s{S}}$ of $D_\lambda$ on $\s{S} \subset \R^m$:
\begin{enumerate}
    \item Compute the eigenvalues $\sigma_i(\vthetai)$ of the Hessian of the dual function $\nabla^2 D_\lambda(\vtheta)$.
    \item Upper-bound the $i$-th eigenvalue $\sigma_i(\vthetai)$ over the set $\s{S}$.
    \item Take the minimum over $i \in \s{I}^\complement$ of the opposite (negative) upper-bounds.
\end{enumerate}
The bound $\alpha_{\s{S}}$ obtained via the above procedure is valid if it is strictly positive. For each considered loss, we  provide in \Cref{tab:data-fidelity_cases} strong concavity bounds for three different choices of set $\s{S}$:
1)  $\s{S}=\R^m$ (global bound), 2) $\s{S} = \Delta_\A \cap \SO$ and 3) $\s{S} = \s{B}(\vtheta,r) \cap \SO$ (local bounds). 
%
%SO - transition
The choice of the set $\SO$ is discussed in \Cref{ssec:particular_discussion} and details concerning the derivation of these bounds are provided in~\Cref{apdx:Particular_cases}.
% In some cases, we need to consider additional constraint sets (denoted $\SO$) for the dual function to become strongly concave. This point is re-addressed in section \ref{ssec:particular_discussion}.

\subsection{Discussion} \label{ssec:particular_discussion}

The quadratic case is quite straightforward, as one can easily verify that
the dual function is globally strongly~concave with constant $\alpha = \lambda^2$ and we can simply take $\SO = \R^m$  (see \Cref{ssec:example_Euc}).
Moreover, because all eigenvalues of the Hessian $\nabla^2 D_\lambda (\vtheta)$ are constant and equal to $-\lambda^2$ for all~$\vtheta$, every local strong-concavity bound coincides with the global one.

In the following sections, we discuss the particularities of the three remaining cases: $\beta_{1.5}$-divergence, Kullback-Leibler divergence, and  logistic regression.

%Definition \s{I}_0
The following set will be useful to define $\SO$ for both $\beta_{1.5}$ and KL divergences.
\begin{definition}
We denote $\s{I}_0 \in [m]$ the set of coordinates for which the input data $\y$ equals zero: %vanishes
$$\s{I}_0 := \{i \in [m] ~|~ \yi = 0\}$$
\end{definition}

\subsubsection{\texorpdfstring{$\beta_{1.5}$}{beta=1.5} Divergence Case} \label{ssec:beta15_discussion_SO}

In this case, the $i$-th eigenvalue of the Hessian $\nabla^2 D_\lambda(\vtheta)$ is given by (see~\Cref{ssec:example_beta}):
\begin{align}
    \sigma_i\left( \vthetai \right)  = - \lambda^2 \left( \frac{(\lambda\vthetai)^2 + 2\yi}{\sqrt{ (\lambda\vthetai)^2 + 4\yi}} - \lambda\vthetai \right).
\end{align}
One can see that the eigenvalues are all non-positive, but may
%
%Cases where \sigma_i vanishes
vanish in two cases: 
1) when $\vthetai \rightarrow +\infty$,
2) when $\yi=0$ and $\vthetai\geq 0$.
% In the case $\yi = 0$, from optimality condition \ref{eq:beta15_optimality_condition1} we have that $\vthetai^\star \leq 0$. However, even using this information, $D_\lambda$ is still not strongly concave since $\vthetai^\star = 0$ is allowed. And differently from the KL case, the dual solution is not trivially known for coordinate $i$ s.t. $\yi=0$

%%Not strongly concave
This means that the corresponding dual function is not globally strongly concave. %and, as a consequence, the standard Gap Safe approach cannot be applied.
Moreover, one can show that the dual function is still not strongly concave when restricted to the dual feasible set $\Delta_{\A}$ (see \Cref{fig:beta15_S0a}).  Therefore, the application of the proposed local approach requires the definition of some additional constraint set $\SO$. 
%further restricting ourselves to a subset $\SO$.
%
%%SO - MOTIVATION
Below, we show that these two problems can be fixed by setting \begin{align}\label{S0_bet1.5}
    \SO = \{\vtheta \in \R^m ~|~ \lambda \vtheta \leq \vt{b}  \},
\end{align}
where $\vt{b}$ is such that $\vt{b}_{\s{I}_0} <0$ and $\lambda \vtheta^\star \leq \vt{b}$.
%Below, each of the two mentioned problems are addressed by introducing additional constraints which define the set $\SO$.
%
\begin{itemize}
% 2) thetai --> \infty
\item 
The first problem is that the eigenvalues $\{\sigma_i\}_i$ are increasing functions of $\{\vthetai\}_i$ (see \Cref{prop:beta15_eigenvalues_increasing}) and tend to zero as $\vthetai \rightarrow +\infty$. This can be prevented by restricting  ourselves to a subset that upper bounds $\vtheta$. 
Unfortunately, it is not sufficient to restrict $\vtheta$ to  the feasible set $\Delta_\A = \{\vtheta ~|~ \A^\T \vtheta \leq 1 \}$
as illustrated in \Cref{fig:beta15_S0a} (one may indefinitely increase the value of a given coordinate by reducing the value of the others).
However, the introduction of $\SO$ defined in~\eqref{S0_bet1.5} fixes this issue, as illustrated in \Cref{fig:beta15_S0b}.
%To address this issue, we introduce additional constraints of the form $\lambda \vthetai \leq b_i$ for all $i \in [m]$,  as illustrated in \Cref{fig:beta15_S0b}. The upper-bounds $b_i$ are defined in such a way that one can guarantee that $\vthetai^\star \leq b_i$ for all $i \in [m]$. 
%
% 1) Cas yi = 0 : solution
\item 
The second problem is that the eigenvalues $\{\sigma_i\}_i$ are equal to $0$ when $i \in \s{I}_0$ and $\theta_i\geq 0$. Again, one can easily see that the introduction of $\SO$ defined in~\eqref{S0_bet1.5} fixes this issue (thanks to $\vt{b}_{\s{I}_0} < \vt{0}$).
%The primal-dual link in optimality condition \eqref{eq:GLM_optimality_condition1} gives that $\vthetai^\star \leq \frac{\y - \epsilon}{\lambda\sqrt{\epsilon}}$  (see equation \eqref{eq:beta15_optimality_condition1} in~\Cref{ssec:example_beta}). In particular, when $\yi = 0$, it implies that $\vthetai^\star \leq -\sqrt{\epsilon}/\lambda <0$. We can therefore avoid the eigenvalues $\sigma_i(\vthetai)$ with $i\in \s{I}_0$ from vanishing by restricting ourselves to the set of $\vtheta$ such that  $\lambda \vtheta \leq \frac{\y-\epsilon}{\sqrt{\epsilon}}$ (which contains the solution $\vtheta^\star$). %$\lambda \vtheta_{\s{I}_0}\leq -\sqrt{\epsilon}$
\end{itemize}

The bound $\vt{b} = ({\y-\epsilon})/{\sqrt{\epsilon}}$ is simple and valid choice. Indeed, we get from the primal-dual link in optimality condition \eqref{eq:GLM_optimality_condition1} that $\lambda \vtheta^\star \leq ({\y - \epsilon})/{\sqrt{\epsilon}}$ (see equation \eqref{eq:beta15_optimality_condition1} in~\Cref{ssec:example_beta}). Moreover, we have, for all $i \in \s{I}_0$, $b_i= ({y_i - \epsilon})/{\sqrt{\epsilon}} = -\sqrt{\epsilon} <0$. Yet, in \Cref{tab:data-fidelity_cases} and \Cref{prop:beta15_S0}, we provide the expression of a more complex but finer bound $\vt{b}$ that leads to improved strong concavity constants (which are particularly relevant for \Cref{alg:solver_screening_bis}). Finally, note that  $\vtheta^\star \in \SO$, as required.

%Combining both discussed cases, we obtain the constraint set $\SO$ defined as follows (\Cref{prop:beta15_S0}):
% \begin{align}
%     %\SO = \{\vtheta \in \R^m ~|~ \lambda \vtheta \leq \vt{b},~ \lambda \vtheta_{\s{I}_0} \leq -\sqrt{\epsilon} \}
%     \SO = \{\vtheta \in \R^m ~|~ \lambda \vtheta \leq \min (\vt{b},~ (\y-\epsilon)/\sqrt{\epsilon})  \}
% \end{align}

\begin{figure}
    \centering
    \begin{subfigure}{.5\textwidth}
        \centering
        \includegraphics[width=.8\linewidth]{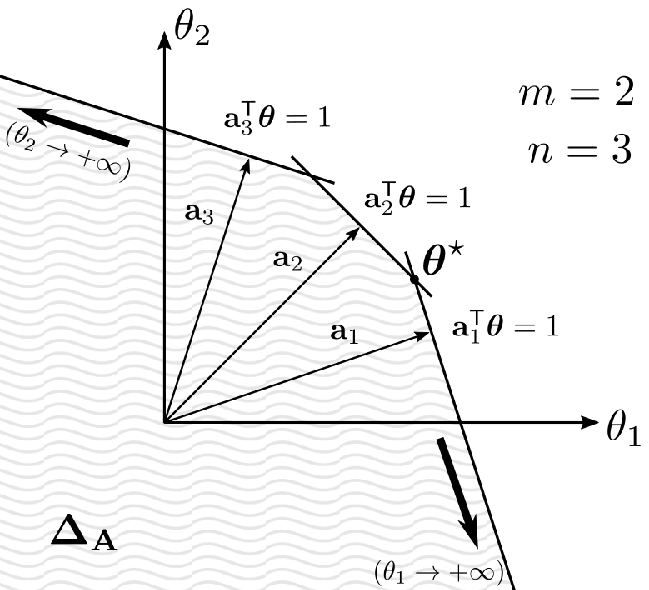} %.eps
        \caption{} \label{fig:beta15_S0a}
    \end{subfigure}%
    \begin{subfigure}{.5\textwidth}
        \centering
        \includegraphics[width=.8\linewidth]{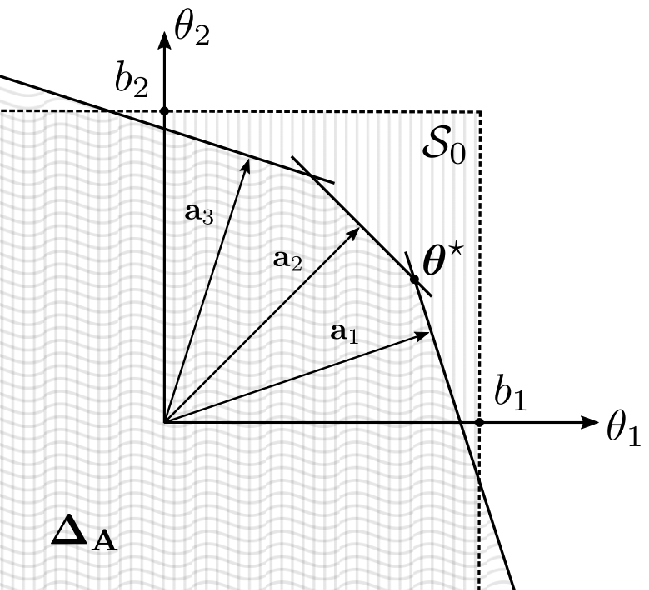} %.eps
        \caption{}\label{fig:beta15_S0b}
    \end{subfigure}%    
    \caption{Motivation for the choice of $\SO$ set in the $\beta_{1.5}$-divergence case with $\s{I}_0 = \emptyset$. a) The thick arrows show directions along which $\vthetai \to +\infty$ and, as a consequence, the singular values of the Hessian $\nabla^2 D_\lambda(\vtheta)$ tend to zero. b) Upper-bounding $\vthetai$ solves the problem.}\label{fig:beta15_S0}
\end{figure}

%Finally, the set $\SO$  allows us to compute 
The resulting local strong concavity bounds $\alpha_{\Delta_\A \cap \SO}$ (\Cref{prop:beta15_alpha_fixed}) and $\alpha_{\s{B}(\vtheta,r) \cap \SO}$ (\Cref{prop:beta15_alpha_adaptive}) are presented in  \Cref{tab:data-fidelity_cases}.
The dual update discussed in \Cref{ssec:main_ingredients} is also adapted in view of $\SO$ (\Cref{prop:beta15_dual_update}).

\subsubsection{\texorpdfstring{$\beta =1$}{beta=1} Kullback-Leibler Divergence Case}
\label{ssec:KL_discussion_SO}

%%Not globally strongly concave
For the Kullback-Leibler divergence case, the $i$-th eigenvalue of the Hessian $\nabla^2 D_\lambda(\vtheta)$ is given by (see \Cref{ssec:example_KL}) :
\begin{align}
    \sigma_i (\vthetai) = -\lambda^2\frac{\yi}{(1+\lambda\vthetai)^2}.
\end{align}
One can see that the eigenvalues are all non-positive, but may
%
%Cases where \sigma_i vanishes
vanish in two cases:
1) when $|\vthetai| \rightarrow +\infty$.
2) when $\yi=0$.
This means that, like before, the dual function is not globally strongly concave. %and, like before, the standard Gap Safe approach cannot be applied.
%
%%Restricted to dual feasible set
The first problem is prevented when we restrict ourselves to any bounded set, which is the case for both $\s{B}(\vtheta, r)$ and $\Delta_\A$.
%Because the dual feasible set is bounded in the KL case, the first problem is prevented when restricting ourselves to $\Delta_\A$.
We recall that in this particular case $\Delta_\A = \{\vtheta \in \R^m  ~|~ \A^\T \vtheta \leq \vt{1} ,~ \vtheta \geq -\vt{1}/\lambda\}$ 
%, where the second inequality corresponds to the domain of the dual function.
is indeed bounded under the assumptions made on $\A$ (see equation \eqref{eq:bound_theta_KL} in~\Cref{ssec:example_KL}).
However, the case $\yi=0$ remains a problem and,
once again, the application of the proposed local approach requires the definition of some additional constraint set $\SO$. %which can prevent the second problem.

Fortunately, for coordinates $i\in \s{I}_0$, the dual solution is trivially determined. Indeed, the primal-dual link in optimality condition \eqref{eq:GLM_optimality_condition1} gives that $\vthetai^\star = -1/\lambda$ (see equation \eqref{eq:KL_optimality_condition1} in~\Cref{ssec:example_KL}).
This leads us to the following constraint set (\Cref{prop:KL_S0}):
\begin{align}
\SO = \{\vtheta \in \R^m ~|~ \vtheta_{\s{I}_0} = - \vt{1}/\lambda\}.
\end{align}
Note that $\SO$ reduces to a singleton on coordinates $i\in \s{I}_0$ and, from \Cref{prop:strong_concavity}, the corresponding eigenvalues of $\nabla^2 D_\lambda$ can be neglected. Moreover, $\vtheta^\star \in \SO$, as required.
This set $\SO$ allows us to compute the local bounds $\alpha_{\Delta_\A \cap \SO}$ (\Cref{prop:KL_alpha_fixed}) and $\alpha_{\s{B}(\vtheta,r) \cap \SO}$ (\Cref{prop:KL_alpha_adaptive}).
%
% Other results discussed in section \ref{ssec:main_ingredients} are revisited in view of $\SO$: 
Additionally, the dual update is adapted considering $\SO$ in \Cref{prop:KL_dual_update_S0} and an improved screening test is defined  in \Cref{prop:KL_screening_test}. All these quantities are reported in \Cref{tab:data-fidelity_cases}.

%ICASSP paper on KL case
In a previous paper \citep{Dantas2021}, we addressed the KL particular case and proposed a local bound $\alpha_{\Delta_\A \cap \SO}$ for a screening approach similar to \Cref{alg:solver_screening_bis}. However, the bound proposed here is far superior than the previous one, %the bound then proposed is far looser than the one proposed here, 
as this will be demonstrated in the experimental section.
Moreover, the iterative refinement approach in \Cref{alg:solver_screening_local} was not a part of  \citet{Dantas2021} and no bound $\alpha_{\s{B}(\vtheta,r) \cap \SO}$ was provided. % computed nor evaluated.

\subsubsection{Logistic Case}\label{ssec:LogReg_discussion}

In this case, the $i$-th eigenvalue of the Hessian $\nabla^2 D_\lambda(\vtheta)$ is given by (see~\Cref{ssec:example_LogReg}):
\begin{align}
    \sigma_i\left( \vthetai \right)  = - \frac{\lambda^2}{(\yi -\lambda \vthetai)(1 -\yi + \lambda \vthetai)}.
    % =-\frac{4\lambda^2}{1 -4(\lambda\vthetai - \yi + \frac{1}{2})^2} 
\end{align}

Similarly to the quadratic case, the dual function is globally strongly concave as the eigenvalues never vanish.
%% S0 = R^m
For this reason, we can simply take $\SO = \R^m$.
On the other hand, differently from the quadratic case, the eigenvalues are not constant and the proposed local approach may lead to better strong-concavity bounds.
% local strong concavity bounds may improve upon the global bound.

%
% A local strong-concavity bound can be obtained by restricting ourselves to the dual feasible set $\Delta_\A$, which leads to (\Cref{prop:LogReg_alpha_fixed})
% \begin{align}\label{eq:LogReg_alpha_fixed}
% \alpha_{\Delta_\A} 
% = \frac{4\lambda^2}{1 - 4 \left(\min(\lambda\|\A^\dagger\|_1,\frac{1}{2}) - \frac{1}{2} \right)^2} 
% \end{align}
Indeed, the local strong-concavity bound $\alpha_{\Delta_\A}$ (\Cref{prop:LogReg_alpha_fixed}) %obtained  when restricting ourselves to the dual feasible set $\Delta_\A$, 
does improve upon the global bound depending on the regularization parameter $\lambda$. More precisely, it 
% \begin{remark}
% The proposed local bound $\alpha_{\Delta_\A}$
improves upon the global constant $\alpha = 4\lambda^2$ (see \Cref{tab:data-fidelity_cases}) for all $$\lambda < \frac{1}{2\|\A^\dagger\|_1}.$$
% \end{remark}
where $\A^\dagger$ denotes the right pseudo-inverse of $\A$. %such that $\A\A^\dagger = \Id$
It is important to mention that the proposed $\alpha_{\Delta_\A}$ uses the supplementary assumption that $\A$ is full rank with $\rank(\A) = \min(n,m)$.
If this hypothesis is not fullfiled, one can always initialize \Cref{alg:solver_screening_bis,alg:solver_screening_local} with the global bound $\alpha = 4\lambda^2$. When deploying~\Cref{alg:solver_screening_local}, this bound will be automatically refined during the iterations thanks to the local bound $\alpha_{\s{B}(\vtheta,r)}$  (\Cref{prop:LogReg_alpha_adaptive}).

%%%%%%%%%%%%%% COST FUNCTIONS -TABLES %%%%%%%%%%%%%%%%
%%LANDSCAPE TABLE
%\begin{landscape}
%\hspace{0pt}
%\vfill
%
%\begin{table}
%\thispagestyle{empty} %doesn't work. Applies to previous 
\begin{sidewaystable}
\thisfloatpagestyle{plain}% empty/plain page style _only_ for this page
%\centering
\renewcommand{\arraystretch}{1.5}
\fontsize{9.5}{11.5}\selectfont %{9}{11} / {10}{12}
\begin{tabular}{ |l|c|c|c|c| } 
 \cline{2-5}
 \multicolumn{1}{c|}{} & $\beta = 2$ (quadratic) & $\beta = 1.5$ & $\beta = 1$ (Kullback-Leibler) & Logistic \\ \hline
 $F(\A\x)$
 & $\frac{1}{2}\|\y -  \A\x\|_2^2$ 
 & $\begin{array}{r} \frac{4}{3} \|\y\|_{1.5}^{1.5} + \frac{2}{3}\|\A\x+\epsilon\|_{1.5}^{1.5} \\ - 2 \y^\T (\A\x+\epsilon)^{0.5} \end{array}$
 & $\y^\T \log(\frac{\y}{\A\x + \epsilon})  + \vt{1}^\T (\A\x + \epsilon - \y)$
 & $\vt{1}^\T\log\left(1 + \e^{\A\x}\right) - \y^\T\A\x$  \\ \hline
 %
%  $f_i(\Axi)$
%  & \frac{1}{2}(\yi -  \Axi)^2$
%  & \frac{4}{3}\yi^{1.5} + \frac{2}{3} (\Axi+\epsilon)^{1.5} - 2 \yi(\Axi+\epsilon)^{0.5}$
%  & $\yi \log(\frac{\yi}{\Axi+ \epsilon})  + (\Axi+ \epsilon) - \yi$
%  & $\log\left(1 + \e^{\Axi}\right) - \yi\Axi$ \\ \hline
 %
 $\s{C}$
 & $\R^n$
 & $\R_+^n$
 & $\R_+^n$
 & $\R^n$ \\ \hline
 $D_\lambda(\vtheta)$
 & $\frac{1}{2}\left(\|\y\|_2^2 - \|\y  - \lambda \vtheta\|_2^2  \right)$
 & $\begin{array}{l} \frac{1}{6}\|\lambda\vtheta\|_3^3 - \frac{1}{6} \|(\lambda\vtheta)^2 + 4\y\|_{1.5}^{1.5} \\ \qquad + \lambda \vtheta^\T \y + \frac{4}{3}\|\y\|_{1.5}^{1.5}   - \epsilon \lambda \vtheta^\t \vt{1}  \end{array}$
 & $\y^\T \log(1+\lambda\vtheta) - \lambda\epsilon\vtheta^\T\vt{1}$
 & $\begin{array}{l}  (\y \!-\! \lambda \vtheta \!-\! 1)^\T \log(1\!-\!\y \!+\! \lambda \vtheta) \\ \qquad -(\y\!-\!\lambda \vtheta)^\T \log(\y \!-\!\lambda \vtheta)\end{array}$ %$\sum_i \h(\yi -\lambda \vthetai)$ ~~(cf. \eqref{eq:entropy_function}) 
 %$\begin{array}{l}  - (1 \!-\! \y \!+\! \lambda \vtheta)^\T \log(1\!-\!\y \!+\! \lambda \vtheta) \\ \qquad -(\y-\lambda \vtheta)^\T \log(\y -\lambda \vtheta)\end{array}$  
 \\ \hline
 %
%  $-f^*_i(-\lambda \vthetai)$
%  & $\frac{\yi^2 - (\yi  - \lambda \vthetai)^2  }{2}$
%  & $\frac{1}{6}(\lambda\vthetai)^3 - \frac{1}{6} \left((\lambda\vthetai)^2 + 4\yi\right)^{1.5} + \lambda \vthetai \yi + \frac{4}{3}\yi^{1.5} - \epsilon \lambda \vthetai$
%  & $\yi \log(1+\lambda\vthetai) - \lambda\epsilon\vthetai$
%  & $ - (\yi -\lambda \vthetai) \log(\yi -\lambda \vthetai) - (1-\yi + \lambda \vthetai)\log(1-\yi + \lambda \vthetai)$\\ \hline
 %
 $\dom(D_\lambda)$
 & $\R^m$
 & $\R^m$
 & $\{\vtheta \in \R^m ~|~ \vtheta \geq -\vt{1}/\lambda\}$
 & $\{\vtheta \in \R^m ~|~ \y-1 \leq \lambda\vtheta \leq \y\}$ \\ \hline
 $\Delta_\A$
 & $\{\vtheta \!\in\! \R^m |~ \|\A^\T\vtheta\|_\infty  \!\leq\! 1 \}$
  %$\{\vtheta \in \R^m ~|~ \lvert \ATthetaj  \rvert  \leq 1, ~ \forall j \in [n]  \}$
 & $\{\vtheta \in \R^m ~|~ \A^\T\vtheta  \leq \vt{1} \}$
  %$\{\vtheta \in \R^m ~|~ \ATthetaj  \leq 1, ~ \forall j \in [n]  \}$
 & $\{\vtheta \in \R^m ~|~ \A^\T\vtheta  \leq \vt{1}, \vtheta \geq -\vt{1}/\lambda \}$
 & $\begin{array}{l} \{\vtheta \in \R^m ~|~ \|\A^\T\vtheta\|_\infty  \leq 1, \\ \qquad \qquad \;\; \y-1 \leq \lambda\vtheta \leq \y \} \end{array}$ \\ \hline
 $\SO$
 & $\R^m$
 & $\begin{array}{ll}
      &\{\vtheta \in \R^m ~|~ \lambda \vtheta \leq \min( \vt{b},\frac{\y-\epsilon}{\sqrt{\epsilon}} ) \} \\ %\text{with }~
      %\leq \vt{b},~ \lambda \vtheta_{\s{I}_0} \leq -\sqrt{\epsilon} \vt{1}
      %
      % upper bound b_i (obtained from c_i)
      %&b_i  := \lambda \min_j \left( \frac{ 1 - \cA_j^\T\vt{c} }{ a_{ij}}  \right) + \lambda c_i \nonumber \\
      % upper bound b_i (obtained from c independant of i)
      &b_i = \lambda \min_j \left( \frac{1 - c \|\cA_j\|_1 }{ a_{ij}}  \right) + \lambda c \\ %, ~~ i \in [m] \nonumber \\ %\{j \in [n] ~|~ a_{ij} \neq 0 \}
      %
      % lower bound c_i
      %\vt{c} = [c_1, \dots, c_m] \text{ and }
      %&c_i  = -\frac{1}{\lambda} \sqrt[3]{\frac{4 \|\y\|_{1.5}^{1.5} - 6(\|\y\|_1 - \yi) \sqrt{\epsilon} + 2(m-1)\epsilon^{1.5} + 3\epsilon}{1-3\epsilon} } \nonumber
      % lower bound c independant of i
      &c = -\frac{1}{\lambda} \sqrt[3]{\frac{4 \|\y\|_{1.5}^{1.5} + 2(m-1)\epsilon^{1.5} + 3\epsilon}{1-3\epsilon} } \nonumber  
    \end{array}$
 & $\begin{array}{ll}
    \{ \vtheta \in \R^m  ~|~ \vtheta_{\s{I}_0} = -\vt{1}/\lambda \} \\
    \text{with } \s{I}_0 = \{ i\in [m] ~|~  \yi\!=\!0 \}
   \end{array}$
 & $\R^m$ \\ \hline
 Primal-dual link
 %\pbox{10cm}{Primal-dual \\ link} 
 & $\lambda \vtheta^\star = \y -  \A\x^\star$
 % $\lambda \vthetai^\star =  \yi - [\A\x^\star]_i  , ~~ \forall i \in [m] $
 & $\lambda \vtheta^\star = \frac{\y}{\sqrt{\A\x^\star+ \epsilon}} -  \sqrt{\A\x^\star + \epsilon}$
 % $\lambda \vthetai^\star =  \frac{\yi}{\sqrt{[\A\x^\star]_i}} - \sqrt{[\A\x^\star]_i} , ~~ \forall i \in [m] $ 
 & $\lambda \vtheta^\star = \frac{\y}{\A\x^\star + \epsilon} - \vt{1} $
 % $\lambda \vthetai^\star = \frac{\yi}{[\A\x^\star]_i + \epsilon} - 1$
 & $\lambda \vtheta^\star = \y - \frac{\e^{\A\x^\star}}{1+\e^{\A\x^\star}}$
 % $\lambda \vthetai^\star = \yi - \frac{\e^{[\A\x^\star]_i}}{1+\e^{[\A\x^\star]_i}}$
 \\ \hline
 $\lambda_{\max}$
 & $\|\A^\T \y\|_\infty$
 & $\max\left(\A^\T (\y-\epsilon) \right)/\sqrt{\epsilon}$
 & $\max\left(\A^\T (\y-\epsilon) \right)/\epsilon$
 & $\|\A^\T (\y-\frac{1}{2})\|_\infty$ \\ \hline
 \pbox{10cm}{Dual update \\$\vt{\Theta}(\x) \!\in\! \Delta_\A \!\cap \SO$}
 & $\dscale\left((\y - \A\x)/\lambda \right)$
 & $\min\left(\left[\dscale\left(\frac{\y-\A\x - \epsilon}{\lambda\sqrt{\A\x + \epsilon}} \right)\right]_i,~ \frac{b_i}{\lambda},~ \frac{\yi - \epsilon}{\lambda\sqrt{\epsilon}}\right)$
   %\vt{\rho}(\x) =  \frac{\y}{\sqrt{\A\x + \epsilon}} -  \sqrt{\A\x+ \epsilon}  
%   $\left\lbrace
%     \begin{array}{ll}
%       \min\left(\left[\dscale\left( \frac{\y-\A\x - \epsilon}{\lambda\sqrt{\A\x + \epsilon}} \right) \right]_i,~ \frac{b_i}{\lambda}\right) & \text{if } i \in \s{I}_0^\complement \\
%       \min\left(\left[\dscale\left( \frac{\y-\A\x - \epsilon}{\lambda\sqrt{\A\x + \epsilon}} \right)\right]_i, \text{-}\frac{\sqrt{\epsilon}}{\lambda}\right) & \text{if } i \in \s{I}_0
%     \end{array} \right.$
 & $\left\lbrace 
    \begin{array}{ll}
       \left[ \dscale\left( \frac{1}{\lambda} \left(\frac{\y}{\A\x+\epsilon} - \vt{1}\right) \right)\right]_i  & \text{if } i \in \s{I}_0^\complement \\
       -\frac{1}{\lambda}  & \text{if } i \in \s{I}_0
    \end{array}\right.$
 & $\dscale\left(\frac{1}{\lambda} \left( \y - \frac{\e^{\A\x}}{1+\e^{\A\x}} \right) \right)$ \\ \hline
 $\nabla^2 D_\lambda(\vtheta)$
 & $-\lambda^2 \diag{1, \dots, 1}$
 & $-\lambda^2 \diag{\left[ \frac{(\lambda\vthetai)^2 + 2\yi}{\sqrt{ (\lambda\vthetai)^2 + 4\yi}} - \lambda\vthetai  \right]_i}$ 
 & $-\lambda^2 \diag{\left[\frac{\yi}{(1+\lambda\vthetai)^2}\right]_i}$
 & $-\lambda^2 \diag{ \left[ \frac{4}{1 -4(\lambda\vthetai - \yi + \frac{1}{2})^2} \right]_i }$ %$-\lambda^2 \diag{ \left[ \frac{1}{(\yi -\lambda \vthetai)(1 -\yi + \lambda \vthetai)} \right]_i}$
 \\ \hline
 $\alpha$ (global)
 & $\lambda^2$
 & --
 & --
 & $4\lambda^2$\\ \hline
 $\alpha_{\Delta_{\A} \cap \SO}$
 & $\lambda^2$
 & $\begin{array}{l}
      \min_{i \in [m]} - \sigma_i % \lambda^2 h_i 
      \left(\frac{1}{\lambda}\min\left( b_i, \frac{\yi - \epsilon}{\sqrt{\epsilon}}\right)\right) \\ %\text{with }~ 
      \sigma_i(\frac{z_i}{\lambda}) = -\lambda^2 \left( \frac{ z_i^2 + 2\yi}{\sqrt{ z_i^2 + 4\yi}} - z_i \right)
      %h_i(z_i) = \frac{z_i^2 + 2\yi}{\sqrt{ z_i^2 + 4\yi}} - z_i
    \end{array}$
%   $\begin{array}{l} 
%     \min_i \alpha_i, \quad \\ %\text{with }~ 
%     \alpha_i = \left\lbrace 
%     \begin{array}{ll}
%          \lambda^2 \left( \frac{b_i^2 + 2\yi}{\sqrt{ b_i^2 + 4\yi}} - b_i \right) & \text{if } i \in \s{I}_0^\complement \\
%          2\lambda^2 \sqrt{\epsilon} & \text{if } i \in \s{I}_0
%     \end{array} \right.
%     \end{array}$ 
 & $\lambda^2 \min_{i \in \s{I}_0^\complement} \frac{\yi}{ \left(\min_{j}\left( \frac{\lambda + \|\cA_j\|_1}{a_{ij}} \right) \right)^2}$ %ICASSP: $\lambda^2 \min_{i \in \s{I}_0^\complement} \frac{\yi}{(1+\max(\|\A\|_1,\lambda) \|(\A^\dagger)_i\|_1)^2}$
 &  $ \frac{4\lambda^2}{1 - 4 \left(\min(\lambda\|\A^\dagger\|_1,\frac{1}{2}) - \frac{1}{2} \right)^2} $
 % $\frac{\lambda^2}{b (1 -b)}, $ with $b = \min(\lambda\|\A^\dagger\|_1,\frac{1}{2})$
 \\ \hline
 $\alpha_{\s{B}(\vtheta,r)\cap \SO}$
 & $\lambda^2$
 & $\begin{array}{l}
      \min_{i \in [m]} -\sigma_i(d_i/\lambda) \\ %\text{with }~
      d_i = \min \left( \lambda(\vthetai+r), \; b_i, \; \frac{\yi -\epsilon}{\sqrt{\epsilon}} \right)
    \end{array}$
%   $\begin{array}{l}  
%     \lambda^2 \min_{i}  \frac{d_i^2 + 2\yi}{\sqrt{ d_i^2 + 4\yi}} - d_i  \quad  \\    %\text{with }~ 
%     d_i = \left\lbrace 
%       \begin{array}{ll}  
%          \min\left(\lambda(\vthetai+r) ,b_i \right) &  \text{if } i \in \s{I}_0^\complement\\
%          \min\left(\lambda(\vthetai+r) , -\sqrt{\epsilon} \right) & \text{if } i \in \s{I}_0 
%       \end{array} \right.
%     \end{array}$ 
 & $\lambda^2 \min_{i \in \s{I}_0^\complement} \frac{\yi}{(1+\lambda(\vthetai+r))^2}$
 & $\frac{4 \lambda^2}{1 - 4 ([\min_i(|\lambda \vthetai - \yi + \frac{1}{2}|) - \lambda r]^+)^2 }$ \\ \hline  
 \pbox{10cm}{Screening test \\ ($\Omega(\x) = \|\x\|_1$)} %$\max_{\vtheta \in \s{B}(\vtheta,r) \cap \SO} \overline{\Omega}\left(\phi(\A_g^\T \vtheta)\right)$
 & $|\cA_j^\T \vtheta| + r \|\cA_j\|_2 < 1$ % \cref{eq:L1_sphere_test}
 & $\cA_j^\T \vtheta + r \|\cA_j\|_2 < 1$
 & $\cA_j^\T \vtheta + r \|[\cA_j]_{\s{I}_0^{\complement}}\|_2 < 1$
 & $|\cA_j^\T \vtheta| + r \|\cA_j\|_2 < 1$ \\ \hline % \cref{eq:L1_sphere_test}
\end{tabular}
\caption{Screening with three instances of $\beta$-divergence and logistic regression.} \label{tab:data-fidelity_cases}
%\end{table}
\end{sidewaystable}
%
%\hspace{0pt}
%\vfill
%\end{landscape}
%%%%%%%%%%%%%%%%%%%%%%%%%%%%%%%%%%%%%%%%%%%%%%%%%%%%%%

\section{Experiments} \label{sec:experiments}

%Various experiments
In this section, our objective is to %The exposed results 
evaluate the proposed techniques in a diverse range of scenarios and
%draw the main conclusions and guidelines for a eventual applications on different scenarios.
provide essential insights for efficiently applying such techniques to other problems and settings.
To that end,
a wide range of experiments have been performed
%Subset of results
but only a representative subset of the results are presented in this section for the sake of conciseness and readability. 
%
%Code for reproducibility
The complete Matlab code is made available by the authors\footnote{Code available at: \url{https://github.com/cassiofragadantas/KL_screening}} for the sake of reproducibility. %\ced{[développer un peu ? éventuellement specs de ta machine, et quid des autres methods comparée ?]}. 
Different simulation scenarios (not reported here) are also available for the interested reader.
%All results (as well as those with other data sets) can be reproduced with the code available at: \url{https://github.com/cassiofragadantas/} (Matlab implementation).

\paragraph{Solvers.}
For each of the three 
problems described in \Cref{sec:Particular_cases}, namely
logistic regression, KL divergence, and $\beta_{1.5}$-divergence, we deploy some popular solvers from the literature in their standard form (i.e., without screening) as well as within the three mentioned screening approaches: the existing dynamic Gap Safe approach in \Cref{alg:solver_screening} (when applicable) and the proposed screening methods with local strong concavity bounds in \Cref{alg:solver_screening_bis,alg:solver_screening_local}.
%
%Screening frequency
Screening is performed at every iteration after the solver update.
%solvers
%The proposed screening approaches could be combined with basically any iterative solver available on the literature.
%We have chosen, for each problem, some well-established solvers which are listed in \Cref{tab:simulation_cases} further detailed in \Cref{TODO}. %commonly-used
Three different categories of solvers are used:  coordinate descent (CoD) \citep{Friedman2010,Hsieh2011,Yuan2010}, multiplicative update (MU) yielding from majorization-minimization \citep{Fevotte2011} and proximal gradient algorithms \citep{Harmany2012}.
%Detail in appendix
%---see a \Cref{app:sec:Solvers} for a detailed description. 
%
The chosen solvers and the corresponding variations for each particular case are listed in \Cref{tab:simulation_cases}.

\paragraph{Data Sets.}
%data sets
Well-suited data sets have been chosen according to each considered problem (see \Cref{tab:simulation_cases}). %on the simulations reported here
%Logistic
The Leukemia binary classification data set \citep{Leukemia}%
\footnote{Data set available at LIBSVM: \url{https://www.csie.ntu.edu.tw/~cjlin/libsvmtools/datasets/}}
is used for the logistic regression case.
For the KL case, the NIPS papers word count data set \citep{NIPSpapers1-17}%
\footnote{Data set available at: \url{http://ai.stanford.edu/~gal/data.html}}
is considered.
For the $\beta$-divergence case, we use the Urban hyperspectral image data set \citep{Jia2007}%Hyperspectral Digital Imagery Collection Experiment (HYDICE) Urban image scene %Jia2007 is not really the data source, but still the oldest retrievable refenrence (see Zhu2017 survey)
\footnote{Data set available at: \url{https://rslab.ut.ac.ir/data}}, %\url{http://lesun.weebly.com/hyperspectral-data-set.html}
since the $\beta$-divergence (especially with $\beta=1.5$) was reported to be well-suited for this particular type of data \citep{Fevotte2015}. %[cette discussion n'est detaillée que sur la version ArXiV du papier (qui, à son tour, est referencée dans la version IEEE)].
Additional details are given in the following respective sections.

%List of explored cases, solvers, screening strategies and associated data:
\begin{table}[h]
\centering
\begin{tabular}{|l|l|l|l|} \hline
Problem     & Solvers  & Variations & Data \\ \hline
Logistic    & CoD 
    & \pbox[c][1cm][c]{10cm}{No screening, 
        Alg.~\ref{alg:solver_screening} (DGS), \\ Alg.~\ref{alg:solver_screening_bis} (G-DGS), Alg.~\ref{alg:solver_screening_local} (R-DGS)
        %\Cref{alg:solver_screening}, \\ \Cref{alg:solver_screening_bis}, \Cref{alg:solver_screening_local}
      }
    & \pbox{10cm}{Binary classification \\ (Leukemia data set)} \\ \hline
KL          & \pbox{10cm}{MU, CoD,\\ Prox. Grad.}
    & \pbox[c][1cm][c]{10cm}{No screening,
        \\ Alg.~\ref{alg:solver_screening_bis} (G-DGS), Alg.~\ref{alg:solver_screening_local} (R-DGS)
        %\\ \Cref{alg:solver_screening_bis}, \Cref{alg:solver_screening_local}
      }
    & \pbox{10cm}{Count data \\ (NIPS papers data set)} \\ \hline
$\beta=1.5$ & MU              
    & \pbox[c][1cm][c]{10cm}{No screening, 
        \\ Alg.~\ref{alg:solver_screening_bis} (G-DGS), Alg.~\ref{alg:solver_screening_local} (R-DGS)
        %\\ \Cref{alg:solver_screening_bis}, \Cref{alg:solver_screening_local}
      }
    & \pbox{10cm}{Hyperspectral data \\ (Urban image)}   \\ \hline
\end{tabular}
\caption{Experimental scenarios}\label{tab:simulation_cases}
\end{table}

\paragraph{Evaluation.}
%%Performance measures
To compare the tested approaches, two main performance measures are used

%List of possible performance measures and different graphical illustrations of the results:
\begin{enumerate}[noitemsep]
    \item  Screening rate: how many (and how quickly) inactive coordinates are identified and eliminated by the compared screening strategies.
%        \item Total screened coordinates vs. iteration number (see \Cref{fig:LogReg_screening_rate}). Fixed $\lambda$ (left) and range $\lambda$ (right).

    \item Execution time: the impact of screening in accelerating the solver's convergence.
%        \item Convergence vs. time (see \Cref{fig:LogReg_convergence_vs_time}). Fixed $\lambda$ and Color
%        \item Speedup summary table (varying $\lambda$ and $\varepsilon_{\gap}$).  See \Cref{tab:LogReg_results_speedup}.
    
%    \item Other
%    \begin{itemize}
%        \item  $\alpha$(/radius) evolution. Robustness to $\alpha$ initialization. See \Cref{fig:alpha_evolution}.
%        \item MU solver:  screening creates real zeros in the solution, see \Cref{fig:coord_evolution}.
%    \end{itemize}

\end{enumerate}

%%Parameters
%There two important parameters to be explored (listed in \Cref{tab:simulation_parameters}).
\Cref{tab:simulation_parameters} specifies the explored values of two parameters with decisive impact in the performance measures.
%
%epsilon
%When applicable, we set $\epsilon = 10^{-6}$ in the experiments.
We set the ``smoothing'' parameter of $\beta_{1.5}$ and KL divergence to  $\epsilon = 10^{-6}$.
Remaining problem parameters are fixed by the choice of the data set: 
% Apart from these, in the synthetic data case, other parameters to be set are: 
problem dimensions ($m,n$) and data distribution (both the input vector $\y$ and matrix $\A$).

\begin{table}%[h]
\centering
\renewcommand{\arraystretch}{1.2}
\begin{tabular}{|l|l|} \hline
Parameter     & Range \\ \hline
Regularization ($\lambda/\lambda_{\max}$) & $[10^{-3}, 1]$ \\ \hline
Stopping criterion ($\varepsilon_{\gap}$) & $\{10^{-7},10^{-5}\}$ \\ \hline
\end{tabular}
\caption{Simulation parameters and explored values.}\label{tab:simulation_parameters}
\end{table}

%Organization
This section is organized as follows:
the particular cases of logistic regression, $\beta_{1.5}$-divergence and Kullback-Leibler divergence are treated  %in terms of the two mentioned performance measures 
respectively in \Cref{ssec:LogReg_experiments,ssec:beta15_experiments,ssec:KL_experiments}.
Other worth-mentioning properties of the proposed approaches are discussed in \Cref{ssec:further_experiments},
%Finally, in \Cref{ssec:further_experiments}, we take a closer look on the dynamics of the proposed approaches.
notably the robustness of \Cref{alg:solver_screening_local} to the initialization of the strong concavity bound.
%in \Cref{sssec:} we analyze the evolution of the strong concavity parameter over the iterations as well as the robustness of \Cref{alg:solver_screening_local} to its initialization.

\subsection{Logistic Regression}\label{ssec:LogReg_experiments}
%- This would be one example without the non-negativity constraint

This example is particularly interesting as it allows to compare all screening approaches (\Cref{alg:solver_screening,alg:solver_screening_bis,alg:solver_screening_local}). 
%
%We recall that the standard screening approach in \Cref{alg:solver_screening} does not apply to the two remaining particular cases.
%
The classic Leukemia binary classification data set is used in the experiments,%
\footnote{Sample number 17 was removed from the original Leukemia data set in order to improve the conditioning of matrix $\A$, which would be nearly singular otherwise %In that case, 
and the proposed bound $\alpha_{\Delta_{\A} \cap \SO}$ in \Cref{prop:beta15_alpha_fixed} would reduce to the global $\alpha_{\R^m}$, leading to uninteresting results (although still technically correct). %for Alg. \ref{alg:solver_screening_bis} 
%This was done so that the proposed bound $\alpha_{\Delta_{\A} \cap \SO}$ in \Cref{prop:beta15_alpha_fixed} leads to more pertinent results. %profitable
}
leading to a matrix $\A$ with dimensions $(m \times n) = (71 \times 7129)$, whose columns are re-normalized to unit-norm. 
Vector $\y$ contains the binary labels $\{0,1\}$ of each sample.
%
%Solver 
%A Newton-based cyclic 
A coordinate descent algorithm \citep{Tseng2009a} %\citep{Yuan2010} 
is used to optimize problem \eqref{prob:LogReg_l1}---same as used by \citet{Ndiaye2017}.

%Figures description
\Cref{fig:LogReg_screening_rate_a,fig:LogReg_screening_rate_b} show the screening ratio (number of screened coordinates divided by the total number of coordinates) as a function of the iteration number,
\Cref{fig:LogReg_convergence_vs_time_a,fig:LogReg_convergence_vs_time_b} show the convergence rate (duality gap) as a function of the execution time and
 \Cref{fig:LogReg_Reltime} depicts relative execution times (where the solver without screening is taken as the reference) as a function of the regularization parameter.
%Screening results
\Cref{fig:LogReg_screening_rate_a,fig:LogReg_screening_rate_b} show a clear hierarchy between \Cref{alg:solver_screening,,alg:solver_screening_bis,alg:solver_screening_local} in terms of screening performance, from worst to best. %in increasing screening performance order
The difference is particularly pronounced at lower regularizations---cf. plot with $\lambda/\lambda_{\max} = 10^{-3}$. As regularization grows, \Cref{alg:solver_screening,alg:solver_screening_bis} become equivalent ($\lambda/\lambda_{\max} = 10^{-2}$) and later, around $\lambda/\lambda_{\max} = 10^{-1}$, all approaches become equivalent.
The first mentioned transition point, where \Cref{alg:solver_screening,alg:solver_screening_bis} %global and fixed local approaches 
become equivalent, can be theoretically predicted at $\lambda = (2\|\A^\dagger\|_1)^{-1}$, as discussed in \Cref{ssec:LogReg_discussion}. 
In the reported example this corresponds to $\lambda = 1.2 \! \times\! 10^{-2} \lambda_{\max}$. %(with $\|\A^\dagger\|_1 =  12.95$ and $\lambda_{\max} =  3.2569$).
This threshold is depicted as a vertical dotted line in \Cref{fig:LogReg_Reltime} and it accurately matches the experimental results. %practical

%A very similar behaviour is observed in terms of execution times
The discussed screening performances translate quite directly in terms of execution times, as shown in \Cref{fig:LogReg_convergence_vs_time_a,fig:LogReg_convergence_vs_time_b}.
%
%Reltime
\Cref{fig:LogReg_Reltime} shows the relative execution times, where the basic solver (without screening) is taken as the reference, for a range of regularisation values at given convergence threshold ($\varepsilon_{\gap} = \{10^{-5}, 10^{-7}\}$).
%
%different conv. thresholds
A typical behavior of screening techniques is observed: the smaller the convergence tolerance, the more advantageous screening becomes.
Indeed, a smaller convergence tolerance leads to a larger number of iterations in the final optimization stage, where most coordinates are already screened out.
%Indeed, the additional iterations resulting from the lower convergence tolerance are faster  since most coordinates have been screened out.
%Indeed, the final iterations are often the least costly as most coordinates have been screened out.
%Indeed, with a lower convergence tolerance, a larger number iterations are performed after most coordinates have been screened out.
%
A considerable speedup is obtained with \Cref{alg:solver_screening_local} 
in comparison to the other approaches in a wide range of regularization values and, in worst-case scenario, it is equivalent to other screening approaches (i.e., no significant overhead is observed).
%Results therefore 
These results indicate that the proposed \Cref{alg:solver_screening_local} (R-DGS) should be preferred over the other approaches in this particular problem.

Similar results were obtained with other binary classification data sets from LIBSVM like the colon-cancer data set and the Reuters Corpus Volume I (rcv1.binary) data set.\footnote{For similar reasons as for the Leukemia data set, some samples were removed.}

\begin{figure}
%%%% SCREENING RATE PER ITERATION %%%%
%Screening - Colormap
\begin{subfigure}{\columnwidth}
  \includegraphics[height=4cm, trim={0 0cm 1.5cm 0cm},clip]{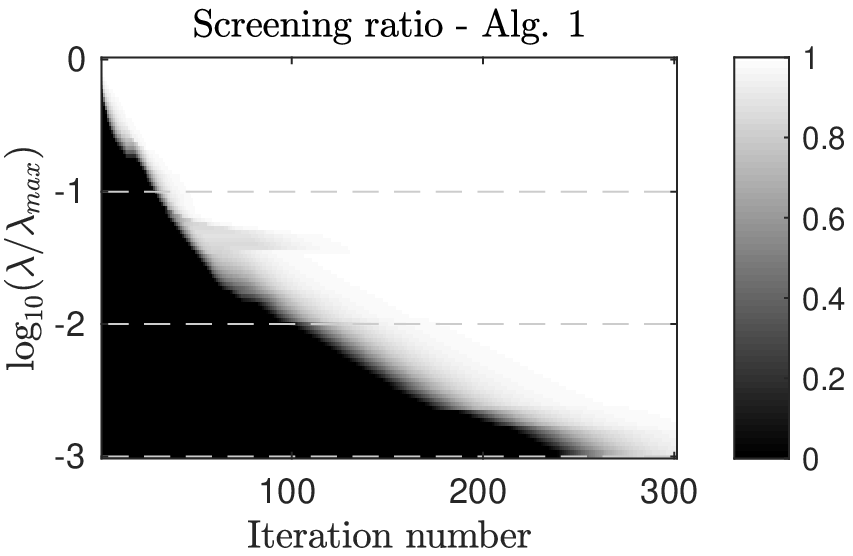}%  
  \hfill
  \includegraphics[height=4cm, trim={1cm 0cm 1.5cm 0cm},clip]{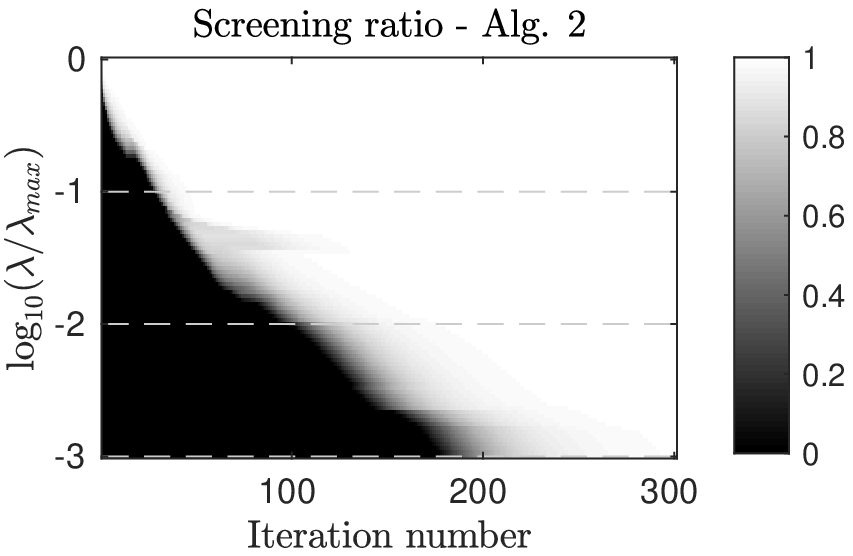}%
  \hfill
  \includegraphics[height=4cm, trim={1cm 0cm 0.1cm 0cm}, clip]{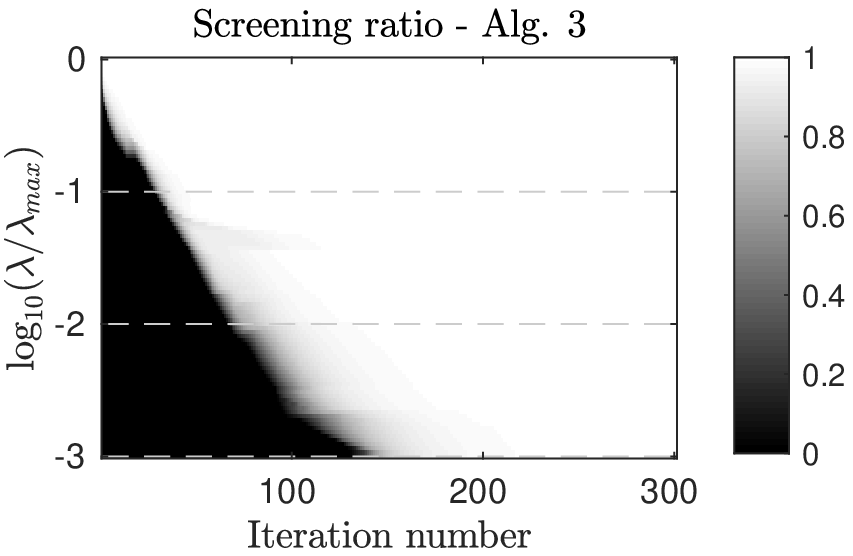}      
  \caption{Screening ratio against iterations for $\lambda/\lambda_{\max} \in [ 10^{-1}, 10^{-3}]$ (the lighter, the more screened coordinates), for Algorithms 1 to 3. \label{fig:LogReg_screening_rate_a}}
\end{subfigure}
\begin{subfigure}{\columnwidth}
  %Fixed lambda
  \includegraphics[height=4.2cm, trim={0cm 0cm 0cm -0.2cm}]{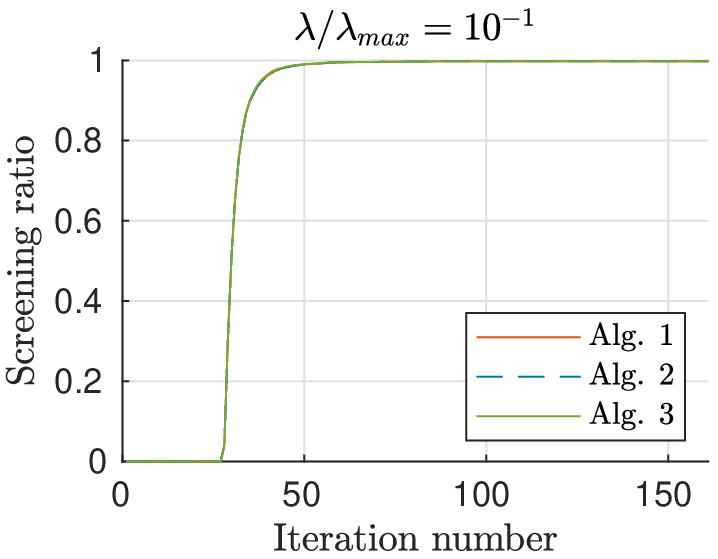}%
  \hfill
  \includegraphics[height=4cm, trim={0.5cm 0cm 0cm 0cm}, clip]{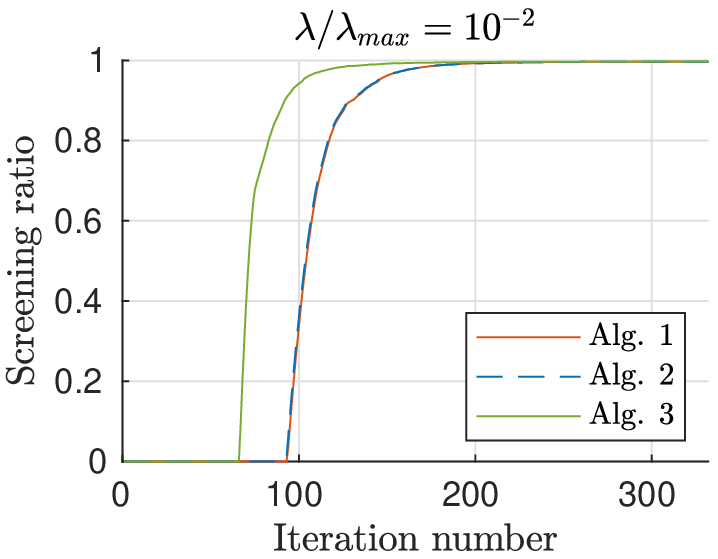}%
  \hfill
  \includegraphics[height=4cm, trim={0.5cm 0cm 0cm 0cm}, clip]{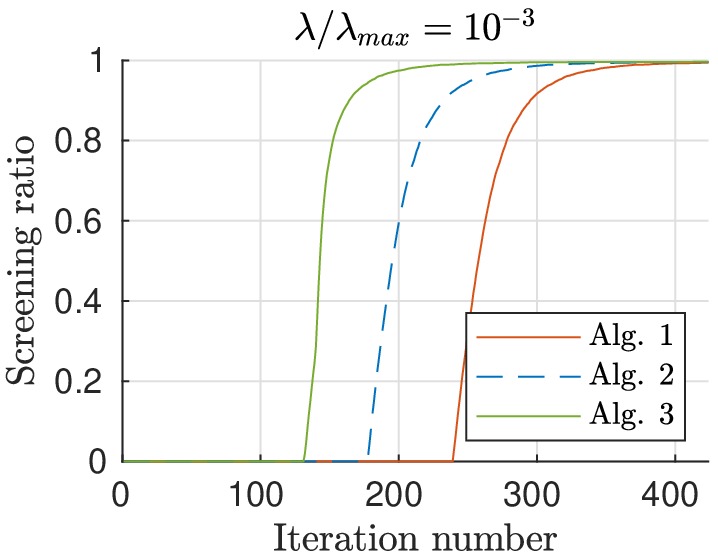}%
  \caption{Screening ratio against iterations for  fixed regularization $\lambda/\lambda_{\max} = \{ 10^{-1}, 10^{-2}, 10^{-3}\}$, corresponding to the dotted slices in \Cref{fig:LogReg_screening_rate_a}.\label{fig:LogReg_screening_rate_b}} 
\end{subfigure}
%
%
%%%% CONVERGENCE VS. TIME %%%%
%Colormap
\begin{subfigure}{\columnwidth}
  \includegraphics[height=4.2cm, trim={0 0cm 2.1cm -0.2cm},clip]{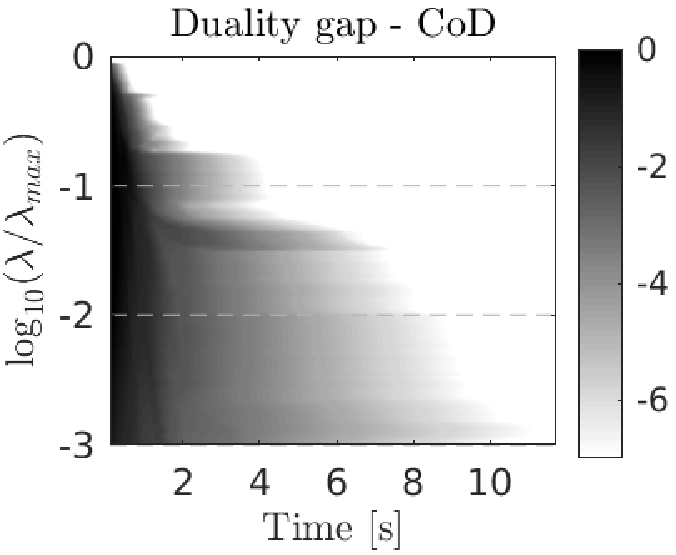}%  
  \hfill
  \includegraphics[height=4.2cm, trim={1.1cm 0cm 2.1cm -0.2cm},clip]{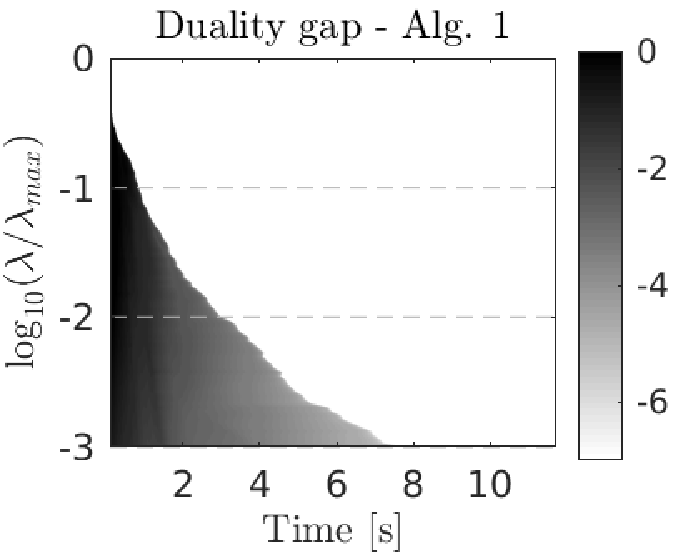}%
  \hfill
  \includegraphics[height=4.2cm, trim={1.1cm 0cm 2.1cm -0.2cm},clip]{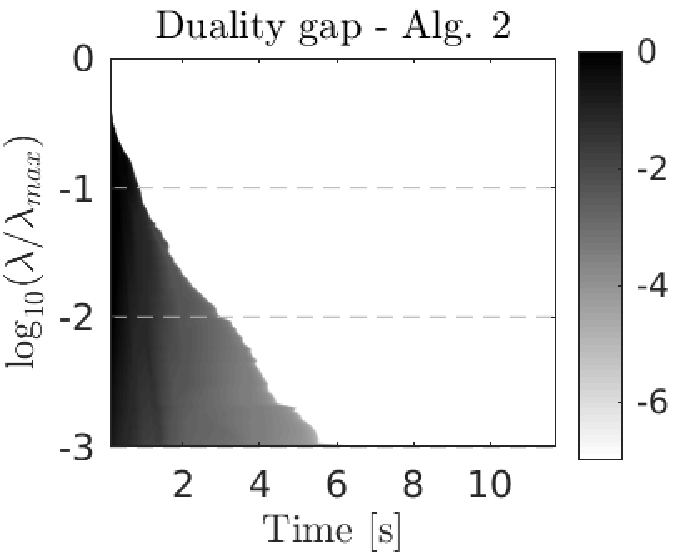}%  
  \hfill
  \includegraphics[height=4.2cm, trim={1.1cm 0cm 1cm -0.2cm},clip]{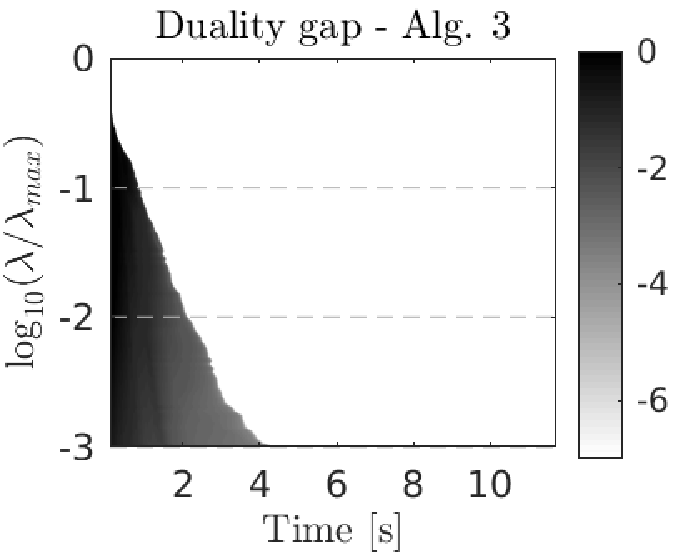}% 
  \caption{Convergence rate (duality gap) against execution time for $\lambda/\lambda_{\max} \in [ 10^{-1}, 10^{-3}]$ (the lighter, the closer to convergence). Left to right: Coordinate Descent solver alone and in Algorithms 1 to 3.} \label{fig:LogReg_convergence_vs_time_a}  
\end{subfigure}
%
%Fixed lambda
\begin{subfigure}{\columnwidth}
  \includegraphics[height=4.2cm, trim={0cm 0cm 0cm -0.2cm}, clip]{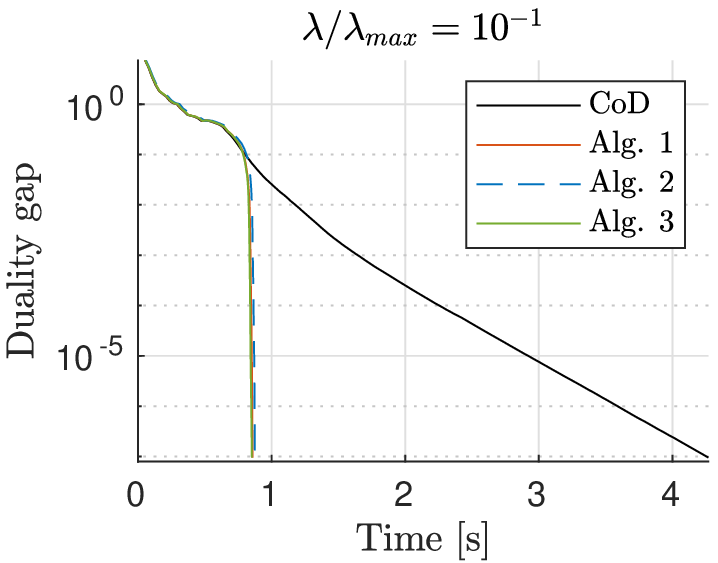}%
  %\hfill
  \includegraphics[height=4cm, trim={0.5cm 0cm 0cm 0cm}, clip]{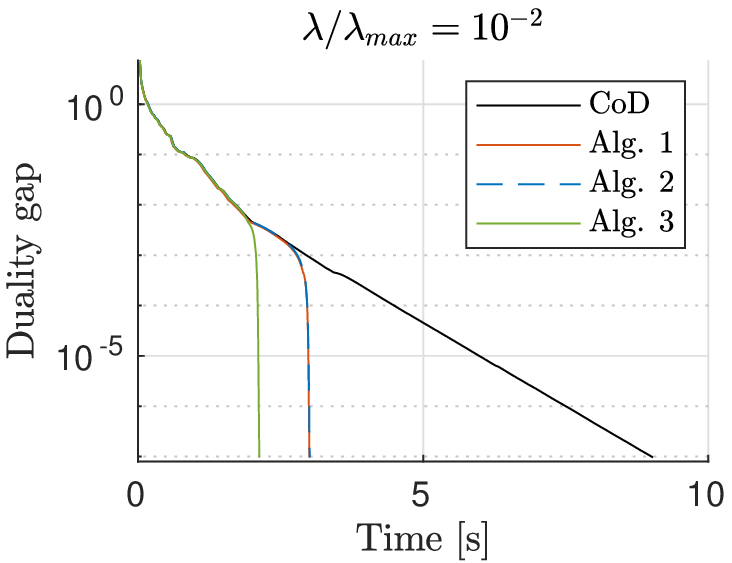}%
  \includegraphics[height=4cm, trim={0.5cm 0cm 0cm 0cm}, clip]{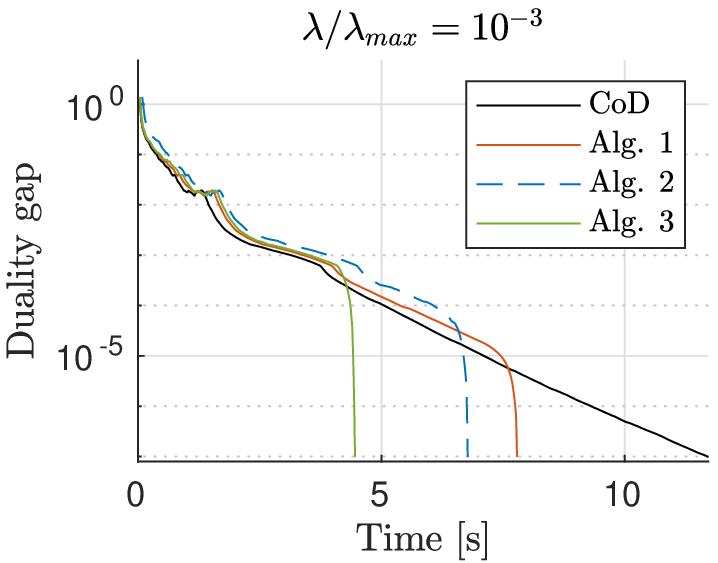}%
  \caption{Duality gap against time for fixed regularization $\lambda/\lambda_{\max} = \{ 10^{-1}, 10^{-2}, 10^{-3}\}$, corresponding to dotted slices in \Cref{fig:LogReg_convergence_vs_time_a}.} \label{fig:LogReg_convergence_vs_time_b}  
\end{subfigure}
\caption{Sparse logistic regression of Leukemia data set using Coordinate Descent and screening.} \label{fig:LogReg_fig_panel}
\end{figure}

%%RELATIVE EXECUTION TIMES
\begin{figure}
\begin{subfigure}{.49\columnwidth}
  \centering
  \includegraphics[width=\linewidth, trim={0 0cm 0cm 0cm}]{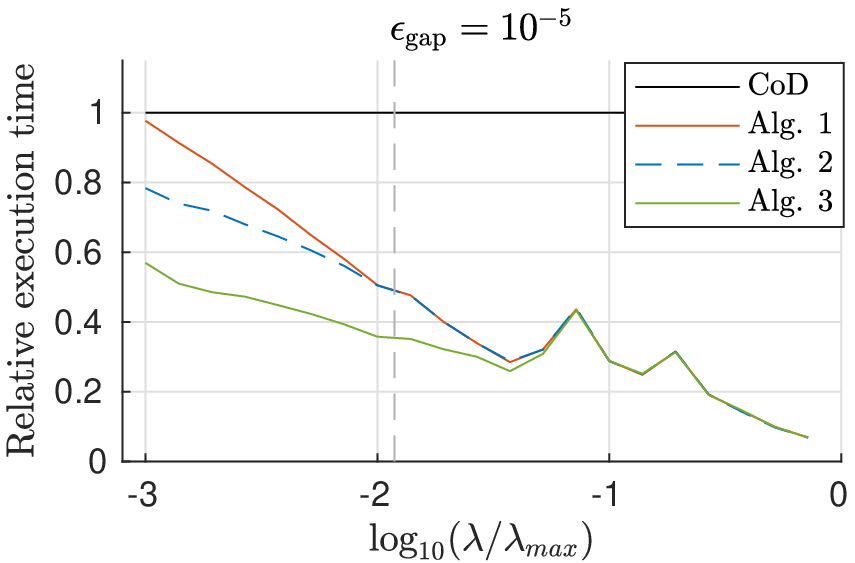}  
\end{subfigure}
\hfill
\begin{subfigure}{.49\columnwidth}
  \centering
  \includegraphics[width=\linewidth, trim={0 0cm 0cm 0cm}]{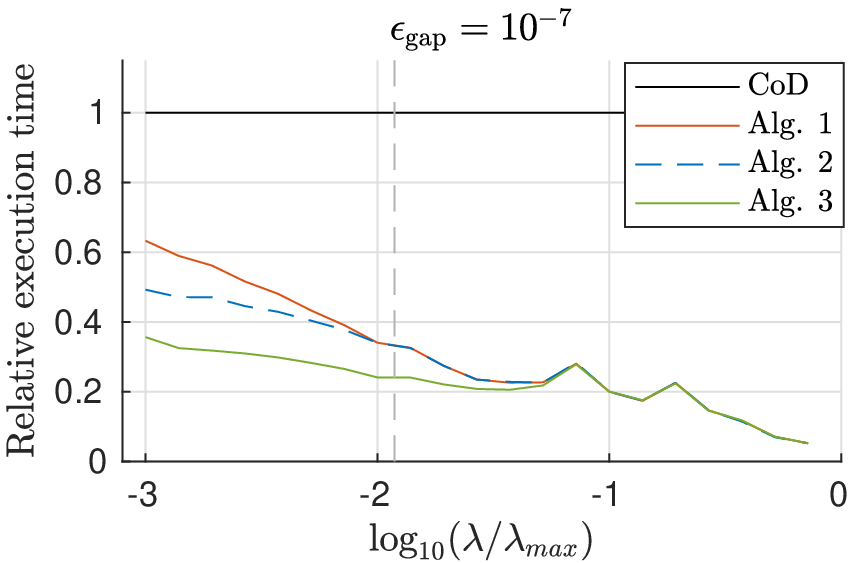}  
\end{subfigure} 
\caption{Sparse logistic regression on Leukemia data set using Coordinate Descent and screening. Relative execution times for $\lambda/\lambda_{\max} \in [ 10^{-3}, 1]$ (the smaller value, the faster) with convergence criterion $\varepsilon_{\gap} =10^{-5}$ (left) and $\varepsilon_{\gap} = 10^{-7}$ (right).} \label{fig:LogReg_Reltime}
\end{figure}

%%%% SPEEDUP SUMMARY TABLE %%%%
\begin{comment}
%\multicolumn{2}{c}{$10^{-1}$}
%\multirow{3}{*}{20News} 
%\multirow{3}{*}{\rotatebox[origin=c]{90}{20News}}
%\renewcommand{\arraystretch}{1.2}
%
%Color schemes
%\def\c#1{\textcolor[gray]{\fpeval{(1 - (#1-1.44)/(8.88-1.44))*0.5}}{#1}} %gray shades
\def\c#1{\textcolor[gray]{\fpeval{max(0,min((1 - (#1-1.02)/(4.97-1.02))^3,1))*0.5}}{#1}} %gray shades advanced
%\def\c#1{\textcolor[rgb]{\fpeval{(1-(#1-1.44)/(8.88-1.44))/2},\fpeval{((#1-1.44)/(8.88-1.44))/2},0}{#1}} %red to green
%\def\c#1{\textcolor[rgb]{0,\fpeval{((#1-1.44)/(8.88-1.44))/2},0}{#1}} %black to green
% \def\c#1{#1} %no color
%https://tex.stackexchange.com/questions/174998/are-there-an-easy-way-to-coloring-tables-depending-on-the-value-in-each-cell
\begin{figure} \renewcommand{\figurename}{Table}
%\begin{table}
\centering
\begin{tabular}{l|l|ll|ll|ll|}
			 \cline{2-8}
             &        &           &           &           &           &           &           \\[-2.3ex]
             & $\lambda/\lambda_{\max}$       
             		  & \multicolumn{2}{c|}{$10^{-1}$} 
             		  			  &\multicolumn{2}{c|}{$10^{-2}$}
             		  			              &\multicolumn{2}{c|}{$10^{-3}$} \\ \cline{2-8}
             &        &           &           &           &           &           &           \\[-2.3ex]
             &$\varepsilon_{\gap}$ %\multicolumn{1}{r|}{$\varepsilon_{\gap}$} %\backslashbox{Solver}{$\varepsilon_{\gap}$}
             		  &$10^{-5}$ & $10^{-7}$ & $10^{-5}$ & $10^{-7}$ & $10^{-5}$ & $10^{-7}$ \\ \cline{2-8}
%             
             &        &           &           &           &            &           &          \\ [-2.3ex]
%\multirow{3}{*}{\rotatebox[origin=c]{90}{CoD}} %CoD
             & Alg. 1    & \c{3.41} & \c{4.91} & \c{1.96} & \c{2.92} & \c{1.02} & \c{1.56}  \\
CoD          & Alg. 2    & \c{3.45} & \c{4.97} & \c{1.97} & \c{2.94} & \c{1.28} & \c{2.03}  \\
             & Alg. 3    & \c{3.45} & \c{4.97} & \c{2.78} & \c{4.13} & \c{1.74} & \c{2.76}  \\ \cline{2-8}
\end{tabular}
\caption{Logistic regression: Average speedups (time without/with screening) on Urban data set.} \label{tab:LogReg_results_speedup}
%\end{table}
\end{figure}
\end{comment}

%\FloatBarrier

\subsection{\texorpdfstring{$\beta_{1.5}$}{beta=1.5} Divergence} \label{ssec:beta15_experiments}

In this section, we use the Urban data set which is a ($307 \times 307$)-pixel hyperspectral image with $162$ spectral bands per pixel.
For the experiments, 
the input signal $\y \in \R^{162}$ is given by a randomly-selected pixel from the image and the dictionary matrix $\A \in \R^{162 \times 5000}$ is made of a uniformly-distributed random subset of $5000$ of the remaining pixels. %uniformly distributed
Therefore, the goal is to  reconstruct a given pixel (in $\y$) as a sparse combination of other pixels from the same image, akin to archetypal analysis \citep{Cutler1994}.
%
%Solver
%A multiplicative update algorithm \citep{Fevotte2011} is used to solve problem \eqref{prob:beta15_l1}.
%
%Solver - detailed
The multiplicative MM algorithm described in \citep{Fevotte2011} %described by \citet{Fevotte2011} under a  maximization-minimization formalism, 
is used to solve problem \eqref{prob:beta15_l1}.

\Cref{fig:beta15_screening_rate_a,fig:beta15_screening_rate_b} show the screening performance for \Cref{alg:solver_screening_bis,alg:solver_screening_local}. 
Here, there is an even more pronounced difference between the two approaches (compared to the logistic regression case in \Cref{ssec:LogReg_experiments}) 
for the entire range of regularization values.
%
%Below 1e-2 Alg. 2 does not manage to screen at all
Note that \Cref{alg:solver_screening_bis} does not screen at all for $\lambda/\lambda_{\max} \leq 10^{-2}$, as opposed to \Cref{alg:solver_screening_local}.
This means that the refinement strategy in the latter approach manages to significantly improve, along the iterations, the initial strong-concavity bound (kept constant by the former).
%High regularizations
Even in highly-regularized scenarios, the screening ratio grows significantly faster with \Cref{alg:solver_screening_local}.

%%Weird plateau at 1e-2 in Alg. 3
%It is important to emphasize that the phase transition observed %screening profile observed on the colormap 
%for  \Cref{alg:solver_screening_local} depends on the problem instance at hand, specifically on the dictionary matrix built from a random pixel selection of the Urban image: different sets of pixels would lead to slightly modified contours. The screening performance remains globally consistent throughout different realizations.
%%In the particular instance shown in \Cref{fig:beta15_screening_rate}, the plateau observed at $\lambda/\lambda_{\max} = 10^{-2}$ is merely occasional and carries no further meaning.

A similar behavior is observed in \Cref{fig:beta15_convergence_vs_time_a,fig:beta15_convergence_vs_time_b} regarding execution times.
%Alg. 2 not worthy for 10^{-2} 
Because no screening is performed by \Cref{alg:solver_screening_bis} at regularization  $\lambda/\lambda_{\max} = 10^{-2}$ (and below), no speedup is obtained w.r.t. the basic solver---there is even a slight overhead due to unfruitful screening tests calculations.
\Cref{alg:solver_screening_bis} only provides speedup over the basic solver for more regularized scenarios.
\Cref{alg:solver_screening_local}, in turn, provides acceleration over the entire regularization range and significantly outperforms both the basic solver and \Cref{alg:solver_screening_bis}.
The previous observations are summarized in \Cref{fig:beta15_Reltime} for normalized execution times. %comments, conclusions

%%%%%%%%%%%% FIGURE PANEL %%%%%%%%%%%%
\begin{figure}
%%%% SCREENING RATE PER ITERATION %%%%
%Colormap
\begin{subfigure}{\columnwidth}
  \centering
  \includegraphics[height=4cm, trim={0 0cm -1cm 0cm},clip]{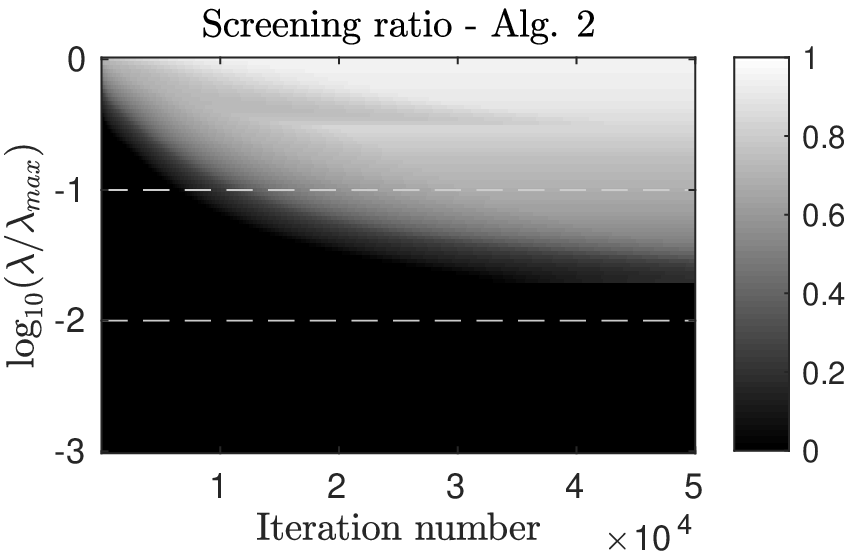}%  
  \includegraphics[height=4cm, trim={-1.0cm 0cm 0cm 0cm}, clip]{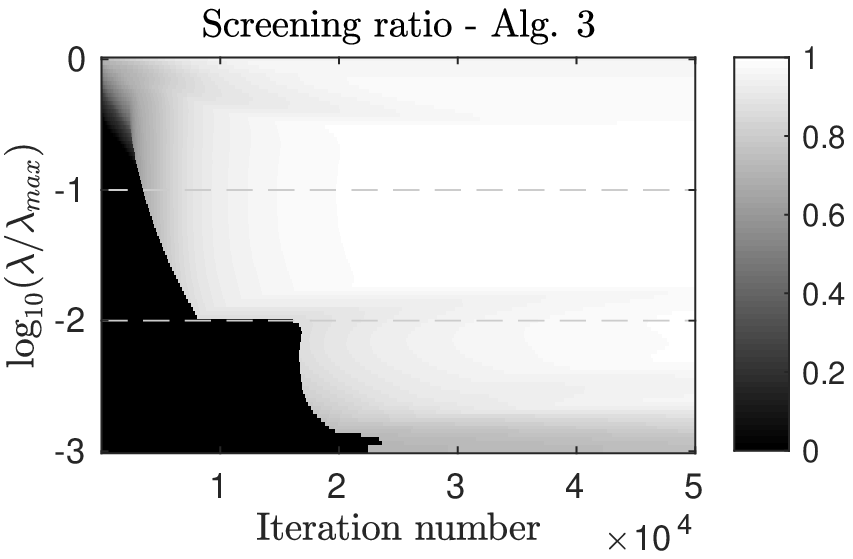}      
  \caption{Screening ratio against iterations for $\lambda/\lambda_{\max} \in [ 10^{-1}, 10^{-3}]$ (the lighter, the more screened coordinates) for Algorithms 2 and 3.} \label{fig:beta15_screening_rate_a}  
\end{subfigure}
%
%Fixed lambda
\begin{subfigure}{\columnwidth}
  \centering
  \includegraphics[height=4.2cm, trim={0cm 0cm -2.0cm -0.2cm}]{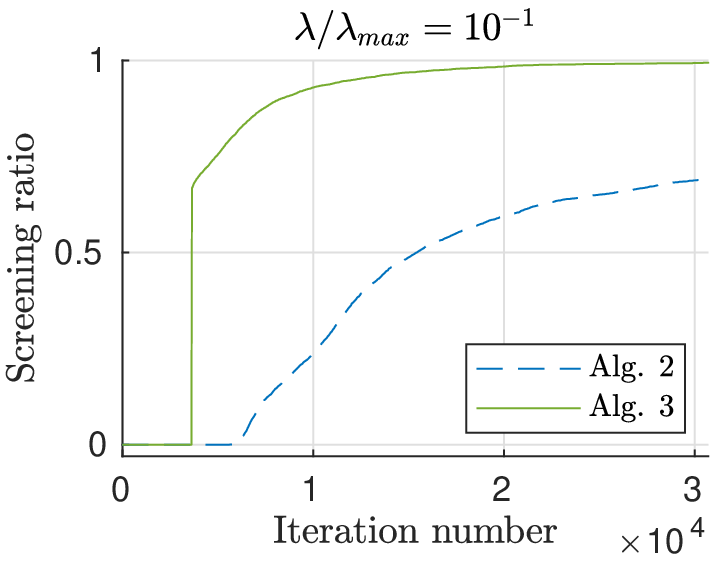}%width=0.48\linewidth, trim={0 0cm 0cm 0cm}
  \includegraphics[height=4cm, trim={-1.0cm 0cm -1.0cm 0cm}, clip]{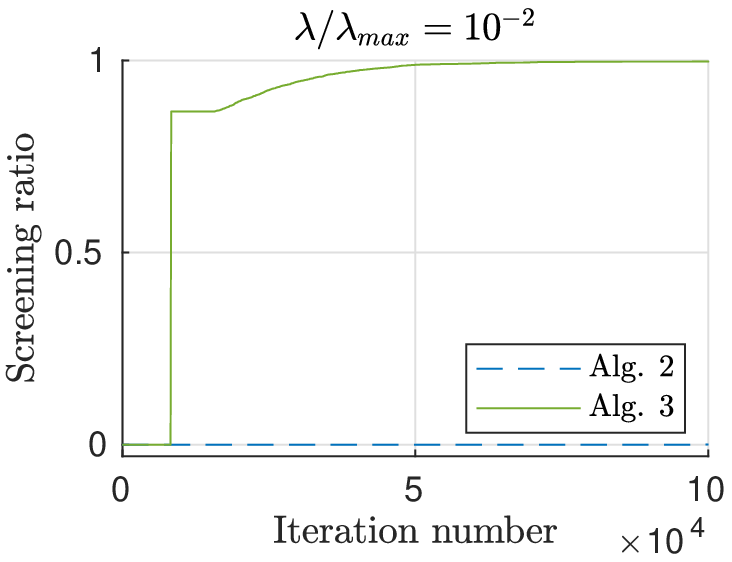}%width=0.48\linewidth, trim={0 0cm 0cm 0cm}  
  \caption{Screening ratio against iterations for  fixed regularization $\lambda/\lambda_{\max} =  10^{-1}$ (left) and $\lambda/\lambda_{\max} = 10^{-2}$ (right), corresponding to the dotted slices in \Cref{fig:beta15_screening_rate_a}.}\label{fig:beta15_screening_rate_b}
\end{subfigure}
%
%
%%%% CONVERGENCE VS. TIME %%%%
%Colormap
\begin{subfigure}{\columnwidth}
  \includegraphics[height=4cm, trim={0 0cm 2.1cm 0cm},clip]{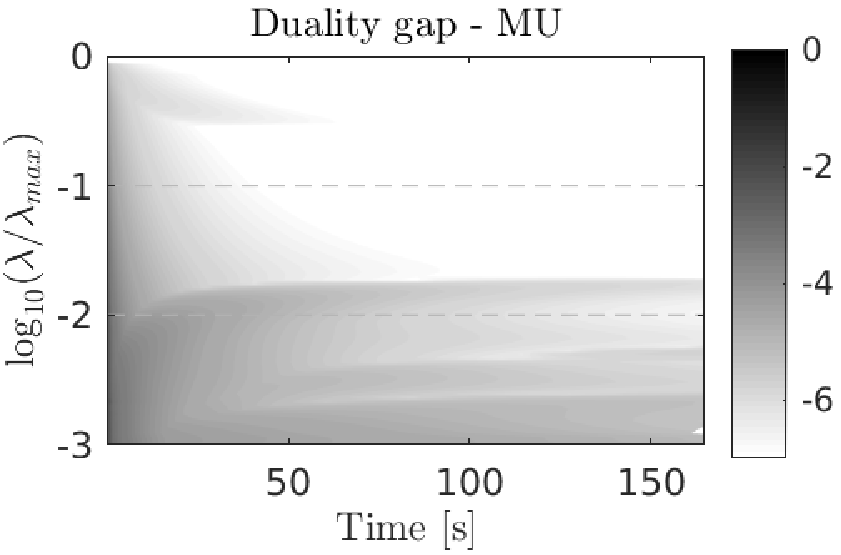}%  
  \hfill
  \includegraphics[height=4cm, trim={1cm 0cm 2.1cm 0cm},clip]{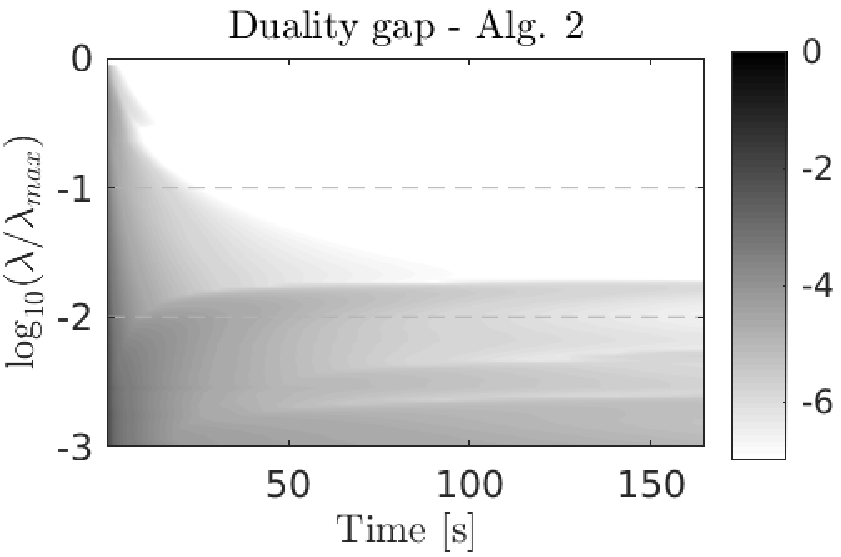}%
  \hfill
  \includegraphics[height=4cm, trim={1cm 0cm 1cm 0cm}, clip]{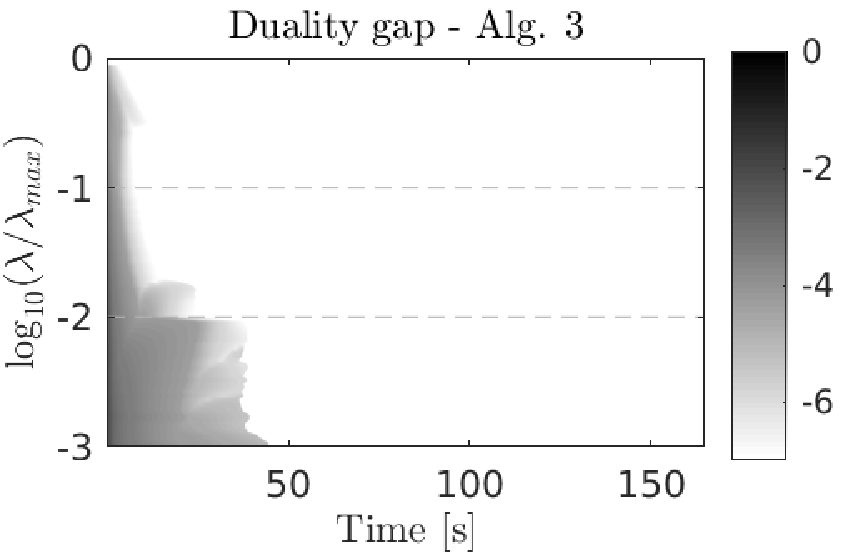}      
  \caption{Convergence rate (duality gap) against execution time for $\lambda/\lambda_{\max} \in [ 10^{-1}, 10^{-3}]$ (the lighter, the closer to convergence). From left to right: MU solver alone and Algorithms 2 and 3.}\label{fig:beta15_convergence_vs_time_a}   
\end{subfigure}
%
%Fixed lambda
\begin{subfigure}{0.6\columnwidth}
  \includegraphics[height=4.2cm, trim={0cm 0cm 0cm -0.2cm}, clip]{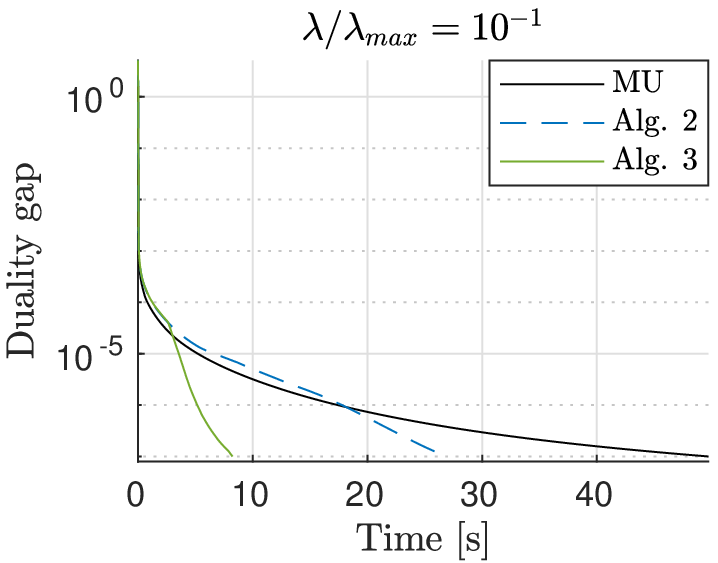}%width=0.48\linewidth, trim={0 0cm 0cm 0cm}
  %\hfill
  \includegraphics[height=4cm, trim={0.5cm 0cm 0cm 0cm}, clip]{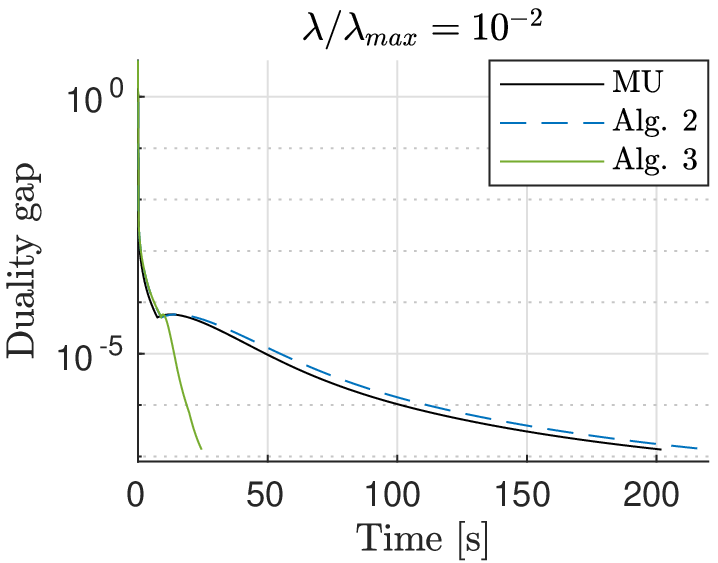}%width=0.48\linewidth, trim={0 0cm 0cm 0cm}  
  \caption{Duality gap against time for regularization $\lambda/\lambda_{\max} = 10^{-1}$ (left) and $\lambda/\lambda_{\max} =  10^{-2}$ (right), corresponding to the dotted slices in \Cref{fig:beta15_convergence_vs_time_a}.}\label{fig:beta15_convergence_vs_time_b}
\end{subfigure}
%
%Relative execution times
\hfill
\begin{subfigure}{0.32\columnwidth}
  \includegraphics[height=4.2cm, trim={0.5cm 0cm 0cm -0.2cm}, clip]{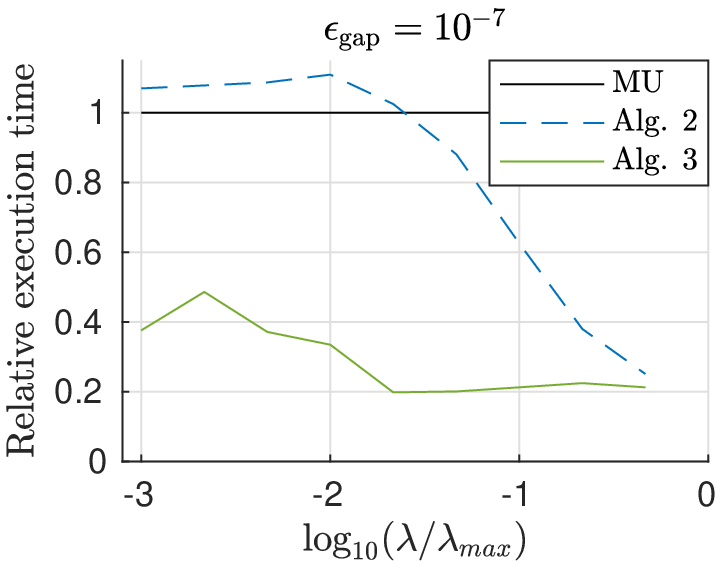}%width=0.48\linewidth, trim={0 0cm 0cm 0cm}   
  \caption{Relative execution times for $\lambda/\lambda_{\max}\in [10^{-3},1]$.}\label{fig:beta15_Reltime}
\end{subfigure}
\caption{Sparse non-negative hyperspectral decomposition using the Urban data set with $\beta =1.5$ and MU.} \label{fig:beta15_fig_panel}
\end{figure}

%%%% SPEEDUP SUMMARY TABLE %%%%
\begin{comment}
%\multicolumn{2}{c}{$10^{-1}$}
%\multirow{3}{*}{20News} 
%\multirow{3}{*}{\rotatebox[origin=c]{90}{20News}}
%\renewcommand{\arraystretch}{1.2}
%
%Color schemes
%\def\c#1{\textcolor[gray]{\fpeval{(1 - (#1-1.44)/(8.88-1.44))*0.5}}{#1}} %gray shades
\def\c#1{\textcolor[gray]{\fpeval{max(0,min((1 - (#1-0.75)/(6.03-0.75))^3,1))*0.5}}{#1}} %gray shades advanced
%\def\c#1{\textcolor[rgb]{\fpeval{(1-(#1-1.44)/(8.88-1.44))/2},\fpeval{((#1-1.44)/(8.88-1.44))/2},0}{#1}} %red to green
%\def\c#1{\textcolor[rgb]{0,\fpeval{((#1-1.44)/(8.88-1.44))/2},0}{#1}} %black to green
% \def\c#1{#1} %no color
%https://tex.stackexchange.com/questions/174998/are-there-an-easy-way-to-coloring-tables-depending-on-the-value-in-each-cell
\begin{figure} \renewcommand{\figurename}{Table}
%\begin{table}
\centering
\begin{tabular}{l|l|ll|ll|ll|}
			 \cline{2-8}
             &        &           &           &           &           &           &           \\[-2.3ex]
             & $\lambda/\lambda_{\max}$       
             		  & \multicolumn{2}{c|}{$10^{-1}$} 
             		  			  &\multicolumn{2}{c|}{$10^{-2}$}
             		  			              &\multicolumn{2}{c|}{$10^{-3}$} \\ \cline{2-8}
             &        &           &           &           &           &           &           \\[-2.3ex]
             &$\varepsilon_{\gap}$ %\multicolumn{1}{r|}{$\varepsilon_{\gap}$} %\backslashbox{Solver}{$\varepsilon_{\gap}$}
             		  &$10^{-5}$ & $10^{-7}$ & $10^{-5}$ & $10^{-7}$ & $10^{-5}$ & $10^{-7}$ \\ \cline{2-8}
%             
             &        &           &           &           &            &           &          \\ [-2.3ex]
%\multirow{3}{*}{\rotatebox[origin=c]{90}{MU}} %MU
MU	         & Alg. 2    & \c{0.75} & \c{1.81} & \c{0.85} & \c{0.89} & \c{0.87} & \c{0.89}  \\
             & Alg. 3    & \c{1.01} & \c{6.03} & \c{1.25} & \c{3.12} & \c{1.18} & \c{2.66}  \\ \cline{2-8}
\end{tabular}
\caption{$\beta_{1.5}$-divergence: Average speedups (time without/with screening) on Urban data set.} \label{tab:results_speedup}
%\end{table}
\end{figure}
\end{comment}

%\FloatBarrier
\subsection{Kullback-Leibler Divergence} \label{ssec:KL_experiments}

%NIPSpapers: 2483 x 14035
We here consider the NIPS papers word counts data set \citep{NIPSpapers1-17}.
%which are knowingly well-suited to the Kullback-Leibler data fidelity term.
%Akin to archetipal analysis
The input vector $\y$ is a randomly selected column of the data matrix. The remaining data forms matrix $\A$ (of size $2483 \times 14035$), after removing any all-zero rows and renormalizing all columns to unit-norm. 
Three standard optimization algorithms of distinct types have been used to solve problem~\eqref{prob:KL_l1}: the multiplicative MM algorithm of \citet{Lee2001,Fevotte2011}, the coordinate descent of \citet{Hsieh2011} and the proximal gradient descent of \citet[SPIRAL, ][]{Harmany2012}.
As the different solvers lead to qualitatively similar results,
we have chosen to only report the results for the SPIRAL method in order to avoid redundancy. %for the sake of conciseness
Yet, for completeness, speedup results for all solvers are summarized in Table~\ref{tab:KL_results_speedup}.

%No difference between Alg.2 and Alg.3
Differently from the previous cases, there is no difference between \Cref{alg:solver_screening_bis,alg:solver_screening_local} here. This indicates that: 
1) the strong concavity constant does not vary significantly within the dual feasible set while approaching the dual solution %(as the Gap Safe sphere shrinks)
%1) the strong concavity constant is not significantly larger on the neighborhood of the dual solution than on the entire dual feasible set
and 2) the initial bound for $\alpha_{\Delta_{\A}\cap \SO}$ given in \Cref{prop:KL_alpha_fixed} is nearly tight. %close to optimal / effiently calculated
This fact is verified both in terms of screening performance in \Cref{fig:KL_screening_rate_a,fig:KL_screening_rate_b} and convergence time in \Cref{fig:KL_convergence_vs_time_a,fig:KL_convergence_vs_time_b}.

Nonetheless, we still observe significant speedups w.r.t. the basic solver, which proves the interest of the proposed techniques (see \Cref{fig:KL_convergence_vs_time_a,fig:KL_convergence_vs_time_b} and \Cref{tab:KL_results_speedup}). % and Table~\ref{tab:KL_results_speedup}
%
%Table
Acceleration by a factor of 20 %of up to 20-times 
are reported in Table~\ref{tab:KL_results_speedup} %with a convergence threshold of $\varepsilon_{\gap} = 10^{-7}$
with remarkably stable results across the different regularization regimes.
%
%Better than ICASSP
The obtained results are also about 3 times better than those reported in our previous work \citep{Dantas2021}, which corresponds to \Cref{alg:solver_screening_bis} with a different 
bound for $\alpha_{\Delta_{\A} \cap \SO}$.
This indicates that the new strong concavity bound derived in the present paper is significantly tighter than the one given in~\citep{Dantas2021}. Indeed, a difference of around two orders of magnitude was observed experimentally between the two bounds.
%a factor of 234.77 to be more precise.

%%%%%%%%%%%% FIGURE PANEL %%%%%%%%%%%%
\begin{figure}
%%%% SCREENING RATE PER ITERATION %%%%
%Colormap
\begin{subfigure}{\columnwidth}
  \centering
  \includegraphics[height=4cm, trim={0 0cm -1cm 0cm},clip]{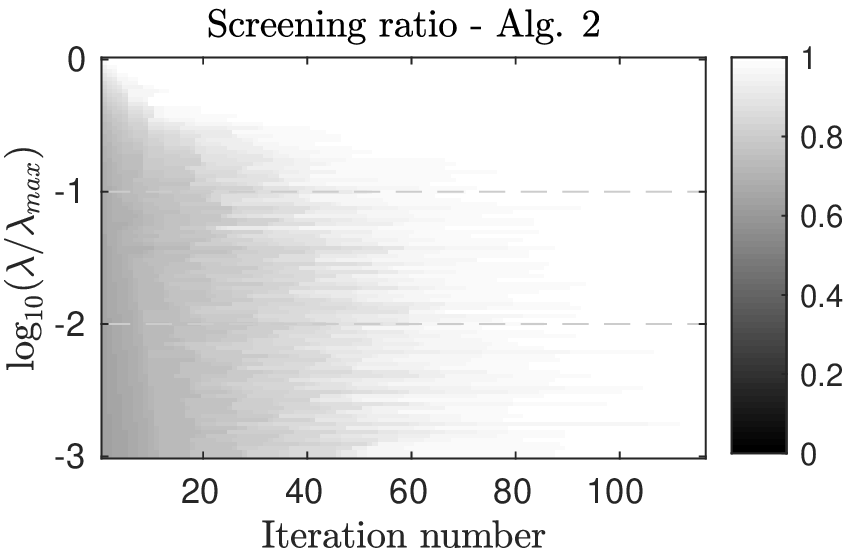}%  
  \includegraphics[height=4cm, trim={-1cm 0cm 0cm 0cm}, clip]{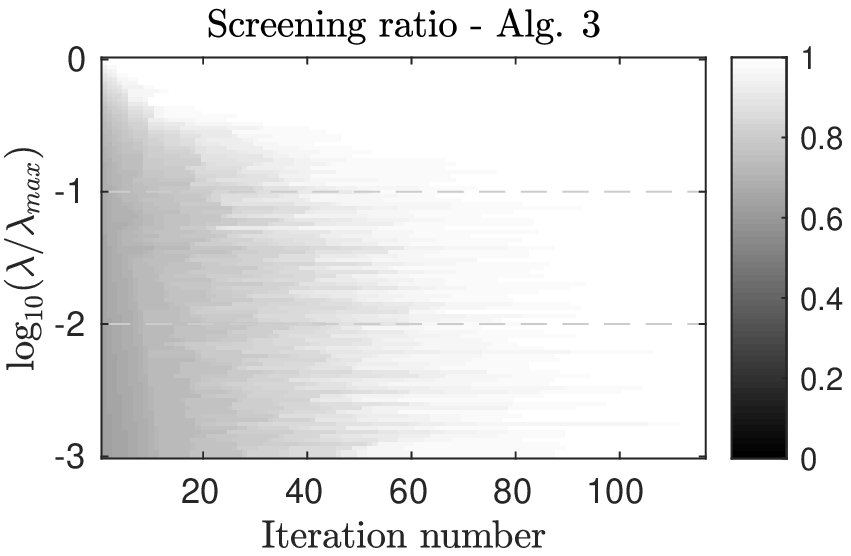}      
  \caption{Screening ratio against iterations for $\lambda/\lambda_{\max} \in [ 10^{-1}, 10^{-3}]$ (the lighter, the more screened coordinates), for Algorithms 2 and 3.}\label{fig:KL_screening_rate_a}
\end{subfigure}
%
%Fixed lambda
\begin{subfigure}{\columnwidth}
  \centering
  \includegraphics[height=4.2cm, trim={0cm 0cm -2.0cm -0.2cm}]{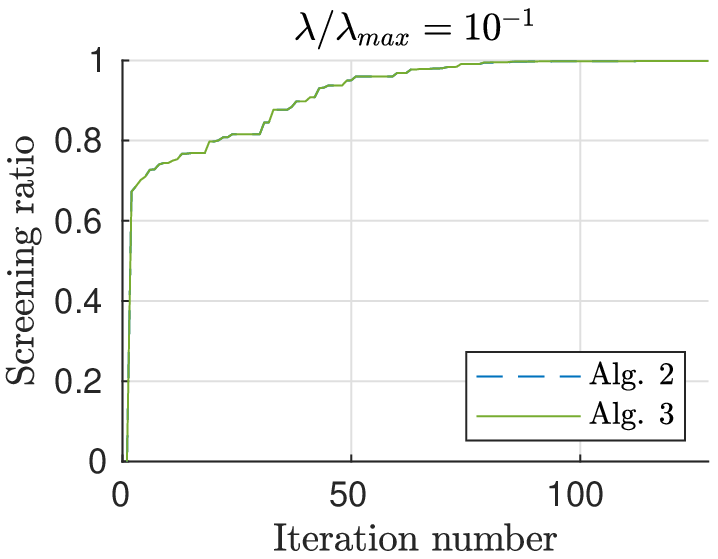}%width=0.48\linewidth, trim={0 0cm 0cm 0cm}
  \includegraphics[height=4cm, trim={-1cm 0cm -1.0cm 0cm}, clip]{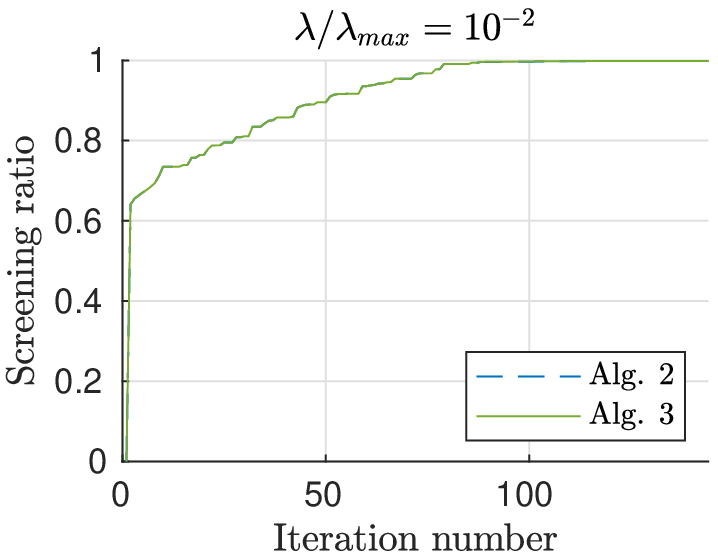}%width=0.48\linewidth, trim={0 0cm 0cm 0cm}  
  \caption{Screening ratio against iterations for fixed regularization $\lambda/\lambda_{\max} =  10^{-1}$ (left) and $\lambda/\lambda_{\max} = 10^{-2}$ (right), corresponding to the dotted slices in \Cref{fig:KL_screening_rate_a}.} \label{fig:KL_screening_rate_b}  %$\lambda/\lambda_{\max} = \{ 10^{-1}, 10^{-2}, 10^{-3}\}$.
\end{subfigure}
%
%
%%%% CONVERGENCE VS. TIME %%%%
%Colormap
\begin{subfigure}{\columnwidth}
  \includegraphics[height=4cm, trim={0 0cm 2.1cm 0cm},clip]{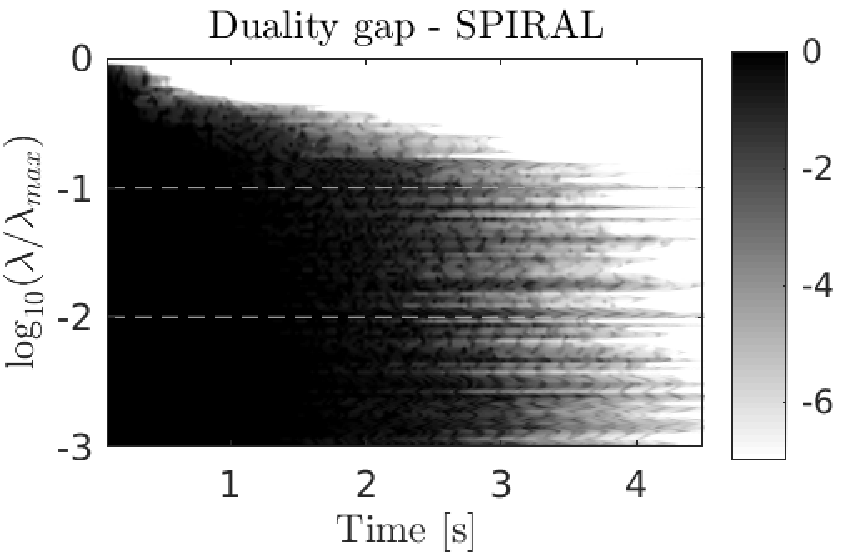}%  
  \hfill
  \includegraphics[height=4cm, trim={1cm 0cm 2.1cm 0cm},clip]{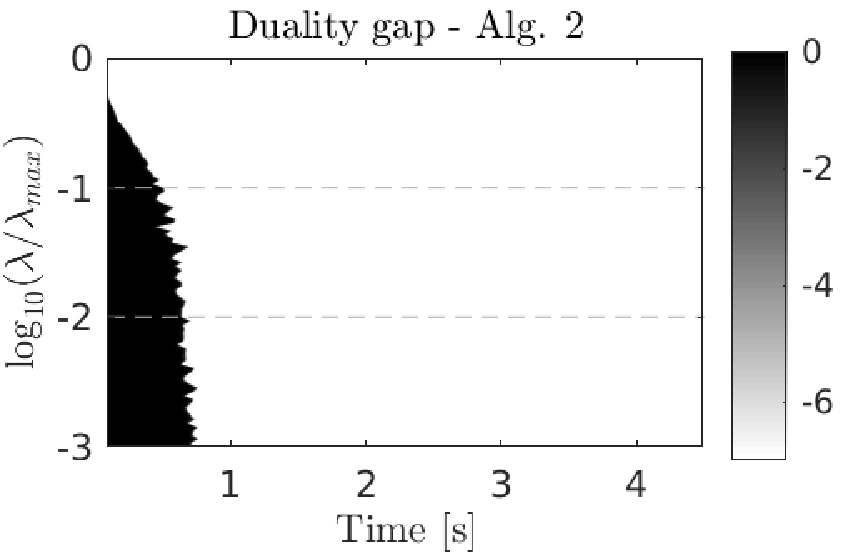}%
  \hfill
  \includegraphics[height=4cm, trim={1cm 0cm 1cm 0cm}, clip]{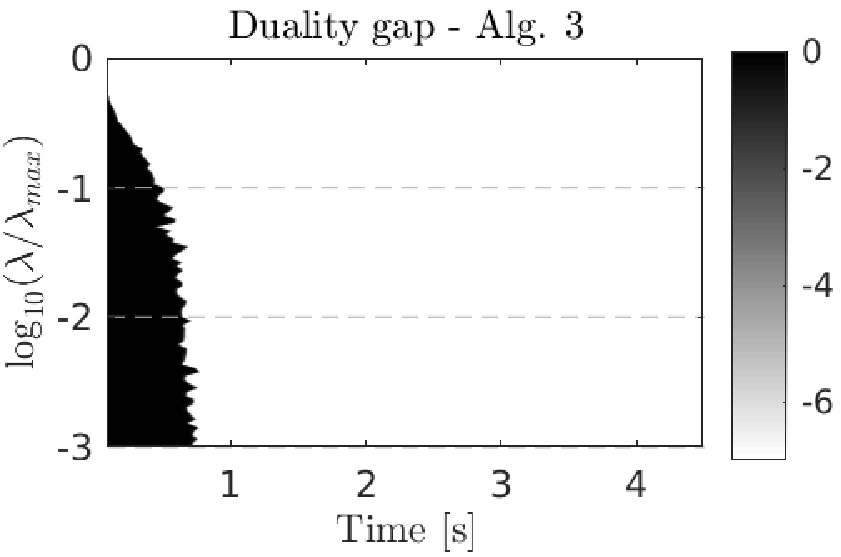}      
  \caption{Convergence rate (duality gap) against execution time for $\lambda/\lambda_{\max} \in [ 10^{-1}, 10^{-3}]$ (the lighter, the closer to convergence). From left to right: SPIRAL solver alone and in Algorithms 2 and 3.} \label{fig:KL_convergence_vs_time_a}
\end{subfigure}
%
%Fixed lambda
\begin{subfigure}{0.6\columnwidth}
  \includegraphics[height=4.2cm, trim={0cm 0cm 0cm -0.2cm}, clip]{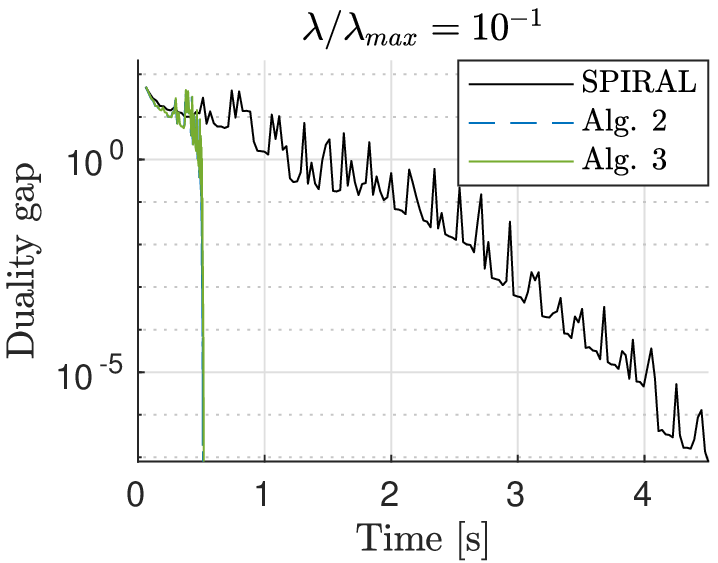}%width=0.48\linewidth, trim={0 0cm 0cm 0cm}
  %\hfill
  \includegraphics[height=4cm, trim={0.5cm 0cm 0cm 0cm}, clip]{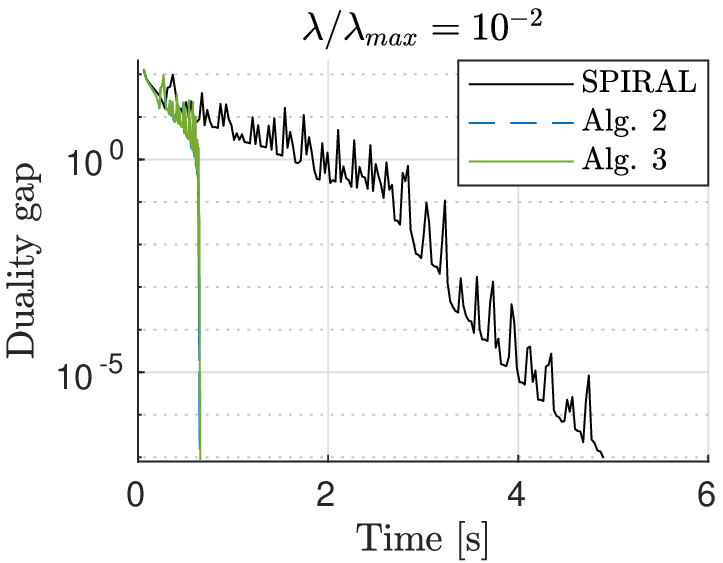}%width=0.48\linewidth, trim={0 0cm 0cm 0cm}  
  \caption{Duality gap against time for regularization $\lambda/\lambda_{\max} = 10^{-1}$ (left) and $\lambda/\lambda_{\max} =  10^{-2}$ (right) corresponding to the dotted slices in \Cref{fig:KL_convergence_vs_time_a}.}  \label{fig:KL_convergence_vs_time_b}  
\end{subfigure}
%
%Relative execution times
\hfill
\begin{subfigure}{0.32\columnwidth}
  \includegraphics[height=4.2cm, trim={0.5cm 0cm 0cm -0.2cm}, clip]{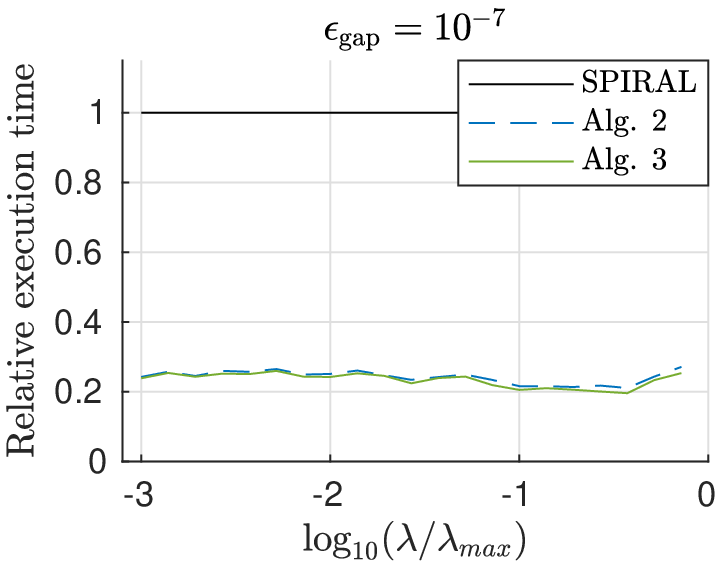}%width=0.48\linewidth, trim={0 0cm 0cm 0cm}   
  \caption{Relative execution times for $\lambda/\lambda_{\max}\in [10^{-3},1]$.} \label{fig:KL_Reltime}  
\end{subfigure}
\caption{Sparse KL regression using NIPS papers word count data set and the SPIRAL solver.} \label{fig:KL_fig_panel}
\end{figure}

%%%% SPEEDUP SUMMARY TABLE %%%%
%\multicolumn{2}{c}{$10^{-1}$}
%\multirow{3}{*}{20News} 
%\multirow{3}{*}{\rotatebox[origin=c]{90}{20News}}
%\renewcommand{\arraystretch}{1.2}
%
%Color schemes
%\def\c#1{\textcolor[gray]{\fpeval{(1 - (#1-1.44)/(8.88-1.44))*0.5}}{#1}} %gray shades
%\def\c#1{\textcolor[gray]{\fpeval{max(0,min((1 - (#1-2.26)/(25.00-2.26))^3,1))*0.5}}{#1}} %gray shades advanced
%\def\c#1{\textcolor[rgb]{\fpeval{(1-(#1-1.44)/(8.88-1.44))/2},\fpeval{((#1-1.44)/(8.88-1.44))/2},0}{#1}} %red to green
%\def\c#1{\textcolor[rgb]{0,\fpeval{((#1-1.44)/(8.88-1.44))/2},0}{#1}} %black to green
\def\c#1{#1} %no color
%https://tex.stackexchange.com/questions/174998/are-there-an-easy-way-to-coloring-tables-depending-on-the-value-in-each-cell
%\begin{figure} \renewcommand{\figurename}{Table}
\begin{table}
\centering
\begin{tabular}{l|l|ll|ll|ll|}
			 \cline{2-8}
             &        &           &           &           &           &           &           \\[-2.3ex]
             & \multicolumn{1}{r|}{$\lambda/\lambda_{\max}$}       
             		  & \multicolumn{2}{c|}{$10^{-1}$} 
             		  			  &\multicolumn{2}{c|}{$10^{-2}$}
             		  			              &\multicolumn{2}{c|}{$10^{-3}$} \\ \cline{2-8}
             &        &           &           &           &           &           &           \\[-2.3ex]
             & \multicolumn{1}{r|}{$\varepsilon_{\gap}$} %\backslashbox{Solver}{$\varepsilon_{\gap}$}
             		  &$10^{-5}$ & $10^{-7}$ & $10^{-5}$ & $10^{-7}$ & $10^{-5}$ & $10^{-7}$ \\ \cline{2-8}
             &        &           &           &           &            &           &          \\ [-2.3ex]
%\multirow{3}{*}{\rotatebox[origin=c]{90}{SPIRAL}} %SPIRAL
             & \citep{Dantas2021}  & \c{2.77}  & \c{3.21}  & \c{2.50}  & \c{2.83}  &	\c{2.26}  &  \c{2.53}\\
SPIRAL       & \Cref{alg:solver_screening_bis}    & \c{8.81}  & \c{9.61}  & \c{8.68}  & \c{9.69}  & \c{8.54}  & \c{9.36}  \\
             & \Cref{alg:solver_screening_local}    & \c{8.75}  & \c{9.55}  & \c{8.61}  & \c{9.61}  & \c{8.44}  & \c{9.24}  \\ \cline{2-8}
             &           &           &           &           &           &           &           \\ [-2.3ex]
%\multirow{3}{*}{\rotatebox[origin=c]{90}{CoD}} %CoD
             & \citep{Dantas2021}  & \c{4.19}  & \c{5.35}  & \c{4.06}  & \c{5.13}  & \c{4.12}  & \c{5.06}  \\
CoD  		 & \Cref{alg:solver_screening_bis}    & \c{17.03}  & \c{19.70}  & \c{16.45}  & \c{18.73}  & \c{15.95}  & \c{18.26}  \\
             & \Cref{alg:solver_screening_local}    & \c{17.68}  & \c{20.57}  & \c{16.38}  & \c{18.66}  & \c{16.18}  & \c{18.51}  \\ \cline{2-8}
             &           &           &           &           &           & 		     & 		     \\ [-2.3ex]
%\multirow{3}{*}{\rotatebox[origin=c]{90}{MU}} %MU
             & \citep{Dantas2021}  & \c{6.71}  & \c{8.88}  & \c{6.67}  & \c{9.52}  & \c{5.74}  & \c{7.31}  \\
MU	         & \Cref{alg:solver_screening_bis}    & \c{17.24} & \c{23.56} & \c{20.46} & \c{26.58} & \c{18.28} & \c{23.73}  \\
             & \Cref{alg:solver_screening_local}    & \c{16.56} & \c{24.76} & \c{19.17} & \c{24.89} & \c{17.40} & \c{22.42}  \\ \cline{2-8}
\end{tabular}
\caption{Sparse KL regression: Average speedups (ratio of execution times without and with screening) using the NIPS papers data set. \citet{Dantas2021} is equivalent to Alg. \ref{alg:solver_screening_bis} but with a worse $\alpha_{\Delta_\A \cap \SO}$ constant.} \label{tab:KL_results_speedup}
\end{table}
%\end{figure}

%\FloatBarrier

\subsection{A Deeper Look} \label{ssec:further_experiments}
\subsubsection{Strong concavity bound evolution and robustness to initialization}

\Cref{fig:alpha_evolution} shows the value of the strong concavity bound $\alpha$ over the iterations in the logistic regression scenario. While it is kept constant in \Cref{alg:solver_screening,alg:solver_screening_bis}, it is  progressively refined in \Cref{alg:solver_screening_local}. 
To evaluate the robustness of the proposed refinement approach, we run \Cref{alg:solver_screening_local} with two different initializations: the global constant $\alpha_{\R^m}$ and the local constant $\alpha_{\Delta_{\A} \cap \SO}$ (used respectively in \Cref{alg:solver_screening,alg:solver_screening_bis}).

The left plot in \Cref{fig:alpha_evolution} shows that even with a poor initialization of $\alpha$, the proposed refinement approach will quickly improve the provided bound to match the better initialization, even though there is about one order of magnitude difference between both initializations.
This indicates that the proposed refinement approach makes \Cref{alg:solver_screening_local} quite robust to the initialization of $\alpha$
%In the logistic regression particular case, it means
and that the global $\alpha_{\R^m}$ constant can be used as an initialization of \Cref{alg:solver_screening_local} instead of the proposed $\alpha_{\Delta_{\A} \cap \SO}$. This can be useful, for instance, when the additional full-rank assumption required by the latter bound is not met.
%Other problems
More broadly, it suggests that \Cref{alg:solver_screening_local} can be applied to a new problem at hand without requiring an accurate initial estimation of the problem's strong-concavity constant
%- It can be used even with a relatively bad initialization for the strong concavity bound. Therefore, when applying the proposed approach to new problems, a loose initial estimation will probably suffice in practice.
(provided, obviously, that one is capable of performing the refinement step, i.e., computing the bounds over $\s{B}(\vtheta,r) \cap \SO$, which tends to be easier).

In the example depicted in \Cref{fig:alpha_evolution} the poor initialization of $\alpha$ is compensated dozens of iterations before screening even starts to take place %any coordinate has been screened 
and, as a consequence, no performance loss is inflicted by the poor initialization.
Obviously, in other cases, this compensation might not be as quick, causing some harm to the final execution time. 
Yet, in any case, the proposed refinement approach %eventually makes up for the bad initialization of $\alpha$, 
significantly mitigates the impact of a poor initialization on the overall screening performance (and execution time).

%Alpha Evolution
\begin{figure}
  \centering
  \includegraphics[width=0.48\linewidth, trim={0 0cm 0cm 0cm}]{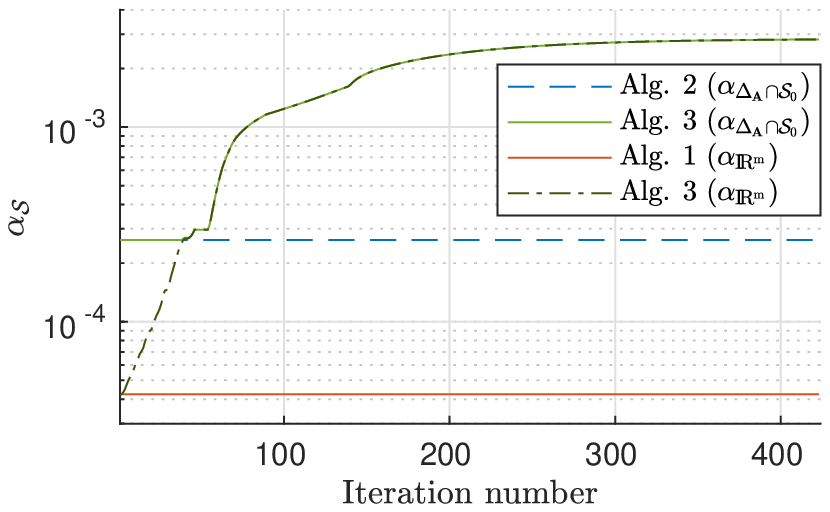}%
  \hfill
  \includegraphics[width=0.48\linewidth, trim={0 0cm 0cm 0cm}]{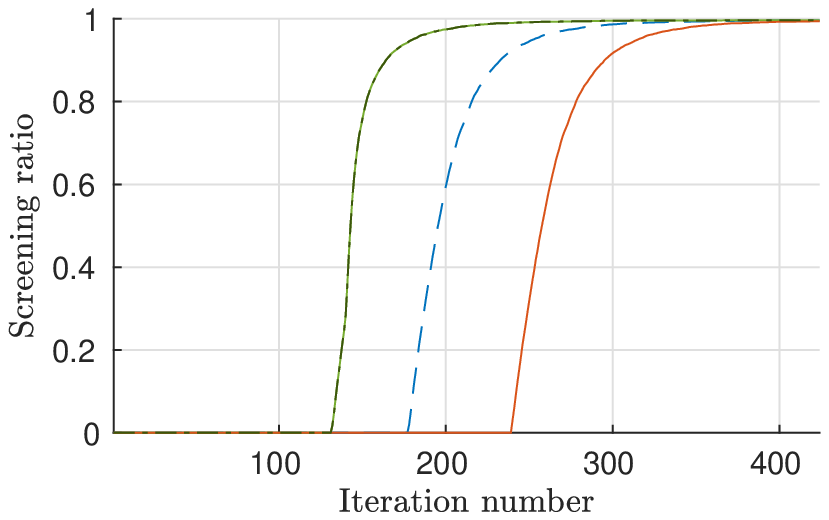}
\caption{Left: evolution of the strong concavity bound $\alpha$ against iterations for the logistic regression problem, $\lambda/\lambda_{\max} = 10^{-3}$. \Cref{alg:solver_screening_local} is initialized with local $\alpha_{\Delta_{\A}\cap \SO}$ (solid light green line) and global $\alpha_{\R^m}$  (dash-dotted dark green line) bounds. Right: impact of the initialization over the screening performance.} \label{fig:alpha_evolution}
\end{figure}

\subsubsection{Support identification for MU solver}

Finally, we discuss and illustrate the power of screening for MU solvers using a small-size synthetic experiment (for readability). Multiplicative updates are very standard in (sparse) NMF, in particular with the general $\beta$-divergence \citep{Fevotte2011}%
\footnote{More efficient, e.g., proximal-based, coordinate-descent or active set methods exist for the quadratic \citep{Friedman2010,Beck2009,Johnson2015} or KL particular cases \citep{Harmany2012,Hsieh2011,Virtanen2013}.}
but suffer from a well-known limitation. Because each coordinate (either of the dictionary or activation matrix) is multiplied by a strictly positive factor, convergence to zero values can only be asymptotical. This is shown in the left plot of \Cref{fig:coord_evolution} which shows the value of each coordinate $\xj$ of the primal estimate $\x$ over the iterations for the MU solver on the $\beta_{1.5}$-divergence case with dimensions $m=10$ and $n=20$.
In practice, some arbitrary thresholding may be performed to force some entries to zero. However, 
%such operation is completely empirical and may mistaken coordinates with a small value in the optimal solution as not belonging to the support.
%such operation will struggle to discern coordinates with small non-zero values.
such an ad-hoc operation may mistakenly cancel coordinates that belong to the support but happen to have small values.
Conversely, the value of some coordinates might decrease very slowly with the iterations (see for instance the yellow line in \Cref{fig:coord_evolution_a}) and may be mistakenly kept by such an arbitrary thresholding procedure.
The proposed screening approach (\Cref{fig:coord_evolution_b}) tackles this issue by introducing \emph{actual} zeros to the solution with theoretical guarantees. Application of such strategies in NMF settings is an exciting and potentially fruitful perspective of this work.
%which was not possible with preexisting screening tools

%-Potential application in NMF domain!
%The MU solver has been largely used for $\beta$-divergence optimization, for instance in the context of Non-negative Matrix Factorization (NMF) \citep{Fevotte2011}.
%The proposed approach allows to perform screening in such contexts, which was not possible with the preexisting screening tools. %framework. %machinery.

%\Cref{fig:coord_evolution} shows the value of each coordinate of the primal estimate $\x$ against iterations for a MU solver.
%
%In the standard MU solver for the $\beta$-divergence, an iteration consists in a coordinate-wise multiplication by a strictly positive factor. 
%%At an iteration of the  MU solver for $\beta$-divergence, each coordinate is multiplied by a strictly positive factor.
%
%%This means that a coordinate value 
%%can only get asymptotically small but never reach zero, %can be made indefinitely small but can never reach zero, 
%%as shown in the left plot.

%Coordinates Evolution
\begin{figure}
  \centering
  \begin{subfigure}{.5\textwidth}
    \centering
    \includegraphics[width=0.96\linewidth, trim={0 0cm 0cm 0cm}]{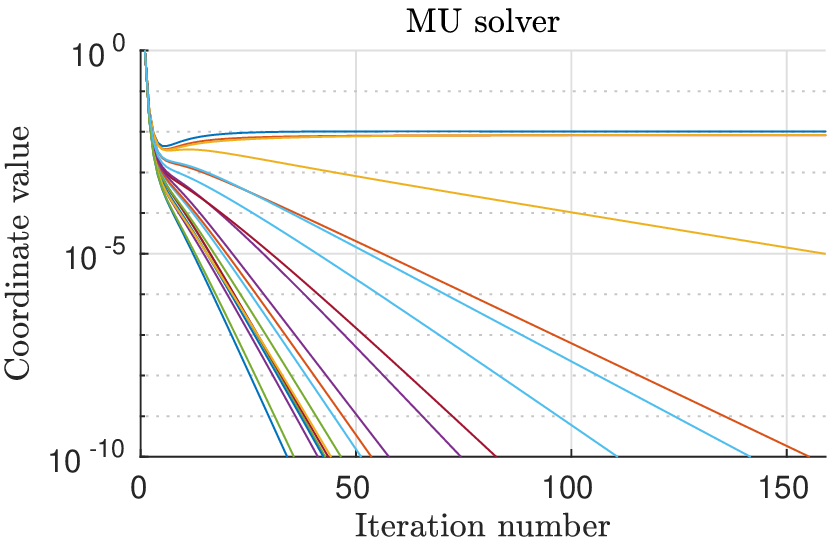}%
    \caption{} \label{fig:coord_evolution_a}
  \end{subfigure}%  
  \hfill
  \begin{subfigure}{.5\textwidth}
    \centering
    \includegraphics[width=0.96\linewidth, trim={0 0cm 0cm 0cm}]{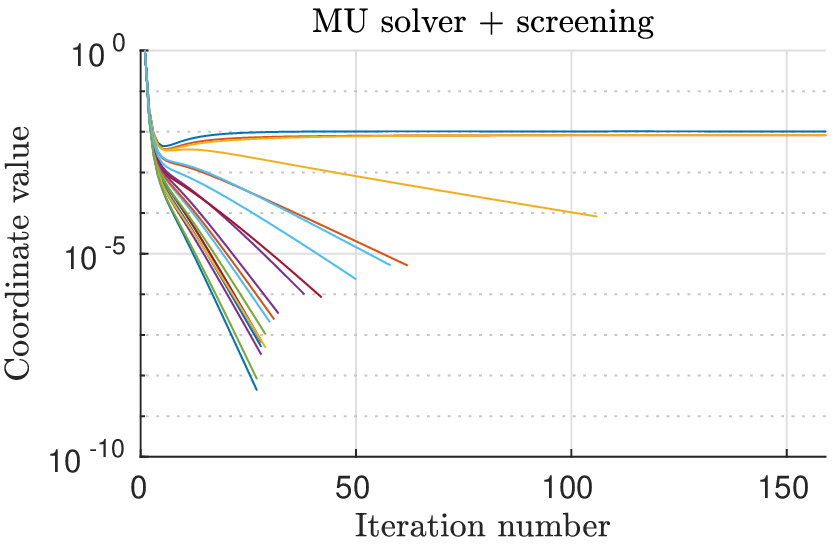}
    \caption{} \label{fig:coord_evolution_b}
  \end{subfigure}%    
\caption{Coordinate values over the iterations of an MU solver, $\lambda/\lambda_{\max} = 10^{-2}$ (a) without screening and (b) with screening. Each colored line corresponds to one coordinate, the line stops when the coordinate is screened.} \label{fig:coord_evolution}
\end{figure}

\section{Conclusion}

In this paper, we proposed a safe screening framework that improves upon the existing Gap Safe screening approach, while extending its application to a wider range of problems---in particular, problems whose associated dual problem is not globally strongly concave.

Two screening algorithms have been proposed in this new framework, exploiting local properties of the involved functions. %based on
First, we defined a direct extension of the conventional dynamic screening approach,
%the conventional dynamic screening approach was directly extended to this setting 
by replacing the global strong concavity bound $\alpha_{\R^m}$ by a local one $\alpha_{\Delta_{\A \cap \SO}}$ in \Cref{alg:solver_screening_bis}.
Noting that a reinforcement loop arises between the current safe sphere and the strong concavity bound within the sphere itself, we proposed the iterative refinement approach in \Cref{alg:solver_screening_local}.
By construction, the latter approach can only improve the initial strong-concavity bound, which makes it strictly superior to the former (the computational overhead due to the extra refinement loop was empirically observed to be negligible).
%no considerable computational overhead has been observed experimentally due the extra refinement loop.
%
%Experiments suggest that \Cref{alg:solver_screening_local} should be preferred.
%
%Better than conventional screening
Both proposed approaches lead to considerable speedups on several existing solvers and for various simulation scenarios.

%Alg. 3 > Alg. 2
\Cref{alg:solver_screening_local} can be superior to \Cref{alg:solver_screening_bis} for two main reasons:
1) The strong concavity constant varies significantly within the dual feasible set. Therefore, refining it on the Gap Safe sphere (which shrinks over the iterations) may lead to significant improvements. %increase its value / solution surroundings
2) The initial strong concavity bound is loose (for instance, because it cannot be computed exactly and no tighter estimation is available). This initial handicap will then be progressively compensated by the refinement procedure in \Cref{alg:solver_screening_local}.

The proposed framework is quite generic and not restricted to the four treated cases. %quite generic, applies/could be extended to other problems
Its application to other problems demands the completion of the following few steps:
1) the possible definition of a subset $\SO$ (only for most challenging objective functions whose dual is not strongly~concave on the entire dual feasible set);
2) the computation of an initial strong concavity bound $\alpha$ such that $\alpha \leq \alpha_{\Delta_{\A} \cap \SO}$ (as observed experimentally, this bound does not need to be very precise);
3) the ability to compute the strong concavity bound on a $\ell_2$-ball (which possibly intersects with $\SO$) for the refinement step. %, if using \Cref{alg:solver_screening_local}.
%
%Examples
Other data-fidelity functions that could benefit from this framework include, for instance, the Huber loss, %\cite{Huber1964} % Akkaya2020 "Minimizers of sparsity regularized huber loss function" (few citations)
the Hinge (and squared Hinge) loss, %\citep{Zhai2020}
% Quantile loss \citep{Belloni2011}
or the Log-Cosh loss.
%Jankova-vandeGeer2016_Confidence_regions_GLM_sparsity mentions, besides Huber and Hinge: Quantile loss (see  Belloni-Chernozhukov2011, L1-penalized quantile regression in high-dimensional sparse models)
%Generic class: M-estimators
%Mean Squared Logarithmic Error (MSLE) - Not sure about convexity

%Extensions and  further work
Although we restricted ourselves to the dynamic screening setting, nothing prevents this idea to be applied to more advanced Gap Safe screening configurations such as sequential screening \citep{Ndiaye2017}, working sets \citep{Massias2017}, dual extrapolation techniques \citep{Massias2020}, or
%A less straight-forward but conceivable perspective is the extension to %Another interesting
even the stable safe screening framework \citep{Dantas2019a} in which approximation errors are tolerated in the data matrix.

\acks{This work was supported by the European Research Council (ERC FACTORY-CoG-6681839).}

%\bibliography{./bib/PhD,./bib/Mestrado}

\newpage
\appendix
\appendixpage

%% TABLE OF CONTENTS
%Parts
\addcontentsline{toc}{section}{Appendices} % Add the appendix text to the document TOC
\part{} % Start the appendix part
\parttoc % Insert the appendix TOC

\crefalias{section}{appendix} % To avoid cleveref from referencing Appendix as Section

\section{Dual Problem and Optimality Conditions}\label{sec:finding_dual}

\subsection{Preliminaries}
%\section{Finding the dual problem (proof of theorem \ref{thm:dual_GLM_and_optimality})} \label{sec:finding_dual}
In order to prove \Cref{thm:dual_GLM_and_optimality} and its ensuing particular cases,
%%%%%%% USING BORWEIN2000 OR BAUSCHKE2011 %%%%%%%% (PRIMAL-DUAL ONLY)
we will use the classical result of \citet[Theorem 3.3.5]{Borwein2000} or \citet[Theorem 31.3]{Rockafellar1970} which relates generic primal and dual problems with the Fenchel conjugates of the composing functions.

%%PRIMAL
\paragraph{Generic Primal.}
Consider a generic primal problem of the form: %(into which the regularized KL can be fit:
\begin{align} \label{eq:generic_primal}
\x^\star \in \argmin_{\x \in \R^n} \underbrace{F(\A\x) + \lambda G(\x)}_{:= P_\lambda(\x)}
% Coordinate-wise
%= \argmin_{\x \in \R^n}  \underbrace{\sum_{i=1}^{m} f_i(\Axi) + \lambda G(\x)}_{:= P_\lambda(\x)}
\end{align}
with $F$ (resp $G$) a closed proper and convex function on $\R^m$ (resp. $\R^n$).
We denote by $P_\lambda$ the primal cost function.

%%DUAL
\paragraph{Generic Dual.}
The associated Lagrangian dual problem  
can be shown to be given by 
%can be written w.r.t. the Fenchel conjugates of $F$ an $G$ as follows:
\citet[Theorem 3.3.5]{Borwein2000} 
%(for completeness, a demonstration is provided in appendix \ref{ssec:from_lagrangian_to_fenchel})
%
%\footnote{This is actually a slightly modified version of the classical Fenchel's duality theorem \citep[see for instance][Theorem 3.3.5]{Borwein200}} 
%
%% Other citations for the "Fenchel's duality theorem".
%also \citep[Definition 15.19]{Bauschke2011} pp.254 (I found this)
%also \citep[Theorem 31.2]{Rockafellar1970}. Theorem1 in Wang2019 cites Borwein. Dunner2016 cites  (Bauschke & Combettes, 2011, Proposition 15.18). 
% Ndiaye_Thesis uses the exact same expression!! Cites Rockafellar1997 Theorem 31.3
%/!\/!\ In Theorem 3 Ndiaye2017, uses directly the dual norm, which is not possible here since $G$ is not simply a norm.
%
\begin{align} \label{eq:generic_dual}
\vtheta^\star 
\in & \argmax_{\vtheta \in \R^m} \underbrace{-F^*(-\lambda\vtheta) - \lambda G^*({A^\T \vtheta})}_{=: D_\lambda(\vtheta)}
%Coordinate-wise
% = & \argmax_{\vtheta \in \R^m} \underbrace{- \sum_{i=1}^m f_i^* (-\lambda\vthetai) - G^*({\ATthetaj}})_{=: D_\lambda(\vtheta)} \\ \nonumber
\end{align}
where
%Dual cost function
$D_\lambda(\vtheta)$ denotes the dual (cost) function.
%%%%%%%%%%%%%%%%%%%%%%%%%%%%%%%%%%%%

\begin{comment}
%%%%%%% USING ROCKAFELLAR1997 %%%%%%%% (PRIMAL-DUAL + OPTIMALITY)
% just like in Ndiaye PhD Thesis! (except for the optimality conditions)
% REQUIRES CONFIRMATION!! CHECK THE BOOK (WHICH I DON'T HAVE!) check especially optimality conditions. Theorem 31.3 in the 1970 edition doesn't seem quite what claimed
We can use the following classical theorem \citep[Theorem 31.3]{Rockafellar1997} which relates generic primal and dual problems with the Fenchel conjugates of the composing functions.
%
\begin{theorem}[Fenchel Duality Theorem] \label{thm:Fenchel_duality}
Let  $F$ (resp $G$) be closed proper and convex functions on $\R^m$ (resp. $\R^n$), we define a generic primal and its corresponding dual optimization problem as follows:
%%PRIMAL
\begin{align} \label{eq:generic_primal}
\x^\star = \argmin_{\x \in \R^n} \underbrace{F(\A\x) + \lambda G(\x)}_{:= P_\lambda(\x)}
% Coordinate-wise
%= \argmin_{\x \in \R^n}  \underbrace{\sum_{i=1}^{m} f_i(\Axi) + \lambda G(\x)}_{:= P_\lambda(\x)}
\end{align}
%%DUAL
\begin{align} \label{eq:generic_dual}
\vtheta^\star 
= & \argmax_{\vtheta \in \R^m} \underbrace{-F^*(-\lambda\vtheta) - \lambda G^*({A^\T \vtheta})}_{=: D_\lambda(\vtheta)}
%Coordinate-wise
% = & \argmax_{\vtheta \in \R^m} \underbrace{- \sum_{i=1}^m f_i^* (-\lambda\vthetai) - G^*({\ATthetaj}})_{=: D_\lambda(\vtheta)} \\ \nonumber
\end{align}
where $P_\lambda(\x)$ and $D_\lambda(\vtheta)$ denote respectively the primal and dual cost functions.\\ 
%Optimality conditions
Futhermore, optimality conditions read:
\begin{align}
-\lambda \vtheta^\star &\in \partial F(\A\x^\star) \\
\A^\T \vtheta^\star   &\in \partial G(\x^\star)
\end{align}
%From Ndiaye PhD-Thesis
Strong duality holds i.e. $P_\lambda(\x^\star)$ and $D_\lambda(\vtheta^\star)$ if and only if
\begin{align}
-\lambda \vtheta^\star \in \partial F(\A\x^\star)  	
&\iff \A\x^\star \in \partial F( -\lambda \vtheta^\star) \\
\A^\T \vtheta^\star   \in \partial G(\x^\star)
&\iff \x^\star \in \partial G(\A^\T \vtheta^\star)
\end{align}
\end{theorem}
%%%%%%%%%%%%%%%%%%%%%%%%%%%%%%%%%%%%%%
\end{comment}

%%%%%%%% OPTIMALITY CONDITIONS %%%%%%%%%%%
The optimality conditions are given by the following result \citep[Theorem 19.1]{Bauschke2011}.
%%%%%% USING ROCKAFELLAR OR BAUSCHKE (OPTIMIZATION CONDITIONS) %%%%%%
%\citep[Proposition 19.18]{Bauschke2011} (as cited in Dunner2016_Primal-dual_Rates_Certificates equation (1a) and (2a))! More direct result! 
%Actually, I found it as \citep[Theorem 19.1]{Bauschke2011}
%Ndiaye PhD thesis cites Rockafellar1997 (Theorem 31.3), but at least on the 1970 edition, this doesn't seem to be the exact same result (actualy, yes! but the non-empty domain condition is only given in Corollary 31.3.1).
% Rockafellar calls these optimality conditions Kuhn-Tucker conditions (page 33), and the terminology is yet to be fully justified in section 36 (minimax problems).
%

\begin{theorem}\label{thm:optimality_conditions}
Let %$F \in \Gamma_0(\R^m)$ and  $G \in \Gamma_0(\R^n)$, 
$F: \R^m \to (-\infty, +\infty]$ and  $G: \R^n \to (-\infty, +\infty]$ 
be convex lower semi-continuous,
and let the linear operator $\mt{A} \in \R^{m \times n}$ be such that $\dom (G) \cap \mt{A}\dom (F) \neq \emptyset$. Then the following assertions are equivalent:
	\begin{enumerate}[label=(\roman*)]
	\item $\x^\star$ is a primal solution, $\vtheta^\star$ is a dual solution and $P_\lambda(\x^\star) = D_\lambda(\vtheta^\star)$ (i.e. strong duality holds).
	\item $\A^\T \vtheta^\star \in \partial G(\x^\star)$ and $-\lambda \vtheta^\star \in \partial F(\A\x^\star)$.
	\end{enumerate}
\end{theorem}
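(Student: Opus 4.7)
The plan is to reduce both directions to the Fenchel--Young inequality and its equality characterization, namely that for any closed proper convex $h$ one has $h(z)+h^*(u)\geq \langle z,u\rangle$, with equality if and only if $u\in\partial h(z)$. I would apply this twice: once to $F$ at the pair $(\A\x^\star,-\lambda\vtheta^\star)$ and once to $\lambda G$ (equivalently to $G$, using $(\lambda G)^*(v)=\lambda G^*(v/\lambda)$ when $\lambda>0$, or by carrying the $\lambda$ through directly) at the pair $(\x^\star,\A^\T\vtheta^\star)$.

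First I would record a weak-duality bound. For any $\x\in\dom(G)\cap\A^{-1}\dom(F)$ and any $\vtheta\in\dom(F^*(-\lambda\cdot))\cap\dom(G^*(\A^\T\cdot))$, summing the two Fenchel--Young inequalities
\begin{align*}
F(\A\x)+F^*(-\lambda\vtheta) &\geq \langle \A\x,-\lambda\vtheta\rangle = -\lambda\langle\x,\A^\T\vtheta\rangle,\\
\lambda G(\x)+\lambda G^*(\A^\T\vtheta) &\geq \lambda\langle\x,\A^\T\vtheta\rangle,
\end{align*}
and adding yields $P_\lambda(\x)-D_\lambda(\vtheta)\geq 0$. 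The qualification hypothesis $\dom(G)\cap\A\dom(F)\neq\emptyset$ ensures that the primal problem is feasible and that the inequalities above are informative; it is what one usually needs to avoid $(+\infty)-(+\infty)$ ambiguities and to guarantee that weak duality is meaningful.

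For the implication $(i)\Rightarrow(ii)$, the assumption of strong duality rewrites as $[F(\A\x^\star)+F^*(-\lambda\vtheta^\star)]+\lambda[G(\x^\star)+G^*(\A^\T\vtheta^\star)]=0$. By the weak-duality chain above, each bracket is bounded below by the corresponding linear term, and those two linear terms sum to zero. Hence each Fenchel--Young inequality must be attained as an equality, and the equality case of Fenchel--Young gives $-\lambda\vtheta^\star\in\partial F(\A\x^\star)$ and $\A^\T\vtheta^\star\in\partial G(\x^\star)$.

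For the converse $(ii)\Rightarrow(i)$, the same equality case applied in the opposite direction transforms the two subgradient inclusions into the two Fenchel--Young equalities, whose sum gives $P_\lambda(\x^\star)=D_\lambda(\vtheta^\star)$. Combining this with the weak-duality bound applied to an arbitrary feasible $\x$ (respectively $\vtheta$) shows that $\x^\star$ is primal-optimal and $\vtheta^\star$ is dual-optimal, closing the equivalence. The only subtle point to handle carefully is domain bookkeeping: one must check that the pair $(\x^\star,\vtheta^\star)$ arising from $(ii)$ lies in the effective domains so that the Fenchel--Young equalities are not vacuous; this is precisely where the constraint-qualification hypothesis on $\dom(G)\cap\A\dom(F)$ enters, and I expect this to be the main bookkeeping obstacle rather than a substantive difficulty, since the substantive content is entirely the Fenchel--Young equality characterization.
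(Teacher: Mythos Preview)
Your argument is the standard and correct route via the Fenchel--Young inequality and its equality case; this is exactly how such results are established. However, the paper does not supply its own proof of this theorem: it is stated with a citation to \citet[Theorem 19.1]{Bauschke2011} and used as a black box to derive the specific optimality conditions of Theorem~\ref{thm:dual_GLM_and_optimality}. So there is no paper-proof to compare against; your sketch simply fills in what the cited reference provides.

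One small remark on your write-up: the constraint qualification $\dom(G)\cap\A\dom(F)\neq\emptyset$ as literally written in the statement is dimensionally awkward (since $\dom(G)\subset\R^n$ while $\dom(F)\subset\R^m$); the intended condition is the usual primal feasibility $\dom(G)\cap\A^{-1}\dom(F)\neq\emptyset$, which is in fact what you use when you write ``for any $\x\in\dom(G)\cap\A^{-1}\dom(F)$''. Your instinct that this hypothesis is only needed to keep the Fenchel--Young inequalities finite and meaningful is correct; the equivalence itself is pure Fenchel--Young algebra once both sides are finite.
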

%\begin{proof}
%See \citet[Theorem 19.1]{Bauschke2011}, but notice the slighty modified variable conventions.
%The proof follows from the Fenchel-Young inequality and its equality conditions (subgradient inclusion). %\citep[see][Proposition 3.3.4]{Borwein2000}.
%For completennes, this proof is detailed in \ref{app:optimality_fenchel_inequality}.
%\end{proof}
%%%%%%%%%%%%%%%%%%%%%%%%%%%%%%%%%%%%%%%%%%%%%%%%%%%%%

Other useful results are given below.

%%FENCHEL - SEPARABLE SUM
\begin{proposition}[Separable sum property of the Fenchel conjugate]
\citep[Ch. X, Proposition 1.3.1~(ix)]{Hiriart-Urruty1993a}  \label{app:prop:Fenchel_separable_sum}
 Let $F = \sum_{i=1}^m f_i$ be coordinate-wise separable, then
 $$F^*(\vt{u}) = \sum_{i=1}^m f_i^*(u_i) $$
 %also known as the separable sum property.
\end{proposition}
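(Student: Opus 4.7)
The plan is to invoke the definition of the Fenchel conjugate directly and exploit the fact that coordinate-wise separability decouples the supremum into a sum of one-dimensional suprema. Given its elementary character, I would not expect any substantive obstacle here; the proof is essentially a one-liner once separability is used.

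First, I would unfold the definition given in~\eqref{eq:def:Fenchel_conjugate}, writing
\begin{equation*}
F^*(\vt{u}) = \sup_{\vt{z} \in \R^m} \langle \vt{z}, \vt{u} \rangle - F(\vt{z}) = \sup_{\vt{z} \in \R^m} \sum_{i=1}^{m} \bigl( z_i u_i - f_i(z_i) \bigr),
\end{equation*}
where the second equality uses $\langle \vt{z},\vt{u}\rangle = \sum_i z_i u_i$ and the separable form of $F$. The next step is to observe that the objective above is a sum of $m$ terms, each depending on a single, distinct coordinate $z_i$; since the constraint set $\R^m$ is itself a product of the $m$ real lines, the supremum over $\vt{z}\in\R^m$ splits as
\begin{equation*}
\sup_{\vt{z}\in\R^m} \sum_{i=1}^{m} \bigl( z_i u_i - f_i(z_i) \bigr) \;=\; \sum_{i=1}^{m} \sup_{z_i\in\R} \bigl( z_i u_i - f_i(z_i) \bigr).
\end{equation*}

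Finally, I would recognize each inner supremum as the definition of $f_i^*(u_i)$, yielding $F^*(\vt{u}) = \sum_{i=1}^{m} f_i^*(u_i)$. The only point that deserves a brief mention is that the exchange of sup and sum is valid here without any integrability or topological hypothesis, precisely because the variables are decoupled across the coordinates (no joint constraint ties the $z_i$'s together); this step would fail if the domain of optimization were not itself a Cartesian product. Since each $f_i^*$ is defined as the pointwise supremum of affine functions, it is proper, convex, and lower semi-continuous regardless of the original $f_i$, so the resulting identity is well-posed on all of $\R^m$ with values in $(-\infty,+\infty]$.
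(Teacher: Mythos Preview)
Your proof is correct and is exactly the standard argument. The paper does not supply its own proof of this proposition; it simply cites \citet[Ch.~X, Proposition 1.3.1~(ix)]{Hiriart-Urruty1993a}, where the same separability-of-the-supremum argument you gave appears.
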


%%F SEPARABLE: GRADIENT AND HESSIAN
\begin{proposition}
%In the assumed framework with
Let $F = \sum_{i=1}^m f_i$ be differentiable, then 
its gradient $\nabla F: \R^m \to \R^m $ is given by the (scalar) derivatives of $f_i$ as follows
$$\nabla  F(\vt{z}) = [f_1'(z_1), \dots, f_m'(z_m)]^\T \in \R^m.$$
%%HESSIAN OF F(Ax)
% If $F$ is twice differentiable, then its Hessian matrix is diagonal with the $i$-th diagonal entry given by the (scalar) second derivative of  $f_i$: 
% $$\nabla^2  F(\vt{z}) = \diag{ f_1''(z_1), \dots, f_m''(z_m) } \in \R^{m\times m}$$
\end{proposition}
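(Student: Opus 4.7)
The plan is to prove this by direct computation of partial derivatives, exploiting the fact that each summand $f_i$ depends on only one coordinate of $\vt{z}$. Since the claim is purely about the gradient of a coordinate-wise separable differentiable function, no duality, conjugacy, or convexity machinery is needed.

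First, I would fix $\vt{z}\in\R^m$ and any index $j\in[m]$, and compute the $j$-th partial derivative $\partial F/\partial z_j$ from the definition. Writing
\begin{equation}
F(\vt{z}+t\vt{e}_j)-F(\vt{z}) = \sum_{i=1}^m \bigl(f_i([\vt{z}+t\vt{e}_j]_i) - f_i(z_i)\bigr),
\end{equation}
I observe that for every $i\ne j$ the $i$-th coordinate of $\vt{z}+t\vt{e}_j$ equals $z_i$, so the corresponding summand vanishes identically in $t$. Only the $j$-th summand contributes, giving $F(\vt{z}+t\vt{e}_j)-F(\vt{z}) = f_j(z_j+t)-f_j(z_j)$. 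Dividing by $t$ and passing to the limit as $t\to 0$, the differentiability of $f_j$ yields $\partial F/\partial z_j(\vt{z}) = f_j'(z_j)$.

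Next, I would note that the above computation shows all partial derivatives of $F$ exist at every $\vt{z}$ and are given by $f_i'(z_i)$. Since $F$ is assumed differentiable (so the gradient exists and equals the vector of partial derivatives), I conclude
\begin{equation}
\nabla F(\vt{z}) = \bigl[f_1'(z_1),\dots,f_m'(z_m)\bigr]^{\T},
\end{equation}
which is the claim. There is essentially no obstacle here; the only subtlety worth flagging is to make sure the statement uses ``differentiable'' in the sense that $\nabla F$ exists (otherwise one would need to separately argue from the existence of the $f_i'$ that the partial derivatives assemble into a true Fréchet derivative, which follows, e.g., from continuity of each $f_i'$ if assumed).
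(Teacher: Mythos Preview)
Your proof is correct. The paper actually states this proposition without proof, treating it as an elementary calculus fact; your direct computation of the partial derivatives via the coordinate-wise separability is exactly the standard justification one would give if a proof were required.
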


%%GROUP DECOMPOSABLE NORMS
\begin{proposition}[Dual norm of a group-decomposable norm] \citep[Proposition 21]{Ndiaye2018} \label{prop:dual_norm_separable}
Let $\s{G}$ be a partition of $[n]$ and $\Omega(\vt{u}) = \sum_{g\in \s{G}} \Omega_g (\vt{u}_g)$  be a group-decomposable norm. Its dual norm is given by
$\overline{\Omega}(\vt{u}) = \max_{g\in \s{G}} \overline{\Omega}_g(\vt{u}_g)$ where $\overline{\Omega}_g$ is the dual norm of $\Omega_g$
\end{proposition}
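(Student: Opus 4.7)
The plan is to work directly from the definition of the dual norm given in equation~\eqref{eq:def:dual_norm}. Since $\s{G}$ partitions $[n]$, both the inner product and the norm decompose groupwise: for any $\vt{z}\in\R^n$,
\[
\langle \vt{z},\vt{u}\rangle \;=\; \sum_{g\in\s{G}} \langle \vt{z}_g,\vt{u}_g\rangle,
\qquad
\Omega(\vt{z}) \;=\; \sum_{g\in\s{G}} \Omega_g(\vt{z}_g).
\]
Plugging these identities into \eqref{eq:def:dual_norm} rewrites the dual norm as
\[
\overline{\Omega}(\vt{u}) \;=\; \sup\Bigl\{\, \sum_{g\in\s{G}} \langle \vt{z}_g,\vt{u}_g\rangle \;:\; \sum_{g\in\s{G}} \Omega_g(\vt{z}_g) \leq 1\Bigr\}.
\]

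The next step is to decouple the subproblems across groups by introducing auxiliary budgets $t_g := \Omega_g(\vt{z}_g)\geq 0$. I would split the supremum into an outer maximization over feasible budgets in the simplex $\{t\geq\vt{0},\; \sum_g t_g\leq 1\}$ and, for each such $t$, an inner maximization of $\langle\vt{z}_g,\vt{u}_g\rangle$ over $\{\vt{z}_g:\Omega_g(\vt{z}_g)\leq t_g\}$. Positive homogeneity of the dual norm $\overline{\Omega}_g$ (rescaling $\vt{z}_g\mapsto t_g\tilde{\vt{z}}_g$ with $\Omega_g(\tilde{\vt{z}}_g)\leq 1$, and $\vt{z}_g=\vt{0}$ when $t_g=0$) gives that each inner supremum equals $t_g\,\overline{\Omega}_g(\vt{u}_g)$. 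Therefore
\[
\overline{\Omega}(\vt{u}) \;=\; \max\Bigl\{\, \sum_{g\in\s{G}} t_g\,\overline{\Omega}_g(\vt{u}_g) \;:\; t_g\geq 0,\; \sum_{g\in\s{G}} t_g \leq 1 \Bigr\}.
\]
The right-hand side is a linear program on the unit simplex; its optimum is attained at a vertex $e_{g^\star}$ with $g^\star\in\argmax_g \overline{\Omega}_g(\vt{u}_g)$, yielding $\overline{\Omega}(\vt{u}) = \max_{g\in\s{G}} \overline{\Omega}_g(\vt{u}_g)$.

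There is no real obstacle here; the only step requiring a little care is the decoupling via the variables $t_g$. Concretely, I would verify that the substitution $\vt{z}_g = t_g\tilde{\vt{z}}_g$ (with the convention $\vt{z}_g=\vt{0}$ when $t_g=0$) sets up a correspondence between feasible points of the original supremum and pairs $(t,\tilde{\vt{z}})$ with $t$ in the simplex and each $\tilde{\vt{z}}_g$ in the unit ball of $\Omega_g$, so that interchanging the order of the suprema is legitimate. Once this bookkeeping is settled, the simplex argument closes the proof immediately.
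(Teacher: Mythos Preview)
Your argument is correct and complete; the decoupling via the budgets $t_g$ together with the simplex optimization is a clean way to reach the conclusion.

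The paper takes a different route, working through Fenchel conjugates rather than the dual-norm definition directly. It invokes the separable-sum property of the conjugate (\Cref{app:prop:Fenchel_separable_sum}) to write $\Omega^*(\vt{u}) = \sum_{g\in\s{G}} \Omega_g^*(\vt{u}_g)$, then uses the fact that the conjugate of a norm is the indicator of the dual unit ball, $\Omega_g^* = \ind_{\overline{\Omega}_g(\cdot)\leq 1}$. Summing these indicators gives $\ind_{\max_g \overline{\Omega}_g(\vt{u}_g)\leq 1}$, and comparing with $\Omega^* = \ind_{\overline{\Omega}(\cdot)\leq 1}$ identifies the unit balls and hence the norms. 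This is shorter once the conjugate machinery is in hand, and it fits naturally with the rest of the appendix, which already relies on Fenchel conjugates. Your approach is more self-contained---it needs only the definition of $\overline{\Omega}$ and positive homogeneity---at the cost of the explicit budget-splitting bookkeeping you flagged.
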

\begin{proof} % See \citet[Proposition 21]{Ndiaye2018}
% Instead of assuming proposition \ref{prop:dual_norm_separable}, one can derive the same result from the separable sum property of the Fenchel conjugate \citep[Chapter X, Proposition 1.3.1 (ix)]{Hiriart-Urruty1993a} 
The result follows from \Cref{app:prop:Fenchel_separable_sum}:
%one could directly compute the convex conjugate of the separable sum 
$\Omega^*(\vt{u}) = \sum_{g\in \s{G}} \Omega_g^*(\vt{u}_g) = \sum_{g\in \s{G}} \ind_{\overline{\Omega}_g(\vt{u}_g) \leq 1}  =  \ind_{\{\vt{u} ~|~ \forall g\in \s{G}, ~ \overline{\Omega}_g(\vt{u}_g) \leq 1\}} =  \ind_{\max_{g\in \s{G}}  \left(\overline{\Omega}_g(\vt{u}_g) \right) \leq 1}$. %, leading to the same result as before. 
%Moreover, 
Since $\Omega^*(\cdot) = \ind_{\overline{\Omega}(\cdot) \leq 1}$ we conclude that $\overline{\Omega}(\vt{u})= \max_{g\in\s{G}}\left( \overline{\Omega}_g (\vt{u}_g) \right)$, 
% recovering proposition~\ref{prop:dual_norm_separable}.
which finishes the proof.
\end{proof}

%%CORRESPONDENCE GENERIC -> GLM
%\subsection{The regularized GLM case} \label{app:GLM_regularized}
\subsection{Proof of Theorem~\ref{thm:dual_GLM_and_optimality}} \label{app:GLM_regularized}

Note that in formulation \eqref{eq:generic_primal}, there are no constraints on the primal variable $\x$. 
%To take into account the constraint set $\s{C}$ in Problem \eqref{eq:GLM_reg_problem}, we incorporate  it into the cost function as an indicator function $\ind_{\s{C}}(\x)$. 
To make Problem~\eqref{eq:GLM_reg_problem} fit to this setting, we reformulate it as an unconstrained problem by using the indicator function $\ind_{\s{C}}(\x)$.
This strategy was used in \citet{Wang2019} for the non-negative Lasso problem and is extended here to a broader class of problems. % to a GLM model.
%
%Our problem can be rewritten as (unconstrained formulation):
\begin{align} \label{eq:GLM_reg_problem_unconstrained}
\x^\star \in  
%\argmin_{\x \in \R^n} F(\x) + \lambda \left( \Omega(\x) +  \ind_{\s{C}}(\x) \right)
%Coordinate-wise
\argmin_{\x \in \R^n}  \sum_{i=1}^m f_{i}(\Axi) + \lambda \left( \Omega(\x) +  \ind_{\s{C}}(\x) \right).
\end{align}

Comparing \eqref{eq:GLM_reg_problem_unconstrained} to \eqref{eq:generic_primal}, we have the following direct correspondences:
\begin{itemize}
\item $F(\A\x) = \sum_{i=1}^m f_{i}(\Axi)$
\item $G(\x)  = G_1(\x) + G_2(\x)
= \Omega(\x) +  \ind_{\s{C}}(\x)$
\end{itemize}

In this section, we address both cases $\s{C} = \R^n$ and $\s{C} = \R^n_{+}$, but we emphasize that the former case was previously treated in the literature (see \citet{Ndiaye2018} for instance).

\subsubsection{Dual problem}
The dual problem in \eqref{eq:GLM_reg_dual_problem} is a direct application of the generic result in \eqref{eq:generic_dual}. We just need to evaluate the Fenchel conjugates $F^*$ and $G^*$ in our particular case described above.

%Dual cost function

%%SEPARABLE SUM
$F = \sum_{i=1}^m f_{i}$ being coordinate-wise separable, we can apply \Cref{app:prop:Fenchel_separable_sum}: %the following property of the Fenchel conjugate \citep[Chapter X, Proposition 1.3.1 (ix)]{Hiriart-Urruty1993a} also known as the separable sum property:
%``Legendre-Fenchel transform preserves decomposition"
\begin{align} \label{eq:F_fenchel}
F(\vt{z}) = \sum_{i=1}^m f_i(z_i) ~\Rightarrow~ F^*(\vt{u}) = \sum_{i=1}^m f_i^*(u_i)
\end{align}

%Dual feasible set
Furthermore, we know that the Fenchel conjugate of a norm $\Omega$ is the indicator on the unit-ball of its corresponding dual norm $\overline{\Omega}$ \citep[see for instance][Example 13.32]{Bauschke2011}:
%$$\Omega^*(\vt{u}) = \ind_{\overline{\Omega}(\vt{u}\leq 1)}$$ 
\begin{align*}
G_1(\x) = \Omega(\x) ~~\Rightarrow~~ G_1^*(\vt{u}) = \ind_{\overline{\Omega}(\vt{u})\leq 1}.
\end{align*}

To conclude the proof, we need to consider the indicator function corresponding to the constraint set $\s{C}$. The case $\s{C} = \R^n$ is trivial since $\ind_{\R^n}(\x) \equiv 0$ %, $G(\x) = \Omega(\x)$ 
and the Fenchel conjugate $G^*=G_1^*$ is given above. 
%%FENCHEL OF NON-NEGATIVITY
For the non-negativity contraint, $\s{C} = \R_{+}^n$, we have \citep[Lemma 18 (i)]{Wang2019}:
%It is easy to see
\begin{align}
G_2(\x) = \ind_{\R_{+}^n}(\x) ~~\Rightarrow~~	G_2^*(\vt{u}) = \ind_{\R_{-}^n}(\vt{u}). 
\end{align}

% The Fenchel conjugate of an indicator funtion is the support function of the set $\s{C}$.

%FENCHEL OF NON-NEGATIVITY: DETAILED DERIVATION
\begin{comment}
We have by the definition 
$$G_2^*(\vt{u}) = \sup_{\vt{z} \in \R^n} \langle \vt{z}, \vt{u}\rangle - \ind_{\R_{+}^n}(\vt{z} \in \R_{+}^n) = \sup_{\vt{z}} \langle \vt{z}, \vt{u}\rangle $$
%
When $u_j>0$ for any $j \in [n]$, we can take $\vt{z} = \alpha \vt{e}_j$ above and make $G_2^*(\vt{u}) \rightarrow \infty$ as $\alpha \rightarrow \infty$. 

Conversely, when $\vt{u} \leq 0$ (i.e. $\vt{u} \in \R_{-}^n$), 
%we have by the definition $G_2^*(\vt{u}) = \sup_{\vt{z}} \langle \vt{z}, \vt{u}\rangle - \ind_{\R_{+}^n}(\vt{z})$ and 
it is easy to see that the maximum is obtained for $\vt{z}= \vt{0}$, leading to $G_2^*(\vt{u}) = 0$ for all $\vt{u} \in \R_{-}^n$.This leads us to the final result:
\begin{align}
G_2^*(\vt{u})
= \ind_{\R_{-}^n}(\vt{u})
= \left\{
\begin{array}{ll}
0, 			& \text{if } ~\vt{u} \in \R_{-}^n, \\
+\infty, 	& \text{otherwise}.
\end{array}\right.
\end{align}
\end{comment}

Now, to calculate the conjugate of the sum $\Omega(\x) + \ind_{\R_+^n}(\x)$, we use
a property that relates the conjugate of a sum to the so-called \emph{infimal convolution}, denoted $\boxdot$, of the individual conjugates
%The Fenchel conjugate of a sum of two lower semi-continuous convex functions relates to their infimal convolution as follows: %== closed convex \citep[see][Chap I, Definition 3.2.1]{Hiriarty-Urruty}
\citep[Proposition 15.2]{Bauschke2011}. %\citep[Chapter X, Theorem 2.3.1]{Hiriart-Urruty1993a}.
\begin{align}
G^*(\vt{u}) = (G_1 + G_2)^*(\vt{u}) = (G_1^* \boxdot G_2^*) (\vt{u}) = \ind_{\overline{\Omega}(\vt{u})\leq 1} \boxdot \ind_{\R_{-}^n}(\vt{u}).
\end{align}
%Since  $\Omega_1^*$ and $\Omega_2^*$ are both indicator functions
Then, we use the fact that the infimal convolution of two indicator functions is the indicator of the Minkowski sum of both sets \citep[Example 12.3]{Bauschke2011}:
\begin{align}
G^*(\vt{u}) = \ind_{\overline{\Omega}(\vt{u})\leq 1} \boxdot \ind_{\R_{-}^n}(\vt{u}) 
= \ind_{\{\vt{u} = \vt{a} + \vt{b} ~|~ \overline{\Omega}(\vt{a})\leq 1, ~\vt{b} \leq 0 \}}
%= \ind_{\{\vt{u} = \vt{a} + \vt{b} ~|~ \overline{\Omega}(\vt{u}-\vt{b})\leq 1,  ~\vt{b} \leq 0 \}}
= \ind_{\overline{\Omega}([\vt{u}]^+)\leq 1}.
\end{align}

\begin{remark}
The last step applies only because both conjugates $G^*_1$ and $G^*_2$ are indicator functions. This is no longer the case when we assume, for instance, a bounded-variable constraint of the form $\xj \in [-a_j,b_j], ~ j \in [n]$.
\end{remark}

This leads to the following results:
\begin{itemize}
\item For $\s{C} = \R^n$, we have $G^*(\vt{u}) = G^*_1(\vt{u}) =\ind_{\overline{\Omega}(\vt{u})\leq 1}$.
\item For $\s{C} = \R_{+}^n$, we have $G^*(\vt{u}) = (G_1+G_2)^*(\vt{u}) =\ind_{\overline{\Omega}([\vt{u}]^+)\leq 1}$.
\end{itemize}
These can be written in a compact form using the non-linearity function $\phi$ as defined in \eqref{eq:phi_GLM_dual}:
\begin{align}\label{eq:G_fenchel}
G^*(\vt{u}) = \ind_{\overline{\Omega}(\phi(\vt{u}))\leq 1}
\end{align}
Applying \Cref{prop:dual_norm_separable} for the dual of a group-separable norm $\overline{\Omega}(\phi(\vt{u})) \!=\! \max_{g\in\s{G}} \! \left( \overline{\Omega}_g(\phi(\vt{u}_g)) \right) $:
\begin{align}\label{eq:G_fenchel_groups}
G^*(\vt{u}) = \ind_{\max_{g\in\s{G}}\left( \overline{\Omega}_g(\phi(\vt{u}_g))\right) \leq 1} .
\end{align}

The resulting dual problem is thus obtained by replacing $F^*$ \eqref{eq:F_fenchel} and $G^*$ \eqref{eq:G_fenchel_groups} in the generic dual problem \eqref{eq:generic_dual}, that is 
\begin{equation}
   D_\lambda(\vtheta) = -F^*(-\lambda\vtheta) -  \ind_{\overline{\Omega}(\phi(\A^\T\vtheta))\leq 1}.
\end{equation}

%UNIQUENESS OF DUAL SOLUTION
Although in  \eqref{eq:generic_dual} the dual solution is not guaranteed to be unique, our assumption that $F$ is differentiable implies that $D_\lambda$ is strictly concave \citep[Proposition 18.10]{Bauschke2011} which, in turn, implies the uniqueness of the dual solution. %\citep[Theorem 26.3]{Rockafellar1970}
%EXISTENCE
%Its existence is also guaranteed since primal and dual solutions are tied via the optimality conditions in \Cref{thm:optimality_conditions} and we suppose the existence of a primal solution.

% Existence: see \cite[Theorem 3.3.5]{Borwein200} under Slater Conditions (same used in \Cref{thm:optimality_conditions}) $\dom(G) \cap \A \dom(F) \neq \emptyset$. % Also see \pcite[Theorem 15.23, Proposition 15.24]{Bauschke2011} as the real underlying hypothesis is  $(ri \dom(G)) \cap \A (ri \dom(F)) \neq \emptyset$.

% Uniqueness: Differently from the primal solution, the dual solution is unique under the working assumptions on $F$ (see \citep[Proposition 19.4]{Bauschke2011}). %This is kind of the dual result (revering roles of primal and dual)
%Alternative proof (sketch): the differentiability of F makes the dual solution uniquely determined by a primal solution (from the first optimality condition). Different primal solutions cannot lead to different dual solutions. If $\x^star_1 \rightarrow \vtheta^\star_1$ and $\x^star_2 \rightarrow \vtheta^\star_2$ with $\vtheta^\star_1 \neq \vtheta^\star_2$, we would have  P(x1) = D(th1) et P(x2) = D(th2) and since P(x1) = P(x2) ==> P(x1) = D(th2). Therefore, $\x^star_1 \rightarrow \vtheta^\star_2$, which implies that $\vtheta_1 = \vtheta_2$.

\begin{remark}
The constraint set being coordinate separable,
we can always treat each group separately by restricting ourselves to the set of $n_g$ coordinates on group $g$. For simplicity, we derive the remaining results for $\Omega$ on $\R^n$ (the extension for groups with $\Omega_g$ being straightforward).
\end{remark}

%DEFINITIONS

%Minkowski sum:  For sets $A$ and $B$, the Minkowski sum $A+B$ is formed by adding each vector in $A$ to each vector in $B$, i.e.
%$A+B = \{\vt{a} + \vt{b} ~|~ \vt{a} \in A, \vt{b} \in B \}$

%Infimal convolution \citep[eq. (2.1.3)]{Hirart-Urruty}: Given $f_1, f_2 \in \Gamma_0(\R^n)$ (proper, convex and lower semicontinuous functions) on $\R^n$, the infimal convolution $f_1 \boxdot f_2$ is defined as:  %wikipedia
%\begin{align}\label{eq:def:infimal_convolution}
%( f_1 \boxdot f_2 )(\vt{u}) = \inf_{\vt{v} \in \R^n} f_1(\vt{u} - \vt{v}) + f_2(\vt{v})
%\end{align}

\subsubsection{Optimality conditions} \label{ssec:optimality_conditions}

Optimality conditions \eqref{eq:GLM_optimality_condition1} and \eqref{eq:GLM_optimality_condition2_explicit} are a direct application of \Cref{thm:optimality_conditions}.
Applying our definitions of $F$ and $G$, and knowing that $ \partial F(\vt{z}) \!=\! \{\nabla F(\vt{z})\}$ (a singleton as, by assumption, $F$ is differentiable) we have:
\begin{align}
-\lambda\vtheta^\star &= \nabla F(\A\x^\star) \!=\! \left[  f_1'([\A\x^\star]_1),\dots, f_m'([\A\x^\star]_m) \right]^\T \\
\A^\T\vtheta^\star &\in \partial G(\x^\star) = \partial \Omega(\x^\star) + \partial \ind_{\s{C}}(\x^\star) \label{app:eq:GLM_optimality_condition2}
\end{align}
%which directly gives \eqref{eq:GLM_optimality_condition1} and \eqref{eq:GLM_optimality_condition2} respectively.

%%CONDITION 2
% &\forall g \in \s{G},~~  \A_g^\T \vtheta^\star \in \partial \Omega_g(\x_g^\star) + \partial \ind_{\s{C}}(\x_g^\star) 
% &\text{(sub-differential inclusion)} \label{eq:GLM_optimality_condition2}

%The same results can be obtained solely from the Lagrangian duality framework (without explicitly resorting to the Fenchel-Young inequality), as shown in subsection \ref{ssec:optimality_conditions_KKT}.

To obtain \eqref{eq:GLM_optimality_condition2_explicit} explicitly,
the sum of subdifferentials in equation \eqref{app:eq:GLM_optimality_condition2} is further developed below.

\subsubsection{Sum of subdifferentials: \texorpdfstring{$\partial \Omega(\mathbf{x}) + \partial \ind_{\s{C}}(\mathbf{x})$}{}} \label{app:subdiff_sum}

%%OPTIMALITY CONDITION 2 - EXPLICIT (DEVELOPMENT)
%%SUBDIFFERENTIALS
Let us further develop the second optimality condition \eqref{app:eq:GLM_optimality_condition2}.
It requires computing  the subdifferentials of both $\Omega$ and $\ind_{\s{C}}$, which are discussed below:

\begin{proposition}[Subdifferential of a Norm] \citep[Proposition 1.2]{Bach2012} The subdifferential of a norm $\Omega: \R^n \rightarrow \R$ at $\x$ is given by
\begin{align}\label{eq:subdiff_norm}
\partial \Omega(\x) = 
\left\{
\begin{array}{lr}
 \{\vt{z} \in \R^n ~|~ \overline{\Omega}(\vt{z}) \leq 1\}, & ~\text{if } ~\x = \vt{0}  \\
 \{\vt{z} \in \R^n ~|~ \overline{\Omega}(\vt{z}) = 1 ~\text{ and }~ \vt{z}^\T\vt{x} = \Omega(\x)\}, & ~\text{otherwise.}
\end{array} \right.
\end{align}
\end{proposition}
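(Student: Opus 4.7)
My plan is to derive both cases simultaneously from the Fenchel--Young equality characterization of the subdifferential, which states that for any proper convex lower semi-continuous function $f$, one has $\vt{z} \in \partial f(\x)$ if and only if $f(\x) + f^*(\vt{z}) = \vt{z}^\T \x$. Applied to the norm $\Omega$, this immediately reduces the problem to two scalar conditions, because the Fenchel conjugate of a norm is already known to be the indicator function of the unit ball of the dual norm, i.e., $\Omega^*(\vt{z}) = \ind_{\overline{\Omega}(\vt{z}) \leq 1}$ (recalled just after equation~\eqref{eq:def:dual_norm} in the paper). Hence $\vt{z} \in \partial \Omega(\x)$ if and only if $\overline{\Omega}(\vt{z}) \leq 1$ \emph{and} $\vt{z}^\T \x = \Omega(\x)$.

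From here the two cases fall out directly. When $\x = \vt{0}$, the equality $\vt{z}^\T \x = \Omega(\x)$ reduces to $0=0$ and is trivially satisfied, so only the condition $\overline{\Omega}(\vt{z}) \leq 1$ remains, giving the first branch of~\eqref{eq:subdiff_norm}. When $\x \neq \vt{0}$, I would use the basic dual-norm inequality $\vt{z}^\T \x \leq \overline{\Omega}(\vt{z}) \, \Omega(\x)$ (a direct consequence of the definition~\eqref{eq:def:dual_norm} of the dual norm, obtained by homogeneity after dividing $\x$ by $\Omega(\x)$). Since $\Omega(\x) > 0$, the equality $\vt{z}^\T \x = \Omega(\x)$ combined with $\overline{\Omega}(\vt{z}) \leq 1$ forces $\overline{\Omega}(\vt{z}) = 1$, which yields the second branch.

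There is essentially no serious obstacle here: once the Fenchel--Young equality and the conjugate of a norm are invoked, the rest is a short calculation. The only mildly delicate point is justifying that $\overline{\Omega}(\vt{z})$ cannot be strictly less than $1$ when $\x \neq \vt{0}$; this is handled by the sharpness argument above. For a reader wanting a self-contained derivation avoiding Fenchel--Young, I would alternatively prove the two cases directly from the subdifferential inequality $\Omega(\y) \geq \Omega(\x) + \vt{z}^\T(\y - \x)$, plugging in $\y = \vt{0}$ and $\y = 2\x$ to obtain $\vt{z}^\T \x = \Omega(\x)$, and plugging in arbitrary $\y$ together with the triangle inequality to obtain $\overline{\Omega}(\vt{z}) \leq 1$; conversely, these two properties imply the subgradient inequality via $\vt{z}^\T \y \leq \overline{\Omega}(\vt{z}) \Omega(\y) \leq \Omega(\y)$, so that $\Omega(\y) \geq \vt{z}^\T \y = \Omega(\x) + \vt{z}^\T(\y - \x)$.
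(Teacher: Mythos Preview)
Your proof is correct. The Fenchel--Young characterization together with the known expression $\Omega^* = \ind_{\overline{\Omega}(\cdot)\leq 1}$ is exactly the standard route, and your sharpness argument ($\vt{z}^\T\x \leq \overline{\Omega}(\vt{z})\,\Omega(\x) \leq \Omega(\x)$ with equality forcing $\overline{\Omega}(\vt{z})=1$) handles the case $\x\neq\vt{0}$ cleanly. The alternative direct argument from the subgradient inequality is also fine; note that once you have established $\vt{z}^\T\x=\Omega(\x)$ via $\y=\vt{0}$ and $\y=2\x$, the bound $\overline{\Omega}(\vt{z})\leq 1$ follows immediately from the subgradient inequality itself (it becomes $\vt{z}^\T\y \leq \Omega(\y)$ for all $\y$), so the triangle inequality is not actually needed there.

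As for comparison with the paper: the paper does \emph{not} prove this proposition. It is stated with a citation to \citet[Proposition~1.2]{Bach2012} and used as a black box in the derivation of the optimality conditions (Appendix~\ref{app:subdiff_sum}). Your write-up therefore supplies more than the paper does on this point.
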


\begin{proposition}[Subdifferential of an indicator function] \citep[Example 16.13]{Bauschke2011} %\citep[Section 23, page 214]{Rockafellar1970} 
The subdifferential at point $\x$ of the indicator of a set $\s{C}$ is given by the normal cone of $\s{C}$ at $\x$, denoted $N_{\s{C}}(\x)$:
\begin{align}\label{eq:subdiff_indicator_function}
\partial \ind_{\s{C}}(\x) = N_{\s{C}}(\x) =
\left\{
\begin{array}{lr}
 \{\vt{z} \in \R^n ~|~ \vt{z}^\T\vt{x} \geq \vt{z}^\T\vt{w}, ~\forall \vt{w} \in \s{C} \} & \text{if } \x \in \s{C}  \\
 \emptyset & \text{otherwise.}
\end{array} \right.
\end{align}
In particular, $N_{\s{C}}(\x) = \{\vt{0}\}, \forall \x \in \interior(\s{C})$.
\end{proposition}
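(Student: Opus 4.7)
The plan is to work directly from the convex-analytic definition of the subdifferential. Recall that for a proper convex function $f : \R^n \to (-\infty, +\infty]$, a vector $\vt{z}$ belongs to $\partial f(\x)$ if and only if $f(\vt{w}) \geq f(\x) + \vt{z}^\T(\vt{w} - \x)$ for every $\vt{w} \in \R^n$, with the standard convention that $\partial f(\x) = \emptyset$ whenever $f(\x) = +\infty$. I would apply this definition with $f = \ind_{\s{C}}$ and split into two cases.

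If $\x \notin \s{C}$, then $\ind_{\s{C}}(\x) = +\infty$, so by the above convention $\partial \ind_{\s{C}}(\x) = \emptyset$, which matches the second branch of the proposition. If instead $\x \in \s{C}$, then $\ind_{\s{C}}(\x) = 0$, and the subgradient inequality reduces to $\ind_{\s{C}}(\vt{w}) \geq \vt{z}^\T(\vt{w} - \x)$ for all $\vt{w} \in \R^n$. For $\vt{w} \notin \s{C}$ this is vacuous since the left-hand side is $+\infty$, so the condition is equivalent to requiring $\vt{z}^\T(\vt{w} - \x) \leq 0$ for every $\vt{w} \in \s{C}$, i.e.\ $\vt{z}^\T\x \geq \vt{z}^\T\vt{w}$ for all $\vt{w} \in \s{C}$. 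This is exactly the definition of the normal cone $N_{\s{C}}(\x)$, establishing the first branch.

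For the ``in particular'' statement, I would argue as follows. Fix $\x \in \interior(\s{C})$ and any candidate $\vt{z} \in N_{\s{C}}(\x)$. By definition of the interior, there exists $\epsilon > 0$ such that $\vt{w} := \x + \epsilon \vt{z} \in \s{C}$. Plugging this $\vt{w}$ into the normal-cone inequality gives $\vt{z}^\T\x \geq \vt{z}^\T\x + \epsilon\|\vt{z}\|_2^2$, i.e.\ $\epsilon\|\vt{z}\|_2^2 \leq 0$, which forces $\vt{z} = \vt{0}$. The reverse inclusion $\vt{0} \in N_{\s{C}}(\x)$ is immediate since $\vt{0}^\T(\vt{w} - \x) = 0$ holds trivially.

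I do not expect any real obstacle here: the result is a standard textbook fact and the proof essentially amounts to unpacking definitions. The only subtle point to be careful about is invoking the convention $\partial f(\x) = \emptyset$ for $\x \notin \dom(f)$ cleanly, and ensuring that the test direction $\vt{w} = \x + \epsilon \vt{z}$ in the interior-point argument is genuinely admissible for a sufficiently small $\epsilon$, which is guaranteed by openness of $\interior(\s{C})$.
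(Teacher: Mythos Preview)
Your proof is correct and is the standard textbook derivation. Note, however, that the paper does not supply its own proof of this proposition: it is stated as a known result with a citation to \citet[Example 16.13]{Bauschke2011}, so there is no in-paper argument to compare against.
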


When it comes to the constraint sets considered in this paper, we have: 
\begin{align}
&\partial \ind_{\R^n} \equiv \{0\}.\\
%subdifferential of non-negativity
&\partial \ind_{R_{+}^n}(\x) 
= J_1 \times \dots \times J_n, 
& \text{with  } & J_j = \left\{
\begin{array}{ll}
\emptyset  & \text{if } \xj < 0  \\
 (-\infty,0] & \text{if } \xj = 0 \\ 
 \{0\} & \text{if } \xj > 0.  \label{eq:subdiff_positive_orthant}
\end{array} \right.
\end{align}

%%%%%%%%%%%%%%%%%%%%%%%%%%%%%%%%%%%%%%%%%%%%%%%%%%%%%%%%%%%%%%%%%%%%
%BEGIN PREVIOUS APPENDIX PART

%In this section, we demonstrate equation \eqref{eq:subdiff_sum_constrained}.

%Detailed - sum of subdifferential
The case $\s{C} = \R^n$ is trivial, since $\partial \ind_{\R^n}(\x) \equiv \{\vt{0}\}$ and therefore $\partial \Omega(\x) + \partial \ind_{\R^n}(\x) = \partial \Omega(\x)$, which is given in equation \eqref{eq:subdiff_norm}.
Let us now analyze the case $\s{C} = \R_{+}^n$. First of all, note that the subdifferential $\partial \ind_{\R_{+}^n}(\x)$ in equation \eqref{eq:subdiff_positive_orthant} can be rewritten in the following form:
\begin{align}\label{eq:subdiff_positive_orthant_app}
\partial \ind_{R_{+}^n}(\x) 
&= \left\{
\begin{array}{lll}
\emptyset  & \text{if } \x \notin \R_{+}^n &  \\
\{\vt{z} \in \R^n ~|~ \vt{z} \leq \vt{0} \} = \R_{-}^n & \text{if } \x = \vt{0} & \\ 
 \{\vt{z} \in \R^n ~|~ \vt{z} \leq \vt{0} \text{ and } \vt{z}^\T\x = 0 \} & \text{otherwise} & \text{(i.e. } \x \geq \vt{0} \text{ and } \x \neq \vt{0} \text{).}
\end{array} \right.
\end{align}
Therefore, the (Minkowski) sum of the subdifferential sets in equations \eqref{eq:subdiff_norm} and \eqref{eq:subdiff_positive_orthant_app} gives:
\begin{align}
\partial \Omega(\x) + \partial \ind_{\R_{+}^n}(\x) 
% &= \left\{
% \begin{array}{lll}
% \emptyset &  ~\text{if } ~\x \notin \R_{+}^n &\\
% \{\vt{z_1} \in \R^n ~|~ \overline{\Omega}(\vt{z}_1) \leq 1\} + \{\vt{z}_2 \in \R^n ~|~ \vt{z}_2 \leq 0\}, & ~\text{if } ~\x = \vt{0}  &\\
% \{\vt{z}_1 \in \R^n ~|~ \overline{\Omega}(\vt{z}_1) = 1, ~\vt{z}_1^\T\vt{x} = \Omega(\x)\} + \{\vt{z}_2 \in \R^n ~|~ \vt{z}_2 \leq 0 , ~\vt{z}_2^\T\x = 0\}, & ~\text{otherwise} & %\text{(i.e. } \x \geq \vt{0},~ \x \neq \vt{0} \text{).}
% \end{array} \right.  \nonumber \\
%
&= \left\{
\begin{array}{lll}
 \emptyset &  ~\text{if } ~\x \notin \R_{+}^n &\\
 \{\vt{z} = \vt{z}_1 + \vt{z}_2 ~|~ \overline{\Omega}(\vt{z}_1) \leq 1, ~ \vt{z}_2 \leq 0\}, & ~\text{if } ~\x = \vt{0}  &\\
 \{\vt{z} = \vt{z}_1 + \vt{z}_2 ~|~ \overline{\Omega}(\vt{z}_1) = 1, ~\vt{z}_1^\T\vt{x} = \Omega(\x), \vt{z}_2 \leq 0 , ~\vt{z}_2^\T\x = 0\}, & ~\text{otherwise} & %\text{(i.e. } \x \geq \vt{0},~ \x \neq \vt{0} \text{).}
\end{array} \right. \nonumber \\ 
%
% &= \left\{
% \begin{array}{lll}
%  \emptyset &  ~\text{if } ~\x \notin \R_{+}^n &\\
%  \{\vt{z} \in \R^n ~|~ \overline{\Omega}(\vt{z}-\vt{z}_2) \leq 1, ~ \vt{z}_2 \leq 0\}, & ~\text{if } ~\x = \vt{0}  &\\
%  \{\vt{z} \in \R^n ~|~ \overline{\Omega}(\vt{z}-\vt{z}_2) = 1, ~\vt{z}^\T\vt{x} = \Omega(\x), \vt{z}_2 \leq 0\}, & ~\text{otherwise} & %\text{(i.e. } \x \geq \vt{0},~ \x \neq \vt{0} \text{).}
% \end{array} \right. \nonumber \\ 
%
&= \left\{
\begin{array}{lll}
 \emptyset &  ~\text{if } ~\x \notin \R_{+}^n &\\
 \{\vt{z} \in \R^n ~|~ \overline{\Omega}([\vt{z}]^{+}) \leq 1\}, & ~\text{if } ~\x = \vt{0}  &\\
 \{\vt{z} \in \R^n ~|~ \overline{\Omega}([\vt{z}]^{+}) = 1, ~\vt{z}^\T\vt{x} = \Omega(\x)\}, & ~\text{otherwise} & %\text{(i.e. } \x \geq \vt{0},~ \x \neq \vt{0} \text{).}
\end{array} \right. \label{eq:subdiff_sum_nonnegative}
\end{align}
where, in the last equality, we have used the fact that $\vt{z}_1^\T \x = (\vt{z} - \vt{z}_2)^\T \x  = \vt{z}^\T \x $, since $\vt{z}_2^\T \x = 0$.

%END PREVIOUS APPENDIX PART
%%%%%%%%%%%%%%%%%%%%%%%%%%%%%%%%%%%%%%%%%%%%%%%%%%%%%%%%%%%%%%%%%%%%

%It can be shown (see Appendix \ref{app:subdiff_sum}) that the sum 
Comparing equations \eqref{eq:subdiff_norm} and \eqref{eq:subdiff_sum_nonnegative} respectively for $\s{C}=\R^n$ and $\s{C}=\R_{+}^n$,
we can see that the sum 
$\partial \Omega + \partial \ind_{\s{C}}$
%of the subdifferential sets 
for the considered constraint sets %($\s{C}=\R^n$ or $\s{C}=\R_{+}^n$)
writes compactly as:
%Sum of subdifferential
\begin{align}\label{eq:subdiff_sum_constrained}
\partial \Omega(\x) + \partial \ind_{\s{C}}(\x) = 
\left\{
\begin{array}{lrl}
 \emptyset &  ~\text{if } ~\x \notin \s{C} &\\
 \{\vt{z} \in \R^n ~|~ \overline{\Omega}(\phi(\vt{z})) \leq 1\}, & ~\text{if } ~\x = \vt{0}  &\\
 \{\vt{z} \in \R^n ~|~ \overline{\Omega}(\phi(\vt{z})) = 1, ~\vt{z}^\T\vt{x} = \Omega(\x)\}, & ~\text{otherwise} & %\text{(i.e. } \x \geq \vt{0},~ \x \neq \vt{0} \text{).}
\end{array} \right.
\end{align}
where $\phi$ is defined as in \eqref{eq:phi_GLM_dual}. 

Applying the above results to equation \eqref{app:eq:GLM_optimality_condition2}, it takes the following form which corresponds to \eqref{eq:GLM_optimality_condition2_explicit}
%for the considered constraint sets ($\s{C}=\R^n$ or $\s{C}=\R_{+}^n$):
\begin{align}
%\forall g \in \s{G}, \quad
\left\{
\begin{array}{lrl}
 \overline{\Omega}(\phi(\A^\T \vtheta^\star)) \leq 1, & ~\text{if } ~\x^\star = \vt{0}  &\\
 \overline{\Omega}(\phi(\A^\T \vtheta^\star)) = 1,  ~~ (\A^\T \vtheta^\star)^\T\x^\star = \Omega(\x^\star), & ~\text{otherwise} & %\text{(i.e. } \x^\star \geq \vt{0},~ \x \neq 0 \text{).}
\end{array} \right.
\end{align}

\section{Proof of Theorem~\ref{prop:local_concavity}} \label{app:local_GAP}

In this section we prove Theorem \ref{prop:local_concavity}, i.e. 
given any feasible primal-dual pair $(\x,\vtheta) \in {(\dom (P_\lambda) \cap \s{C})} \times (\Delta_A \cap \s{S})$ and supposing $D_\lambda$ to be $\alpha_{\s{S}}$-strongly concave on $\s{S}$, we prove that a region
\begin{align}
\s{B}(\vtheta,r), \quad \text{with }  r = \sqrt{\frac{2\Gap_\lambda({\x},{\vtheta})}{\alpha_\s{S}}}
\end{align}
is safe, i.e. $\vtheta^\star \in \s{B}(\vtheta,r)$. 
Equivalently, we prove that
$\|{\vtheta} - \vtheta^\star  \|_2 \leq r.$  
 
The proof follows quite similarly to \citet[Theorem 6]{Ndiaye2017}.
(except that  we assume strong concavity of $D_\lambda$ exclusively on $\s{S}$ instead of $\R^m$).
From the $\alpha_\s{S}$-strong concavity of $D_\lambda$ on $\s{S}$ we have:
\begin{align*}
\forall (\vtheta^\prime,\vtheta'') \in \s{S} \times \s{S}, \quad
D_\lambda(\vtheta'') \leq D_\lambda(\vtheta^\prime) + \langle \nabla D_\lambda(\vtheta^\prime), \vtheta'' - \vtheta^\prime \rangle - \frac{\alpha_{\s{S}}}{2}\|\vtheta'' - \vtheta'\|_2^2 .
\end{align*}
In particular, we can take  $\vtheta^\prime = \vtheta^\star$, $\vtheta'' = {\vtheta}$ (since we suppose  ${\vtheta} \in \s{S}$ and $\vtheta^\star \in \s{S}$):
\begin{align*}
D_\lambda({\vtheta}) \leq D_\lambda(\vtheta^\star) + \langle \nabla D_\lambda(\vtheta^\star), {\vtheta} - \vtheta^\star \rangle - \frac{\alpha_{\s{S}}}{2}\|\vtheta - {\vtheta}^\star\|_2^2 .
\end{align*}
By definition $\vtheta^\star$  maximizes $D_\lambda$ on $\Delta_\A$ and hence $\langle \nabla D_\lambda(\vtheta^\star), {\vtheta} - \vtheta^\star \rangle \leq 0$ (otherwise, ${\vtheta} - \vtheta^\star$ would be a feasible direction of improvement), which implies:
\begin{align*}
D_\lambda({\vtheta}) \leq D_\lambda(\vtheta^\star) - \frac{\alpha_{\s{S}}}{2}\|\vtheta - {\vtheta}^\star\|_2^2 .
\end{align*}
By weak duality $D_\lambda(\vtheta^\star) \leq P_\lambda(\x')$, $\forall \x' \in  \dom(P_\lambda) \cap \s{C}$,  and in particular for $\x'={\x}$, giving:
\begin{align*}
D_\lambda({\vtheta})  &\leq P_\lambda({\x}) - \frac{\alpha_{\s{S}}}{2}\|\vtheta - \vtheta^\star\|_2^2 \\
\underbrace{D_\lambda({\vtheta}) - P_\lambda({\x})}_{-G_\lambda({\x},{\vtheta})} &\leq  - \frac{\alpha_{\s{S}}}{2}\|\vtheta - {\vtheta}^\star\|_2^2 \\
\frac{2}{\alpha_{\s{S}}}G_\lambda({\x},{\vtheta}) &\geq \|\vtheta - {\vtheta}^\star\|_2^2
\end{align*} 
which holds for all ${\x} \in \dom(P_\lambda) \cap \s{C}, {\vtheta} \in \Delta_\A \cap \s{S}$, concluding the proof.

%CONE OF DESCENT DIRECTIONS at point $\x$: $\mathcal{D}_{\x}} = \{ \vt{d} \in \R^m : \langle \vt{d}, \nabla f(\x) < 0 \rangle\}$

\section{Proofs of Section~\ref{ssec:main_ingredients}} \label{app:main_ingredients}

\subsection{Proof of Proposition~\ref{prop:max_regularization}}
\label{app:prop:max_regularization}

\textit{Statement.~}
\begin{align}
    \vt{0} \in \argmin_{\x \in \s{C}} P_\lambda(\x) \iff \lambda \geq \lambda_{\max} := \overline{\Omega}\left(\phi\left(-\A^\T\nabla F(\vt{0})\right)\right)
\end{align}
\begin{proof}
Writing our primal problem in an equivalent unconstrained form, we have
$$\argmin_{\x \in \R^n}  F(\A\x) + \lambda \left( \Omega(\x) +  \ind_{\s{C}}(\x) \right) := P^{\unc}_\lambda (\x)$$
denoting $P^{\unc}_\lambda$ the unconstrained objective function, with $\argmin_{\x \in \R^n} P^{\unc}_\lambda (\x) \!=\! \argmin_{\x \in \s{C}} P_\lambda (\x)$.
From Fermat's rule we have that 
\begin{align*}
  \vt{0} \in \argmin_{\x \in \R^n} P^{\unc}_\lambda(\x) & \iff  \vt{0} \in \partial P^{\unc}_\lambda(\vt{0}) = \{\A^\T \nabla F(\vt{0})\} + \lambda \left( \partial \Omega(\vt{0}) + \partial \ind_{\s{C}}(\vt{0}) \right) \\ &\iff -\A^\T\nabla F(\vt{0})/\lambda \in  \partial \Omega(\vt{0}) + \partial \ind_{\s{C}}(\vt{0}).  
\end{align*}
Using the expression of $\partial \Omega(\x) + \partial \ind_{\s{C}}(\x)$ given in \eqref{eq:subdiff_sum_constrained} we obtain
\begin{align*}
    \vt{0} \in  \argmin_{\x \in \R^n} P^{\unc}_\lambda(\x) \iff \overline{\Omega}\left(\phi(-\A^\T\nabla F(\vt{0}))\right) \leq \lambda,
\end{align*}
which completes the proof.
\end{proof}

\subsection{Proof of Lemma~\ref{lem:dual_scaling}} \label{app:lem:dual_scaling}

%%DUAL SCALING
\textit{Statement.~}
Assume that $\dom(D_\lambda)$ is stable by contraction and let $\dscale : \R^m \rightarrow \R^m$ be the scaling operator defined by
\begin{equation} %\label{eq:dual_scaling}
    \dscale(\vt{z}) := \frac{\vt{z}}{\max\left(\overline{\Omega}(\phi(\A^\T \vt{z})),1\right)}.
\end{equation}
Then, for any point $\vt{z} \in \dom(D_\lambda)$, we have $\dscale(\vt{z}) \in \Delta_\A$.
Moreover, for any primal point $\x \in \dom(P_\lambda)$, we have that $\vt{z} = (-\nabla F(\A\x)/\lambda) \in \dom(D_\lambda)$ and therefore 
\begin{align}%\label{eq:dual_scaling_proposed}
    % \vt{\Theta}(\x) = 
    \dscale(-\nabla F(\A\x)/\lambda) \in \Delta_\A.
\end{align}
%
%%CONVERGENCE TO \vtheta^\star
Finally, if $F \in C^1$, then $\dscale(-\nabla F(\A\x)/\lambda) \rightarrow \vtheta^\star$ as $\x \rightarrow \x^\star$.
\begin{proof}
By definition of $\dscale$, one can see that $\dscale(\vt{z}) \leq \vt{z}$ and, given the assumption that  $\dom(D_\lambda)$ is stable by contraction, we get that $\forall \vt{z} \in  \dom(D_\lambda)$, $\dscale(\vt{z}) \in \dom(D_\lambda)$. Combining this with the fact that $\overline{\Omega}\left(\phi(\A^\T \dscale(\vt{z}))\right)  \leq 1$ (by definition of $\dscale$), we obtain
\begin{equation}
    \forall \vt{z} \in  \dom(D_\lambda), \; \dscale(\vt{z}) \in \Delta_\A,
\end{equation}
where we recall that $\Delta_{\A} = \{\vtheta \in \R^m ~|~  \overline{\Omega}(\phi(\A^\T \vtheta)) \leq 1\} \cap \dom(D_\lambda)$.
%or equivalently (from \Cref{prop:dual_norm_separable}), $\Delta_{\A} = \{\vtheta \in \R^m ~|~  \overline{\Omega}_g(\phi(\A_{g}^\T \vtheta)) \leq 1, ~\forall g \in \s{G}\}$ 

To complete the proof, it remains to show that, for any primal point $\x \in \dom(P_\lambda)$, $(-\nabla F(\A\x)/\lambda) \in \dom(D_\lambda)$. Because $D_\lambda = - F^*(-\lambda \cdot)$, we get that $\vtheta \in \dom(D_\lambda) \, \Longleftrightarrow \, -\lambda \vtheta \in \dom(F^*)$. Now, let $\x \in \dom (P_\lambda)$, then
\begin{equation}
    F^*(\nabla F(\A\x)) = \sup_{\vt{z}\in \R^m} \left<\vt{z},\nabla F(\A\x) \right> - F(\vt{z}) = \left<\A\x,\nabla F(\A\x) \right> - F(\A\x) < \infty.
\end{equation}
(The sup is attained for vector(s)  $\vt{z}\in \R^m$ such that $\nabla F(\A\x) - \nabla F(\vt{z}) = 0$, such as $\vt{z} = \A\x$.) This shows that $\nabla F(\A\x) \in \dom(F^*)$ and thus that $(-\nabla F(\A\x)/\lambda) \in \dom(D_\lambda)$.

%proof convergence
Lastly, assuming that $F \in C^1$, we get by continuity of $\nabla F(\A\x)$ that $\nabla F(\A\x) \rightarrow \nabla F(\A\x^\star)$ as $\x \rightarrow \x^\star$ and, from the primal-dual link in optimality condition \eqref{eq:GLM_optimality_condition1} we have that $-\nabla F(\A\x^\star) = \lambda \vtheta^\star$. 
%$$-\nabla F(\A\x) \underset{\x \rightarrow \x^\star}{\rightarrow} -\nabla F(\A\x^\star)  \underset{\eqref{eq:KL_optimality_condition1}}{=} \lambda \vtheta^\star. $$
%
%and given that
Then, because $\vtheta^\star \in \Delta_\A$,  the scaling factor in $\dscale$ is equal to 1 (by definition) which leads to $\dscale(-\nabla F(\A\x^\star)/\lambda) = -\nabla F(\A\x^\star)/\lambda = \vtheta^\star$. 
% and given that  $\vtheta^\star$ is feasible we have $\dscale(\frac{-\nabla F(\A\x^\star)}{\lambda}) =\dscale(\vtheta^\star)  = \vtheta^\star$
%
Finally, by continuity of $\dscale$ (which is a composition of continuous functions), we conclude that $\dscale(-\nabla F(\A\x)/\lambda) \rightarrow \vtheta^\star$ as $\x \rightarrow \x^\star$.
%
%%CONTINUITY OF \dscale
% Actually, the continuity of $\dscale$ is not trivial, requires further argumentation. First, assuming the linear operator $\A$ to be bounded then it is continuous. The operator $\phi$ is continuous (either ReLU or identity) and $\overline{\Omega}$ is also continuous since it is a norm.  The composition of continuous functions is also continuous. Then, the max between two continuous functions is also continuous (one of them is actually a constant here). Finally, the quotient of two continuous functions is also a continuous function provided that the denominator is not equal to zero (which is the case here, as the denominator is always greater or equal to one).
%
%https://proofwiki.org/wiki/Norm_is_Continuous
%bounded linear operator <=> continuous : https://en.wikipedia.org/wiki/Continuous_linear_operator
\end{proof}

\subsection{Proof of Proposition~\ref{prop:strong_concavity}} \label{app:prop:strong_concavity}

\textit{Statement.~}
Assume that $D_\lambda$ given in Theorem~\ref{thm:dual_GLM_and_optimality} is twice differentiable. Let $\s{S} \in \R^m$ be a convex  set  and $\s{I} = \{ i \in [m]: \forall (\vtheta,\vtheta') \in \s{S}^2, \theta_i = \theta_i'\}$ %(potentially empty)
a (potentially empty) set of coordinates in which $\s{S}$ reduces to a singleton.
Then, $D_\lambda$ is $\alpha_{\s{S}}$-strongly concave on $\s{S}$ if and only if
% the biggest eigenvalue of $\nabla^2 D_\lambda$ is majorized by $-\alpha_{\s{S}}$: , i.e.
\begin{align}
0 < \alpha_{\s{S}} \leq %\alpha^\star_{\s{S}} =
% - \max_{i \in [m]} \sup_{\vtheta \in \s{S}} \sigma_i (\vthetai) =
\min_{i \in \s{I}^\complement} \;- \sup_{\vtheta \in \s{S}} \;  \sigma_i(\theta_i),
\end{align}
where $\sigma_i(\theta_i) = -\lambda^2 (f_i^*)''(-\lambda\vthetai)$ is the (negative) i-th eigenvalue of the Hessian matrix $\nabla^2  D_\lambda(\vtheta)$.
\begin{proof}
First of all, let us recall that, from Theorem~\ref{thm:dual_GLM_and_optimality}, $D_\lambda(\vtheta)= -\sum_i f_i^*(-\lambda\vthetai)$.
Then by definition of strong concavity, $D_\lambda$ is $\alpha$-strongly concave on a set $\s{S} \subset \R^m$ if and only if, $\forall (\vtheta^\prime,\vtheta) \in \s{S}^2$,
\begin{align}
& \; D_\lambda(\vtheta) \leq D_\lambda(\vtheta^\prime) + \langle \nabla D_\lambda(\vtheta^\prime), \vtheta - \vtheta^\prime \rangle - \frac{\alpha^2}{2}\|\vtheta - \vtheta^\prime\|_2^2 \\
\Longleftrightarrow & \; D_\lambda |_{\s{I}^\complement} (\vtheta_{\s{I}^\complement})  \leq D_\lambda |_{\s{I}^\complement} (\vtheta'_{\s{I}^\complement}) +\langle \nabla D_\lambda |_{\s{I}^\complement} (\vtheta'_{\s{I}^\complement}), \vtheta_{\s{I}^\complement} - \vtheta^\prime_{\s{I}^\complement}  \rangle -  \frac{\alpha^2}{2} \|\vtheta_{\s{I}^\complement} - \vtheta'_{\s{I}^\complement}\|_2^2
\end{align}
where, for $\vt{u} \in \R^{|\s{I}^\complement|}$, $D_\lambda |_{\s{I}^\complement} (\vt{u}) = - \sum_{i \in \s{I}^\complement}  f_i^*(-\lambda u_i)$ is the restriction of $D_\lambda$ to the coordinates that belong to $\s{I}^\complement$. The second inequality has been obtained from the definition of $\s{I}$ which implies that $\forall (\vtheta^\prime,\vtheta) \in \s{S}^2$, $\vthetai = \vthetai'$ for all $i \in \s{I}$. Hence, we have that  $D_\lambda$ is $\alpha$-strongly concave on $\s{S}$ if and only if its restriction $D_\lambda |_{\s{I}^\complement}$ is $\alpha$-strongly concave on $\s{S}|_{\s{I}^\complement} = \{ \vt{u} \in \R^{|\s{I}^\complement|} : \exists \vtheta \in \s{S} \text{ such that } \vtheta_{\s{I}^\complement} = \vt{u}\}$. 
From  \citet[Chapter IV, Theorem 4.3.1 and Remark 4.3.2]{Hiriart-Urruty1993}, as $\mathrm{int}(\s{S} |_{\s{I}^\complement}) \neq \emptyset$, the latter is equivalent to
\begin{equation}
\alpha \leq \min_{i \in \{1,\ldots,|\s{I}^\complement|\}} \;- \sup_{\vt{u} \in \s{S}|_{\s{I}^\complement}} \;  \sigma_i(\nabla^2 D_\lambda |_{\s{I}^\complement}(\vt{u}))
\end{equation}
where $ \sigma_i(\nabla^2 D_\lambda |_{\s{I}^\complement}(\vt{u}))$ denotes the i-th eigenvalue of the Hessian matrix $(\nabla^2 D_\lambda |_{\s{I}^\complement}(\vt{u})) \in \R^{|\s{I}^\complement|\times |\s{I}^\complement|}$. By definition of $D_\lambda |_{\s{I}^\complement}$, we have that for $\vtheta \in \s{S}$,  $\nabla^2 D_\lambda |_{\s{I}^\complement}(\vtheta_{\s{I}^\complement}) = [\nabla^2 D_\lambda ]_{\s{I}^\complement,\s{I}^\complement}(\vtheta)$
where 
\begin{equation}
    \nabla^2  D_\lambda(\vtheta) = - \lambda^2 \diag{ (f_1^*)''(-\lambda\theta_1), \dots,  (f_m^*)''(-\lambda\theta_m) }.
\end{equation}
Then, it follows that 
\begin{equation}
\alpha \leq \min_{i \in \s{I}^\complement} \;- \sup_{\vtheta \in \s{S}} \;  \sigma_i(\vthetai),
\end{equation}
where $\sigma_i(\vthetai) = -\lambda^2 (f_i^*)''(-\lambda\theta_i)$. This completes the proof.
%
% Restriction of D_\lambda - Déf
% Denoting $D_\lambda |_{\s{I}^\complement}: \R^{|\s{I}^\complement|} \to \R$ the restriction of $D_\lambda: \R^{m} \to \R$  to coordinates $\s{I}^\complement$, defined as follows:
% \begin{align}
%     D_\lambda |_{\s{I}^\complement}:~ & \R^{|\s{I}^\complement|} \to \R \nonumber \\
%     & \vtheta_{\s{I}^\complement} \mapsto D_\lambda(\vtheta )
% \end{align}
% i.e., $D_\lambda |_{\s{I}^\complement}$ is such that for all $\vtheta \in \s{S}$ we have $D_\lambda |_{\s{I}^\complement}(\vtheta_{\s{I}^\complement}) = D_\lambda (\vtheta)$.
% We also have that $\nabla D_\lambda |_{\s{I}^\complement}(\vtheta_{\s{I}^\complement}) = [\nabla D_\lambda(\vtheta)]_{\s{I}^\complement}$.
\end{proof}

\section{Particular Cases of Section~\ref{sec:Particular_cases}: Calculation Details}\label{apdx:Particular_cases}

In this appendix, we provide details on the derivation of the quantities that are summarized in Table~\ref{tab:data-fidelity_cases}.

\subsection{Quadratic Distance (\texorpdfstring{$\beta$}{beta}-Divergence with \texorpdfstring{$\beta =2$}{beta=2})}
\label{ssec:example_Euc}

%Restrict to the Lasso? Or can I generalize to its variants?
This corresponds to the standard Lasso problem (when combined with an $\ell_1$-norm regularization) and its group-variants.
The data-fidelity term in this case is:
%%F and f_i
\begin{align}
F(\A\x) = \frac{1}{2}\|\y -  \A\x\|_2^2 && f_i(\Axi) = \frac{1}{2}(\yi -  \Axi)^2
\end{align}
%
%%FIRST DERIVATIVE
with gradient given by
\begin{align}
\nabla F(\A\x) = \A\x - \y  &&  f'_i(\Axi) = \Axi - \yi.
\end{align}

%%OPTIMALITY CONDITION (FIRST)
We then deduce from Theorem~\ref{thm:dual_GLM_and_optimality}
%we get 
that the first-order optimality condition \eqref{eq:GLM_optimality_condition1} (primal-dual link) is given by:
\begin{align}
\lambda \vtheta^\star = \y -  \A\x^\star   
&&
\lambda \vthetai^\star =  \yi - \Axi^\star  , ~~ \forall i \in [m] 
\end{align}

%%MAXIMUM REGULARIZATION
The maximum regularization parameter $\lambda_{\max}$ is obtained by substituting $\nabla F(\vt{0}) = -\y$ in \eqref{eq:max_regularization}:
\begin{align}
    % \nabla F(\vt{0}) = -\y
    \lambda_{\max} = \|\A^\T \y\|_{\infty}
\end{align}

\begin{comment}
%%SECOND DERIVATIVE
Since the function $F$ (resp. $f_i$) is smooth, we can compute the Hessian (resp. second derivative):
\begin{align}
\nabla^2 F(\A\x) = \diag{1, \dots, 1}  &&  f''_i(\Axi) = 1
\end{align}
%%GLOBAL LIPSCHITZ - STRONG CONCAVITY
Note that $F$ (resp. $f_i$) has a Lipschitz-continuous gradient with constant $1/\gamma=1$ (resp. $1/\gamma_i=1$). From proposition \ref{prop:Lipschitz_gradient_strong_concavity}, we conclude that $D_\lambda$ is strongly concave with global constant $\alpha=\lambda^2$.
\end{comment}

\begin{comment}
%%Fenchel conjugate
The Fenchel-Legendre transform (convex conjugate) $F^*(\vt{u})=\sum_i f^*_i(u_i)$:
\begin{align}
 F^*(\vt{u}) = \frac{\|\y  +  \vt{u}\|_2^2 - \|\y\|_2^2}{2}  &&  f^*_i(u_i) = \frac{(\yi + u_i)^2 - \yi^2}{2}.
\end{align}
\end{comment}

%%DUAL FUNCTION
The dual function $D_\lambda(\vtheta) = -\sum_{i=1}^m f^*_i(-\lambda \vthetai)$ is given by:
\begin{align}
D_\lambda(\vtheta) =  \frac{\|\y\|_2^2 - \|\y  - \lambda \vtheta\|_2^2  }{2}
= \sum_{i=1}^m  \frac{\yi^2 - (\yi  - \lambda \vthetai)^2  }{2}.
\end{align}
with $\dom (D_\lambda) = \R^m$.
%
%%DUAL FEASIBLE SET
Then, we get from Theorem~\ref{thm:dual_GLM_and_optimality}, $\s{C} = \R^n$, and~\eqref{eq:DualL1} (dual norm of the $\ell_1$-norm)  that
the dual feasible set is given by:
\begin{align}
  \Delta_\A = \{\vtheta \in \R^m ~|~ \|\A^\T\vtheta\|_\infty  \leq 1 \}.
\end{align}

%%DUAL - SECOND DERIVATIVE
% The Hessian $\nabla^2 D_\lambda(\vtheta)$ is given by
% \begin{align}
%     \nabla^2 D_\lambda(\vtheta) = -\lambda^2 \diag{1, \dots, 1}
% \end{align}
The Hessian $\nabla^2 D_\lambda(\vtheta)$  and corresponding eigenvalues $\sigma_i(\vthetai)$ are given by
\begin{align}
    \nabla^2 D_\lambda(\vtheta) = \diag{[\sigma_i(\vthetai)]_{i \in [m]}},
    &&
    \sigma_i\left( \vthetai \right)  = -\lambda^2
\end{align}
Hence, from \Cref{prop:strong_concavity} we get that
%and one can see that
$D_\lambda$ is strongly concave on $\R^m$ with global constant $\alpha=\lambda^2$.

The fact that the dual function % $f_i^*$ 
is quadratic implies that its second derivative is a constant.
Hence, the local strong concavity bound on any subset of $\R^m$ is also $\lambda^2$.
The proposed local approach thus reduces to the standard Gap Safe screening in this particular case.

%%GENERALIZED RESIDUAL -> FEASIBLE POINT
\paragraph{Dual Update.}
The residual w.r.t. a primal estimate $\x$ is given by
$\vt{\rho}(\x) = -  \nabla F(\A\x) =  \y - \A\x$.
%
% and, the dual feasible point $\vt{\Theta}(\x) \in \Delta_\A$ in equation \eqref{eq:dual_scaling_proposed}, becomes: %i.e. the Gap Safe sphere center, at iteration $t$ given the primal estimate $\x_t$ 
Because $\SO = \R^m$, given any primal feasible point $\x \in \R^n \left(= \s{C} \cap \dom (P_\lambda)\right)$, we get from Lemma~\ref{lem:dual_scaling} that 
\begin{align}\label{eq:Quadratic_dual_point}
\vt{\Theta}(\x) = \dscale\left( (\y - \A\x)/\lambda \right).
\end{align}
is such that $\vt{\Theta}(\x) \in \Delta_\A$. Moreover, $\vt{\Theta}(\x) \rightarrow \vtheta^\star$ as $\x \rightarrow \x^\star$.

%%IMPROVED SCREENING TEST CONSIDERING \SO
%Does not apply since $\SO = \R^m$.

\subsection{Logistic Regression} \label{ssec:example_LogReg}

%This section is based in \citet[Section 5.4]{Ndiaye2017}
We consider the two-class logistic regression as formulated in \citet[Chapter 3]{Buhlmann2011}.
The observation vector entries $\yi \in \{0,1\}$ are binary class labels.

%%Labels {-1,+1} - alternative
%Remark: To switch from our formulation with labels $\y\in \{0,1\}^m$ to another common formulation with $\y\in \{-1,+1\}^m$, one can use the mapping $\tilde{\y} = 2\y - 1$.

The data-fidelity term in this case is:
%%F and f_i
\begin{align}\label{eq:LogReg_data_fidelity}
F(\A\x) = \vt{1}^\T\log\left(1 + \e^{\A\x}\right) - \y^\T\A\x 
&& f_i(\Axi) = \log\left(1 + \e^{\Axi}\right) - \yi\Axi 
\end{align}
and $\s{C} = \R^n$.
%
%%FIRST DERIVATIVE
The gradient of $F$ is given by
\begin{align}
\nabla F(\mathbf{z}) = \frac{\e^{\mathbf{z}}}{1+\e^{\mathbf{z}}} - \y
&&  f'_i(z_i) = \frac{\e^{z_i}}{1+\e^{z_i}} - \yi
\end{align}
%
%%OPTIMALITY CONDITION (FIRST)
We then deduce from Theorem~\ref{thm:dual_GLM_and_optimality} that the first-order optimality condition \eqref{eq:GLM_optimality_condition1} (primal-dual link) is given by:
%The first optimality condition therefore gives:
\begin{align}
\lambda \vtheta^\star = \y - \frac{\e^{\A\x^\star}}{1+\e^{\A\x^\star}} 
&&
\lambda \vthetai^\star = \yi - \frac{\e^{[\A\x^\star]_i}}{1+\e^{[\A\x^\star]_i}} , ~~ \forall i \in [m] 
\end{align}

%%MAZIMUM REGULARIZATION
The maximum regularization parameter $\lambda_{\max}$ is obtained by substituting $\nabla F(\vt{0}) = \frac{1}{2}-\y$ in \eqref{eq:max_regularization}:
\begin{align}
    % \nabla F(\vt{0}) = \frac{1}{2}-\y
    \lambda_{\max} = \left\lVert \A^\T \left(\y-1/2\right)\right\rVert_\infty
\end{align}

%%SECOND DERIVATIVE
\begin{comment}
Since the function $F$ (resp. $f_i$) is smooth, we can compute the Hessian (resp. second derivative):
\begin{align}
\nabla^2 F(\A\x) = \diag{\left[  \frac{\e^{\Axi}}{(1+\e^{\Axi})^2} \right]_i}  
&&  f''_i(\Axi) = \frac{\e^{\Axi}}{(1+\e^{\Axi})^2}
\end{align}
%%GLOBAL LIPSCHITZ - STRONG CONCAVITY
Note that $F$ (resp. $f_i$) has a Lipschitz-continuous gradient with constant $1/\gamma=1/4$ (resp. $1/\gamma_i=1/4$). 
Indeed, $\max_{\xi} f''_i(\xi) = f''_i(0) = 1/4$. %is attained for $\Axi=0$
%
%STRONG CONCAVITY FROM LIPSCHITZ
From \Cref{prop:Lipschitz_gradient_strong_concavity}, we conclude that $D_\lambda$ is strongly concave with global constant $\alpha=4\lambda^2$.
\end{comment}

%%Fenchel conjugate
\begin{proposition}\citep{Ndiaye2017}
The Fenchel conjugate of the logistic regression data term in equation \eqref{eq:LogReg_data_fidelity} is given by $F^*(\vt{u}) = \sum_{i=1}^m f_i^*(u_i)$ where
\begin{align}
f_i^*(u) = (y_i+u)\log(y_i+u) + (1-y_i-u)\log(1-y_i-u) 
\end{align}
%is the so-called (binary) entropy function
with $\dom(f_i^*) = [-y_i,1-y_i]$. 
\end{proposition}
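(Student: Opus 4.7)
My plan is to establish the result by direct computation, exploiting the coordinate-wise separability of $F$ which, by Proposition~\ref{app:prop:Fenchel_separable_sum} already invoked earlier, immediately reduces the problem to computing $f_i^*$ for a single coordinate. By definition of the Fenchel conjugate,
\begin{equation*}
f_i^*(u) = \sup_{z \in \R}\; (u + y_i)z - \log(1 + e^{z}).
\end{equation*}
So the whole task reduces to a one-variable unconstrained concave maximization problem.

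The next step is to determine the effective domain. I would analyze the asymptotic behavior of the objective in $z$: as $z \to +\infty$, the objective behaves like $(u + y_i - 1)z$, and as $z \to -\infty$, it behaves like $(u + y_i)z$. Hence the supremum is finite if and only if $u + y_i \geq 0$ and $u + y_i \leq 1$, i.e., $u \in [-y_i, 1 - y_i]$. Outside this interval the supremum is $+\infty$, so those points do not belong to $\dom(f_i^*)$.

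For $u$ strictly inside the interval, the objective is strictly concave and smooth in $z$, so the maximizer $z^\star$ is characterized by the first-order condition $(u + y_i) - e^{z^\star}/(1 + e^{z^\star}) = 0$, which inverts to
\begin{equation*}
z^\star = \log\!\left(\frac{u + y_i}{1 - u - y_i}\right).
\end{equation*}
Substituting back and using $1 + e^{z^\star} = 1/(1 - u - y_i)$, the two logarithmic terms combine to yield exactly $(y_i + u)\log(y_i + u) + (1 - y_i - u)\log(1 - y_i - u)$, as claimed. The boundary cases $u = -y_i$ and $u = 1 - y_i$ are handled by taking limits with the standard convention $0 \log 0 = 0$, and one verifies by continuity that the supremum is still attained (as a limit) with the same closed-form value.

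None of the steps presents a real obstacle: the Legendre transform of $\log(1 + e^z)$ is a textbook computation (it is the negative binary entropy), so the main care is just in carefully delimiting the domain and checking the boundary behavior. I would present it compactly in that order: separability, domain analysis via asymptotics, first-order optimality, substitution, boundary by continuity.
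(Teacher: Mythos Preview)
Your proof is correct. The paper itself does not provide a proof of this proposition: it is stated with a citation to \citet{Ndiaye2017} and used as a known result, so there is no ``paper's own proof'' to compare against. Your direct computation---separability, asymptotic analysis to determine the domain, first-order optimality to find $z^\star$, substitution, and continuity at the boundary---is the standard derivation and is complete.
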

%
%Using the entropy function
\begin{comment}
%This is given in \citet{Ndiaye2017} with no demonstration.
The Fenchel-Legendre transform (convex conjugate) $F^*(\vt{u})=\sum_i f^*_i(u_i)$: %\citep{Ndiaye2017} (gives the result directly, I think)
\begin{align}
 F^*(\vt{u}) = - \sum_i \h(\yi + u_i) 
 &&  f^*_i(u_i) = -\h(\yi + u_i) 
\end{align}
%
Denoting $\h: [0, 1] \to \R$ the (binary) entropy function defined by: % Im(h) = [0, 1]
\begin{align} \label{eq:entropy_function}
    \h(z) = \left\lbrace 
    \begin{array}{ll}
        -z\log(z) - (1-z)\log(1-z) &  \text{if } z \in [0,1] \\
        -\infty & \text{otherwise}
    \end{array}\right.
\end{align}
%once again using the convention $0\log(0)=0$.
\end{comment}

%%DUAL FUNCTION
The dual function $D_\lambda(\vtheta) = -\sum_{i=1}^m f^*_i(-\lambda \vthetai)$ is given by
%the so-called (binary) entropy function~\citep{Ndiaye2017}:
\begin{align} \label{eq:LogReg_dual_function}
D_\lambda(\vtheta) 
%=  \sum_i \h(\yi -\lambda \vthetai) 
%= - \sum_i  (\yi -\lambda \vthetai) \log(\yi -\lambda \vthetai) + (1-\yi + \lambda \vthetai)\log(1-\yi + \lambda \vthetai)
= -(\y -\lambda \vtheta)^\T \log(\y -\lambda \vtheta) - (1-\y + \lambda \vtheta)^\T\log(1-\y + \lambda \vtheta)
\end{align}
with $\dom(D_\lambda) = \{\vtheta ~|~ \y-1 \leq \lambda\vtheta \leq \y\}$. It is also known as the (binary) entropy function.
%%DUAL FEASIBLE SET
%The dual feasible set, taking into account the domain of the dual function, is given by:
Then, we get from Theorem~\ref{thm:dual_GLM_and_optimality}, $\s{C} = \R^n$, and~\eqref{eq:DualL1} (dual norm of the $\ell_1$-norm)  that 
\begin{align}
  \Delta_\A = \{\vtheta \in \R^m ~|~ \|\A^\T\vtheta\|_\infty  \leq 1, \y-1 \leq \lambda\vtheta \leq \y \}
\end{align}

\begin{comment}
%%DUAL FUNCTION - FIRST DERIVATIVE
The first derivative of $D_\lambda$ is gives the so-called  $\logit$ function, $\logit(z) = \log(\frac{z}{1-z})$: 
\begin{align} 
\nabla D_\lambda(\vtheta) = \lambda \left[  \log\left( \frac{\yi -\lambda \vthetai}{1-\yi + \lambda \vthetai}\right) \right]_i
:= \lambda \left[ \logit(\yi -\lambda \vthetai) \right]_i
\end{align}
\end{comment}

%%DUAL - SECOND DERIVATIVE
% The Hessian $\nabla^2 D_\lambda(\vtheta)$ is given by
% \begin{align}
%     \nabla^2 D_\lambda(\vtheta) &= -\lambda^2 \diag{ \left[ \frac{1}{(\yi -\lambda \vthetai)(1 -\yi + \lambda \vthetai)} \right]_i } \\ \nonumber
%     &= -\lambda^2 \diag{ \left[ \frac{4}{1 -4(\lambda\vthetai - \yi + \frac{1}{2})^2} \right]_i }
%     \quad \lambda \vthetai \in (\yi-1,\yi), ~\forall i \in [m]
% \end{align}

The Hessian $\nabla^2 D_\lambda(\vtheta)$  and corresponding eigenvalues $\sigma_i(\vthetai)$ are given by
\begin{align}
    \nabla^2 D_\lambda(\vtheta) = \diag{[\sigma_i(\vthetai)]_{i \in [m]}},
    &&
    \sigma_i\left( \vthetai \right)  &= - \frac{\lambda^2}{(\yi -\lambda \vthetai)(1 -\yi + \lambda \vthetai)} \\
    &&&=-\frac{4\lambda^2}{1 -4(\lambda\vthetai - \yi + \frac{1}{2})^2} 
\end{align}

The eigenvalues $\sigma_i(\vthetai)$ %(directly the diagonal entries) 
are all negative with maximum value $-4\lambda^2$ attained at $\lambda \vthetai = (2\yi -1)/2$.
Therefore, the dual function is strongly concave on $\dom(D_\lambda)$ with a global constant 
$$\alpha = 4\lambda^2.$$

%%LOCAL APPROACH
%\paragraph{Local Gap Safe approach}
Let us now examine the dual function restricted to some particular subsets $\s{S}$ of the domain $\dom (D_\lambda)$.

\subsubsection{Strong-Concavity Bound on \texorpdfstring{$\Delta_\A \cap \SO$}{the Feasible Set}}

In this section we evaluate the strong concavity of $D_\lambda$ on the set $\s{S}=\Delta_\A$ or equivalently $\s{S}=\Delta_\A \cap \SO$ with $\SO = \R^m$.

\begin{proposition} \label{prop:LogReg_alpha_fixed}
Assuming that $\rank(\A)=\min(m,n)$,
the dual function $D_\lambda$  as defined in \eqref{eq:LogReg_dual_function} is $\alpha_{\Delta_\A}$ strongly concave on $\Delta_\A$ with constant:
\begin{align}\label{eq:LogReg_alpha_fixed}
\alpha_{\Delta_\A} 
%Alternate forms
% &= \frac{\lambda^2}{\min(\lambda\|\A^\dagger\|_1,\frac{1}{2})(1 - \min(\lambda \|\A^\dagger\|_1,\frac{1}{2}))}  \\
% &= \left\lbrace
%     \begin{array}{cc}
%          \frac{\lambda}{\|\A^\dagger\|_1 (1-\lambda \|\A^\dagger\|_1)} & \text{if } \lambda \|\A^\dagger\|_1 <  \frac{1}{2}\\
%          4 \lambda^2 & \text{otherwise} %\frac{\lambda^2}{\frac{1}{2}(1 - \frac{1}{2})} 
%     \end{array}\right. \\
&= \frac{4\lambda^2}{1 - 4 \left(\min(\lambda\|\A^\dagger\|_1,\frac{1}{2}) - \frac{1}{2} \right)^2} 
\end{align}
where $\A^\dagger$ denotes the right pseudo-inverse of $\A$ such that $\A\A^\dagger = \Id$,
and $\|\A^\dagger\|_1$ is the maximum absolute column sum of $\A^\dagger$.
\\ Moreover, the local bound $\alpha_{\Delta_\A}$ in \eqref{eq:LogReg_alpha_fixed} improves upon the global bound $\alpha = 4\lambda^2$ for all $\lambda < \frac{1}{2\|\A^\dagger\|_1}.$
\end{proposition}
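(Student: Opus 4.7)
My plan is to apply Proposition~\ref{prop:strong_concavity} with $\s{S}=\Delta_{\A}$ and $\s{I}=\emptyset$, which reduces the problem to computing $\min_{i\in[m]}\, -\sup_{\vtheta\in\Delta_{\A}}\sigma_i(\vthetai)$. Using the explicit form already given in Table~\ref{tab:data-fidelity_cases}, namely $\sigma_i(\vthetai)=-4\lambda^{2}/(1-4u_i^{2})$ with $u_i:=\lambda\vthetai-\yi+\tfrac12$, this further reduces to lower-bounding $u_i^{2}$ uniformly over $\vtheta\in\Delta_{\A}$. Since $\yi\in\{0,1\}$, we have $|\yi-\tfrac12|=\tfrac12$, so the task becomes: control $|\lambda\vthetai|$ uniformly on $\Delta_{\A}$ and combine with the triangle inequality.

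The key step is bounding each coordinate $|\vthetai|$ by the feasibility constraint $\|\A^{\T}\vtheta\|_{\infty}\leq 1$. I would exploit the full-rank assumption $\rank(\A)=\min(m,n)$ to invoke the existence of a right pseudo-inverse $\A^{\dagger}$ satisfying $\A\A^{\dagger}=\Id$. Then, writing $\vthetai=\vt{e}_i^{\T}\vtheta=(\A\A^{\dagger}\vt{e}_i)^{\T}\vtheta=(\A^{\dagger}\vt{e}_i)^{\T}(\A^{\T}\vtheta)$ and applying Hölder's inequality gives $|\vthetai|\leq \|\A^{\dagger}\vt{e}_i\|_{1}\|\A^{\T}\vtheta\|_{\infty}\leq\|\A^{\dagger}\|_{1}$, where $\|\A^{\dagger}\|_{1}$ is the maximum absolute column sum of $\A^{\dagger}$.

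Combining these observations by the reverse triangle inequality yields
\begin{equation}
|u_i|=\bigl|\lambda\vthetai-(\yi-\tfrac12)\bigr|\geq \tfrac12-\lambda|\vthetai|\geq \tfrac12-\lambda\|\A^{\dagger}\|_{1},
\end{equation}
which, combined with the trivial bound $|u_i|\geq 0$, gives $u_i^{2}\geq(\tfrac12-\min(\lambda\|\A^{\dagger}\|_{1},\tfrac12))^{2}=(\min(\lambda\|\A^{\dagger}\|_{1},\tfrac12)-\tfrac12)^{2}$. Since this bound is independent of $i$, plugging it into $-\sup \sigma_i$ gives exactly the claimed expression for $\alpha_{\Delta_\A}$ after taking the minimum over $i$. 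For the last statement, the bound $\alpha_{\Delta_\A}$ strictly exceeds $4\lambda^{2}$ if and only if the quantity $(\min(\lambda\|\A^{\dagger}\|_{1},\tfrac12)-\tfrac12)^{2}$ is strictly positive, which is equivalent to $\lambda\|\A^{\dagger}\|_{1}<\tfrac12$, i.e.\ $\lambda<1/(2\|\A^{\dagger}\|_{1})$.

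The main obstacle I anticipate is purely the technical one of justifying the $\A\A^{\dagger}=\Id$ identity under the stated rank hypothesis (which implicitly needs $m\leq n$, otherwise no right inverse exists and the bound must be interpreted vacuously or via a left inverse argument on the restricted coordinates). Everything else reduces to the elementary triangle-inequality/Hölder manipulation outlined above.
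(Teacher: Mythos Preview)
Your proposal is correct and follows essentially the same route as the paper's proof: invoke Proposition~\ref{prop:strong_concavity} with $\s{I}=\emptyset$, bound $\|\vtheta\|_{\infty}$ on $\Delta_{\A}$ via the identity $\vtheta=(\A^{\dagger})^{\T}(\A^{\T}\vtheta)$ and H\"older, and plug this into the eigenvalue expression. The only cosmetic difference is that the paper first simplifies $\sigma_i(\vthetai)$ to $-\lambda^{2}/(\lambda|\vthetai|(1-\lambda|\vthetai|))$ by a case split on $\yi\in\{0,1\}$ before maximizing over $|\vthetai|\leq\|\A^{\dagger}\|_{1}$, whereas you keep the variable $u_i=\lambda\vthetai-\yi+\tfrac12$ and reach the same bound via the reverse triangle inequality; your packaging is arguably a bit cleaner since it avoids the case distinction. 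Your remark about the rank hypothesis implicitly forcing $m\leq n$ for a right inverse to exist is well taken and is indeed glossed over in the paper.
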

\begin{proof}
From \Cref{prop:strong_concavity} (with $\s{I} = \emptyset$) we have to prove that
 \begin{equation}
     \alpha_{\Delta_\A} \leq \min_{i \in [m]} - \sup_{\vtheta \in  \Delta_\A} \sigma_i(\vthetai).
 \end{equation}
where $\sigma_i(\vthetai) =  - \frac{\lambda^2}{(\yi -\lambda \vthetai)(1 -\yi + \lambda \vthetai)}$ is the $i$-th eigenvalue of $\nabla^2 D_\lambda (\vtheta)$.
Given that $\yi \in \{0,1\}$, we can compactly rewrite $\sigma_i(\vthetai)$ as follows:
\begin{align}
     \sigma_i(\vthetai) &= \left\lbrace
    \begin{array}{cc}
         - \frac{\lambda^2}{(1 -\lambda \vthetai)(\lambda \vthetai)} \quad \forall \lambda\vthetai \in (0,1) &\text{  if } \yi = 1\\
         - \frac{\lambda^2}{(-\lambda \vthetai)(1+\lambda \vthetai)}    \quad \forall \lambda\vthetai \in (-1,0) &\text{ if } \yi = 0
    \end{array}
    \right. \\
    & =  - \frac{\lambda^2}{\lambda |\vthetai|(1 -\lambda |\vthetai|)}  \quad \forall |\lambda\vthetai| \in (0,1)
\end{align}

First, note that if $\lambda \|\vtheta\|_\infty \geq 1/2$ then the global maximum of $\sigma_i(\vthetai)$ is attained for some~$i$.
Indeed, the global maximum is attained for $\lambda|\vthetai| = 1/2$:
\begin{align*}
    \lambda\vthetai = \yi - \frac{1}{2} = \left\lbrace
    \begin{array}{cc}
         1/2  & \text{if } \yi = 1\\
         -1/2 & \text{if } \yi = 0
    \end{array}
    \right.
\end{align*}

For $\lambda|\vthetai| \leq 1/2$, one can see that $\sigma_i(\vthetai)$ is increasing with $|\vthetai|$, which implies that
\begin{align}\label{eq:logistic_max_eigenvalue_infty}
    \min_{i \in [m]} - \sup_{\|\vtheta\|_\infty \leq a} \sigma_i(\vthetai)
    = \frac{\lambda^2}{\min(\lambda a,\frac{1}{2})(1 - \min(\lambda a,\frac{1}{2}))} 
    = \left\lbrace
    \begin{array}{cc}
         \frac{\lambda}{a(1 - \lambda a)} & \text{if } \lambda a <  \frac{1}{2}\\
         4 \lambda^2 & \text{otherwise} %\frac{\lambda^2}{\frac{1}{2}(1 - \frac{1}{2})} 
    \end{array}\right.
\end{align}

Now, we can bound $\|\vtheta\|_\infty$ on $\Delta_\A$ as follows, knowing that $\|\A^\T \vtheta\|_\infty \leq 1$ and $\rank(\A)=\min(m,n)$:
\begin{align*}
\|\vtheta\|_\infty &= \max_{i \in [m]} |\vthetai|
= \max_{i \in [m]}  \left|\left[ (\A \A^\dagger)^\T\vtheta \right]_i \right|
= \max_{i \in [m]} |\langle(\A^\dagger)_i, \A^\T\vtheta \rangle| \\
&\leq \max_{i \in [m]} \|(\A^\dagger)_i\|_1 \|\A^\T\vtheta\|_\infty
= \|\A^\dagger\|_1 \|\A^\T\vtheta\|_\infty \leq \|\A^\dagger\|_1
\end{align*}
Hence, we have that $\Delta_\A \subseteq \{\vtheta \in \R^m  ~|~  \|\vtheta \|_\infty \leq  \|\A^\dagger\|_1 \}$ and 
\begin{align*}
     \min_{i\in [m]} - \sup_{\vtheta \in \Delta_\A} \sigma_i(\vthetai) \geq  \min_{i\in [m]} - \sup_{\|\vtheta\|_\infty \leq \|\A^\dagger\|_1} \sigma_i(\vthetai) 
    &= \frac{\lambda^2}{\min(\lambda\|\A^\dagger\|_1,\frac{1}{2})(1 - \min(\lambda\|\A^\dagger\|_1,\frac{1}{2}))}\\
    %Alternate forms
    % &= \left\lbrace
    % \begin{array}{cc}
    %      \frac{\lambda}{\|\A^\dagger\|_1 (1-\lambda \|\A^\dagger\|_1)} & \text{if } \lambda \|\A^\dagger\|_1 <  \frac{1}{2}\\
    %      4 \lambda^2 & \text{otherwise} %\frac{\lambda^2}{\frac{1}{2}(1 - \frac{1}{2})} 
    % \end{array}\right. \\    
    &= \frac{4\lambda^2}{1 - 4 \left(\min(\lambda\|\A^\dagger\|_1,\frac{1}{2}) - \frac{1}{2} \right)^2}    \\
    & =  \alpha_{\Delta_\A},
\end{align*}
where the first equality is obtained by taking $a = \|\A^\dagger\|_1$ in  equation \eqref{eq:logistic_max_eigenvalue_infty}
Finally, one can easily verify that $\alpha_{\Delta_\A} > 4 \lambda^2 \iff \lambda < \frac{1}{2\|\A^\dagger\|_1}$.
\end{proof}

%%Moved to main text
% \begin{remark}
% The above-proposed constant $\alpha_{\Delta_\A}$ improves upon the global constant $\alpha = 4\lambda^2$ for all $$\lambda < \frac{1}{2\|\A^\dagger\|_1}.$$
% \end{remark}

\subsubsection{Strong-Concavity Bound on \texorpdfstring{$\s{B}({\vtheta},r) \cap \SO$}{a Ball}}

%Given a Gap Safe sphere $\s{B}(\vtheta,r)$ (obtained for instance using the strong concavity constant discussed in the previous section), we can now calculate a new local constant $\alpha_{\s{B}(\vtheta,r)}$. %, restricting $D_\lambda$ to the given safe sphere.

\begin{proposition}\label{prop:LogReg_alpha_adaptive}
For any ball $\s{B}(\vtheta,r)$ with $ \vtheta \in \dom(D_\lambda)$, the dual function $D_\lambda$  as defined in \eqref{eq:LogReg_dual_function} is $\alpha_{\s{B}(\vtheta,r)}$ strongly concave on $\s{B}(\vtheta,r)$ with constant:
\begin{align}\label{eq:LogReg_alpha_adaptive}
  \alpha_{\s{B}(\vtheta,r)} 
  &= \frac{4 \lambda^2}{1 - 4 ([\min_i(|\lambda \vthetai - \yi + \frac{1}{2}|) - \lambda r]^+)^2 }
  %
  %Alternate form - using absolute value
  %= \frac{4 \lambda^2}{1 - 4 ([\min_i( \left\lvert \lambda |\vthetai| - \frac{1}{2}\right\rvert)  - \lambda r]^+)^2} \\
  %
  %Alternate form - appears on proof below
  %& = \left\lbrace
  %\begin{array}{ll}
  %     4\lambda^2 &  \text{if } \min_i (|\lambda \vthetai - \yi + \frac{1}{2} |) \leq \lambda r \\
  %     \frac{4 \lambda^2}{1 - 4( \min_i(|\lambda\vthetai - \yi + \frac{1}{2}|) - \lambda r)^2 } & \text{otherwise.} 
  %\end{array} \right.
\end{align}
\end{proposition}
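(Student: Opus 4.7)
The plan is to apply Proposition~\ref{prop:strong_concavity} with $\s{S} = \s{B}(\vtheta, r)$, taking $\s{I} = \emptyset$ (the ball does not reduce to a singleton on any coordinate). The task then reduces to evaluating
\begin{equation*}
\min_{i \in [m]} \; - \sup_{\vtheta' \in \s{B}(\vtheta,r)} \sigma_i(\theta_i'),
\qquad \text{where} \qquad
\sigma_i(\theta_i') = -\frac{4\lambda^2}{1 - 4(\lambda\theta_i' - y_i + \tfrac{1}{2})^2},
\end{equation*}
using the expression for the eigenvalues of $\nabla^2 D_\lambda$ already computed in \Cref{ssec:example_LogReg}.

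Next I would reduce the computation to a one-dimensional problem. Since $\s{B}(\vtheta,r)$ is an $\ell_2$-ball, its projection onto the $i$-th coordinate is exactly the interval $[\theta_i - r,\, \theta_i + r]$, so the supremum of $\sigma_i$ over the ball coincides with the supremum over $|\theta_i' - \theta_i| \leq r$. Observe that $-\sigma_i$ is a strictly increasing function of $|\lambda\theta_i' - y_i + \tfrac{1}{2}|$ on the admissible range, hence maximizing $\sigma_i$ (equivalently, minimizing $-\sigma_i$) amounts to minimizing $|\lambda\theta_i' - y_i + \tfrac{1}{2}|$ over $|\theta_i' - \theta_i| \leq \ r$. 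A direct computation with the triangle inequality yields
\begin{equation*}
\min_{|\theta_i' - \theta_i| \leq r} \; \bigl|\lambda\theta_i' - y_i + \tfrac{1}{2}\bigr|
= \bigl[\,|\lambda\theta_i - y_i + \tfrac{1}{2}| - \lambda r \,\bigr]^{+},
\end{equation*}
since for $|\lambda\theta_i - y_i + \tfrac{1}{2}| \leq \lambda r$ we can attain zero, and otherwise the minimum is $|\lambda\theta_i - y_i + \tfrac{1}{2}| - \lambda r$.

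Finally, substituting back gives $-\sup_{\vtheta' \in \s{B}(\vtheta,r)} \sigma_i(\theta_i') = 4\lambda^2 / \bigl(1 - 4([|\lambda\theta_i - y_i + \tfrac{1}{2}| - \lambda r]^{+})^2\bigr)$. Since $x \mapsto 4\lambda^2/(1 - 4x^2)$ is increasing in $x \geq 0$, the minimum over $i$ is attained at the index minimizing $[|\lambda\theta_i - y_i + \tfrac{1}{2}| - \lambda r]^{+}$. The elementary identity $\min_i [a_i - b]^{+} = [\min_i a_i - b]^{+}$ (valid for $a_i \geq 0$, $b \geq 0$, which I would briefly verify by a two-case argument on whether $\min_i a_i \geq b$) then yields the claimed bound~\eqref{eq:LogReg_alpha_adaptive}. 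No step is really the bottleneck here; the only mild subtlety is ensuring that $\vtheta \in \dom(D_\lambda)$ guarantees the supremum is taken over points where $\sigma_i$ is well-defined, which follows automatically since the formula returns $+\infty$ exactly when the ball escapes $\dom(D_\lambda)$, corresponding to the vacuous bound $\alpha_{\s{B}(\vtheta,r)} = +\infty$ (which is still a valid strong-concavity constant for the intersection with the domain).
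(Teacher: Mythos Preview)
Your proof is correct and follows essentially the same route as the paper's: apply Proposition~\ref{prop:strong_concavity} with $\s{I}=\emptyset$, reduce the coordinate-wise supremum to minimizing $|\lambda\theta_i' - y_i + \tfrac{1}{2}|$ over $[\theta_i - r,\theta_i + r]$, and then push the $\min_i$ inside using monotonicity (the paper does the same computation via an explicit case split on whether $|\lambda\theta_i - a^\star| \gtrless \lambda r$). Your closing remark is slightly off, though: the ball can escape $\dom(D_\lambda)$ while the formula still returns a finite value (indeed $\geq 4\lambda^2$), and this is fine not because the formula blows up but because $\sigma_i \to -\infty$ at the boundary of the domain, so the supremum over $\s{B}(\vtheta,r)\cap\dom(D_\lambda)$ is unaffected by points outside.
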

\begin{proof}
%Aternative 1
% The $i$-th eigenvalue of $D_\lambda$ can be written as follows:
% \begin{align*}
%      \sigma_i(\vthetai)
%     & =  - \frac{\lambda^2}{\lambda |\vthetai|(1 -\lambda |\vthetai|)}  
%      =  - \frac{\lambda^2}{ \frac{1}{4} - (\lambda |\vthetai| - \frac{1}{2})^2}
%      =  - \frac{4\lambda^2}{ 1 - 4(\lambda |\vthetai| - \frac{1}{2})^2}  
%     \quad \forall |\lambda\vthetai| \in (0,1)
% \end{align*}
%
%Alternative 2
% By making a change of variables $z_i = \lambda\vthetai - \yi + \frac{1}{2}$, the $i$-th eigenvalue of $D_\lambda$ writes as follows:
% \begin{align*}
%      \sigma_i(z_i)  =  - \frac{\lambda^2}{(\yi - z_i -\yi + \frac{1}{2})(1 -\yi + z_i +\yi 1 \frac{1}{2})} = - \frac{\lambda^2}{(  \frac{1}{2} - z_i )(\frac{1}{2} + z_i)}
%      = - \frac{4\lambda^2}{1 -4z_i^2}, \quad z_i \in (-1/2,1/2)
% \end{align*}
% which is symmetric w.r.t. the origin $z_i=0$ with $\sigma_i(z_i) = \sigma_i(-z_i) =\sigma_i(|z_i|)$. Besides, its maximum is attained at 
% Going back to $\vthetai$ we obtain:
%
From \Cref{prop:strong_concavity} (with $\s{I} = \emptyset$) we have to prove that
 \begin{equation}
     \alpha_{\s{B}(\vtheta,r)} \leq \min_{i \in [m]} - \sup_{\vtheta' \in \s{B}(\vtheta,r)} \sigma_i(\vthetai').
 \end{equation}
where the $i$-th eigenvalue of $\nabla^2 D_\lambda (\vtheta)$ is given by
\begin{align*}
     \sigma_i(\vthetai)  
     = - \frac{\lambda^2}{ (\yi - \lambda \vthetai)(1 - \yi + \lambda \vthetai)}  
     = - \frac{4\lambda^2}{1 -4(\lambda\vthetai - \yi + \frac{1}{2})^2}, \quad \lambda \vthetai \in (\yi-1,\yi)
\end{align*}
The maximum value $\sup \sigma_i(\vthetai) = 4\lambda^2$ attained at $\lambda \vthetai = \yi -  \frac{1}{2} := a^\star$. 
Also note that $\sigma_i(\vthetai)$ symmetric w.r.t. $a^\star/\lambda$ and a decreasing function w.r.t. $|\vthetai - a^\star/\lambda|$ (i.e. $\sigma_i(\vthetai)$ only decreases as $\lambda\vthetai$ gets further away from $a^\star$). 

Now % when considering $\vtheta' \in \s{B}(\vtheta,r)$, 
we evaluate $\sup_{ \vtheta' \in \s{B}(\vtheta,r) } \sigma_i(\vthetai') = \sup_{ |\vthetai - \vthetai' | \leq r } \sigma_i(\vthetai')$ for the i-th eigenvalue. 
If  $| \vthetai - a^\star/\lambda | < r$, the global maximum is attained as $a^\star/\lambda$ lies inside the interval.
Otherwise, the maximum lies on the border of the interval which is closest to $a^\star/\lambda$, 
i.e. $\vthetai' = \vthetai - r$ if $\vthetai > a^\star/\lambda + r$ and $\vthetai' = \vthetai + r$ if $\vthetai < a^\star/\lambda - r$.
\begin{align*}
    \sup_{ |\vthetai - \vthetai' | \leq r } \sigma_i(\vthetai') &= \left\lbrace
    \begin{array}{ll}
        - 4\lambda^2 &  \text{if } -r < \vthetai - a^\star/\lambda < r\\
        - \frac{4\lambda^2}{1 -4(\lambda(\vthetai-r) - a^\star)^2} & \text{if } \vthetai - a^\star/\lambda \geq r \\
        - \frac{4\lambda^2}{1 -4(\lambda(\vthetai+r) - a^\star)^2} & \text{if }\vthetai - a^\star/\lambda \leq - r \\
    \end{array} \right. \\
    & = \left\lbrace
    \begin{array}{ll}
        - 4\lambda^2 &  \text{if } |\lambda\vthetai - a^\star| < \lambda r\\
        - \frac{4\lambda^2}{1 -4(|\lambda\vthetai - a^\star| - \lambda r)^2} & \text{if } |\lambda\vthetai - a^\star| \geq \lambda r \\
    \end{array} \right.
\end{align*}
Finally, $\alpha_{B(\vtheta,r)}$ is obtained by % taking the maximum over all coordinates $i \in [m]$. 
\begin{align*}
    \alpha_{B(\vtheta,r)} & \leq \min_{i \in [m]} -  \sup_{ |\vthetai - \vthetai' | \leq r } \sigma_i(\vthetai') \\ %= \min_i - \sup_{ |\vthetai - \vthetai' | \leq r } \sigma_i(\vthetai') \\
    & = \left\lbrace
    \begin{array}{ll}
        4\lambda^2 &  \text{if } \min_i (|\lambda\vthetai - a^\star|) < \lambda r\\
        \frac{4\lambda^2}{1 -4(\min_i |\lambda \vthetai - a^\star| - \lambda r)^2} & \text{if } \min_i (|\lambda\vthetai - a^\star|) \geq \lambda r \\
    \end{array} \right. \\
    & = \left\lbrace
    \begin{array}{ll}
       4\lambda^2 &  \text{if } \min_i (|\lambda  \vthetai - \yi + \frac{1}{2} |) \leq \lambda r \\
       \frac{4 \lambda^2}{1 - 4( \min_i(|\lambda \vthetai - \yi + \frac{1}{2}|) - \lambda r)^2 } & \text{otherwise.} 
    \end{array} \right. \\
    &= \frac{4 \lambda^2}{1 - 4 ([\min_i(|\lambda \vthetai - \yi + \frac{1}{2}|) - \lambda r]^+)^2 }
\end{align*}
This completes the proof.
\end{proof}

\subsubsection{Dual Update}
%%GENERALIZED RESIDUAL -> FEASIBLE POINT
The generalized residual w.r.t. a primal estimate $\x$ is given by
$$\vt{\rho}(\x) := -  \nabla F(\A\x) =  \y - \frac{\e^{\A\x}}{1+\e^{\A\x}}.$$ 
%
% the dual feasible point $\vt{\Theta}(\x) \in \Delta_\A$ in equation \eqref{eq:dual_scaling_proposed}, becomes:
Because $\SO = \R^m$, given any primal feasible point $\x \in \s{C} \cap \dom (P_\lambda)$, we get from Lemma~\ref{lem:dual_scaling} that 
\begin{align}\label{eq:LogReg_dual_point}
\vt{\Theta}(\x) = \dscale\left(\frac{1}{\lambda} \left( \y - \frac{\e^{\A\x}}{1+\e^{\A\x}} \right) \right)
\end{align}
is such that $\vt{\Theta}(\x) \in \Delta_\A$. Moreover, $\vt{\Theta}(\x) \rightarrow \vtheta^\star$ as $\x \rightarrow \x^\star$.

%BELONGS TO $\dom(D_\lambda)$
% Note that $\vt{\Theta}(\x) \in \dom(D_\lambda)$. Indeed, $\forall z \in \R$, $\frac{\e^{z}}{1+\e^{z}} \in [0, 1]$ and since the denominator in the dual scaling $\dscale$ (cf. \eqref{eq:dual_scaling_L1}) is always greater or equal to one, we have that $\y - 1 \leq \lambda \vtheta \leq \y$.
%
%%FEASIBLE POINT: considering SO
% Since, in this case, we have  $\SO = \R^m$, the dual point \eqref{eq:LogReg_dual_point} can be  directly used in \Cref{alg:solver_screening_bis}.

%%IMPROVED SCREENING TEST CONSIDERING \SO
%Does not apply since $\SO = \R^m$.

\subsection{\texorpdfstring{$\beta$}{beta}-Divergence with \texorpdfstring{$\beta \in (1, 2)$}{beta (1,2)}} \label{ssec:example_beta}

%%%GENERIC BETA
% For $\beta \in (1,2)$, we define the $\beta$-divergence between scalar variables $u, v \geq 0$ as follows:
% \begin{align} \label{eq:beta_div}
% d_{\beta}(u,v)  = \frac{1}{\beta(\beta-1)} \left( u^\beta + (\beta-1)v^\beta - \beta u v^{\beta -1} \right)
% \end{align}

%%F and f_i
The data fidelity term is given by the $\beta$-divergence between the  input signal $\y \in \R^m_+$ and its reconstruction $\A\x$, i.e.
% $$F(\A\x) = \sum_i  d_{\beta}(\yi,\Axi+\epsilon) = \sum_i f_i(\Axi) $$ % = D_{\beta}(\y,\A\x+\epsilon)
% with:
\begin{align} \label{eq:beta_data_fidelity}
F(\A\x) %= D_{\beta}(\y,\A\x + \epsilon) 
= \frac{1}{\beta(\beta-1)} \left( \|\y\|_{\beta}^{\beta} + (\beta-1)\|\A\x+\epsilon\|_{\beta}^{\beta} - \beta \y^\T (\A\x+\epsilon)^{\beta-1} \right) \\
f_i(\Axi) %= d_{\beta}(\yi,\Axi+\epsilon)  
= \frac{1}{\beta(\beta-1)} \left( \yi^\beta + (\beta-1)(\Axi+\epsilon)^\beta - \beta \yi (\Axi+\epsilon)^{\beta -1} \right)  \label{eq:beta_data_fidelity-coord}
\end{align}
Note that we introduce an $\epsilon$-smoothing factor ($\epsilon >0$) on the second variable.
This is a common practice in the literature~\citep{Harmany2012} to avoid singularities around zero.

%\A\x \geq 0
For the above equations to be well-defined we need that $\A\x + \epsilon \geq 0$ for all $\x \in \s{C}$. We therefore take $\s{C}=\R^n_{+}$. Moreover, we consider that $\A \in \R^{m\times n}_{+}$.
% and assume $\A \in \R^{m\times n}_{+}$. % in this section (and similarly in \Cref{ssec:example_KL}).

%%FIRST DERIVATIVE
The first derivative is given by:
\begin{align}
%\partial F(\vt{z}) = 
\nabla F(\vt{z}) =  (\vt{z} + \epsilon)^{\beta-2}(\vt{z} + \epsilon - \y)
& &
%\partial f_i(z_i) = 
f'_i(z_i) = (z_i + \epsilon)^{\beta-2}(z_i + \epsilon - \yi)
\end{align}

%%FIRST OPTIMALITY CONDITION
%First optimality condition gives:
We then deduce from Theorem~\ref{thm:dual_GLM_and_optimality} that the first-order optimality condition \eqref{eq:GLM_optimality_condition1} (primal-dual link) is given by:
\begin{align} \label{eq:beta_optimality_condition}
%\lambda \vtheta^\star = (\A\x^\star + \epsilon)^{\beta-2}(\y-\A\x^\star-\epsilon)  
%&&
\lambda \vthetai^\star =   ([\A\x^\star]_i + \epsilon)^{\beta-2}(\yi -[\A\x^\star]_i - \epsilon) , ~~ \forall i \in [m] 
\end{align}

%%MAXIMUM REGULARIZATION
The maximum regularization parameter $\lambda_{\max}$ is obtained by substituting $\nabla F(\vt{0}) = \epsilon^{\beta-2}(\epsilon -\y)$ in \eqref{eq:max_regularization}:
\begin{align}
    \lambda_{\max} = \epsilon^{\beta-2} \max(\A^\T (\y-\epsilon))
\end{align}

\begin{proposition} \label{prop:beta_fenchel}
The Fenchel conjugate of the $\beta$-divergence data-fidelity function  defined  in \eqref{eq:beta_data_fidelity}  is given by $F^*(\vt{u}) = \sum_{i=1}^m f_i^*(u_i)$ where
\begin{align} \label{eq:beta_fenchel}
 %F^*(\vt{u}) = \sum_i f^*_i(u_i) &&  
 f^*_i(u_i) = \left\{
 \begin{array}{lr}
  -\epsilon u_i, & ~\text{if } ~ \yi = 0, u_i \leq 0 \\
   u_i \hat{z_i} - f(\hat{z_i}) , & ~\text{otherwise} \\ %d_\beta(\yi, \hat{z_i}+\epsilon)
 \end{array} \right. 
\end{align}
with $\hat{z_i} \geq 0$ such that
\begin{align}\label{eq:eq_zhat_Fench_betaDiv}
    (\hat{z_i} + \epsilon)^{\beta-2}(\hat{z_i} + \epsilon - \yi) = u_i
\end{align}
\end{proposition}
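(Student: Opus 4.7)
The plan is to reduce the $m$-dimensional supremum to $m$ independent scalar suprema via the separable sum property, and then to analyze each scalar Fenchel conjugate by Fermat's rule, paying careful attention to the boundary of $\mathrm{dom}(f_i)$.

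First, since $F(\vt{z}) = \sum_{i=1}^m f_i(z_i)$ is coordinate-separable, Proposition~\ref{app:prop:Fenchel_separable_sum} immediately gives $F^*(\vt{u}) = \sum_{i=1}^m f_i^*(u_i)$. So the problem reduces to evaluating, for each $i$, the scalar supremum $f_i^*(u_i) = \sup_{z \in \mathrm{dom}(f_i)} u_i z - f_i(z)$, where from the expression \eqref{eq:beta_data_fidelity-coord} with $\beta \in (1,2)$ we see that $f_i$ is real-valued only when $z + \epsilon \geq 0$; hence $\mathrm{dom}(f_i) = [-\epsilon, \infty)$, with $f_i$ convex and differentiable on the interior.

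Next, I would use the fact that the supremum over a convex differentiable function is attained either at an interior critical point satisfying $f_i'(\hat{z}) = u_i$, or at the boundary point $z = -\epsilon$. Using $f_i'(z) = (z+\epsilon)^{\beta-2}(z+\epsilon - y_i)$, I would set $w = \hat{z}+\epsilon \geq 0$ and rewrite the optimality equation \eqref{eq:eq_zhat_Fench_betaDiv} as $g(w) := w^{\beta-1} - y_i w^{\beta-2} = u_i$. A short computation gives $g'(w) = (\beta-1) w^{\beta-2} + y_i (2-\beta) w^{\beta-3} > 0$ for all $w > 0$ (both coefficients are positive since $\beta \in (1,2)$ and $y_i \geq 0$), so $g$ is strictly increasing on $(0,\infty)$. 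The range of $g$ dictates the case split:
\begin{itemize}
\item If $y_i > 0$: $g(w) \to -\infty$ as $w \to 0^+$ (due to $-y_i w^{\beta-2} \to -\infty$) and $g(w) \to +\infty$ as $w \to \infty$, so $g$ is a bijection $(0,\infty) \to \R$. A unique interior maximizer $\hat{z}_i$ exists for every $u_i$, giving $f_i^*(u_i) = u_i \hat{z}_i - f_i(\hat{z}_i)$.
\item If $y_i = 0$ and $u_i > 0$: $g$ reduces to $w^{\beta-1}$ with range $[0,\infty)$, so a unique $\hat{z}_i$ still solves \eqref{eq:eq_zhat_Fench_betaDiv}, and again $f_i^*(u_i) = u_i \hat{z}_i - f_i(\hat{z}_i)$.
\item If $y_i = 0$ and $u_i \leq 0$: then $f_i'(z) = (z+\epsilon)^{\beta-1} \geq 0$ on $\mathrm{dom}(f_i)$, so $z \mapsto u_i z - f_i(z)$ is non-increasing; the supremum is attained at the boundary $z = -\epsilon$, yielding $f_i^*(u_i) = -\epsilon\, u_i - f_i(-\epsilon) = -\epsilon\, u_i$ (since $f_i(-\epsilon)=0$ when $y_i=0$). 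This recovers the first branch of \eqref{eq:beta_fenchel}.
\end{itemize}

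Putting the three cases together yields exactly the statement of Proposition~\ref{prop:beta_fenchel}. The one delicate point is the boundary analysis in case~(iii): I need to verify that the supremum is genuinely attained at $z = -\epsilon$ (and not only approached in the limit), which follows from $f_i$ being continuous from the right at $-\epsilon$ with $f_i(-\epsilon)=0$ when $y_i=0$. The remainder is routine calculus, and the monotonicity argument for $g$ guarantees both existence and uniqueness of $\hat{z}_i$ in the interior cases, so no fixed-point or convexity machinery beyond differentiability is needed.
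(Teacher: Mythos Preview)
Your proof is correct and follows essentially the same approach as the paper: reduce to the scalar problem by separability, then solve the stationarity equation $f_i'(\hat z_i)=u_i$ on $\mathrm{dom}(f_i)=[-\epsilon,\infty)$, with the same case split on $y_i$ and (when $y_i=0$) on the sign of $u_i$. The only presentational difference is that you substitute $w=\hat z_i+\epsilon$ and prove strict monotonicity of $g(w)=w^{\beta-1}-y_i w^{\beta-2}$ to obtain existence \emph{and} uniqueness of $\hat z_i$, whereas the paper argues existence via the intermediate value theorem applied to $\varphi'$ (using $\varphi'(z)\to+\infty$ as $z\to-\epsilon$ and $\varphi'(z)\to-\infty$ as $z\to+\infty$) and leaves uniqueness implicit in the concavity of $\varphi$.
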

\begin{proof}
%%(see also Pouchol2020 Appendix A.4)
Given that $\dom(f_i) = \{z_i \in \R ~|~ z_i + \epsilon \geq 0\}$, the Fenchel conjugate of $f_i$ is given by
\begin{align}
    f_i^*(u_i) &= \sup_{z_i \in \R} \; \underbrace{z_i u_i  - f_i(z_i)}_{\varphi(z_i)}
    = \sup_{z_i + \epsilon \geq 0 } \varphi(z_i) \label{eq:beta_phi_z}.
\end{align}
Because $f_i$ is a convex function, we have that $\varphi$ is concave. Hence, every stationary point $\hat{z}_i$ such that $\hat{z}_i + \epsilon \geq 0$  and   $0 = \varphi'(\hat{z_i}) = u_i - (\hat{z_i}+\epsilon)^{\beta-2}(\hat{z_i} + \epsilon - \yi)$
is a global maximum.
%
%%y_i > 0
We now distinguish two cases:
\begin{itemize}
    \item When $y_i>0$, $\varphi'(z_i) \rightarrow +\infty$ as $z_i \rightarrow -\epsilon$  and 
$\varphi'(z_i) \rightarrow -\infty$ as $z_i \rightarrow + \infty$, which combined to the fact that $\varphi'$ is continuous % and decreasing (as $\varphi$ is concave) 
implies that there exists $\hat{z_i} \geq  -\epsilon$ such that $\varphi'(\hat{z_i}) = 0$. 
    \item When $y_i=0$, we have $\varphi(z_i) = z_i u_i - \frac{1}{\beta}(z_i+\epsilon)^{\beta}$. The equation $\varphi'({z}_i)=0 \, \Longleftrightarrow \, u_i = (z_i+\epsilon)^{\beta-1}$ admits a solution $\hat{z}_i$ that satisfies the constraint $\hat{z}_i+\epsilon \geq 0$ if and only if $u_i\geq 0$. In contrast, when $u_i <0$, because $\varphi'$ is  decreasing (as $\varphi$ is concave) the sup in~\eqref{eq:beta_phi_z} is attained at $z_i+\epsilon=0$ with value $-\epsilon u_i$.
    %and $\varphi'(\hat{z}_i)=0 \, \Rightarrow \, \hat{z_i} + \epsilon = u_i^{1-\beta}$. It follows that $\hat{z_i}$ satisfies the constraints in~\eqref{eq:beta_phi_z} if and only if $u_i\geq 0$. In contrast, when $u_i <0$ the sup in~\eqref{eq:beta_phi_z} is attained at $z_i+\epsilon=0$ with value $-\epsilon u_i$.
\end{itemize}
This completes the proof.
\end{proof}

Unfortunately, equation~\eqref{eq:eq_zhat_Fench_betaDiv} does not admit an explicit closed-form expression without fixing a value for $\beta$. This prevents the derivation of an explicit form of $f^*_i$ in function of $\beta$. We thus focus on the case $\beta=1.5$ in the  remainder of this section.

%%%BETA 1.5 - \Ax + \epsilon
\subsubsection{Particular Case of \texorpdfstring{$\beta= 1.5$}{beta=1.5}} %\ref{ssec:example_beta15}

Taking $\beta = 1.5$ on the previous results we obtain: 
%%F and f_i
\begin{align} \label{eq:beta15_data_fidelity}
F(\A\x) = %D_{\beta}(\y,\A\x)
\frac{4}{3} \left( \|\y\|_{1.5}^{1.5} + \frac{1}{2}\|\A\x+\epsilon\|_{1.5}^{1.5} - \frac{3}{2} \y^\T (\A\x+\epsilon)^{0.5} \right) \\
f_i(\Axi) %= d_{\beta}(\yi,\Axi)  
= \frac{4}{3} \left( \yi^{1.5} + \frac{1}{2}(\Axi+\epsilon)^{1.5} - \frac{3}{2} \yi (\Axi+\epsilon)^{0.5} \right)
\end{align}
%
%%FIRST DERIVATIVE
with first derivative is given by
\begin{align}
%\partial F(\vt{z}) = 
\nabla F(\vt{z}) =   \sqrt{\vt{z} +\epsilon} - \frac{\y}{\sqrt{\vt{z}+\epsilon}}
& &
%\partial f_i(z_i) = 
f_i'(z_i) =  \sqrt{z_i+\epsilon} - \frac{\yi}{\sqrt{z_i+\epsilon}}.
\end{align}

%%FIRST OPTIMALITY CONDITION
The first optimality condition becomes:
\begin{align} \label{eq:beta15_optimality_condition1}
%\lambda \vtheta^\star = \frac{\y}{\sqrt{\A\x^\star+\epsilon}} -  \sqrt{\A\x^\star+\epsilon}
%&&
\lambda \vthetai^\star =  \frac{\yi}{\sqrt{[\A\x^\star]_i+\epsilon}} - \sqrt{[\A\x^\star]_i+\epsilon} , ~~ \forall i \in [m] 
\end{align}

%%MAXIMUM REGULARIZATION
The maximum regularization parameter $\lambda_{\max}$ is obtained by substituting $\nabla F(\vt{0}) = \frac{\epsilon -\y}{\sqrt{\epsilon}}$ in \eqref{eq:max_regularization}:
\begin{align}
    % \nabla F(\vt{0}) = = \frac{\epsilon -\y}{\sqrt{\epsilon}}
    \lambda_{\max} = \max(\A^\T (\y-\epsilon)/\sqrt{\epsilon})
\end{align}

%%FENCHEL CONJUGATE
\begin{proposition}
The Fenchel conjugate of the $\beta$-divergence with $\beta = 1.5$ in equation \eqref{eq:beta15_data_fidelity} is given by $F^*(\vt{u}) = \sum_{i=1}^m f_i^*(u_i)$ where
\begin{align}\label{eq:beta15_fenchel}
f_i^*(u_i) = %d^*_\beta(u_i) = 
\frac{u_i^3}{6} + \frac{1}{6} (u_i^2 + 4\yi)^{1.5} + u_i \yi - \frac{4}{3}\yi^{1.5} - \epsilon u_i
\end{align}
with $\dom(f_i^*) = \R$.
\end{proposition}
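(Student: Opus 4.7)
The plan is to apply Proposition~\ref{prop:beta_fenchel} with $\beta = 1.5$ and explicitly solve the optimality equation~\eqref{eq:eq_zhat_Fench_betaDiv} for $\hat{z}_i$. Substituting $\beta = 1.5$ into~\eqref{eq:eq_zhat_Fench_betaDiv} gives $(\hat z_i + \epsilon - y_i)/\sqrt{\hat z_i + \epsilon} = u_i$. Introducing the change of variable $w = \sqrt{\hat z_i + \epsilon} \geq 0$ transforms this into the quadratic $w^2 - u_i w - y_i = 0$. The unique non-negative root is
\begin{equation*}
w = \frac{u_i + \sqrt{u_i^2 + 4 y_i}}{2},
\end{equation*}
which is well-defined for every $u_i \in \R$ since $y_i \geq 0$ (this will take care of the $\dom(f_i^*) = \R$ claim).

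Next, I would substitute $\hat z_i = w^2 - \epsilon$ into $f_i^*(u_i) = u_i \hat z_i - f_i(\hat z_i)$, using the explicit form of $f_i$ given in~\eqref{eq:beta15_data_fidelity}. After expansion, one gets
\begin{equation*}
f_i^*(u_i) = u_i w^2 - \epsilon u_i - \tfrac{4}{3} y_i^{1.5} - \tfrac{2}{3} w^3 + 2 y_i w.
\end{equation*}
The relation $w^2 - u_i w - y_i = 0$ can then be used to eliminate $u_i w^2 = w^3 - y_i w$, yielding the simpler form
\begin{equation*}
f_i^*(u_i) = \tfrac{1}{3} w^3 + y_i w - \epsilon u_i - \tfrac{4}{3} y_i^{1.5}.
\end{equation*}

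The remaining (and slightly technical) step is to re-express $\tfrac{1}{3} w^3 + y_i w$ in terms of $u_i$ and $v \coloneqq \sqrt{u_i^2 + 4 y_i}$. Expanding $w = (u_i+v)/2$ and using $v^2 = u_i^2 + 4 y_i$ to reduce higher powers of $v$, I expect the cross term $u_i^2 v + 4 y_i v$ to collapse into $v^3$, leading precisely to $\tfrac{1}{6}(u_i^3 + v^3) + u_i y_i$ and thus to the claimed formula~\eqref{eq:beta15_fenchel}. This bookkeeping is the only nontrivial part of the proof.

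Finally, I would verify that the $y_i = 0$, $u_i \leq 0$ case from Proposition~\ref{prop:beta_fenchel} (where the sup is attained on the boundary $\hat z_i = -\epsilon$ and $f_i^*(u_i) = -\epsilon u_i$) is consistent with the closed-form: for $y_i = 0$ we have $v = |u_i|$, so $(u_i^3 + v^3)/6 = 0$ when $u_i \leq 0$, leaving exactly $-\epsilon u_i$. This confirms the formula holds uniformly over $\R$, establishing both the expression and $\dom(f_i^*) = \R$.
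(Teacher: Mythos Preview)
Your proposal is correct and follows essentially the same route as the paper: invoke Proposition~\ref{prop:beta_fenchel} for $\beta=1.5$, solve the resulting quadratic in $\sqrt{\hat z_i+\epsilon}$, substitute back into $u_i\hat z_i - f_i(\hat z_i)$, and check that the boundary case $y_i=0$, $u_i\le 0$ is absorbed by the closed form. The only difference is that you spell out the algebraic simplification (via the substitution $u_i w^2 = w^3 - y_i w$ and the reduction of $v$-powers), whereas the paper simply writes ``after some calculation''.
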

\begin{proof}
According to \Cref{prop:beta_fenchel}, the Fenchel conjugate for $\beta=1.5$ is given by
\begin{align*}\label{eq:beta_fenchel_tmp}
     f^*_i(u_i) = \left\{
 \begin{array}{lr}
  -\epsilon u_i, & ~\text{if } ~ \yi = 0, u_i \leq 0 \\
   u_i \hat{z_i} - f(\hat{z_i}) , & ~\text{otherwise} \\ %d_\beta(\yi, \hat{z_i}+\epsilon)
 \end{array} \right. 
\end{align*}
where $\hat{z_i}$ is the solution of the following equation:
%we need to solve the following equation for $\hat{z_i}$:
\begin{align}
    \frac{(\hat{z}_i+\epsilon - \yi)}{\sqrt{\hat{z}_i+\epsilon}} = u_i.
\end{align}
It gives a quadratic equation on $\sqrt{\hat{z_i}+\epsilon}$ with solution
\begin{align}
    \sqrt{\hat{z_i}+\epsilon} = \frac{u_i \pm \sqrt{u_i^2+4\yi}}{2}  \implies
    \hat{z_i} = \left( \frac{u_i + \sqrt{u_i^2+4\yi}}{2} \right)^2 -\epsilon
\end{align}
since $u_i \leq \sqrt{u_i^2+4\yi}$ for $\yi \geq 0$.
Plugging $\hat{z_i}$ in \eqref{eq:beta_fenchel_tmp} we obtain after some calculation: %\eqref{eq:beta15_fenchel}
% \begin{align}
%  u_i \hat{z_i} - d_\beta (\yi, \hat{z_i})  =  u_i  \left( \frac{u_i + \sqrt{u_i^2+4\yi}}{2} \right)^2 - u_i \epsilon - \frac{4}{3} \left[ \yi^{1.5} + \frac{1}{2}\left( \frac{u_i + \sqrt{u_i^2+4\yi}}{2} \right)^{3} - \frac{3}{2} \yi \left( \frac{u_i + \sqrt{u_i^2+4\yi}}{2} \right) \right]  
% \end{align}
% Doing the calculations, it simplifies to %equation \eqref{eq:beta15_fenchel}
\begin{align}\label{eq:beta_fenchel_tmp2}
 f^*_i(u_i) = \left\{
 \begin{array}{lr}
  -\epsilon u_i, & ~\text{if } ~ \yi = 0, u_i \leq 0 \\
   \frac{u_i^3}{6} + \frac{1}{6} (u_i^2 + 4\yi)^{1.5} + u_i \yi - \frac{4}{3}\yi^{1.5} - \epsilon u_i , & ~\text{otherwise} \\ %d_\beta(\yi, \hat{z_i}+\epsilon)
 \end{array} \right. 
\end{align}

Finally, note that taking $\yi=0$ and $u_i\leq 0$ in \eqref{eq:beta15_fenchel} we obtain $f_i^*(u_i) = -\epsilon u_i$, in accordance with equation \eqref{eq:beta_fenchel_tmp2}.
Therefore, \eqref{eq:beta15_fenchel} alone summarizes both cases depicted in~\eqref{eq:beta_fenchel_tmp2}.
\end{proof}

%%DUAL FUNCTION
The dual function $D_\lambda(\vtheta) = \sum_{i=1}^m - f^*_i(-\lambda \vthetai)$ is given by:
%scalar
\begin{align} \label{eq:beta15_dual}
D_\lambda(\vtheta) = 
 & \sum_{i=1}^m \frac{1}{6}(\lambda\vthetai)^3 - \frac{1}{6} \left((\lambda\vthetai)^2 + 4\yi\right)^{1.5} + \lambda \vthetai \yi + \frac{4}{3}\yi^{1.5}   - \epsilon \lambda \vthetai 
% \\ =& \frac{1}{6}\|\lambda\vtheta\|_3^3 - \frac{1}{6} \|(\lambda\vtheta)^2 + 4\y\|_{1.5}^{1.5} + \lambda \vtheta^\T \y + \frac{4}{3}\|\y\|_{1.5}^{1.5}   - \epsilon \lambda \vtheta^\t \vt{1}
\end{align}
%%\y+\epsilon : remove term -\epsilon \lambda \vthetai
with $\dom (D_\lambda) = \R^m$. 
%%DUAL FEASIBLE SET
Then, we get from Theorem~\ref{thm:dual_GLM_and_optimality}, $\s{C} = \R^n_+$, and~\eqref{eq:DualL1} (dual norm of the $\ell_1$-norm)  that 
the dual feasible set is given by:
\begin{align}
  \Delta_\A = \{\vtheta \in \R^m ~|~ \A^\T\vtheta  \leq \vt{1} \}
\end{align}

%%DUAL - FIRST DERIVATIVE
\begin{comment}
First derivative of the dual:
\begin{align}
\nabla D_\lambda(\vtheta) = 
  \lambda \left[ \frac{1}{2}(\lambda\vthetai)^2 - \frac{1}{2} \lambda \vthetai \sqrt{(\lambda\vthetai)^2 + 4\yi} + \yi   - \epsilon \right]_i
\end{align}
%%\y+\epsilon : remove term -\epsilon
\end{comment}

%%DUAL - SECOND DERIVATIVE
% The Hessian $\nabla^2 D_\lambda(\vtheta)$ is given by
% \begin{align}
%     \nabla^2 D_\lambda(\vtheta) = 
%     \lambda^2 \diag{\left[ \lambda\vthetai - \frac{(\lambda\vthetai)^2 + 2\yi}{\sqrt{ (\lambda\vthetai)^2 + 4\yi}} \right]_i}
% %    = \lambda^2 \diag{ \lambda\vtheta - \frac{(\lambda\vtheta)^2 + 2\y}{\sqrt{ (\lambda\vtheta)^2 + 4\y}} }
% \end{align}

% Its $i$-th eigenvalue, which corresponds to the $i$-th diagonal entry, is denoted $\sigma_i\left( \vthetai \right)$ and given by
% \begin{align}
%     \sigma_i\left( \vthetai \right)  = \lambda^2 \left( \lambda\vthetai - \frac{(\lambda\vthetai)^2 + 2\yi}{\sqrt{ (\lambda\vthetai)^2 + 4\yi}} \right)
% \end{align}

The Hessian $\nabla^2 D_\lambda(\vtheta)$  and corresponding eigenvalues $\sigma_i(\vthetai)$ are given by
\begin{align} \label{eq:beta_eigenvalues}
    \nabla^2 D_\lambda(\vtheta) = \diag{[\sigma_i(\vthetai)]_{i \in [m]}},
    &&
    \sigma_i\left( \vthetai \right)  = -\lambda^2 \left( \frac{(\lambda\vthetai)^2 + 2\yi}{\sqrt{ (\lambda\vthetai)^2 + 4\yi}} - \lambda\vthetai  \right)
\end{align}

%Cases where \sigma_i vanishes
% 1) \vthetai --> \infty
Although the eigenvalues are all non-positive, they tend to zero as $\vthetai$ tends to infinity.
%
% 2) yi = 0
% In the case $\yi = 0$, from optimality condition \ref{eq:beta15_optimality_condition1} we have that $\vthetai^\star \leq 0$. However, even using this information, $D_\lambda$ is still not strongly concave since $\vthetai^\star = 0$ is allowed. And differently from the KL case, the dual solution is not trivially known for coordinate $i$ s.t. $\yi=0$
Note that $\sigma_i(\vthetai)$ also vanishes when $\yi = 0$ and $\vthetai \geq 0$.
Therefore $D_\lambda(\vtheta)$ is not globally strongly concave and the standard Gap Safe approach cannot be applied in this case.

In the following sections we find local strong concavity bounds that allows us to deploy the proposed screening strategy (Algorithms~\ref{alg:solver_screening_bis}  and~\ref{alg:solver_screening_local}).

%%EIGENVALUES ARE INCREASING
\begin{proposition}\label{prop:beta15_eigenvalues_increasing}
The eigenvalues $\sigma_i(\vthetai)$ in \eqref{eq:beta_eigenvalues} are an  increasing (resp. strictly increasing) function of $\vthetai$ for any $\yi \geq 0$ (resp. $\yi > 0$).
\end{proposition}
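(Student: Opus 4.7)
The plan is to reduce the monotonicity question to a single-variable calculus problem by substituting $t = \lambda\vthetai$. Since $\lambda>0$, showing $\sigma_i$ is (strictly) increasing in $\vthetai$ is equivalent to showing
\begin{equation*}
g(t) := \frac{t^2 + 2\yi}{\sqrt{t^2 + 4\yi}} - t
\end{equation*}
is (strictly) decreasing in $t$, since $\sigma_i = -\lambda^2 g(t)$. I would first dispose of the boundary case $\yi = 0$ directly: there, $g(t) = \sqrt{t^2} - t = |t| - t$, which is $0$ on $[0,+\infty)$ and $-2t$ on $(-\infty,0]$, hence non-increasing. This accounts for the weak (non-strict) monotonicity when $\yi = 0$.

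For the main case $\yi > 0$, I would differentiate $g$. A short computation setting $u = t^2 + 4\yi$ gives
\begin{equation*}
g'(t) = \frac{t(t^2 + 6\yi)}{(t^2 + 4\yi)^{3/2}} - 1,
\end{equation*}
so the claim reduces to the inequality $t(t^2 + 6\yi) < (t^2 + 4\yi)^{3/2}$ for all $t \in \R$.

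The subcase $t \leq 0$ is immediate since the left-hand side is non-positive and the right-hand side is strictly positive (recall $\yi > 0$). For $t > 0$, both sides are positive, so I would square and expand: with $a = t^2 > 0$ and $b = \yi > 0$, the inequality becomes $a(a + 6b)^2 < (a + 4b)^3$. Direct expansion yields
\begin{equation*}
(a + 4b)^3 - a(a + 6b)^2 = 12\,a b^2 + 64\, b^3 - \text{(cancelling terms)} = 4b^2(3a + 16b) > 0,
\end{equation*}
which closes the argument and gives $g'(t) < 0$ on all of $\R$, hence $\sigma_i$ is strictly increasing in $\vthetai$ when $\yi > 0$.

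The main potential obstacle is the algebraic manipulation in the squaring step; I expect the expansion to be clean because the cubic terms in $a$ cancel exactly, leaving a manifestly positive remainder. No subtle limiting behavior needs to be addressed since $(t^2 + 4\yi)^{3/2}$ is smooth and strictly positive throughout the analysis when $\yi > 0$.
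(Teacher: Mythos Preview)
Your proof is correct and follows essentially the same approach as the paper: both compute the derivative of $\sigma_i$ (equivalently of $g$), split into the cases $t \le 0$ and $t > 0$, and in the positive case square both sides of the inequality $t(t^2+6\yi) < (t^2+4\yi)^{3/2}$. Your version is slightly more explicit in two respects---you treat the degenerate case $\yi=0$ directly via $g(t)=|t|-t$, and you carry out the expansion $(a+4b)^3 - a(a+6b)^2 = 4b^2(3a+16b)$ in full---whereas the paper simply asserts the squared inequality is ``easily verified''; but the underlying argument is the same.
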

\begin{proof}
Indeed, its first derivative is non-negative
\begin{align} %www.derivative-calculator.net : x - (x^2 + 2a)/(sqrt(x^2 + 4a)
    \sigma_i'(\vthetai) = \lambda^3 \frac{( (\lambda\vthetai)^2 + 4\yi)^{1.5} - (\lambda\vthetai)^3 - 6 \yi \lambda\vthetai}{( (\lambda\vthetai)^2 + 4\yi)^{1.5}} \geq 0
\end{align}
since  (let us recall that $\vt{y} \in \R^m_+$) for $\lambda\vthetai \leq 0$ we have $((\lambda\vthetai)^2 + 4\yi)^{1.5} \geq 0 \geq (\lambda\vthetai)^3 + 6 \yi \lambda\vthetai$  and
for  $\lambda\vthetai > 0$ one can easily verify that  $\left(((\lambda\vthetai)^2 + 4\yi)^{1.5}\right)^2 \geq \left( (\lambda\vthetai)^3 + 6 \yi \lambda\vthetai \right)^2$ with strict inequalities when $\yi>0$.
\end{proof}

%In the following sections we find local strong concavity bounds that allows us to define a Local Gap Safe screening rule.

\subsubsection{Validity of the Set \texorpdfstring{$\SO$}{S0}}

In order to be able to derive local strong concavity bounds (see Sections~\ref{ssec:beta15_fixed_alpha} and \ref{ssec:beta15_adaptive_alpha} hereafter), we need to use the set $\SO$ below (see discussion in \Cref{ssec:beta15_discussion_SO}). %additional constraint set 
To comply with Theorem~\ref{prop:local_concavity}, we need to show that $\vtheta^\star \in \SO$.

\begin{proposition} \label{prop:beta15_S0}
For any $\A\in\R_{+}^{m \times n}$ with no all-zero row, $\y \in \R_{+}^m$, the set
%%SO
\begin{align} \label{eq:beta15_SO}
  %Two separate inequalities
  %&\SO = \left\lbrace\vtheta \in \R^m ~\middle|~ \lambda \vtheta \leq \vt{b},~  \lambda \vtheta \leq \frac{\y-\epsilon}{\sqrt{\epsilon}} \right\rbrace  \\
  %Single inequality with min()
  &\SO = \left\lbrace\vtheta \in \R^m ~\middle|~ \lambda \vtheta \leq \min( \vt{b},(\y-\epsilon)/\sqrt{\epsilon} )  \right\rbrace  \\
  \text{with } \;
  %
  % upper bound b_i (obtained from c independant of i)
  &b_i := \lambda \min_{\{j \in [n] ~|~ a_{ij} \neq 0 \}} \left( \frac{1 - c \|\cA_j\|_1 }{ a_{ij}}  \right) + \lambda c , ~~ i \in [m] \nonumber \\
  %
  % lower bound c independant of i
  &c := -\frac{1}{\lambda} \sqrt[3]{\frac{4 \|\y\|_{1.5}^{1.5} + 2(m-1)\epsilon^{1.5} + 3\epsilon}{1-3\epsilon} }. \nonumber  
\end{align}
is such that $\vtheta^\star \in \SO$.
\end{proposition}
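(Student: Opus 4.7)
The plan is to verify that $\vtheta^\star$ satisfies the two componentwise upper bounds defining $\SO$, namely $\lambda\vtheta^\star \leq (\y-\epsilon)/\sqrt{\epsilon}$ and $\lambda\vtheta^\star \leq \vt{b}$. The first follows almost immediately from the primal-dual link, while the second requires a detour through a lower bound on $\vtheta^\star$ combined with dual feasibility.

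For the first bound, I would use the primal-dual optimality relation \eqref{eq:beta15_optimality_condition1}, which reads $\lambda\vthetai^\star = \yi/\sqrt{[\A\x^\star]_i+\epsilon} - \sqrt{[\A\x^\star]_i+\epsilon}$. Since $\A\in\R^{m\times n}_+$ and $\x^\star\in\R^n_+$, we have $[\A\x^\star]_i+\epsilon \geq \epsilon$, and the map $t\mapsto \yi/t - t$ is decreasing on $(0,\infty)$ for $\yi\geq 0$. Evaluating at $t=\sqrt{\epsilon}$ yields $\lambda\vthetai^\star \leq (\yi-\epsilon)/\sqrt{\epsilon}$. This simultaneously handles the case $i\in\s{I}_0$ (where the bound becomes $-\sqrt{\epsilon}<0$), explaining why no separate constraint for the zero coordinates is needed.

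For the bound $\lambda\vtheta^\star \leq \vt{b}$, the idea is to first produce a scalar lower bound $c$ with $\vthetai^\star \geq c$ for every $i\in[m]$, and then to exploit $\A^\T\vtheta^\star \leq \vt{1}$ componentwise. Writing $D_\lambda(\vtheta) = \sum_{i} d_i(\vthetai)$ with $d_i(\vthetai) = -f_i^*(-\lambda\vthetai)$, dual feasibility of $\vt{0}$ yields $D_\lambda(\vtheta^\star) \geq D_\lambda(\vt{0}) = 0$. A direct stationary-point computation on the concave scalar function $d_i$ gives $\sup d_i \leq \tfrac{4}{3}\yi^{1.5} + \tfrac{2}{3}\epsilon^{1.5}$. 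Summing over $i'\neq i$ and rearranging then produces the componentwise inequality $d_i(\vthetai^\star) \geq -\tfrac{4}{3}(\|\y\|_{1.5}^{1.5}-\yi^{1.5}) - \tfrac{2}{3}(m-1)\epsilon^{1.5}$. Since $d_i$ is concave, continuous, and goes to $-\infty$ as $\vthetai\to-\infty$, this produces an implicit lower bound on $\vthetai^\star$.

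The main obstacle is that extracting this lower bound from the inequality on $d_i$ leads to a cubic-type equation with no clean closed form. My way around this is to replace $d_i$ on the half-line $\vthetai\leq 0$ by a simpler increasing majorant $g(\vthetai) = (\tfrac{1}{3}-\epsilon)(\lambda\vthetai)^3 + \tfrac{4}{3}\yi^{1.5} + \epsilon$, obtained by bounding $-\tfrac{1}{6}((\lambda\vthetai)^2+4\yi)^{1.5} \leq -\tfrac{1}{6}|\lambda\vthetai|^3$ and $-\epsilon\lambda\vthetai \leq -\epsilon((\lambda\vthetai)^3-1)$. Solving $g(c) = -\tfrac{4}{3}(\|\y\|_{1.5}^{1.5}-\yi^{1.5}) - \tfrac{2}{3}(m-1)\epsilon^{1.5}$ yields a $\yi$-independent closed form, namely the announced $c$, and the monotonicity of $g$ plus $g\geq d_i$ guarantee $c\leq \vthetai^\star$. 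Finally, for each $i$ and each $j$ with $a_{ij}\neq 0$, the constraint $\cA_j^\T\vtheta^\star \leq 1$ rewrites as $a_{ij}\vthetai^\star \leq 1 - \sum_{i'\neq i}a_{i'j}\vtheta_{i'}^\star \leq 1 - c(\|\cA_j\|_1 - a_{ij})$, and taking the minimum over admissible $j$ (nonempty by the no-all-zero-row assumption on $\A$) produces $\lambda\vthetai^\star \leq b_i$, completing the proof.
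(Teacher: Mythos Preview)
Your proposal is correct and follows essentially the same approach as the paper's proof: both establish the bound $(\y-\epsilon)/\sqrt{\epsilon}$ directly from the primal-dual link, then derive the lower bound $c\leq\vthetai^\star$ via $D_\lambda(\vtheta^\star)\geq D_\lambda(\vt{0})=0$, the same coordinate-wise sup bound on $d_i$, and the same cubic majorant $g$ on $\vthetai\leq 0$, before concluding with dual feasibility $\A^\T\vtheta^\star\leq\vt{1}$. The only minor omission is an explicit justification that the implicit root $\hat c_i$ lies in $(-\infty,0]$ (needed to invoke the majorant), which the paper spells out but is easily recovered from your setup.
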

%
%%PROOF THAT $\vtheta^\star \in \SO$
\begin{proof}
Because $\A \in \R_+^{m \times n}$ and $\x^\star \in \R^n_+$, we get from the optimality condition \eqref{eq:beta15_optimality_condition1} that
%we directly obtain:
\begin{align*}
    \lambda \vtheta^\star = \frac{\y -\A\x^\star - \epsilon}{\sqrt{\A\x^\star + \epsilon}} \leq \frac{\y-\epsilon}{\sqrt{\epsilon}}
\end{align*}
In particular, for coordinates $\s{I}_0 = \{i \in [m] ~|~ \yi = 0\}$, this inequality simplifies to $\lambda \vtheta_{\s{I}_0} \leq -\sqrt{\epsilon}$ (which is strictly negative, as desired to prevent Hessian eigenvalues from vanishing). 
Let us emphasize that defining $\SO = \left\lbrace\vtheta \in \R^m ~\middle|~ \lambda \vtheta \leq (\y-\epsilon)/\sqrt{\epsilon}  \right\rbrace$ is already a valid choice. However, combining it with the tricky bound $\mathbf{b}$ (derived hereafter) can lead to improved strong concavity bounds, which are particularly relevant for Algorithm~\ref{alg:solver_screening_bis}.

%lower bound to D_\lambda
Then, 
to show that $\lambda \vthetai^\star \leq b_i$ we will start by finding a lower bound $c_i \leq \vthetai^\star$ which, combined with the definition of the feasible set, will directly lead to the desired upper bounds~$b_i$.
%let us show that $\vtheta^\star \geq \vt{c}$ where $\vt{c} = [ c_1, \dots, c_m ]$ is defined as in \eqref{eq:beta15_SO}.
Consider the dual point $\vtheta = \vt{0}$, which is always feasible, then we have:
\begin{align} \label{eq:app:beta_15_SO_proof1}
    D_\lambda(\vtheta^\star) \geq D_\lambda(\vt{0})  = 0
\end{align}
%
%upper bound to D_\lambda - using lim_{\infty} D_\lambda
Also, denoting for simplicity $d_i(\vthetai) \!=\! -f_i^*(-\lambda \vthetai)$ such that
$D_\lambda (\vtheta) \!=\! \sum_i d_i(\vthetai)$, one can verify that:
\begin{align} \label{eq:app:beta15_upperbound_di}
\sup_{\vthetai} d_i(\vthetai) =  \frac{4}{3}\yi^{1.5} - 2\yi\sqrt{\epsilon} + \frac{2}{3} \epsilon^{1.5} \leq \frac{4}{3}\yi^{1.5} + \frac{2}{3} \epsilon^{1.5}
\end{align}
%\cfd{[On voit que  $- 2\yi\sqrt{\epsilon} + \frac{2}{3} \epsilon^{1.5} \geq 0 \iff \yi \leq \epsilon/3$ (où la borne précendente sans $\epsilon$ échouait).]}
Indeed, 
$d_i(\vthetai)$ being concave we have that the stationary point $\vthetai = \frac{\yi-\epsilon}{\sqrt{\epsilon}}$ (obtained by setting $d'_\lambda(\vthetai) = 0$ with the help of some symbolic calculation tool) %Use Wolfram Alpha! Solve (1/2)x^2 - (1/2)x sqrt(x^2 + 4a) + a -b
is the global maximum, with value $d_i(\vthetai) =  \frac{4}{3}\yi^{1.5} - 2\yi\sqrt{\epsilon}  + \frac{2}{3} \epsilon^{1.5}$.
Combining \eqref{eq:app:beta_15_SO_proof1} and \eqref{eq:app:beta15_upperbound_di} we obtain
\begin{align*}
    d_i (\vthetai^\star) &= D_\lambda(\vtheta^\star) - \sum_{i'\neq i} d_{i'}(\theta_{i'}^\star)
    \geq 0 - \sum_{i'\neq i} d_{i'}(\theta_{i'}^\star) \\ 
    &\geq -  \sum_{i'\neq i} \left(\frac{4}{3}y_{i'}^{1.5} + \frac{2}{3} \epsilon^{1.5} \right) % +  2y_{i'}\sqrt{\epsilon}
    = -\frac{4}{3}( \|\y\|_{1.5}^{1.5} - \yi^{1.5}) - \frac{2}{3}(m-1) \epsilon^{1.5} %+  2(\|\y\|_1 - \yi)\sqrt{\epsilon} 
\end{align*}
Since $d_i(\vthetai)$ is concave, continuous, and $\lim_{\vthetai \rightarrow -\infty} d_i(\vthetai) = -\infty$,  
it implies that there exists a $\hat{c}_i \leq \vthetai^\star$ such that $d_i(\hat{c}_i) = -\frac{4}{3}( \|\y\|_{1.5}^{1.5} - \yi^{1.5})  - \frac{2}{3}(m-1) \epsilon^{1.5}$ %+  2(\|\y\|_1 - \yi)\sqrt{\epsilon}
and which can be obtained by solving the resulting equation for $\hat{c}_i$.

%% Bound for c_i (since c_i was not defined explicitly)
This, however, leads to a cumbersome calculation.
Instead, we provide another bound $c_i \leq \hat{c}_i \leq \vthetai^\star$.
To do so, we use an upper bound $g \geq d_i$ and define $c_i$ such that $g(c_i) = d_i(\hat{c}_i)$.
We then have  $g(c_i) = d_i(\hat{c}_i) \leq g(\hat{c}_i)$ and because we choose $g$ an increasing function, it implies that $c_i \leq \hat{c}_i$ as desired.
Noting that $\hat{c}_i \leq 0$ since $-\frac{4}{3}( \|\y\|_{1.5}^{1.5} - \yi^{1.5}) - \frac{2}{3}(m-1) \epsilon^{1.5}\leq 0 = d_i(0)$, 
%\red{ON NE PEUT PLUS GARANTIR ÇA SI ON GARDE LE TERME $+2(\|\y\|_1 - \yi)\sqrt{\epsilon}$... la solution la plus simple c'est d'enlever ce terme avant et avoir une borne un peu pire (en pratique, pour $\epsilon$ petit, ça ne change quasiment rien). Ou sinon, il faut trouver une borne g(theta) qui soit valable pour theta positif également. Ça c'est possible, mais il y aura une expression differente pour chaque coté (positif et negative), ce qui va considerablement complexifier l'expression des c_i pour pas grand chose en terme pratique.}
we use the following bound on $d_i$ valid for any $\vthetai \leq 0$:
\begin{align*}
    (\forall~ \vthetai \leq 0) ~~d_i(\vthetai)  &= \frac{1}{6}(\lambda\vthetai)^3 - \frac{1}{6} \left((\lambda\vthetai)^2 + 4\yi\right)^{1.5} + \lambda \vthetai \yi + \frac{4}{3}\yi^{1.5} - \epsilon \lambda \vthetai \\
    &\leq \frac{1}{6}(\lambda\vthetai)^3 - \frac{1}{6} |\lambda\vthetai|^3  + \lambda \vthetai \yi + \frac{4}{3}\yi^{1.5} - \epsilon \lambda \vthetai \\
    &\leq \frac{1}{3}(\lambda\vthetai)^3 + \frac{4}{3}\yi^{1.5} - \epsilon \lambda \vthetai\\
    &\leq \left(\frac{1}{3}-\epsilon\right)(\lambda\vthetai)^3 + \frac{4}{3}\yi^{1.5} + \epsilon = g(\vthetai)
\end{align*}
where in the last step we used the fact that $- \epsilon \lambda \vthetai \leq - \epsilon\left( (\lambda \vthetai)^3 - 1 \right)$, $\forall \vthetai \leq 0$. % $z \geq z^3 - 1$, $\forall z\leq 0$
%to be more precise: $z \geq z^3 - c$ with $c = (1/3)^0.5 - (1/3)^1.5$
We can now compute $c_i$ such that  $g(c_i) = d_i(\hat{c}_i)= -\frac{4}{3}( \|\y\|_{1.5}^{1.5} - \yi^{1.5}) - \frac{2}{3}(m-1) \epsilon^{1.5}$ %+ 2(\|\y\|_1 - \yi)\sqrt{\epsilon}
by solving the following equation for $c_i$: 
\begin{align*}
    %\epsilon = 0
    %\frac{1}{3}(\lambda c_i)^3 + \frac{4}{3}\yi^{1.5} = -\frac{4}{3}( \|\y\|_{1.5}^{1.5} - \yi^{1.5})
    %\epsilon > 0
    \left(\frac{1}{3}-\epsilon\right)(\lambda c_i)^3 + \frac{4}{3}\yi^{1.5} + \epsilon = -\frac{4}{3}( \|\y\|_{1.5}^{1.5} - \yi^{1.5}) - \frac{2}{3}(m-1) \epsilon^{1.5} %+  2(\|\y\|_1 - \yi)\sqrt{\epsilon} 
\end{align*} 
%
%Lower bound c_i to \vthetai (keeping negative term in \eqref{eq:app:beta_15_SO_proof1}) 
% which leads to our lower bound $c_i$ on $\vthetai^\star$:
% \begin{align*}
%     %\epsilon = 0
%     % (\lambda c)^3 = -  4 \|\y\|_{1.5}^{1.5} \leq (\lambda\vthetai^\star)^3 
%     %\epsilon > 0
%     c_i  = -\frac{1}{\lambda} \sqrt[3]{\frac{4 \|\y\|_{1.5}^{1.5} - 6(\|\y\|_1 - \yi) \sqrt{\epsilon} + 2(m-1)\epsilon^{1.5} + 3\epsilon}{1-3\epsilon} } \leq \vthetai^\star
% \end{align*}
%
%Lower bound c independent of i (simpler but a little worse)
% By removing the term $6(\|\y\|_1 - \yi)\geq 0$, we can define another bound $c$ which is independent of $i$ :
\begin{align*}
    %\epsilon = 0
    % (\lambda c)^3 = -  4 \|\y\|_{1.5}^{1.5} \leq (\lambda\vthetai^\star)^3 
    %\epsilon > 0, independent of i
   \Longrightarrow \;  c  := c_i = -\frac{1}{\lambda} \sqrt[3]{\frac{4 \|\y\|_{1.5}^{1.5} + 2(m-1)\epsilon^{1.5}+3\epsilon}{1-3\epsilon} } \leq \vthetai^\star. %, ~~ \forall i \in [m]
\end{align*}
As $c_i$ turns out to be independent of $i$, we denote simply $c$ such that $c\leq \vthetai^\star$ for all $i \in [m]$.
%
%Finally, note that for $i\in \s{I}_0$, we have that $\vthetai \in [c_i, -\sqrt{\epsilon}]$ which is always non-empty since  $c_i \leq -\sqrt{\frac{3\epsilon}{1-\epsilon}} \leq -\sqrt{\epsilon}$.

%%Upper bound b_i to \lambda\theta (obtained from c_i)
Now, we can upper bound $\vthetai^\star$ for any $i\in [m]$ by combining the above lower bound 
with fact that $ \A^\T \vtheta^\star \leq \vt{1}$ (since $\vtheta^\star \in \Delta_\A$).
For all $i \in [m]$ and $\A \in \R^{m\times n}_{+}$ we have:
\begin{align*}
    \forall j \in [n], ~~
    1 \geq \cA_j^\T \vtheta^\star = \sum_{i'=1}^m a_{i'j} \theta_{i'}^\star  = a_{ij} \vthetai^\star + \sum_{i' \neq i } a_{i'j} \theta_{i'}^\star
    \geq a_{ij} \vthetai^\star + \sum_{i' \neq i} a_{i'j}  c % c_{i'}
\end{align*}
which leads to the following upper bound $b_i$ for $\lambda \vthetai^\star$, with $i \in [m]$:
\begin{align}
    \forall j \in [n], ~~
    %if c_i
    %&a_{ij}\vthetai^\star \leq 1 -  \sum_{i'\neq i} a_{i'j} c_{i'} 
    %= 1 -  \cA_j^\T\vt{c} + a_{ij} c_{i} 
    %if c independant of i
    &a_{ij} \vthetai^\star \leq 1 -  \sum_{i'\neq i} a_{i'j} c
    = 1 - c (\|\cA_j\|_1 - a_{ij})
    \\
    \iff ~~
    %if c_i
    %& \vthetai^\star \leq \min_j \left( \frac{ 1 - \cA_j^\T\vt{c} }{ a_{ij}}  \right) + c_i := b_i / \lambda
    %if c independant of i
    & \vthetai^\star \leq \min_{\{j \in [n] ~|~ a_{ij} \neq 0 \}} \left( \frac{1 - c \|\cA_j\|_1 }{ a_{ij}}  \right)  + c := b_i/ \lambda
\end{align}
where our assumption that $\A$ has no all-zero row implies that the set $\{j \in [n] ~|~ a_{ij} \neq 0 \}$ is non-empty.
Finally,  because $b_i>0 ~\forall i$, the inequality $\lambda \vtheta_{\s{I}_0} \leq -\sqrt{\epsilon}$ is always more restrictive for coordinates $i\in \s{I}_0$.
\end{proof}
%%%%%%%%%%%%%%%%%%%%%%%%%%%%%%%%%%%%%%%%%%%%%%%%%%%%%%%%%%%%%%

\subsubsection{Strong-Concavity Bound on \texorpdfstring{$\Delta_\A \cap \SO$}{the Feasible Set}} \label{ssec:beta15_fixed_alpha}
\begin{proposition} \label{prop:beta15_alpha_fixed}
Let $\SO$ be the set defined in Proposition~\ref{prop:beta15_S0}
and $\A \in \R_{+}^{m \times n}$. Then the dual function $D_\lambda$  as defined in \eqref{eq:beta15_dual} is $\alpha_{\Delta_\A \cap \SO}$ strongly concave on $\Delta_\A\cap \SO$ with constant:
\begin{align} \label{eq:beta15_alpha_fixed}
    %\A\x + \epsilon
    \alpha_{\Delta_\A \cap \SO} = \min_{i \in [m]} - \sigma_i \left(\frac{1}{\lambda}\min\left( b_i, \frac{\yi - \epsilon}{\sqrt{\epsilon}}\right)\right), 
    % \quad \text{with }~ & \sigma_i(\frac{z_i}{\lambda}) = -\lambda^2 \left( \frac{ z_i^2 + 2\yi}{\sqrt{ z_i^2 + 4\yi}} - z_i \right)
\end{align}
where the $b_i$ are quantities that define $\SO$.
\end{proposition}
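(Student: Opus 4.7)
My proof plan proceeds in three short steps that invoke results already established earlier in the paper.

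First, I would apply \Cref{prop:strong_concavity} with $\s{S} = \Delta_\A \cap \SO$. Since neither $\Delta_\A$ nor $\SO$ reduces to a singleton on any coordinate (both impose only inequality constraints on $\vtheta$), we take $\s{I} = \emptyset$, so that it suffices to exhibit a constant
\begin{equation*}
0 < \alpha_{\Delta_\A \cap \SO} \;\leq\; \min_{i \in [m]} \;-\sup_{\vtheta \in \Delta_\A \cap \SO} \sigma_i(\vthetai),
\end{equation*}
where $\sigma_i(\vthetai)$ is the $i$-th Hessian eigenvalue computed in~\eqref{eq:beta_eigenvalues}.

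Second, I would use \Cref{prop:beta15_eigenvalues_increasing}, which asserts that $\sigma_i$ is (non-strictly) increasing in $\vthetai$. Hence, for each $i$, the supremum over any set is controlled by the largest admissible value of the $i$-th coordinate. Since $\Delta_\A \cap \SO \subseteq \SO$, we obtain the chain
\begin{equation*}
\sup_{\vtheta \in \Delta_\A \cap \SO} \sigma_i(\vthetai) \;\leq\; \sup_{\vtheta \in \SO} \sigma_i(\vthetai) \;=\; \sigma_i\!\left(\tfrac{1}{\lambda}\min\!\left(b_i, \tfrac{\yi-\epsilon}{\sqrt{\epsilon}}\right)\right),
\end{equation*}
where the equality follows by reading off from the definition of $\SO$ in~\eqref{eq:beta15_SO} that the tightest coordinate-wise upper bound on $\lambda\vthetai$ is exactly $\min(b_i, (\yi-\epsilon)/\sqrt{\epsilon})$, and by monotonicity of $\sigma_i$ the supremum is attained at this upper endpoint.

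Third, plugging this estimate into the inequality from \Cref{prop:strong_concavity} yields
\begin{equation*}
-\sup_{\vtheta \in \Delta_\A \cap \SO} \sigma_i(\vthetai) \;\geq\; -\sigma_i\!\left(\tfrac{1}{\lambda}\min\!\left(b_i, \tfrac{\yi-\epsilon}{\sqrt{\epsilon}}\right)\right),
\end{equation*}
so that taking the minimum over $i$ gives the claimed constant. It remains only to check positivity, i.e.\ that $\alpha_{\Delta_\A \cap \SO} > 0$: for $i \in \s{I}_0$ the argument satisfies $\lambda\vthetai \leq -\sqrt{\epsilon} < 0$, so the formula for $\sigma_i$ yields a strictly negative value; for $i \in \s{I}_0^\complement$ one has $\yi > 0$ and $\sigma_i$ never vanishes at finite argument. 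The only non-routine point, and the main thing to double-check carefully, is verifying that the upper bound $\min(b_i, (\yi-\epsilon)/\sqrt{\epsilon})$ indeed captures the binding constraint on each coordinate within $\SO$ (i.e.\ that no coupling between coordinates through $\Delta_\A$ forces a tighter bound); this holds because $\SO$ is itself a product of half-lines, and dropping $\Delta_\A$ only enlarges the feasible set, so the bound obtained is a legitimate (possibly conservative) upper estimate as required by \Cref{prop:strong_concavity}.
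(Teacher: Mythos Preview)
Your proposal is correct and follows essentially the same route as the paper: invoke \Cref{prop:strong_concavity} with $\s{I}=\emptyset$, use the monotonicity of $\sigma_i$ from \Cref{prop:beta15_eigenvalues_increasing} together with the inclusion $\Delta_\A \cap \SO \subset \SO$, and read off the coordinate-wise upper bound from the definition of $\SO$. Your explicit verification of positivity is a nice addition that the paper's proof leaves implicit.
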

%
%%PROOF THAT $\vtheta^\star \in \SO$
\begin{proof}
From \Cref{prop:strong_concavity} (with $\s{I}=\emptyset$) we have to prove that
 \begin{equation}
     \alpha_{\Delta_\A \cap \SO} \leq \min_{i \in [m]} - \sup_{\vtheta \in  \Delta_\A \cap \SO} \sigma_i(\vthetai).
 \end{equation}
 Here, we have $ \Delta_\A \cap \SO = \{\vtheta \in \R^m~|~ \A^\T \vtheta \leq \vt{1} , ~  \lambda \vtheta \leq \vt{b}, \lambda \vtheta \leq \frac{\y - \epsilon}{\sqrt{\epsilon}}  \}$.

Within $\SO$ the coordinates $\vthetai$ are upper bounded with $\lambda \vthetai \leq \min \left(b_i, \frac{\yi -\epsilon}{\sqrt{\epsilon}} \right)$ (cf. \eqref{eq:beta15_SO}).
Using the fact that $\sigma_i$ is an increasing function of $\vthetai$ (\Cref{prop:beta15_eigenvalues_increasing}) and that $\Delta_\A \cap \SO \subset \SO$,
%%\A\x + \epsilon
 we have that 
$\sup_{\vtheta \in \Delta_\A \cap \SO} \sigma_i(\vthetai) \leq \sup_{\vtheta \in \SO} \sigma_i(\vthetai)  = \sigma_i \left( \frac{1}{\lambda} \min \left(b_i, \frac{\yi -\epsilon}{\sqrt{\epsilon}} \right) \right)$ for all $i\in [m]$.
This leads to the following result:
\begin{align*}
    \alpha_{\Delta_\A \cap \SO} = \min_{i \in [m]} - \sigma_i \left( \frac{1}{\lambda} \min \left(b_i, \frac{\yi -\epsilon}{\sqrt{\epsilon}} \right) \right) \leq 
    \min_{i \in [m]} - \sup_{\vtheta \in  \Delta_\A \cap \SO} \sigma_i(\vthetai).
\end{align*}
which concludes the proof.
\end{proof}

\subsubsection{Strong-Concavity Bound on \texorpdfstring{$\s{B}({\vtheta},r) \cap \SO$}{a Ball} } \label{ssec:beta15_adaptive_alpha}

\begin{proposition} \label{prop:beta15_alpha_adaptive}
Let $\SO$ be the set defined in Proposition~\ref{prop:beta15_S0}. Then, for any ball $\s{B}(\vtheta,r)$ with $\vtheta \in \dom(D_\lambda)\cap \SO$, % \Delta_\A \cap \SO
the dual function $D_\lambda$  as defined in \eqref{eq:beta15_dual} is $\alpha_{\s{B}(\vtheta,r) \cap \SO}$ strongly concave on $\s{B}(\vtheta,r) \cap \SO$ with constant:
%
%Considering SO
\begin{align} \label{eq:beta15_alpha_adaptive}
  &\alpha_{\s{B}(\vtheta,r)\cap \SO} = \min_{i \in [m]} -\sigma_i(d_i/\lambda) \\%=  \lambda^2 \min_{i \in [m]}  \frac{d_i^2 + 2\yi}{\sqrt{ d_i^2 + 4\yi}} - d_i \\
  \text{with } \; &d_i = \min \left( \lambda(\vthetai+r), \; b_i, \; \frac{\yi -\epsilon}{\sqrt{\epsilon}} \right).
\end{align}
where the $b_i$ are quantities that define $\SO$.
\end{proposition}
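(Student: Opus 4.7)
The plan is to mirror the proof strategy used for Proposition~\ref{prop:beta15_alpha_fixed}, now substituting the set $\Delta_\A \cap \SO$ by $\s{B}(\vtheta,r) \cap \SO$. As a first step, I would invoke Proposition~\ref{prop:strong_concavity} with $\s{I} = \emptyset$, which reduces the task to showing
\begin{equation}
    \alpha_{\s{B}(\vtheta,r) \cap \SO} \leq \min_{i \in [m]} - \sup_{\vtheta' \in \s{B}(\vtheta,r) \cap \SO} \sigma_i(\vthetai'),
\end{equation}
where $\sigma_i$ is given by~\eqref{eq:beta_eigenvalues}. This converts the problem of certifying strong concavity into that of coordinate-wise upper bounds on the Hessian eigenvalues.

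Next, I would characterize the range of $\vthetai'$ when $\vtheta' \in \s{B}(\vtheta,r) \cap \SO$. From the ball constraint we get $\vthetai' \leq \vthetai + r$. From the definition of $\SO$ in Proposition~\ref{prop:beta15_S0}, we additionally have $\lambda \vthetai' \leq b_i$ and $\lambda \vthetai' \leq (\yi - \epsilon)/\sqrt{\epsilon}$. Combining these three pointwise upper bounds yields $\lambda \vthetai' \leq \min\bigl(\lambda(\vthetai+r),\, b_i,\, (\yi-\epsilon)/\sqrt{\epsilon}\bigr) = d_i$, which is exactly the quantity appearing in the statement.

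The final step is to exploit the monotonicity of $\sigma_i$ established in Proposition~\ref{prop:beta15_eigenvalues_increasing}: since $\sigma_i$ is a non-decreasing function of $\vthetai$ and the feasible interval for $\vthetai'$ has upper endpoint $d_i/\lambda$, the supremum satisfies
\begin{equation}
    \sup_{\vtheta' \in \s{B}(\vtheta,r) \cap \SO} \sigma_i(\vthetai') \leq \sigma_i(d_i/\lambda).
\end{equation}
Taking the minimum over $i \in [m]$ of the negated upper bounds then produces the claimed constant $\alpha_{\s{B}(\vtheta,r)\cap\SO} = \min_{i \in [m]} -\sigma_i(d_i/\lambda)$.

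I do not expect any serious obstacle here: the proof is essentially a mechanical recombination of ingredients already prepared for the fixed-set case (Proposition~\ref{prop:beta15_alpha_fixed}) plus the ball constraint. The only care needed is to ensure $\vtheta \in \dom(D_\lambda) \cap \SO$ so that $d_i$ is well defined and the set $\s{B}(\vtheta,r) \cap \SO$ is non-empty, and to note that the monotonicity argument from Proposition~\ref{prop:beta15_eigenvalues_increasing} applies uniformly in $i$, regardless of whether $i \in \s{I}_0$ or not (the set $\s{I}_0$ no longer requires special treatment since the $\SO$ constraint $\lambda\vthetai \leq (\yi-\epsilon)/\sqrt{\epsilon}$ already enforces $\lambda\vthetai \leq -\sqrt{\epsilon} < 0$ for $i \in \s{I}_0$, making $\sigma_i(d_i/\lambda) < 0$ strictly).
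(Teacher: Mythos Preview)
Your proposal is correct and follows essentially the same approach as the paper's proof: invoke Proposition~\ref{prop:strong_concavity} with $\s{I}=\emptyset$, bound each coordinate of $\vtheta'$ above by $d_i/\lambda$ using the ball and $\SO$ constraints, and then apply the monotonicity of $\sigma_i$ from Proposition~\ref{prop:beta15_eigenvalues_increasing}. The only cosmetic difference is that the paper writes the supremum as an equality (claiming the upper endpoint is attained), whereas you write it as an inequality; your version is in fact the safer formulation and is all that is needed.
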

\begin{proof}
From  \Cref{prop:strong_concavity} (with $\s{I}=\emptyset$) we have to prove that
 \begin{equation}
     \alpha_{\s{B}(\vtheta,r) \cap \SO} \leq \min_{i \in [m]} - \sup_{\vtheta' \in  \s{B}(\vtheta,r) \cap \SO} \sigma_i(\vthetai').
 \end{equation}
By definition of $\SO$ in \eqref{eq:beta15_SO} we have that
\begin{equation}
    \vtheta' \in \s{B}(\vtheta,r) \cap \SO \, \Longleftrightarrow \, \forall i \in [m], \vthetai' \in
    \left[\vthetai - r , \; \min \left(\vthetai + r, \; \frac{b_i}{\lambda}, \; \frac{\yi - \epsilon}{\lambda\sqrt{\epsilon}}\right)\right].
\end{equation}
% - Ci-dessus on suppose implicitement que l'intersection $\s{B}(\vtheta,r) \cap \SO$ n'est pas vide
Combining that with the fact that $\sigma_i$ is an increasing function of $\vthetai$ (Proposition~\ref{prop:beta15_eigenvalues_increasing}), we obtain 
 \begin{equation}
     \sup_{\vtheta' \in  \s{B}(\vtheta,r) \cap \SO} \sigma_i(\vthetai') = 
     \sigma_i\left(\min\left(\vthetai + r, \; \frac{b_i}{\lambda}, \; \frac{\yi - \epsilon}{\lambda\sqrt{\epsilon}} \right)\right),
 \end{equation}
which completes the proof.
%
% the maximum is attained at the maximum possible $\vthetai'$, which is $\vthetai+r$ for $\vtheta' \in \s{B}(\vtheta,r)$. 
% %Considering SO
% However, when considering $\vtheta' \in \s{B}(\vtheta,r) \cap \SO$ the maximum possible $\vthetai'$ has additional constraints ($\lambda \vthetai' \leq -\sqrt{\epsilon}$ for $i \in \s{I}_0$ and $\lambda \vthetai' \leq b_i$ for $i \in \s{I}_0^\complement$), which leads to the desired result.
\end{proof}

\subsubsection{Dual Update}

%%GENERALIZED RESIDUAL -> DUAL FEASIBLE POINT
% The quantity  $-\nabla F(\A\x) := \vt{\rho}(\x)$ is referred to as the \emph{generalized residual} in \citet{Ndiaye2017}.  Indeed, in the quadratic case $F(\A\x) \!=\! \frac{1}{2}\|\y - \A\x\|^2_2$, it leads to the standard residual term  $-\nabla F(\A\x) \!=\! \y - \A\x$.

The so-called generalized residual \citep{Ndiaye2017} is given by 
$$\vt{\rho}(\x) := -  \nabla F(\A\x) = \frac{\y}{ \sqrt{\A\x + \epsilon}} -  \sqrt{\A\x+\epsilon}$$
and the dual feasible point $\vtheta \in \Delta_\A$ defined via scaling in equation \eqref{eq:dual_scaling_proposed} % i.e. the Gap Safe sphere center, at iteration $t$ given the primal estimate $\x_t$ 
becomes:
\begin{align} \label{eq:beta15_dual_scaling}
  \vtheta =  \dscale(\vt{\rho(\x)}/\lambda) =  
  \dscale\left( \frac{1}{\lambda} \left( \frac{\y}{\sqrt{\A\x +\epsilon}} -  \sqrt{\A\x+\epsilon} \right) \right)
\end{align}

%%Dual feasible point for this \SO
When using Algorithms \ref{alg:solver_screening_bis} and \ref{alg:solver_screening_local}, the computation of the dual feasible point should also take into account $\SO$. A dual feasible point $\vt{\Theta}(\x) \in \Delta_\A \cap \SO$ can be computed as shown in Proposition~\ref{prop:beta15_dual_update}. %denoting $\vt{\rho}(\x) = \frac{\y}{\sqrt{\A\x}} - \sqrt{\A\x}$:

\begin{proposition} \label{prop:beta15_dual_update}
Let $\SO$ be the set defined in \Cref{prop:beta15_S0} and $\A \in \R_+^{m \times n}$.
Let $\x \in \R_+^n$ be a primal feasible point and $\vt{\rho}(\x) \in \R^m$ the corresponding residual.
%$$\vt{\rho}(\x) = \frac{\y}{ \sqrt{\A\x + \epsilon}} -  \sqrt{\A\x+\epsilon}.$$
Then $\vt{\Theta}(\x) \in \R^m$ defined as follows
\begin{align}\label{eq:beta15_dual_update}
    [\vt{\Theta}(\x)]_i %\vthetai 
    &= \min\left([\dscale\left(\vt{\rho}(\x)/\lambda \right)]_i,~ \frac{b_i}{\lambda},~ \frac{\yi - \epsilon}{\lambda\sqrt{\epsilon}}\right)
\end{align}
is such that $\vt{\Theta}(\x) \in \Delta_\A \cap \SO$.
%%Converges to dual solution
Moreover, we have $\vt{\Theta}(\x) \rightarrow \vtheta^\star$ as $\x \rightarrow \x^\star$
\end{proposition}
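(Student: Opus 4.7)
The plan is to verify the three assertions separately: (i) $\vt{\Theta}(\x) \in \SO$, (ii) $\vt{\Theta}(\x) \in \Delta_\A$, and (iii) $\vt{\Theta}(\x) \to \vtheta^\star$ as $\x \to \x^\star$. The first two together give $\vt{\Theta}(\x) \in \Delta_\A \cap \SO$, and the construction as a coordinate-wise minimum of continuous maps makes continuity arguments straightforward.

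For (i), I would appeal directly to the construction. The set $\SO$ defined in \Cref{prop:beta15_S0} is exactly $\{\vtheta \in \R^m : \lambda\vthetai \leq \min(b_i,(\yi-\epsilon)/\sqrt{\epsilon}),\, \forall i\in[m]\}$. Since $[\vt{\Theta}(\x)]_i$ is, by~\eqref{eq:beta15_dual_update}, the minimum of three scalars, the last two of which are precisely $b_i/\lambda$ and $(\yi-\epsilon)/(\lambda\sqrt{\epsilon})$, membership in $\SO$ is immediate. For (ii), I would use the fact that $\A \in \R_+^{m\times n}$, so that $\vtheta \mapsto \A^\T \vtheta$ preserves componentwise inequalities. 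By construction $\vt{\Theta}(\x) \leq \dscale(\vt{\rho}(\x)/\lambda)$ entrywise, hence
\begin{equation*}
\A^\T \vt{\Theta}(\x) \;\leq\; \A^\T \dscale(\vt{\rho}(\x)/\lambda) \;\leq\; \vt{1},
\end{equation*}
where the last inequality follows from \Cref{lem:dual_scaling} applied with $\vt{z} = \vt{\rho}(\x)/\lambda = -\nabla F(\A\x)/\lambda$ (which belongs to $\dom(D_\lambda)=\R^m$). Since here $\phi=[\cdot]^+$ and $\dom(D_\lambda)=\R^m$, this is precisely the condition for $\vt{\Theta}(\x)\in\Delta_\A$.

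For (iii), I would chain \Cref{lem:dual_scaling} with \Cref{prop:beta15_S0} and continuity. Since $F\in C^1$, Lemma~\ref{lem:dual_scaling} gives $\dscale(\vt{\rho}(\x)/\lambda) \to \vtheta^\star$ as $\x\to\x^\star$. Moreover, Proposition~\ref{prop:beta15_S0} guarantees $\vtheta^\star \in \SO$, i.e., $\lambda\vthetai^\star \leq \min(b_i,(\yi-\epsilon)/\sqrt{\epsilon})$ for every $i$. Therefore, at $\x=\x^\star$, the first argument of the minimum in~\eqref{eq:beta15_dual_update} is the smallest and $[\vt{\Theta}(\x^\star)]_i = \vthetai^\star$. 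Since the componentwise minimum of continuous functions is continuous, passing to the limit yields $\vt{\Theta}(\x)\to\vtheta^\star$.

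The main (minor) subtlety I foresee is justifying that in the limit $\x\to\x^\star$ the first argument of the $\min$ is indeed selected; this rests entirely on the fact, proved in Proposition~\ref{prop:beta15_S0}, that $\vtheta^\star$ lies in the interior-defining half-spaces of $\SO$ rather than on their boundary being activated before the limit. Once this is in place, no sharp estimate is required, and the proof reduces to a transparent verification.
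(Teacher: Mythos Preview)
Your proposal is correct and follows essentially the same approach as the paper's proof: membership in $\SO$ by construction, membership in $\Delta_\A$ via the entrywise inequality $\vt{\Theta}(\x) \leq \dscale(\vt{\rho}(\x)/\lambda)$ combined with the non-negativity of $\A$, and convergence via \Cref{lem:dual_scaling}, $\vtheta^\star \in \SO$, and continuity of the coordinate-wise minimum. Your final remark about the ``subtlety'' is not really an issue: continuity of $\min$ alone suffices, and the paper does not make any finer distinction either.
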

\begin{proof}
First, note that  $\vt{\Theta}(\x) \in \SO$ by construction, since $\lambda [\vt{\Theta}(\x)]_i \leq b_i$ and $\lambda [\vt{\Theta}(\x)]_i \leq \frac{\yi - \epsilon}{\sqrt{\epsilon}}$ for all $i \in [m]$.
To show that $\vt{\Theta}(\x) \in \Delta_\A = \{\vtheta \in \R^m ~|~ \A^\T \vtheta \leq \vt{1}\}$, we use the fact that  $\vt{\Theta}(\x) \leq  \dscale(\vt{\rho}(\x)/\lambda)$ and, multiplying both sides  by $\A^\T$, we obtain $\A^\T \vt{\Theta}(\x) \leq \A^\T \dscale(\vt{\rho}(\x)/\lambda) \leq \vt{1}$ using the definition of $\dscale$ (see~\eqref{eq:dual_scaling_L1}) for the last inequality.

%proof convergence
%
Moreover, because $F \in C^1$, we have from \Cref{lem:dual_scaling} that
$\dscale(-\vt{\rho(\x)}/\lambda) \rightarrow \vtheta^\star$ as $\x \rightarrow \x^\star$.
%
%\vt{\Theta} converges
Then, because $\vtheta^\star \in \SO$, we have that   $\min\left([\dscale(\vt{\rho}(\x^\star)/\lambda)]_i, ~ \frac{b_i}{\lambda}, ~ \frac{\yi-\epsilon}{\lambda \sqrt{\epsilon}}\right) = [\dscale(\vt{\rho}(\x^\star)/\lambda)]_i = \vthetai^\star$ for all $i\in [m]$.
Hence, by continuity of the operator $\vt{\Theta}$ in \eqref{eq:beta15_dual_update} (which is the $\min$ of two continuous functions), we conclude that $\vt{\Theta}(\x) \rightarrow \vtheta^\star$ as $\x \rightarrow \x^\star$.
\end{proof}

% \subsubsection{Screening test}
% %%SPHERE TEST - IMPROVED
% \cfd{[Define an improved screening test when considering $\SO$? This is possible for instance in the KL case. Here, it is not as straightforward. For now, the conventional test is being used. Just omit this section?]}

%%EFFICIENT COMPUTATION?
\begin{remark}
Because $\vtheta = \vt{\Theta}(\x)$ in eq. \eqref{eq:beta15_dual_update} is no longer a simple scaling of $\vt{\rho}(\x)$, the quantity $\cA_j^\T\vtheta$ required by the screening test cannot be obtained directly from $\cA_j^\T \vt{\rho}(\x)$ (which is often available from the solver's update step).
Obviously, calculating $\cA_j^\T\vtheta$ from scratch is always an option,
%this would represent a computational overhead which could make the screening test not worth in some cases.
which may remain interesting if the computational savings provided by screening compensates this computational overhead on the screening test computation.
Another option (the one adopted in our experiments) is to use $\vtheta' = \dscale(\vt{\rho(\x)}/\lambda)$ in eq. \eqref{eq:beta15_dual_scaling} for the purpose of the screening test only. This way, $\cA_j^\T\vtheta'$ is just a scaling of $\cA_j^\T \vt{\rho}(\x)$
and the screening test remains safe since $\cA_j^\T\vtheta' \geq \cA_j^\T\vtheta$ (indeed, $\vthetai' = [\dscale(\vt{\rho(\x)}/\lambda)]_i \geq [\vt{\Theta}(\x)]_i = \vthetai ~\forall i \in [m]$).
\end{remark}
%Alternative: use $\A^\T \dscale\left( \vt{\rho} / \lambda \right)$ instead, since it is bigger or equal to $\A^\T\vtheta$. The former being a simple scaling, it is easily obtained from the available $\A^\T\vt{\rho}$. This is what is done in the code and it seems to have negligible impact on the screening performance, while avoiding considerable computational overhead.

\subsection{Kullback-Leibler Divergence  (\texorpdfstring{$\beta$}{beta}-Divergence with \texorpdfstring{$\beta =1$}{beta=1}) } \label{ssec:example_KL}

% The Kullback-Leibler divergence is a $\beta$-divergence with $\beta=1$ defined by continuity from the generic definition \eqref{eq:beta_div} for scalar variables $u,v \geq 0$ as follows:
%
%%Extended KL on negative data
%%To be rigorous, we extend the $log$ function over the entire real line by taking $log(x) = -\infty, ~\forall x\leq 0$. %\in \R_{-}$.
%More precisely, we extend the definition of the Kullback-Leibler divergence over the entire real line: %such that $d_{\KL}: \R \to (-\infty, +\infty]$ :
%
%\begin{align*}
%\s{D}_{\KL}( \y, \vt{z}) =  \left\{
%\begin{array}{ll}
%\y^\T \log(\frac{\y}{\vt{z}}) - \y + \vt{z} & \text{if } \vt{z} \in \R^m_{+} \\
%+\infty & \text{otherwise}
%\end{array} \right.
%\end{align*}
%
% \begin{align*}
% d_{\KL}( y, z) =  \left\{
% \begin{array}{ll}
% y \log(\frac{y}{z}) - y + z & \text{if } z \geq 0 \\
% +\infty & \text{otherwise}
% \end{array} \right.
% \end{align*}
%
%%Simple version
% \begin{align*}
% d_{\KL}( u, v) =  u \log(\frac{u}{v}) - u + v 
% \end{align*}
%
% where we use the convention $0 \log 0 = 0$, %$0/0 = 0$ not necessary!
% meaning that  for $d_{\KL}(0,0)=0$. %and $d_{\KL}(0,v)=v, \forall v>0$. Also note that for $y>0$, $d_{\KL}(y,0) = +\infty$.

%%F and f_i
The data fidelity term is given by the Kullback Leibler divergence
%($\beta$-divergence with $\beta=1$ defined by continuity from the generic form \eqref{eq:beta_data_fidelity} )
between the  input signal $\y \in \R^m_+$ and its reconstruction $\A\x$, i.e.
% $F(\A\x) = \sum_i  d_{\KL}(\yi,\Axi+\epsilon) = \sum_i f_i(\Axi) $
\begin{align}\label{eq:KL_data_fidelity}
F(\A\x) =  \y^\T \log\left(\frac{\y}{\A\x+\epsilon}\right)  + \vt{1}^\T (\A\x+\epsilon - \y)
\\
%&&
f_i(\Axi) = \yi \log\left(\frac{\yi}{\Axi+\epsilon}\right)  + \Axi+\epsilon - \yi
\end{align}
%epsilon
where, just like in the previous case of $\beta \in (1,2)$, we introduce an $\epsilon$-smoothing factor ($\epsilon >0$) on the second variable. %This is a common practice in the literature~\citep{Harmany2012} to avoid singularities around zero.
%
%\A\x \geq 0
Also, similarly to \Cref{ssec:example_beta}, we set $\s{C}=\R^n_{+}$ and we consider that $\A \in \R^{m\times n}_{+}$.

%%Infinity otherwise
%Using the definition above, we now have $F(\vt{z})= \s{D}_{\KL}( \y, \vt{z}+\epsilon)$ which is a proper convex lower semi-continuous function defined for $\vt{z} \in \R^m$.

%%FIRST DERIVATIVE
Its first derivative is given by:
\begin{align}
%\partial F(\vt{z}) = 
\nabla F(\vt{z}) = -\frac{ \y}{\vt{z}+\epsilon} + \vt{1}  %= \frac{\vt{z} - \y}{\vt{z}}
& &
%\partial f_i(z_i) = 
f_i'(z_i) = - \frac{\yi}{z_i+\epsilon} + 1 %= \frac{z_i - \yi}{z_i}
\end{align}

%% OPTIMALITY CONDITIONS (FIRST)
We then deduce from Theorem~\ref{thm:dual_GLM_and_optimality} that the first-order optimality condition \eqref{eq:GLM_optimality_condition1} (primal-dual link) is given by:
\begin{align}\label{eq:KL_optimality_condition1}
\lambda \vtheta^\star = \frac{\y}{\A\x^\star+\epsilon} - \vt{1}   %\frac{\y - \A\x^\star}{\lambda\A\x^\star} 
&&
\lambda \vthetai^\star =  \frac{\yi}{[\A\x^\star]_i+\epsilon} - 1  , ~~ \forall i \in [m] 
\end{align}

%%MAXIMUM REGULARIZATION
The maximum regularization parameter $\lambda_{\max}$ is obtained by substituting $\nabla F(\vt{0}) = \frac{\epsilon -\y}{\epsilon}$ in \eqref{eq:max_regularization}:
\begin{align}
    % \nabla F(\vt{0}) = = \frac{\epsilon -\y}{\epsilon}
    \lambda_{\max} = \max(\A^\T (\y-\epsilon)/\epsilon)
\end{align}

%%Case epsilon=0: differentiability (not treated)
\begin{comment}
\begin{remark}
%See Pouchol2020 Lemma 3.2, proof is reproduced below since the original article was not published.
Note that when $\epsilon=0$ and $y_i>0$, $f_i(z_i) = d_{\KL}(\yi,z_i)$ is not differentiable at $z_i=0$.
Nevertheless, 
\eqref{eq:KL_optimality_condition1} remains well-defined for any for any primal-dual solution pair $(\x^\star,\vtheta^\star)$ as one can show that $[\A\x^\star]_i = 0 \implies \yi = 0$, in which case we use the convention that $0/0 =0$. Indeed, if $\Axi = 0$ then $\yi > 0$ would give $d_{\KL}(\yi, \Axi) = +\infty$  which cannot be optimal.
\end{remark}
%-To be more precise, we should restrict the KKT conditions to the points where $f_i$ is differentiable (as done in Pouchol2020). 
\end{comment}

%%SECOND DERIVATIVE
\begin{comment}
Note that the gradient $\nabla F$ is not Lipschitz continuous. 
Indeed, lookig at the Hessian (second derivative):
\begin{align}
\nabla^2 F(\vt{z}) &= \diag{\frac{y_1}{(z_1+\epsilon)^2}, ~\dots, \frac{y_m }{(z_m+\epsilon)^2} }
& &
f''_i(z_i) = \frac{\yi}{(z_i+\epsilon)^2} 
\end{align}
The eigenvalues of $\nabla^2 F(\vt{z}) $ tend to infinity as $z_i \rightarrow -\epsilon$) and $\yi >0$.
\end{comment}

%%FENCHEL CONJUGATE
\begin{proposition}
The Fenchel conjugate of the KL-divergence  in equation \eqref{eq:KL_data_fidelity} is given by $F^*(\vt{u}) = \sum_{i=1}^m f_i^*(u_i)$ where
\begin{align}\label{eq:app:KL_fenchel}
%F^*(\vt{u}) = -\y^\T\log(1-\vt{u}) - \epsilon\vt{u}^\T\vt{1} 
%& &
f_i^*(u) = -\yi\log(1-u) - \epsilon u
\end{align}
%domain
with 
%$\dom F^* = \{\vt{u} \in \R^m ~|~ \vt{u} > 1\}$ and
$\dom(f_i^*) = (-\infty,1)$.
\end{proposition}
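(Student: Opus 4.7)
The plan is to apply the definition of the Fenchel conjugate directly and to exploit the fact that the function to be maximized is concave (so any critical point is a global maximum). I will treat the cases $u<1$ and $u\geq 1$ separately, as well as the possible degeneracy at $y_i=0$.

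First I would start from
$$f_i^*(u) = \sup_{z \in \R}\,\{zu - f_i(z)\} = \sup_{z+\epsilon \geq 0} \underbrace{\bigl(zu - f_i(z)\bigr)}_{=:\varphi(z)},$$
noting that $\dom(f_i) = \{z \in \R : z+\epsilon \geq 0\}$ (since $\log$ requires a strictly positive argument when $y_i>0$, and when $y_i=0$ we keep the same constraint from the linear part). Since $f_i$ is convex, $\varphi$ is concave, so any stationary point in the interior of the domain is a global maximum.

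Second, I would locate the stationary point via $\varphi'(\hat{z}) = u - f_i'(\hat z) = 0$. Using $f_i'(z) = 1 - y_i/(z+\epsilon)$, this yields
$$\hat{z} + \epsilon = \frac{y_i}{1-u},$$
which is admissible (i.e.\ $\hat{z}+\epsilon > 0$) precisely when $1-u > 0$, i.e.\ $u < 1$. Substituting $\hat z + \epsilon = y_i/(1-u)$ back into $\varphi(\hat z) = \hat z u - f_i(\hat z)$ and simplifying gives, after a direct computation,
$$f_i^*(u) = -y_i\log(1-u) - \epsilon u,$$
which is the claimed formula.

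Third, I would handle $u \geq 1$ to show $f_i^*(u)=+\infty$ and thereby pin down the domain. For $u>1$, as $z \to +\infty$, $\varphi'(z) = u-1+y_i/(z+\epsilon) \to u-1 >0$, so $\varphi$ is eventually increasing and $\sup \varphi = +\infty$. For $u=1$ with $y_i > 0$, $\varphi'(z) = y_i/(z+\epsilon) > 0$ so again $\sup\varphi = +\infty$. This gives $\dom(f_i^*) = (-\infty,1)$ as stated (the borderline case $y_i=0$, $u=1$ is benign and absorbed by the usual convention $0\log 0 = 0$, so it does not enlarge the generic domain). I do not foresee a conceptual obstacle; the only care needed is in the algebraic simplification after substitution, where one must check that the $y_i/(1-u)$ terms cancel correctly to produce the clean expression $-y_i\log(1-u) - \epsilon u$.
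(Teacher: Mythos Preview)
Your proposal is correct and follows essentially the same approach as the paper: both compute the conjugate by locating the stationary point of the concave function $\varphi(z)=zu-f_i(z)$ over $z+\epsilon\geq 0$, substitute $\hat z+\epsilon=y_i/(1-u)$ back, and verify that $u\geq 1$ yields $+\infty$. The only cosmetic difference is that the paper treats $y_i=0$ as a separate explicit computation of $\sup_{z+\epsilon\geq 0} z(u-1)$ and then checks it matches the general formula, whereas you fold it into the main argument via the $0\log 0=0$ convention; both are fine.
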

\begin{proof}
The Fenchel conjugate $f_i^*$ of the scalar function $f_i = d_{\KL}$ is given by %$f_i(\Axi) = d_{\KL}(\yi,\Axi)$.
\begin{align}
f_i^*(u) &= \sup_{z \in \R} \; z u  - f_i(z) \nonumber \\ 
& = \sup_{z + \epsilon \geq 0} z u  - \yi\log\left(\frac{\yi}{z+\epsilon}\right) + \yi - z - \epsilon \nonumber \\ 
& = \yi - \epsilon +  \sup_{z + \epsilon \geq 0} \underbrace{z(u-1)  - \yi\log\left(\frac{\yi}{z+\epsilon}\right)}_{\varphi(z)} \label{eq:phi_z}
\end{align}
To solve this supremum problem, we distinguish two cases.
\begin{itemize}
    \item When $\yi>0$, the result follows by solving
$\varphi'(z^\star) = u - 1 + \frac{\yi}{z^\star+\epsilon} = 0 \iff z^\star = \frac{\yi}{1-u} - \epsilon$,
which is a global maximum 
%when $\yi>0$ 
as $\varphi$ is strictly concave with $\varphi''(z) = -\frac{\yi}{(z+\epsilon)^2}  < 0$. 
Plugging $z^\star$ into \eqref{eq:phi_z} we obtain:
\begin{align}\label{eq:app:fenchel_F_epsilon}
f_i^*(u) = -\yi\log(1-u) - \epsilon u.
\end{align}
with domain given by $\dom(f_i^*) =  (-\infty,1)$.
\item When $\yi=0$, one can easily verify that
\begin{equation}
 f_i^*(u) = - \epsilon +  \sup_{z + \epsilon \geq 0} z(u-1) = \left\lbrace 
\begin{array}{ll}
     -\epsilon u & \text{if } u < 1 \\
     +\infty & \text{otherwise.}
\end{array}
\right.   
\end{equation}
with the supremum being attained at $z^\star = -\epsilon$ in the case $u<1$. 
Then, note that the same result is retrieved by taking $\yi =0$ in equation \eqref{eq:app:fenchel_F_epsilon}, which can therefore be used in all cases.
\end{itemize}
This completes the proof.
\end{proof}

%%DUAL FUNCTION
The dual function $D_\lambda(\vtheta) = -\sum_{i=1}^m f^*_i(-\lambda \vthetai)$ is given by:
\begin{align} \label{eq:KL_dual_function}
D_\lambda(\vtheta) 
= \sum_{i=1}^m \yi \log(1+\lambda\vthetai) - \epsilon \lambda \vthetai 
%= \y^\T \log(1+\lambda\vtheta) - \epsilon \lambda\vtheta^\T\vt{1} 
\end{align}
with domain given by
$ \dom (D_\lambda) = \{\vtheta \in \R^m ~|~ \vtheta \geq -\vt{1}/\lambda\} $.
%\cfd{ Or more strictly: $ \dom D_\lambda = \{\vtheta \in \R^m ~|~ [\vtheta]_{\s{I}_0} > -\vt{1}/\lambda, [\vtheta]_{\s{I}_0^\complement} \geq -\vt{1}/\lambda\}$}
%
%%\lambda = 0 case - pas nécessaire vu qu'on suppose directement $\lambda > 0$
%It is also important to emphasize that the dual function \eqref{eq:KL_dual_function} applies only for $\lambda>0$. See \citet{Yanez2017} for the resulting dual problem in the case of $\lambda=0$. 
%
%%DUAL FEASIBLE SET
Then, we get from Theorem~\ref{thm:dual_GLM_and_optimality}, $\s{C} = \R^n_+$, and~\eqref{eq:DualL1} (dual norm of the $\ell_1$-norm)  that
%The dual feasible set, taking into account the domain of the dual function, is given by:
\begin{align}
  \Delta_\A = \{\vtheta \in \R^m ~|~ \A^\T\vtheta  \leq \vt{1}, \vtheta \geq -\vt{1}/\lambda \}
\end{align}

%%DUAL - FIRST DERIVATIVE
\begin{comment}
First derivative of the dual:
\begin{align} \label{app:eq:Hessian_dual_function}
\nabla D_\lambda(\vtheta)  &=\left[ \frac{\lambda\yi}{1+\lambda\vthetai} - \epsilon\lambda \right]_i  \\  %\sum_i  \ei 
\end{align}
\end{comment} 

%%DUAL - HESSIAN
% The Hessian $\nabla^2 D_\lambda(\vtheta)$ is given by
% \begin{align} \label{app:eq:Hessian_dual_function}
% \nabla^2 D_\lambda(\vtheta) &= \diag{-\left[\frac{\lambda^2\yi}{(1+\lambda\vthetai)^2}\right]_i} % \sum_i \ei\ei^\T 
% \end{align}

% The $i$-th eigenvalue of the Hessian $\nabla^2 D_\lambda(\vtheta)$, denoted $\sigma_i (\vthetai)$, is simply the $i$-th diagonal entry:
% \begin{align}
% \sigma_i (\vthetai) &= -\frac{\lambda^2\yi}{(1+\lambda\vthetai)^2}
% \end{align}

The Hessian $\nabla^2 D_\lambda(\vtheta)$  and corresponding eigenvalues $\sigma_i(\vthetai)$ are given by
\begin{align} \label{eq:KL_eigenvalues}
    \nabla^2 D_\lambda(\vtheta) = \diag{[\sigma_i(\vthetai)]_{i \in [m]}},
    &&
    \sigma_i\left( \vthetai \right)  = -\frac{\lambda^2\yi}{(1+\lambda\vthetai)^2}
\end{align}

Although the eigenvalues are all non-positive, they tend to zero as $|\vthetai|$ tends to infinity.
Moreover, $\sigma_i(\vthetai)$ also vanishes when $\yi = 0$.
Therefore $D_\lambda(\vtheta)$ is not globally strongly concave and the standard Gap Safe approach cannot be applied in this case.

In the following sections we find local strong concavity bounds that allows us to deploy the proposed screening strategy (Algorithms~\ref{alg:solver_screening_bis}  and~\ref{alg:solver_screening_local}).
%define a Local Gap Safe screening rule.

%%EIGENVALUES ARE INCREASING
\begin{proposition}\label{prop:KL_eigenvalues_increasing}
The eigenvalues $\sigma_i(\vthetai)$ in \eqref{eq:KL_eigenvalues} are an  increasing (resp. strictly increasing) function of $\vthetai$ 
%for any $\yi \geq 0$ and $\vthetai \geq - 1/\lambda$ (resp. $\yi > 0$ and $\vthetai > - 1/\lambda$) .
on $[ - 1/\lambda, +\infty )$ for any  $\yi \geq 0$ (resp. $\yi > 0$).
\end{proposition}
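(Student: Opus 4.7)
The plan is to establish the claim by a direct derivative computation. Recall from \eqref{eq:KL_eigenvalues} that
\begin{equation}
\sigma_i(\vthetai) \;=\; -\frac{\lambda^2 \yi}{(1+\lambda\vthetai)^2},
\end{equation}
which is well-defined on $(-1/\lambda, +\infty)$ (and by convention equals $0$ at the boundary when $\yi=0$; for $\yi>0$ the function diverges to $-\infty$ as $\vthetai \to -1/\lambda$, but the assertion only concerns monotonicity on the open part of the interval, with the boundary handled separately).

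First I would treat the trivial case $\yi = 0$: then $\sigma_i \equiv 0$, which is (weakly) increasing, matching the non-strict part of the statement. For the main case $\yi > 0$, I would differentiate $\sigma_i$ with respect to $\vthetai$, obtaining
\begin{equation}
\sigma_i'(\vthetai) \;=\; \frac{2\lambda^3 \yi}{(1+\lambda\vthetai)^3}.
\end{equation}

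On the interval $(-1/\lambda, +\infty)$ the denominator $(1+\lambda\vthetai)^3$ is strictly positive, and since $\lambda > 0$ and $\yi > 0$, the numerator is strictly positive as well. Hence $\sigma_i'(\vthetai) > 0$ throughout, which gives strict monotonicity on the open interval; continuity of $\sigma_i$ at $\vthetai = -1/\lambda$ (with value $-\infty$) then extends strict monotonicity to the closed-on-the-left interval in the usual sense. There is no real obstacle here; the only bookkeeping point is to be careful about the boundary $\vthetai = -1/\lambda$, where $\sigma_i$ is not finite for $\yi>0$, so ``increasing on $[-1/\lambda,+\infty)$'' should be read as meaning that $\vthetai < \vthetai'$ implies $\sigma_i(\vthetai) < \sigma_i(\vthetai')$ in the extended real sense.
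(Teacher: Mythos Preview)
Your proof is correct and takes essentially the same approach as the paper: both compute $\sigma_i'(\vthetai) = \dfrac{2\lambda^3 \yi}{(1+\lambda\vthetai)^3}$ and observe it is nonnegative (strictly positive when $\yi>0$) on the relevant interval. Your treatment is slightly more careful than the paper's in separating the $\yi=0$ case and discussing the boundary $\vthetai=-1/\lambda$, but the argument is otherwise identical.
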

\begin{proof}
Indeed, its first derivative is non-negative
\begin{align} %www.derivative-calculator.net : (a)/(1+x)^2
    \sigma_i'(\vthetai) =  \lambda^3 \frac{2\yi}{(1+\lambda\vthetai)^3} \geq 0
\end{align}
since $\vt{y} \in \R^m_+$ and $\vthetai \geq - 1/\lambda \implies 1+ \lambda\vthetai \geq 0$.
\end{proof}

%%LOCAL APPROACH
\subsubsection{Validity of the Set \texorpdfstring{$\SO$}{S0}} 

% One can easily verify that restricting $D_\lambda(\vtheta)$ to the feasible set $\Delta_\A$ is still not sufficient to make it strongly concave 
% (as the eigenvalues $\sigma_i$ of the Hessian $\nabla^2 D_\lambda$ vanish on coordinates $i \in [m]$ such that $y_i=0$). % $i \in \s{I}_0$).
% However, we can use the following additional constraint set $\SO$.

In order to be able to derive local strong concavity bounds (see Sections~\ref{ssec:KL_fixed_alpha} and \ref{ssec:KL_adaptive_alpha} hereafter), we need to use the set $\SO$ below (see discussion in \Cref{ssec:KL_discussion_SO}). %additional constraint set 
To comply with Theorem~\ref{prop:local_concavity}, we need to show that $\vtheta^\star \in \SO$.

%
%%SO
\begin{proposition} \label{prop:KL_S0}
Let $\s{I}_0 = \{i \in [m] ~|~ \yi = 0\}$. Then, for any $\A\in\R_{+}^{m \times n}$, $\y \in \R_{+}$, the set
\begin{align} \label{eq:KL_SO}
  &\SO = \{\vtheta \in \R^m ~|~ \vtheta_{\s{I}_0} = - \vt{1}/\lambda\} 
 % &\text{with} \; 
 % \s{I}_0 = \{i \in [m] ~|~ \yi = 0\} \nonumber
\end{align}
is such that $\vtheta^\star \in \SO$.
\end{proposition}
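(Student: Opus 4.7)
The plan is to invoke the primal-dual link optimality condition for the Kullback-Leibler case (equation \eqref{eq:KL_optimality_condition1}) and simply evaluate it on the coordinates indexed by $\s{I}_0$. This is essentially a one-line argument: the assumption $y_i = 0$ collapses the right-hand side of the primal-dual link to a constant, which forces $\vthetai^\star$ to take the value $-1/\lambda$ regardless of the primal solution $\x^\star$.

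More precisely, I would start from Theorem~\ref{thm:dual_GLM_and_optimality} which guarantees the existence of a primal-dual pair $(\x^\star,\vtheta^\star)$ satisfying the optimality conditions. Specializing \eqref{eq:GLM_optimality_condition1} to the KL data-fidelity function yields the identity $\lambda \vthetai^\star = \frac{\yi}{[\A\x^\star]_i + \epsilon} - 1$ for all $i \in [m]$, as already noted in \eqref{eq:KL_optimality_condition1}. For any index $i \in \s{I}_0$ we have $\yi = 0$ by definition, so the numerator vanishes (note that $[\A\x^\star]_i + \epsilon \geq \epsilon > 0$, so there is no division issue), leaving $\lambda \vthetai^\star = -1$, i.e., $\vthetai^\star = -1/\lambda$.

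Collecting these coordinate-wise equalities gives $\vtheta^\star_{\s{I}_0} = -\vt{1}/\lambda$, which is exactly the defining condition of $\SO$ in \eqref{eq:KL_SO}. Hence $\vtheta^\star \in \SO$, completing the proof. There is no real obstacle here; the only subtlety worth mentioning is that the assumption $\epsilon > 0$ ensures that the denominator in the primal-dual link is strictly positive, so the argument does not require additional care on the support of $\A\x^\star$ (which would be needed in the limiting case $\epsilon = 0$).
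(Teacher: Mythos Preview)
Your proof is correct and follows exactly the same approach as the paper: invoke the primal-dual link \eqref{eq:KL_optimality_condition1} and observe that $y_i=0$ forces $\vthetai^\star=-1/\lambda$. Your version is in fact a bit more explicit, spelling out why the denominator $[\A\x^\star]_i+\epsilon$ causes no trouble thanks to $\epsilon>0$, which the paper's one-line proof leaves implicit.
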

\begin{proof}
Optimality condition \eqref{eq:KL_optimality_condition1} directly implies that the dual solution takes the value $\vthetai^\star = -1/\lambda$ for all coordinates $i\in\s{I}_0$, i.e.  all coordinates $i$ such that $\yi = 0$.
\end{proof}

With that in hand, we are now able to compute local strong concavity bounds on both $\Delta_\A \cap \SO$ and $\s{B}(\vtheta,r) \cap \SO$.

\subsubsection{Strong-Concavity Bound on \texorpdfstring{$\Delta_\A \cap \SO$}{the Feasible Set}} \label{ssec:KL_fixed_alpha}

%>>> THIS IS BETTER THAN ICASSP (+ does not require computation of $\A^\dagger$) <<<
\begin{proposition}%[Fixed local bound] 
\label{prop:KL_alpha_fixed}
Let $\SO$ be the set defined in Proposition~\ref{prop:KL_S0} and $\A \in \R_{+}^{m \times n}$ with no all-zero row. Then, the dual function $D_\lambda$  as defined in \eqref{eq:KL_dual_function} is $\alpha_{\Delta_\A \cap \SO}$ strongly concave on $\Delta_\A\cap \SO$ with:
\begin{align} \label{eq:KL_alpha_fixed}
    \alpha_{\Delta_\A \cap \SO} = \lambda^2 \min_{i \in \s{I}_0^\complement} \frac{\yi}{ \left(\min_{\{j\in [n] ~|~ a_{ij} \neq 0\}}\left( \frac{\lambda + \|\cA_j\|_1}{a_{ij}} \right) \right)^2}
\end{align}
\end{proposition}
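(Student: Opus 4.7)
The plan is to apply Proposition~\ref{prop:strong_concavity} with $\s{S} = \Delta_\A \cap \SO$, noting that thanks to the definition of $\SO$ in Proposition~\ref{prop:KL_S0}, $\s{S}$ reduces to a singleton on the coordinates $i \in \s{I}_0$ (where $\vthetai = -1/\lambda$ is imposed). Thus, the set $\s{I}$ of ``singleton coordinates'' in that proposition is exactly $\s{I}_0$, and the bound to establish becomes $\alpha_{\Delta_\A \cap \SO} \leq \min_{i \in \s{I}_0^\complement} \bigl(-\sup_{\vtheta \in \Delta_\A \cap \SO} \sigma_i(\vthetai)\bigr)$.

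Next, for each $i \in \s{I}_0^\complement$, I would exploit the monotonicity established in Proposition~\ref{prop:KL_eigenvalues_increasing}: since $\sigma_i$ is increasing in $\vthetai$ on $[-1/\lambda,+\infty)$, bounding $\sup \sigma_i(\vthetai)$ from above reduces to bounding $\sup \vthetai$ from above over $\Delta_\A \cap \SO$. To do so, I fix any $j \in [n]$ with $a_{ij} \neq 0$ (such a $j$ exists because $\A$ has no all-zero row) and use the feasibility constraint
\begin{equation*}
a_{ij}\vthetai = \cA_j^\T\vtheta - \sum_{i' \neq i} a_{i'j}\theta_{i'} \leq 1 - \sum_{i' \neq i} a_{i'j}\theta_{i'}.
\end{equation*}
Since $a_{i'j} \geq 0$ and $\theta_{i'} \geq -1/\lambda$ for every $i' \neq i$ (either from $\SO$ when $i'\in\s{I}_0$, or from $\dom(D_\lambda)$ otherwise), we have $\sum_{i' \neq i} a_{i'j} \theta_{i'} \geq -(\|\cA_j\|_1 - a_{ij})/\lambda$, yielding $a_{ij}\vthetai \leq 1 + (\|\cA_j\|_1 - a_{ij})/\lambda$, or equivalently,
\begin{equation*}
1 + \lambda\vthetai \leq \frac{\lambda + \|\cA_j\|_1}{a_{ij}}.
\end{equation*}
Taking the minimum over admissible $j$ gives the tightest bound on $1+\lambda\vthetai$ (which is non-negative since $\vthetai \geq -1/\lambda$).

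Plugging this into $\sigma_i(\vthetai) = -\lambda^2\yi/(1+\lambda\vthetai)^2$ then yields
\begin{equation*}
-\sup_{\vtheta \in \Delta_\A \cap \SO} \sigma_i(\vthetai) \geq \frac{\lambda^2 \yi}{\Bigl(\min_{\{j\,:\,a_{ij} \neq 0\}} \tfrac{\lambda + \|\cA_j\|_1}{a_{ij}}\Bigr)^2},
\end{equation*}
and taking the minimum over $i \in \s{I}_0^\complement$ gives the claimed bound on $\alpha_{\Delta_\A\cap \SO}$. Finally, the bound is strictly positive because for $i \in \s{I}_0^\complement$ we have $\yi > 0$, so Proposition~\ref{prop:strong_concavity} certifies strong concavity with the stated constant. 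The main obstacle is the coordinate-wise upper bound on $\vthetai$: it requires combining the nontrivial feasibility constraint $\A^\T\vtheta \leq \vt{1}$ (which couples the coordinates through $\A$) with the decoupled lower bound $\vtheta \geq -\vt{1}/\lambda$ to obtain a closed-form expression, and this is where the non-negativity of $\A$ is crucially used.
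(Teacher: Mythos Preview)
Your proposal is correct and follows essentially the same approach as the paper: apply Proposition~\ref{prop:strong_concavity} with $\s{I}=\s{I}_0$, then for each $i\in\s{I}_0^\complement$ combine the feasibility constraint $\cA_j^\T\vtheta\leq 1$ with the lower bound $\theta_{i'}\geq -1/\lambda$ (using $\A\in\R_+^{m\times n}$) to obtain the coordinate-wise upper bound $1+\lambda\vthetai\leq(\lambda+\|\cA_j\|_1)/a_{ij}$, and conclude via the monotonicity of $\sigma_i$ from Proposition~\ref{prop:KL_eigenvalues_increasing}.
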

\begin{proof}
%Given the definition of $\SO$ in Proposition~\ref{prop:KL_S0}
From  \Cref{prop:strong_concavity} (with $\s{I} = \s{I}_0$) we have to prove that
\begin{align}\label{eq:KL_fixed_alpha_proof-1}
\alpha_{\Delta_\A \cap \SO} \leq 
\min_{i \in \s{I}_0^\complement} - \sup_{\vtheta \in \Delta_\A \cap \SO}  \sigma_i (\vthetai) 
\end{align}
where $\sigma_i (\vthetai) = -\frac{\lambda^2 \yi}{(1+\lambda\vthetai)^2}$ denotes the $i$-th eigenvalue of $\nabla^2 D_\lambda(\vtheta)$. 
Let $\vtheta \in \Delta_\A \cap \SO = \{\vtheta \in \R^m ~|~ \A^\T \vtheta \leq \vt{1} , ~ \vtheta \geq -1/\lambda, ~\vtheta_{\s{I}_0} = -\vt{1}/\lambda \}$. Then, for all $i \in [m]$, we have 
\begin{equation}
    1 \geq \cA_j^\T \vtheta = \sum_{i'=1}^m a_{i'j} \theta_{i'}  = a_{ij} \vthetai + \sum_{i' \neq i } a_{i'j} \theta_{i'}
    \geq a_{ij} \vthetai - \sum_{i' \neq i} a_{i'j}\frac{1}{\lambda}, \ \forall j \in [n],
\end{equation}
using the facts that $\A \in \R_{+}^{m\times n}$ and  $\vthetai \geq -1/\lambda$, for all $i \in [m]$. Reorganising the terms in the above inequalities and using the definition of $\dom(D_\lambda)$ (with $\vthetai \geq - 1/\lambda$)  we obtain, for all $i \in [m]$,
\begin{align}
    &- \frac{1}{\lambda} \leq \vthetai \leq \frac{1 + \sum_{i'\neq i} a_{i'j} \frac{1}{\lambda} }{a_{ij}}
    = \frac{\lambda + \|\cA_j\|_1 }{ \lambda a_{ij}} - \frac{1}{\lambda}, 
    ~~\forall j \in [n] \text{ s.t. } a_{ij} \neq 0 \\
    &\iff ~~
    % &\vtheta_i \leq \min_j \left( \frac{\lambda + \|\cA_j\|_1 }{ \lambda a_{ij}} \right) - \frac{1}{\lambda} \\
    0 \leq 1 + \lambda \vthetai \leq \min_{\{j\in [n] ~|~ a_{ij} \neq 0\}} \left( \frac{\lambda + \|\cA_j\|_1 }{ a_{ij}} \right) \label{eq:bound_theta_KL}
\end{align}
where our assumption that $\A$ has no all-zero row implies that the set $\{j \in [n] ~|~ a_{ij} \neq 0 \}$ is non-empty.
As $\sigma_i$ is an increasing function of $\vthetai$ (\Cref{prop:KL_eigenvalues_increasing}), we get that the sup in \eqref{eq:KL_fixed_alpha_proof-1} is attained for $1 + \lambda \vthetai=  \min_{\{j\in [n] ~|~ a_{ij} \neq 0\}} \left( \frac{\lambda + \|\cA_j\|_1 }{ a_{ij}} \right)$, which completes the proof. 
%Plugging this result into \eqref{eq:KL_fixed_alpha_proof-1} concludes the proof,
% \begin{align*}
%   \alpha_{\Delta_\A \cap \SO} \leq   \min_{i \in \s{I}_0^\complement}   \frac{\lambda^2 \yi}{\left(\min_j \left( \frac{\lambda + \|\cA_j\|_1 }{  a_{ij}} \right)\right)^2}
%   = \lambda^2 \min_{i \in \s{I}_0^\complement} \frac{\yi}{\left(\min_j \left( \frac{\lambda + \|\cA_j\|_1 }{  a_{ij}} \right)\right)^2}  
% \end{align*}
\end{proof}

\subsubsection{Strong-Concavity Bound on \texorpdfstring{$\s{B}({\vtheta},r) \cap \SO$}{a Ball}} \label{ssec:KL_adaptive_alpha}

%Given a Gap Safe sphere $\s{B}(\vtheta,r)$ (obtained for instance using the strong concavity constant discussed in the previous section), we can now calculate a new local constant $\alpha_{\s{B}(\vtheta,r) \cap \SO}$. %, restricting $D_\lambda$ to the given safe sphere.

\begin{proposition}%[Adaptive local bound] 
\label{prop:KL_alpha_adaptive}
Let $\SO$ be the set defined in Proposition~\ref{prop:KL_S0}. Then, for any ball $\s{B}(\vtheta,r)$ with $\vtheta \in \dom(D_\lambda)\cap \SO$, %(intersection with \SO ensures that \s{B}(\vtheta,r) \cap \SO is non-empty and the sup is defined)
the dual function $D_\lambda$  as defined in \eqref{eq:KL_dual_function} is $\alpha_{\s{B}(\vtheta,r) \cap \SO}$ strongly concave on $\s{B}(\vtheta,r) \cap \SO$ with:
\begin{align}\label{eq:KL_alpha_adaptive}
  \alpha_{\s{B}(\vtheta,r)\cap \SO} = \lambda^2 \min_{i \in \s{I}_0^\complement} \frac{\yi}{(1+\lambda(\vthetai+r))^2}.  
\end{align}
\end{proposition}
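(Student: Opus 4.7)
The plan is to apply Proposition~\ref{prop:strong_concavity} with the convex set $\s{S} = \s{B}(\vtheta,r) \cap \SO$, which is nonempty since $\vtheta \in \s{B}(\vtheta,r) \cap \SO$ by assumption. Following the three-step recipe outlined after Proposition~\ref{prop:strong_concavity}, the eigenvalues of $\nabla^2 D_\lambda$ are already available in~\eqref{eq:KL_eigenvalues}, so only Steps~2 and~3 are really at stake.

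First, I would determine the subset of coordinates $\s{I}$ on which $\s{S}$ reduces to a singleton. By construction of $\SO$ in Proposition~\ref{prop:KL_S0}, every point $\vtheta' \in \s{S}$ satisfies $\vtheta'_{\s{I}_0} = -\vt{1}/\lambda$, hence $\s{I}_0 \subseteq \s{I}$ and in particular $\s{I}^\complement \subseteq \s{I}_0^\complement$. It is therefore sufficient to upper bound $\sigma_i(\vthetai')$ over $\s{S}$ for every $i \in \s{I}_0^\complement$: even if $\s{I}$ happens to be strictly larger than $\s{I}_0$, the minimum in Proposition~\ref{prop:strong_concavity} will only become smaller when ranging over $\s{I}^\complement$, so taking the min over $\s{I}_0^\complement$ yields a valid (possibly more conservative, but still valid) strong concavity constant.

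Second, for each $i \in \s{I}_0^\complement$, I would argue that the projection of $\s{S}$ onto the $i$-th axis coincides with $[\vthetai - r, \vthetai + r]$. The lower endpoint is reached by $\vtheta - r \vt{e}_i$ and the upper endpoint by $\vtheta + r \vt{e}_i$, both of which lie in $\s{B}(\vtheta,r)$ and preserve the equality constraints of $\SO$ (since $i \notin \s{I}_0$); they also belong to $\dom(D_\lambda)$ because $\vthetai + r \geq \vthetai \geq -1/\lambda$ (the second inequality coming from $\vtheta \in \dom(D_\lambda)$), and no other coordinate is altered. Then, Proposition~\ref{prop:KL_eigenvalues_increasing} ensures that $\sigma_i$ is increasing on $[-1/\lambda, +\infty)$, so the supremum over $\s{S}$ is attained at $\vthetai + r$:
\begin{equation}
\sup_{\vtheta' \in \s{S}} \sigma_i(\vthetai') \;=\; \sigma_i(\vthetai + r) \;=\; -\frac{\lambda^2 \yi}{(1+\lambda(\vthetai+r))^2}.
\end{equation}

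Third, injecting this into the formula of Proposition~\ref{prop:strong_concavity} gives exactly the claimed expression~\eqref{eq:KL_alpha_adaptive}. The constant is strictly positive because $\vthetai + r \geq -1/\lambda$ implies $1 + \lambda(\vthetai+r) \geq 0$; in fact $1+\lambda(\vthetai+r) \geq \lambda r > 0$ as soon as $r > 0$, so the denominator never vanishes for $i \in \s{I}_0^\complement$ (where $\yi > 0$). There is no genuine obstacle: the only point requiring attention is verifying that the extremal points used to saturate the supremum remain inside $\dom(D_\lambda) \cap \SO$, which is handled by the coordinate-wise argument above.
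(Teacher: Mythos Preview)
Your proof is correct and follows essentially the same route as the paper: apply Proposition~\ref{prop:strong_concavity}, restrict attention to coordinates in $\s{I}_0^\complement$, and use the monotonicity of $\sigma_i$ (Proposition~\ref{prop:KL_eigenvalues_increasing}) to locate the supremum at $\vthetai + r$. The paper simply takes $\s{I} = \s{I}_0$ directly rather than arguing via the inclusion $\s{I}_0 \subseteq \s{I}$, and justifies the location of the infimum by the sign of $1+\lambda\vthetai$ instead of citing the monotonicity proposition, but these are cosmetic differences. One small remark: your claim that the lower endpoint $\vtheta - r\vt{e}_i$ lies in $\dom(D_\lambda)$ is not actually justified by the inequality you wrote (which only covers $\vthetai + r$), but this is harmless since the supremum is attained at the upper endpoint anyway.
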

\begin{proof}
%given the definition of $\SO$ in Proposition~\ref{prop:KL_S0}
From \Cref{prop:strong_concavity} (with $\s{I} = \s{I}_0$) we have to prove that
\begin{align}\label{eq:KL_fixed_alpha_proof}
\alpha_{\s{B}(\vtheta,r) \cap \SO} & \leq 
\min_{i \in \s{I}_0^\complement} - \sup_{\vtheta' \in  \s{B}(\vtheta,r) \cap \SO} \sigma_i(\vthetai')\\ &=
\min_{i \in \s{I}_0^\complement} - \sup_{\vtheta' \in \s{B}(\vtheta,r) \cap \SO}  -\frac{\lambda^2 \yi}{(1+\lambda\vthetai')^2} \\
& =  \lambda^2\min_{i \in \s{I}_0^\complement}\inf_{  |\vthetai' -\vthetai | \leq r } \frac{ \yi}{(1+\lambda\vthetai')^2}\\
& =  \lambda^2 \min_{i \in \s{I}_0^\complement} \frac{ \yi}{(1+\lambda(\vthetai+r))^2}
\end{align}
% where $\sigma_i (\vthetai) = -\frac{\lambda^2 \yi}{(1+\lambda\vthetai)^2}$ denotes the $i$-th eigenvalue of $\nabla^2 D_\lambda(\vtheta)$. 
where we used the facts that $y_i >0$ and $1+\lambda\vthetai \geq 0$ (using the definition of $\dom(D_\lambda)$ since $\vtheta \in \dom(D_\lambda)$) to conclude that the infimum is attained for $\vthetai + r$ rather than  $\vthetai - r$.
\end{proof}

\subsubsection{Dual Update}

%%GENERALIZED RESIDUAL -> DUAL FEASIBLE POINT
The generalized residual w.r.t. a primal estimate $\x$ is given by 
$$\vt{\rho}(\x) := -  \nabla F(\A\x) = \frac{\y}{\A\x + \epsilon} - \vt{1}$$ 
and the dual feasible point $\vtheta \in \Delta_\A$ obtained via scaling in equation \eqref{eq:dual_scaling_proposed} % i.e. the Gap Safe sphere center, at iteration $t$ given the primal estimate $\x_t$ 
is given by:
\begin{align}
\vtheta  =  \dscale(\vt{\rho(\x)}/\lambda) =  
\dscale\left( \frac{1}{\lambda} \left(\frac{\y}{\A\x + \epsilon} - \vt{1}\right) \right)
\end{align}

%%theta belongs to dom(D_\lambda)
%(commented since it is reproven on the proposition below).
%The above procedure also ensures that the resulting dual point belongs to the domain of $D_\lambda$. Indeed, we have:
%$\frac{\vt{\rho}(\x)}{\lambda} = \frac{1}{\lambda}\left( \frac{\y }{\A\x + \epsilon} - \vt{1}\right) \geq  -\frac{\vt{1}}{\lambda} $
%due to the non-negativity of both  $\y$ and $\A\x$, with equality for coordinates $i$ such that $\yi=0$ (i.e., $i \in \s{I}_0$).
%Finally, from the definition of the scaling $\dscale$ in~\eqref{eq:dual_scaling_L1}, 
%--the scaling factor always being on the interval $(0,1]$--
%one sees that $\dscale\left( \frac{1}{\lambda} \left(\frac{\y}{\A\x + \epsilon} - \vt{1} \right)\right) \geq -\vt{1}/\lambda$, which shows that $\dscale(\vt{\rho(\x)}/\lambda) \in  \dom (D_\lambda)$. 

%%FEASIBLE POINT: considering SO
However, this is not sufficient in order to apply Algorithms \ref{alg:solver_screening_bis} and~\ref{alg:solver_screening_local}. Indeed, one needs to compute a dual point $\vtheta \in \Delta_\A \cap \SO$.

\begin{proposition} \label{prop:KL_dual_update_S0}
Let $\SO$ be the set defined in Proposition~\ref{prop:KL_S0} and $\A \in \R_+^{m \times n}$.
Let $\x \in \R_+^n$  % $\s{C} \cap \dom (P_\lambda)$
be a primal feasible point and $\vt{\rho}(\x) \in \R^m$ the corresponding residual.
%$$\vt{\rho}(\x) =\y/(\A\x + \epsilon) - \vt{1}.$$
Then $\vt{\Theta}(\x) \in \R^m$ defined as follows
\begin{equation}\label{eq:KL_dual_update_SO}
    [\vt{\Theta}(\x)]_i %\vthetai 
    = \left\lbrace 
    \begin{array}{ll}
       \left[\dscale(\vt{\rho}(\x)/\lambda)\right]_i  & \text{if}\ \ i \in \s{I}_0^\complement \\
       -\frac{1}{\lambda}  & \text{if}\ \ i \in \s{I}_0
    \end{array}\right.
\end{equation}
is such that $\vt{\Theta}(\x) \in \Delta_\A \cap \SO$. Moreover, we have $\vt{\Theta}(\x) \rightarrow \vtheta^\star$ as $\x \rightarrow \x^\star$.
\end{proposition}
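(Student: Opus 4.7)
The plan is to verify the two claims separately, relying heavily on Lemma~\ref{lem:dual_scaling} and the primal--dual link from Theorem~\ref{thm:dual_GLM_and_optimality}. Membership in $\SO$ is immediate since $\SO = \{\vtheta \in \R^m ~|~ \vtheta_{\s{I}_0} = -\vt{1}/\lambda\}$ and, by construction, $[\vt{\Theta}(\x)]_{\s{I}_0} = -\vt{1}/\lambda$. So the only nontrivial work concerns membership in $\Delta_\A = \{\vtheta \in \R^m ~|~ \A^\T \vtheta \leq \vt{1},\; \vtheta \geq -\vt{1}/\lambda\}$ and the convergence statement.

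To prove $\vt{\Theta}(\x) \geq -\vt{1}/\lambda$, first observe that $\vt{\rho}(\x)/\lambda = (\y/(\A\x+\epsilon) - \vt{1})/\lambda \geq -\vt{1}/\lambda$ coordinate-wise, since the entries of $\y/(\A\x+\epsilon)$ are non-negative. Next, the scaling factor $c \geq 1$ appearing in $\dscale$ preserves this bound: for any scalar $z \geq -1/\lambda$, if $z \geq 0$ then $z/c \in [0, z]$, and if $z < 0$ then $z/c \in [z, 0)$, so in both cases $z/c \geq -1/\lambda$. Hence $\dscale(\vt{\rho}(\x)/\lambda) \geq -\vt{1}/\lambda$, and on $\s{I}_0$ the bound is saturated by construction. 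To prove $\A^\T \vt{\Theta}(\x) \leq \vt{1}$, let $\vtheta' = \dscale(\vt{\rho}(\x)/\lambda)$. Lemma~\ref{lem:dual_scaling} (applied with $\vt{z} = \vt{\rho}(\x)/\lambda \in \dom(D_\lambda)$, which is stable by contraction) yields $\vtheta' \in \Delta_\A$, so in particular $\A^\T \vtheta' \leq \vt{1}$. Since $\vt{\Theta}(\x)$ differs from $\vtheta'$ only on $\s{I}_0$, where $[\vt{\Theta}(\x)]_{\s{I}_0} = -\vt{1}/\lambda \leq \vtheta'_{\s{I}_0}$ (by the previous step), and since $\A \in \R_+^{m \times n}$, we obtain $\A^\T \vt{\Theta}(\x) \leq \A^\T \vtheta' \leq \vt{1}$.

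For the convergence claim, use continuity of $\nabla F$ on $\dom (P_\lambda)$ to get $\vt{\rho}(\x)/\lambda \to \vt{\rho}(\x^\star)/\lambda$ as $\x \to \x^\star$, and invoke the primal--dual optimality link from Theorem~\ref{thm:dual_GLM_and_optimality} (reported for the KL case as $\lambda \vtheta^\star = \y/(\A\x^\star+\epsilon) - \vt{1}$) to identify this limit as $\vtheta^\star$. Since $\vtheta^\star \in \Delta_\A \subset \dom(D_\lambda)$, the scaling factor in $\dscale$ evaluates to $1$ at $\vtheta^\star$, so by continuity $\dscale(\vt{\rho}(\x)/\lambda) \to \vtheta^\star$. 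Hence $[\vt{\Theta}(\x)]_{\s{I}_0^\complement} \to \vtheta^\star_{\s{I}_0^\complement}$, and $[\vt{\Theta}(\x)]_{\s{I}_0} = -\vt{1}/\lambda = \vtheta^\star_{\s{I}_0}$ (the latter equality being again the optimality condition at coordinates $i \in \s{I}_0$), which yields $\vt{\Theta}(\x) \to \vtheta^\star$.

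I do not expect any real obstacle here: this proposition is essentially a bookkeeping statement stitching together Lemma~\ref{lem:dual_scaling}, the KL primal--dual link, and the fact that overwriting coordinates of a feasible dual point on $\s{I}_0$ by the lower bound $-1/\lambda$ cannot violate either the $\A^\T \vtheta \leq \vt{1}$ constraint (thanks to $\A \geq 0$) or the $\vtheta \geq -\vt{1}/\lambda$ constraint (trivially). The only subtle point worth flagging explicitly is the monotonicity argument used when applying $\dscale$, which relies on $c \geq 1$ together with the sign analysis above.
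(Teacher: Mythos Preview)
Your proof is correct and follows essentially the same route as the paper: $\SO$-membership by construction, the lower bound $\vt{\Theta}(\x) \geq -\vt{1}/\lambda$ from non-negativity of $\y/(\A\x+\epsilon)$ together with the fact that the scaling factor in $\dscale$ lies in $(0,1]$, the $\A^\T$-constraint via $\vt{\Theta}(\x) \leq \dscale(\vt{\rho}(\x)/\lambda)$ combined with $\A \geq 0$, and convergence via Lemma~\ref{lem:dual_scaling} on $\s{I}_0^\complement$ plus the primal--dual link on $\s{I}_0$. Your explicit sign analysis for the scaling step is a slightly more detailed version of the paper's one-line remark that the scaling factor is in $(0,1]$, but the argument is the same.
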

\begin{proof}
By construction, we have $\vt{\Theta}(\x) \in \SO$. It remains to show that $\vt{\Theta}(\x)  \in \Delta_\A = \{\vtheta \in \R^m ~|~ \A^\T \vtheta \leq \vt{1}, \lambda \vtheta \geq -\vt{1}\}$, where the second inequality corresponds to the domain of the dual function.
Because $\A \in \R_+^{m \times n}$, $\x \in \R_+^n$, and $\y \in \R_+^m$, we have that $\vt{\rho}(\x)/\lambda \geq -\vt{1}/\lambda$. 
From the definition of the scaling $\dscale$ in~\eqref{eq:dual_scaling_L1}, 
--the scaling factor always being on the interval $(0,1]$--
%Combining this with  $\max\left(1,\max\left(\A^\T\vt{\rho}(\x)/\lambda\right)\right)\geq 1$ (from the definition $\dscale$ in~\eqref{eq:dual_scaling_L1})
we obtain that $ \dscale(\vt{\rho}(\x)/\lambda) \geq \vt{\Theta}(\x)  \geq -\vt{1}/\lambda$ . 
Multiplying the left and right hand side of the first inequality by $\A^\T$, we obtain $\A^\T \vt{\Theta}(\x) \leq \A^\T\dscale(\vt{\rho}(\x)/\lambda) \leq \vt{1}$ (by the definition of $\dscale$ in~\eqref{eq:dual_scaling_L1} for the second inequality) which shows that $\vt{\Theta}(\x) \in  \Delta_\A$. \\
Moreover, from the optimality condition \eqref{eq:KL_optimality_condition1} we have $[\vt{\Theta}(\x)]_{\s{I}_0} = \vtheta^\star_{\s{I}_0}$ regardless of $\x$ and for the remaining coordinates
we obtain from \Cref{lem:dual_scaling} (since $F \in C^1$) that $\dscale(-\vt{\rho(\x)}/\lambda) \rightarrow \vtheta^\star$ as $\x \rightarrow \x^\star$
% we obtain by continuity of $\vt{\rho}(\x)$ 
% \begin{equation*}
%     \vt{\rho}(\x) \underset{\x \rightarrow \x^\star}{\rightarrow} \vt{\rho}(\x^\star) \underset{\eqref{eq:KL_optimality_condition1}}{=} \lambda \vtheta^\star.
% \end{equation*}
% Finally, given that  $\vtheta^\star$ is feasible we have $\dscale(\frac{\vt{\rho}(\x^\star)}{\lambda}) =\dscale(\vtheta^\star)  = \vtheta^\star$
%
%which, together \eqref{eq:optimality_condition2} and~\eqref{eq:KL_dual_update_SO} 
which concludes the proof that $\vt{\Theta}(\x) \rightarrow \vtheta^\star$ as $\x \rightarrow \x^\star$.
\end{proof}

\subsubsection{Improved Screening Test}\label{apdx:KL_screening_test}

An improved screening test can be defined on a Gap Safe sphere when intersected with $\SO$.

%%IMPROVED SPHERE TEST
\begin{proposition} \label{prop:KL_screening_test}
Let $\s{I}_0 = \{i\in [m] :  \yi=0\}$.
%and  $\SO = \{ \vtheta \in \R^m  ~|~ \vtheta_\s{I}_0 = -\vt{1}/\lambda \}$. 
Let $\vtheta \in \SO$ 
%$\vtheta \in  \Delta_{\A} \cap \SO$ 
and $r>0$ be such that  $\vtheta^\star \in \s{B}(\vtheta,r)$. Then, 
\begin{align}
\cA_j^\T \vtheta + r \|[\cA_j]_{\s{I}_0^{\complement}}\|_2 < 1 \quad \implies \quad x_j^\star = 0.
\end{align}
\end{proposition}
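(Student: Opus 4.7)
My plan is to invoke the standard safe screening implication of Proposition~\ref{propo:Safe_screen} applied coordinate-wise, namely that it suffices to show $\cA_j^\T\vtheta^\star < 1$ to conclude $x_j^\star = 0$ in the $\ell_1$-non-negative setting (since $\phi=[\cdot]^+$ and $\overline{\Omega}_g=|\cdot|$ reduces to this scalar inequality). Hence, the task is to upper-bound $\cA_j^\T\vtheta^\star$ using only $\vtheta$, $r$ and a smaller-than-$\|\cA_j\|_2$ norm of $\cA_j$ restricted to $\s{I}_0^\complement$.

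The key observation I would exploit is that both $\vtheta$ and $\vtheta^\star$ lie in $\SO$, and by definition of $\SO$ in Proposition~\ref{prop:KL_S0}, they coincide on the coordinates $\s{I}_0$: $\vtheta_{\s{I}_0} = \vtheta^\star_{\s{I}_0} = -\vt{1}/\lambda$. Therefore the difference $\vtheta^\star - \vtheta$ is supported on $\s{I}_0^\complement$, which lets me discard the $\s{I}_0$ entries of $\cA_j$ when expanding the inner product.

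Concretely, I would write
\begin{equation}
\cA_j^\T \vtheta^\star = \cA_j^\T \vtheta + \cA_j^\T(\vtheta^\star - \vtheta) = \cA_j^\T \vtheta + [\cA_j]_{\s{I}_0^\complement}^\T (\vtheta^\star - \vtheta)_{\s{I}_0^\complement},
\end{equation}
apply Cauchy--Schwarz to the second term, and use $\|(\vtheta^\star - \vtheta)_{\s{I}_0^\complement}\|_2 \leq \|\vtheta^\star - \vtheta\|_2 \leq r$ (the latter coming from $\vtheta^\star \in \s{B}(\vtheta,r)$) to obtain
\begin{equation}
\cA_j^\T \vtheta^\star \leq \cA_j^\T \vtheta + r\,\|[\cA_j]_{\s{I}_0^\complement}\|_2.
\end{equation}
Combining this with the assumed inequality yields $\cA_j^\T \vtheta^\star < 1$, hence $x_j^\star = 0$. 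No real obstacle is expected: the entire argument rests on the observation that the known coordinates of $\vtheta^\star$ (those in $\s{I}_0$) allow a strictly tighter Cauchy--Schwarz bound than the generic test \eqref{eq:sphere_test_L1}, which is the very reason this refined rule improves the vanilla sphere test of Section~\ref{ssec:main_ingredients}.
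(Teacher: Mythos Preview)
Your proof is correct and matches the paper's argument in spirit and key insight: both exploit that $\vtheta$ and $\vtheta^\star$ agree on $\s{I}_0$ (since both lie in $\SO$), so the difference $\vtheta^\star-\vtheta$ is supported on $\s{I}_0^\complement$, and Cauchy--Schwarz then gives the restricted norm $\|[\cA_j]_{\s{I}_0^\complement}\|_2$. The only cosmetic difference is that the paper phrases the bound as a supremum over the safe region $\s{B}(\vtheta,r)\cap\SO$ (in the style of Proposition~\ref{propo:Safe_screen}) before specializing to $\vtheta^\star$, whereas you bound $\cA_j^\T\vtheta^\star$ directly; the computations are identical.
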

\begin{proof}
First of all, one can see from~\eqref{eq:KL_optimality_condition1} that the dual solution takes the value $\theta^\star_i=-1/\lambda$ for all coordinates $i \in \s{I}_0$. Hence, by definition of $\SO$, %$ = \{ \vtheta \in \R^m  ~|~ \vtheta_\s{I}_0 = -\vt{1}/\lambda \}$, 
we have $\vtheta^\star \in \SO$ and thus $\vtheta^\star \in \s{B}(\vtheta,r) \cap  \SO$. Therefore, from Proposition~\ref{propo:Safe_screen}, we have that
\begin{equation*}
    \sup_{\vt{\xi} \in \s{B}(\vtheta,r) \cap  \SO} \cA_j^\T \vt{\xi} < 1 \; \implies \; \cA_j^\T \vtheta^\star <1 \; \implies \;  x_j^\star = 0.
\end{equation*}
Finally, we have
\begin{align*}
 \sup_{\vt{\xi} \in \s{B}(\vtheta,r) \cap  \SO} \cA_j^\T \vt{\xi}
&= \cA_j^\T \vtheta + r \sup_{ \vt{u} \in \s{B}(\vt{0},1),  \vt{u}_{\s{I}_0} = \vt{0} } \cA_j^\T \vt{u} \nonumber \\
&=\cA_j^\T \vtheta + r \|[\cA_j]_{\s{I}_0^{\complement}}\|_2
\end{align*}
which completes the proof.
\end{proof}

\begin{remark}
%ICASSP
%$\vt{\Theta}(\x)$ given in Proposition~\ref{prop:dual_safety} requires the computation of $\vt{\rho}(\x) = {\nabla \s{D}_{\KL}(\y \mid \A\x)}$ and $\A^T \vt{\rho}(\x)$, which are  calculated by basically any first-order solver. %$\nabla_{\x} \s{D}_{\KL}(\y \mid
%
When using $\vtheta = \vt{\Theta}(\x)$ in eq. \eqref{eq:KL_dual_update_SO},
the quantity $\cA_j^\T\vtheta$ required for the screening test can be obtained with mild computational effort from $\cA_j^\T \vt{\rho}(\x)$ (which is usually calculated in the solver's update step) even if $\vt{\Theta}(\x)$ is no longer a simple scaled version of $\vt{\rho}(\x)$. Indeed, we have:
\begin{align*}
\cA_j^\T \vt{\Theta}(\x) &=  \sum_{i' \in \s{I}_0^\complement}  a_{i'j} [\dscale(\vt{\rho}(\x)/\lambda)]_{i'} + \sum_{i\in \s{I}_0} a_{ij} \left(-\frac{1}{\lambda}\right) \\
% &= - \sum_{i\in \s{I}_0} a_{ij} \frac{1}{\lambda} - \sum_{i\in \s{I}_0}  a_{ij}  [\dscale(\vt{\rho}(\x)/\lambda)]_{i} + \sum_{i\in \s{I}_0}  a_{ij}  [\dscale(\vt{\rho}(\x)/\lambda)]_{i}  + \sum_{i' \in \s{I}_0^\complement}  a_{i'j}  [\dscale(\vt{\rho}(\x)/\lambda)]_{i'} \\
&=  \cA_j^\T \dscale(\vt{\rho}(\x)/\lambda)  - \sum_{i\in \s{I}_0} a_{ij} \left( \frac{1}{\lambda} +   [\dscale(\vt{\rho}(\x)/\lambda)]_{i} \right) \\
&= s\; \cA_j^\T \vt{\rho}(\x)/\lambda - \frac{1}{\lambda}\left( 1 - s \right) \|[\cA_j]_{\s{I}_0}\|_1 
\end{align*}
where 
%in the first equality we used the fact that $[\vt{\Theta}(\x)]_i = - 1/\lambda$ for $i\in\s{I}_0$ and $[\vt{\Theta}(\x)]_i = [\dscale(\vt{\rho}(\x)/\lambda)]_i$ for $i\in\s{I}_0^\complement$, and
we denoted $s$ the scaling factor in $\dscale$, such that $\dscale(\vt{z}) = s \vt{z}$, and in the last equality we used the fact that $[\dscale(\vt{\rho}(\x)/\lambda)]_{\s{I}_0} = -s\frac{\vt{1}}{\lambda}$.  %(as a consequence of $[\vt{\rho}(\x)]_{\s{I}_0} = -\vt{1}$)
The final expression can be computed efficiently, since the norms $\|[\cA_j]_{\s{I}_0}\|_1$ can be precomputed and all remaining operations are scalar sums and multiplications.
\end{remark}

\bibliography{./bib/PhD,./bib/Mestrado}

\end{document}